\newcommand{\one}{\mathds{1}}
\newcommand{\cmark}{\ding{51}}%
\newcommand{\xmark}{\ding{55}}%
\definecolor{darkred}{rgb}{.7,0,0}
\definecolor{darkgreen}{rgb}{0,.7,0}
\newcommand{\kamyar}[1]{\textcolor{red}{Kamyar:~#1}}
\newcommand{\nk}[1]{{\color{black}{#1}}}
\newcommand{\as}[1]{{\color{black}{#1}}}
\newcommand{\done}[1]{{\iffalse \color{darkgreen}{#1} \fi}}
\newcommand{\postponed}[1]{{\iffalse \color{darkgreen}{#1} \fi}}
\newcommand{\hold}[1]{#1}
\newcommand{\A}{\mathcal{A}}
\newcommand{\N}{\mathbb{N}}
\newcommand{\Z}{\mathbb{Z}}
\newcommand{\bbZ}{\mathbb{Z}}
\newcommand{\R}{\mathbb{R}}
\newcommand{\C}{\mathbb{C}}
\newcommand{\E}{\mathbb{E}}
\newcommand{\U}{\mathcal{U}}
\newcommand{\LL}{\mathsf{L}}
\newcommand{\cK}{\mathcal{K}}
\newcommand{\cP}{\mathcal{P}}
\newcommand{\cQ}{\mathcal{Q}}
\newcommand{\cW}{\mathcal{W}}
\newcommand{\Ftrue}{\mathcal{G}^\dagger}
\newcommand{\G}{\mathcal{G}}
\newcommand{\F}{\mathcal{G}}
\newcommand{\cF}{\mathcal{G}}
\newcommand{\cG}{\mathcal{F}}
\newcommand{\X}{\mathcal{X}}
\newcommand{\Y}{\mathcal{Y}}
\newcommand{\NN}{\mathsf{N}}
\newcommand{\dx}{\:\mathsf{d}x}
\newcommand{\dy}{\:\mathsf{d}y}
\newtheorem{assumption}[theorem]{Assumption}
\newcommand{\vdown}{\check{v}}
\newcommand{\vup}{\hat{v}}
\newcommand{\eps}{\epsilon}
\begin{document}

\title{
Neural Operator: Learning Maps Between Function Spaces \\ With Applications to PDEs
}

\renewcommand{\thefootnote}{\fnsymbol {footnote}}

\author{\name Nikola Kovachki\thanks{Equal contribution.} \,\thanks{Majority of the work was completed while the author was at Caltech.} \email nkovachki@nvidia.com \addr Nvidia
\AND \name Zongyi Li$^*$ \email zongyili@caltech.edu \addr Caltech
\AND \name Burigede Liu \email bl377@cam.ac.uk \addr Cambridge University
\AND \name Kamyar Azizzadenesheli \email kamyara@nvidia.com \addr  Nvidia
\AND \name Kaushik Bhattacharya \email bhatta@caltech.edu \addr Caltech
\AND \name Andrew Stuart \email astuart@caltech.edu \addr Caltech
\AND \name Anima Anandkumar \email anima@caltech.edu \addr Caltech
}


%

\editor{Lorenzo Rosasco}

\maketitle

\begin{abstract}

The classical development of neural networks has primarily focused on learning mappings between finite dimensional Euclidean spaces or finite sets.
We propose a generalization of neural networks to learn operators, termed \emph{neural operators}, that map between infinite dimensional function spaces.
We formulate the neural operator as a composition of  linear integral operators and nonlinear activation functions. We prove a universal approximation theorem for our proposed neural operator, showing that it can approximate any given nonlinear continuous operator. The proposed neural operators are also discretization-invariant, i.e., they share the same model parameters among different discretization of the underlying function spaces. 
Furthermore, we introduce four classes of efficient parameterization, viz., graph neural operators,  multi-pole graph neural operators, low-rank neural operators, and Fourier neural operators.
An important application for neural operators is learning surrogate maps for the solution operators of
partial differential equations (PDEs). We consider standard PDEs such as the Burgers, Darcy subsurface flow, and the Navier-Stokes equations, and show that the proposed neural operators have superior performance compared to existing machine learning based methodologies,
while being several orders of magnitude faster than conventional PDE solvers. 


 
\end{abstract}

\begin{keywords}
  Deep Learning, Operator Learning, Discretization-Invariance, Partial Differential Equations, Navier-Stokes Equation.
\end{keywords}

%
%

\section{Introduction}
\label{sec:I}


Learning mappings between  function spaces has widespread applications  in science and engineering. For instance, for  solving differential equations,  the input is a coefficient function and the output is a solution function. A straightforward solution to this problem  is to simply  discretize the infinite-dimensional input and output function spaces into finite-dimensional grids, and apply standard learning models such as 
neural networks. However, this limits  applicability since the learned  neural network model may not  generalize well to different discretizations, beyond the discretization grid of the training data.

To overcome these limitations of standard neural networks, we  formulate a new deep-learning framework for learning operators, called {\em neural operators}, which directly map between  function spaces on bounded domains.  Since our neural operator is  designed on function spaces, they
can be discretized by a variety of different methods, and at different levels of resolution, without the need for re-training. In contrast, standard neural network architectures depend heavily on the discretization of training data: new architectures with new parameters may be needed to achieve the same error for   data with varying discretization.  We also propose the notion of {\em discretization-invariant} models and prove that our neural operators satisfy this property, while standard neural networks do not.

\subsection{Our Approach}
\label{ssec:OC}

\paragraph{Discretization-Invariant Models.}

We formulate a precise mathematical notion of discretization invariance. We require any discretization-invariant model with a fixed number of parameters to satisfy the following:
\begin{enumerate}[leftmargin=*]
    \item  acts on any discretization of the input function, i.e. accepts any set of points in the input domain,
    \item can be evaluated at any point of the output domain,
    \item  converges to a continuum operator as the discretization is refined. 
\end{enumerate}

\noindent The first two requirements of accepting any input and output points in the domain is a natural requirement for discretization invariance, while the last one ensures consistency in the limit as the discretization is refined. For example, families of graph neural networks~\citep{scarselli2008graph} and transformer models~\citep{vaswani2017attention} are resolution invariant, i.e., they can receive inputs at any resolution, but they fail to converge to a continuum operator as discretization is refined. Moreover, we require the models to have a fixed number of parameters; otherwise, the number of parameters becomes unbounded in the limit as the discretization is refined, as shown in Figure \ref{fig:discretization-invariance}.  Thus the notion of discretization invariance allows us to define neural operator models that are consistent in function spaces and can be applied to data given at any resolution and on any mesh. We also establish that standard neural network models are not discretization invariant.



\begin{figure}[h]
    \centering
    \includegraphics[width=0.8\textwidth]{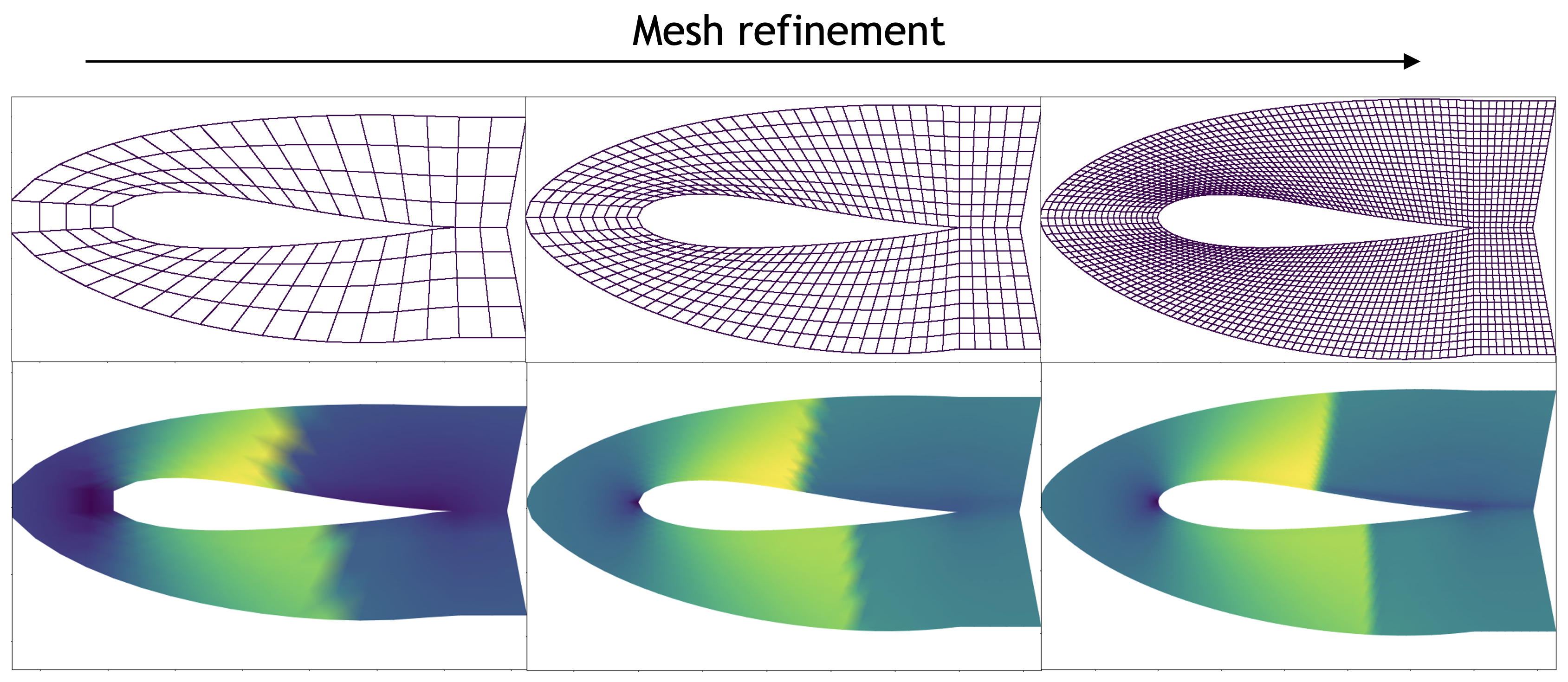}
    \caption{Discretization Invariance}
    \label{fig:discretization-invariance}
    An discretization-invariant operator has convergent predictions on a mesh refinement.
\end{figure}

\paragraph{Neural Operators.}

We introduce the concept of neural operators   for learning operators that are  mappings between infinite-dimensional function spaces. We propose neural operator architectures to be multi-layers where layers  are themselves operators composed with non-linear activations. This ensures that that the overall end-to-end composition  is an operator, and thus satisfies the discretization invariance property. The key design choice for neural operator is the operator layers. To keep it simple, we limit ourselves to layers that are linear operators. Since these layers are composed with non-linear activations, we obtain neural operator models that are expressive and able to capture any continuous operator. The latter property is known as universal approximation. 

The above line of reasoning for neural operator design follows closely the design of standard neural networks, where linear layers (e.g. matrix multiplication, convolution) are composed with non-linear activations, and we have universal approximation of continuous functions defined on compact domains~\citep{hornik1989multilayer}. Neural operators replace finite-dimensional linear layers in neural networks with linear operators in function spaces.

We  formally establish that neural operator models with a fixed number of parameters satisfy discretization invariance. We further show that neural operators models are universal approximators of continuous operators acting between Banach spaces, and  can uniformly approximate any continuous operator defined on a compact set of a Banach space.   {\bf Neural operators are the only known class of models that guarantee both discretization-invariance and universal approximation.} See Table~\ref{table:deeplearning_comparison} for a comparison among the deep learning models. Previous deep learning models are mostly defined on a fixed grid, and  removing, adding, or moving grid points generally makes these models no longer applicable. Thus, they are  not discretization invariant.

We propose several design choices for the linear operator layers in neural operator such as a parameterized integral operator or through multiplication in the spectral domain as shown in Figure~\ref{fig:NO_architecture}.  
Specifically, we propose four practical methods for implementing the neural operator framework: graph-based operators, low-rank operators, multipole graph-based operators, and Fourier  operators. Specifically, for graph-based operators, we develop a Nystr\"om extension to connect the integral operator formulation of the neural operator to families of graph neural networks (GNNs) on arbitrary grids. For Fourier operators,  we consider the spectral domain formulation of the neural operator which leads to efficient algorithms in settings where fast transform methods are applicable. 

We include an exhaustive numerical study of the four formulations of neural operators. Numerically, we show that the proposed methodology consistently outperforms all existing deep learning methods even on the resolutions for which the standard neural networks were designed. For the two-dimensional Navier-Stokes equation, when learning the entire flow map,  the method achieves $<1\%$ error for a Reynolds number of 20 and $8\%$ error for a Reynolds number of 200.

The proposed Fourier neural operator (FNO) has an inference time that is three orders of magnitude faster than the pseudo-spectral method used to generate the data for the Navier-Stokes problem \citep{chandler2013invariant} -- $0.005$s compared to the $2.2s$ on a $256 \times 256$ uniform spatial grid. 
Despite its tremendous speed advantage, the method does not suffer from accuracy degradation when used in downstream applications such as solving Bayesian inverse problems. Furthermore, we demonstrate
that FNO is robust to noise on the testing problems we consider here.

\begin{figure}[t]
    \centering
    \includegraphics[width=0.8\textwidth]{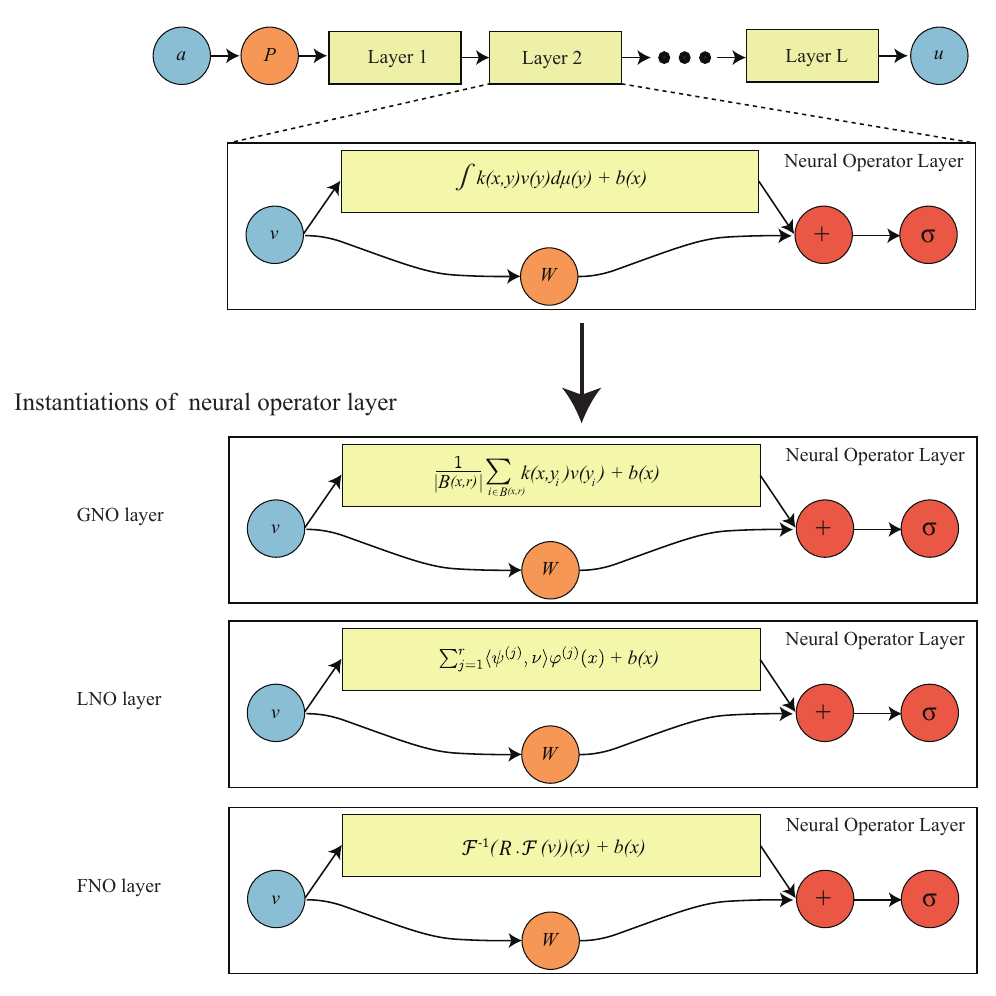}
    \caption{Neural operator architecture schematic}
    \label{fig:NO_architecture} 
    \small{The input function $a$ is passed to a pointwise lifting operator $P$ that is followed by $T$ layers of integral operators and pointwise non-linearity operations $\sigma$. In the end, the pointwise projection operator $Q$ outputs the function $u$. Three instantiation of neural operator layers, GNO, LNO, and FNO are provided.}
\end{figure}



\begin{table}[h!]
\centering
\begin{tabular}{l|c|c|c|c}
\diagbox{Property\hspace{.5cm}}{\hspace{2em}\raisebox{-.3cm}{Model}}&
NNs& DeepONets & Interpolation & Neural Operators\\
\hline
Discretization Invariance & \xmark & \xmark & \cmark & \cmark \\
\hline
Is the output a function? & \xmark & \cmark & \cmark & \cmark \\
\hline
Can query the output at any point? & \xmark & \cmark & \cmark & \cmark \\
\hline
Can take the input at any point? & \nk{\xmark} & \xmark & \cmark & \cmark \\
\hline
Universal Approximation & \xmark & \cmark & \xmark & \cmark \\
\hline
\end{tabular}
\caption{Comparison of deep learning models. The first row indicates whether the model is discretization invariant. The second and third rows indicate whether the output and input are a functions. The fourth row indicates whether the model class is a universal approximator of operators. Neural Operators are discretization invariant deep learning methods that output functions and can approximate any operator.}
\label{table:deeplearning_comparison}
\end{table}

\subsection{Background and Context}
\label{ssec:LR}

\paragraph{Data-driven approaches for solving PDEs.}
Over the past decades, significant progress has been made in formulating \citep{gurtin1982introduction} and solving \citep{johnson2012numerical} the governing PDEs in many scientific fields from micro-scale problems (e.g., quantum and molecular dynamics) to macro-scale applications (e.g., civil and marine engineering). Despite the success in the application of PDEs to solve real-world problems,  two significant challenges remain: (1) identifying the governing model for complex systems; (2) efficiently solving large-scale nonlinear systems of equations.

Identifying and formulating the underlying PDEs appropriate for modeling a specific problem usually requires extensive prior knowledge in the corresponding field which is then combined with universal conservation laws to design a predictive model. For example, modeling the deformation and failure of solid structures requires detailed knowledge of the relationship between stress and strain in the constituent material. For complicated systems such as living cells, acquiring such knowledge is often elusive and formulating the governing PDE for these systems remains prohibitive, or the models
proposed are too simplistic to be informative. The possibility of acquiring such knowledge from data can revolutionize these fields. 
Second, solving complicated nonlinear PDE systems (such as those arising in turbulence and plasticity) is computationally demanding and can often make realistic simulations intractable. Again the possibility of using instances of data to design fast approximate solvers holds great potential for accelerating numerous problems.

\paragraph{Learning PDE Solution Operators.}
 In PDE applications, the governing differential equations are by definition local, whilst the solution operator exhibits non-local properties. Such non-local effects can be described by integral operators explicitly in the spatial domain, or by means of spectral domain multiplication; convolution is an archetypal example. For integral equations, the graph approximations of Nystr\"om type \citep{belongie2002spectral} provide a consistent way of connecting different grid or data structures arising in computational methods and understanding their continuum limits \citep{von2008consistency,trillos2018variational,trillos2020error}. For spectral domain calculations, there are well-developed tools that exist for approximating the continuum \citep{boyd2001chebyshev,trefethen2000spectral}. However, these approaches for approximating integral operators are not data-driven.  Neural networks present a natural approach for learning-based integral operator approximations since they can incorporate non-locality. However, standard neural networks are limited to the discretization of training data and hence, offer a poor approximation to the integral operator. We tackle this issue here by proposing the framework of neural operators. 
 
 

%

\paragraph{Properties of existing deep-learning models.} 

Previous deep learning models are mostly defined on a fixed grid, and  removing, adding, or moving grid points generally makes these models no longer applicable, as seen in Table~\ref{table:deeplearning_comparison}. Thus, they are  not discretization invariant. 
In general, standard neural networks (NN) (such as Multilayer perceptron (MLP), convolution neural networks (CNN), 
Resnet, and Vision Transformers (ViT)) that take the input grid and output grid as finite-dimensional vectors are not discretization-invariant since their input and output have to be at the fixed grid with fixed location. 
On the other hand, the pointwise neural networks used in PINNs \citep{raissi2019physics} that take each coordinate as input are discretization-invariant since it can be applied at each location in parallel. However PINNs only represent the solution function of one instance and it does not learn the map from the input functions to the output solution functions.
A special class of neural networks is convolution neural networks (CNNs). 
CNNs also do not converge with grid refinement since their respective fields change with different input grids. On the other hand,  if normalized by the grid size, CNNs can be applied to uniform grids with different resolutions, which converge to differential operators, in a  similar fashion to the finite difference method.  Interpolation is a baseline approach to achieve discretization-invariance.
While NNs+Interpolation (or in general any finite-dimensional neural networks+Interpolation) are resolution invariant and their outputs can be queried at any point, they are not universal approximators of operators since the dimension of input and output of the internal CNN model is defined to a bounded number. 
DeepONets \citep{lu2019deeponet} are a class of operators that have the universal approximation property. DeepONets consist of a branch net and a trunk net. The trunk net allows queries at any point, but the branch net constrains the input to fixed locations; however it is possible to modify the
branch net to make the methodology discretization invariant, for example by using
the PCA-based approach as used in \citep{de2022cost}.




Furthermore, we show transformers~\citep{vaswani2017attention} are special cases of neural operators with structured kernels that can be used with varying grids to represent the input function. However, the commonly used vision-based extensions of transformers, e.g., ViT~\citep{dosovitskiy2020image}, use convolutions on patches  to generate tokens, and therefore, they are not discretization-invariant models.

We also show that when our proposed neural operators are  applied only on fixed grids, the resulting architectures coincide with neural networks and other operator learning frameworks. In such reductions, point evaluations of the input functions are available on the grid points. In particular, we show that the recent work of DeepONets~\citep{lu2019deeponet}, which are maps from finite-dimensional spaces to infinite dimensional spaces are special cases of neural operators architecture when neural operators are limited only to fixed input grids. Moreover, by introducing an adjustment to the DeepONet architecture, we propose the DeepONet-Operator model that fits into the full operator learning framework of maps between function spaces.

\done{I don't quite get this sentence's meaning: While previous continuous methods have not yielded efficient numerical algorithms that can parallel the success of convolutional or recurrent neural networks in the finite-dimensional setting due to the cost of evaluating integral operators, our work alleviates this issue through the use kernel approximation methods and fast transform algorithms.}




%
%

\section{Learning Operators}
\label{sec:setting}
In subsection~\ref{sec:genericPDE}, we describe the generic setting of PDEs to make the discussions in the following setting concrete.
In subsection \ref{sec:PS}, we outline the general problem of operator learning as well as our approach to solving it. In subsection \ref{sec:discretization}, we discuss the functional data that is available and how we work with it numerically.

\subsection{Generic Parametric PDEs}
\label{sec:genericPDE}
We consider the generic family of PDEs of the following form,
 \begin{align}
\label{eq:generalpde0}
\begin{split}
(\LL_a u)(x) &= f(x), \qquad x \in D, \\
u(x) &= 0, \qquad \quad \:\: x \in \partial D,
\end{split}
\end{align}
for some \(a \in \A\), \(f \in \U^*\) and \(D \subset \R^d\) a bounded domain. We assume that the solution \(u: D \to \R\) lives in the Banach space \(\U\) and  \(\LL_{a}: \A \to \mathcal{L}(\U; \U^*)\) is a mapping from the parameter Banach space \(\A\) to the space of (possibly unbounded) linear operators mapping \(\U\) to its dual \(\U^*\). A natural operator which arises from this PDE is \(\G^\dagger:= \LL_{a}^{-1}f : \A \to \U\) defined to map the parameter to the solution \(a \mapsto u\). A simple example that we study further in Section \ref{ssec:darcy} is when \(\LL_a\) is the weak form of the second-order elliptic operator  \(-\nabla \cdot (a \nabla)\) subject to homogeneous Dirichlet
boundary conditions. In this setting, \(\A = L^\infty(D;\R_+)\), \(\U = H^1_0(D;\R)\), and \(\U^* = H^{-1}(D;\R)\). When needed, we will assume that the domain $D$ is discretized into $K \in \N$ points 
and that we observe $N \in \N$ pairs of coefficient functions and (approximate) solution functions \(\{a^{(i)}, u^{(i)}\}_{i=1}^N\) that are used to train the model (see Section \ref{sec:PS}). We assume that \(a^{(i)}\) are i.i.d. samples from a probability measure \(\mu\) supported on \(\A\) and \(u^{(i)}\) are the pushforwards under \(\G^\dagger\).

\subsection{Problem Setting}
\label{sec:PS}


Our goal is to learn a mapping between two infinite dimensional spaces by using a finite
collection of observations of input-output pairs from this mapping. We make this problem concrete in the following setting. Let \(\A\) and \(\U\) 
be Banach spaces of functions defined on bounded domains $D \subset \R^d$, $D' \subset \R^{d'}$ respectively and \(\Ftrue : \A \to \U\) be
a (typically) non-linear map. Suppose we have observations \(\{a^{(i)}, u^{(i)}\}_{i=1}^N\) where 
\(a^{(i)} \sim \mu\) are i.i.d. samples drawn from some probability measure \(\mu\) supported on 
\(\A\) and \(u^{(i)} = \Ftrue(a^{(i)})\) is possibly corrupted with noise. We aim to build an approximation of \(\Ftrue\) by 
constructing a parametric map 
\begin{equation}
\label{eq:approxmap2}
\F_{\theta} : \A \to \U, \quad \theta \in \R^p
\end{equation}
with parameters from the finite-dimensional space \(\R^p\) and then choosing
\(\theta^\dagger \in \R^p\) so that \(\F_{\theta^\dagger} \approx \Ftrue\).

We will be interested in controlling the error of the approximation on average with respect to \(\mu\). In particular, assuming \(\G^\dagger\) is \(\mu\)-measurable, we will aim to control
the \(L^2_\mu(\A;\U)\) Bochner norm of the approximation
\begin{equation}
\label{eq:bochner_error}
\|\G^\dagger - \G_\theta\|^2_{L^2_\mu(\A;\U)} = \E_{a \sim \mu} \|\G^\dagger(a) - \G_\theta(a)\|_{\U}^2 = \int_\A \|\G^\dagger(a) - \G_\theta(a)\|_{\U}^2 \: d \mu (a).
\end{equation}
This is a natural framework for learning in infinite-dimensions as one could seek to solve the associated  empirical-risk minimization problem
\begin{equation}
\label{eq:empirical_risk}
\min_{\theta \in \R^p} \E_{a \sim \mu} \|\G^\dagger(a) - \G_\theta(a)\|_{\U}^2 \approx \min_{\theta \in \R^p} \frac{1}{N} \sum_{i=1}^N \|u^{(i)} - \G_\theta(a^{(i)})\|_{\U}^2
\end{equation}
which directly parallels the classical finite-dimensional 
setting \citep{Vapnik1998}. 
\nk{As well as using error measured in the Bochner norm, we 
will also consider the setting where error is measured uniformly over compact sets of \(\A\). In particular, given any \(K \subset \A\) compact, we consider
\begin{equation}
    \label{eq:unform_risk}
    \sup_{a \in K} \|\G^\dagger (a) - \G_\theta (a) \|_\U
\end{equation}
which is a more standard error metric in the approximation theory literature. Indeed, the classic approximation theory of neural networks in formulated analogously to equation \eqref{eq:unform_risk} \citep{hornik1989multilayer}. }

In Section~\ref{sec:approximation} we show that, for the
architecture we propose and given any desired error tolerance, there exists \(p \in \N\) and an associated parameter \(\theta^\dagger \in \R^p\), so that the loss \eqref{eq:bochner_error} or \eqref{eq:unform_risk} is less
than the specified tolerance. However, we do not address the challenging open problems of characterizing the error with respect to either (a) a fixed parameter dimension \(p\) or (b) a fixed number of training samples \(N\). Instead, we approach this in the empirical test-train setting where we minimize \eqref{eq:empirical_risk} based on a fixed training set and approximate \eqref{eq:bochner_error} from new samples that were not seen during training.   
Because we conceptualize our methodology in the
infinite-dimensional setting, all finite-dimensional approximations
can share a common set of network parameters which are defined in the (approximation-free) infinite-dimensional setting. In particular, our architecture does not depend on the way the functions \(a^{(i)},u^{(i)}\) are discretized.
. The notation used through out this paper,
along with a useful summary table, may be found in
Appendix \ref{sec:appendix_notation}.

\subsection{Discretization}
\label{sec:discretization}
Since our data \(a^{(i)}\) and \(u^{(i)}\) are, in general, functions, to work with them numerically, we assume access only to their point-wise evaluations. To illustrate this, we will continue with the example of the preceding paragraph. For simplicity, assume \(D = D'\) and suppose that the input and output functions are both real-valued. Let \(D^{(i)} = \{x^{(i)}_{\ell}\}_{\ell=1}^L \subset D\) be a \(L\)-point discretization of the domain \(D\) and assume we have observations \(a^{(i)}|_{D^{(i)}}, u^{(i)}|_{D^{(i)}} \in \R^{L}\), for a finite collection  of input-output pairs indexed by $j$.
In the next section, we propose a kernel inspired graph neural network architecture which, while trained on the discretized data, can produce the solution \(u(x)\) for any \(x \in D\) given an input \(a \sim \mu\). \nk{In particular, our discretized architecture maps into the space \(\U\) and not into a discretization thereof. Furthermore our parametric operator class is consistent, in that, given a fixed set of parameters, refinement of the input discretization converges to the true functions space operator. We make this notion precise in what follows and refer to architectures that possess it as function space architectures, 
mesh-invariant architectures, or \textit{discretization-invariant} architectures.
\footnote{\nk{Note that the meaning of the indexing of sets $D_{\bullet}$ in the following
definition differs that used earlier in this paragraph.}}
\begin{definition}
We call a \textit{discrete refinement} of the domain \(D \subset \R^d\) any sequence of nested sets \(D_1 \subset D_2 \subset \dots \subset D\) with \(|D_L| = L\) for any \(L \in \N\) such that, for any \(\epsilon > 0\), there exists a number \(L = L(\epsilon) \in \N\) such that
\[D \subseteq \bigcup_{x \in D_L} \{y \in \R^d : \|y - x\|_2 < \epsilon\}.\]
\end{definition}
\begin{definition}
Given a discrete refinement \((D_L)_{L=1}^\infty\) of the domain \(D \subset \R^d\), any member \(D_L\) is called a \textit{discretization} of \(D\).
\end{definition}
Since $a:D \subset \R^d \to \R^m$, pointwise evaluation of the
function (discretization) at a set of $L$ points gives rise to
the data set $\{(x_{\ell}, a(x_{\ell}))\}_{\ell=1}^L.$ Note that this
may be viewed as a vector in $\R^{Ld} \times \R^{Lm}.$ An example of the mesh refinement is given in Figure \ref{fig:discretization-invariance}.
\begin{definition}
Suppose \(\A\) is a Banach space of \(\R^m\)-valued functions on the domain \(D \subset \R^d\). Let \(\G : \A \to \U\) be an operator, \(D_L\) be an \(L\)-point discretization of \(D\), and \(\hat{\G} : \R^{Ld} \times \R^{Lm} \to \U\) some map. For any \(K \subset \A\) compact, we define the \textit{discretized uniform risk} as
\[R_K (\G,\hat{\G},D_L) = \sup_{a \in K} \|\hat{\G}(D_L,a|_{D_L}) - \G(a) \|_{\U}.\]
\end{definition}

\begin{definition}\label{def:discretization_invariance}
Let \(\Theta \subseteq \R^p\) be a finite dimensional parameter space and \(\G : \A \times \Theta \to \U\) a map representing a parametric class of operators with parameters \(\theta \in \Theta\). Given a discrete refinement \((D_n)_{n=1}^\infty\) of the domain \(D \subset \R^d\), we say \(\G\) is \textit{discretization-invariant} if there exists a sequence of maps \(\hat{\G}_1, \hat{\G}_2, \dots\) where \(\hat{\G}_L : \R^{Ld} \times \R^{Lm} \times \Theta \to \U\) such that, for any \(\theta \in \Theta\) and any compact set \(K \subset \A\),
\[\lim_{L \to \infty} R_K(\G(\cdot,\theta), \hat{\G}_L(\cdot,\cdot,\theta),D_L) = 0.\]
\end{definition}
We prove that the architectures proposed in Section~\ref{sec:neuraloperators} are discretization-invariant. 
We further verify this claim numerically by showing that the approximation error is 
approximately constant as we refine the discretization.}
Such a property is highly desirable as it allows a transfer of solutions between different grid geometries and discretization sizes with a single architecture that has a fixed number of parameters. 


We note that, while the application of our methodology is based on having point-wise evaluations of the function, it is not limited by it. One may, for example, represent a function numerically as a finite set of truncated basis coefficients. Invariance of the representation would then be with respect to the size of this set. Our methodology can, in principle, be modified to accommodate this scenario through a suitably chosen architecture. We do not pursue this direction in the current work. From the construction of neural operators, when the input and output functions are evaluated on fixed grids, the architecture of neural operators on these fixed grids coincide with the class of neural networks.

%
%

\section{Neural Operators}
\label{sec:neuraloperators}


In this section, we outline the neural operator framework.
We assume that the input functions \(a \in \A\) are \(\R^{d_a}\)-valued and defined on the bounded domain \(D \subset \R^d\) while the output functions \(u \in \U\) are \(\R^{d_u}\)-valued and defined on the bounded domain \(D' \subset \R^{d'}\).
The proposed architecture $\F_{\theta} : \A \to \U$ has the following overall structure:
\begin{enumerate}
    
    \item \textbf{Lifting}: Using a pointwise function \(\R^{d_a} \to \R^{d_{v_0}}\), map the input $\{a: D \to \R^{d_a}\} \mapsto \{v_0: D \to \R^{d_{v_0}}\}$ to its first hidden representation. Usually, we choose \(d_{v_0} > d_a\) and hence this is a lifting operation performed by a fully local operator.
    
    \item \textbf{Iterative Kernel Integration}: For \(t=0,\dots,T-1\), map each hidden representation to the next $\{v_t: D_t \to \R^{d_{v_t}}\} \mapsto \{v_{t+1}: D_{t+1} \to \R^{d_{v_{t+1}}}\}$ via the action of the sum of a local linear operator, a non-local integral kernel operator, and a bias function, composing the
    sum with a fixed, pointwise nonlinearity. Here we set \(D_0 = D\) and \(D_T = D'\) and impose that \(D_t \subset \R^{d_t}\) is a bounded domain.\footnote{\nk{The indexing of sets $D_{\bullet}$ here differs from the two previous indexings used in Subsection \ref{sec:discretization}. The index $t$ is not the physical time, but the iteration (layer) in the model architecture.}}
    
    \item \textbf{Projection}: Using a pointwise function \(\R^{d_{v_T}} \to \R^{d_u}\), map the last hidden representation $\{v_T: D' \to \R^{d_{v_T}}\} \mapsto \{u: D' \to \R^{d_u}\}$ to the output function. Analogously to the first step, we usually pick \(d_{v_T} > d_u\) and hence this is a projection step performed by a fully local operator.
\end{enumerate}

The outlined structure mimics that of a finite dimensional neural network where hidden representations are successively mapped to produce the final output. In particular, we have
\begin{equation}
\label{eq:F}
    \cF_{\theta} \coloneqq \cQ \circ \sigma_T(W_{T-1} + \cK_{T-1} + b_{T-1}) \circ \cdots \circ \sigma_1(W_0 + \cK_0 + b_0) \circ \cP
\end{equation}
where \(\cP: \R^{d_a} \to \R^{d_{v_0}}\), \(\cQ: \R^{d_{v_T}} \to \R^{d_u}\) are the local lifting and projection mappings respectively, \(W_t \in \R^{d_{v_{t+1}} \times d_{v_t}}\) are local linear operators (matrices), \(\cK_t: \{v_t: D_t \to \R^{d_{v_t}}\} \to \{v_{t+1}: D_{t+1} \to \R^{d_{v_{t+1}}}\}\) are integral kernel operators, \(b_t : D_{t+1} \to \R^{d_{v_{t+1}}}\) are bias functions, and \(\sigma_t\) are fixed activation functions acting locally as maps \(\R^{v_{t+1}} \to \R^{v_{t+1}}\) in each layer. The output dimensions \(d_{v_0},\dots,d_{v_T}\) as well as the input dimensions \(d_1,\dots,d_{T-1}\) and domains of definition \(D_1,\dots,D_{T-1}\) are hyperparameters of the architecture. By local maps, we mean that the action is pointwise, in particular, for the lifting and projection maps, we have \((\cP(a))(x) = \cP(a(x))\) for any \(x \in D\) and  \((\cQ(v_T))(x) = \cQ(v_T(x))\) for any \(x \in D'\) and similarly, for the activation,
\((\sigma(v_{t+1}))(x) = \sigma(v_{t+1}(x))\) for any \(x \in D_{t+1}\).
The maps, \(\cP\), \(\cQ\), and \(\sigma_t\) can thus be thought of as defining Nemitskiy operators \citep[Chapters 6,7]{dudley2011concrete} when each of their components are assumed to be Borel measurable. This interpretation allows us to define the general neural operator architecture when pointwise evaluation is not well-defined in the spaces \(\A\) or \(\U\) e.g. when they are Lebesgue, Sobolev, or Besov spaces. 

The crucial difference between the proposed architecture \eqref{eq:F} and a standard feed-forward neural network is that all operations are directly defined in function space (\nk{noting that the activation funtions, $\cP$ and $\cQ$ are all interpreted} through their extension to Nemitskiy operators) and therefore do not depend on any discretization of the data. Intuitively, the lifting step locally maps the data to a space where the non-local part of \(\G^\dagger\) is easier to capture. \nk{We confirm this intuition numerically in Section~\ref{sec:numerics}; however, we note that for the theory presented in Section~\ref{sec:approximation} it suffices that $\cP$ is the identity map.} \nk{The non-local part of \(\G^\dagger\)} is then learned by successively approximating using integral kernel operators composed with a local nonlinearity. Each 
integral kernel operator is the function space analog of the weight matrix in a standard feed-forward network since they are infinite-dimensional linear operators mapping one function space to another. We turn the biases, which are normally vectors, to functions and, using intuition from the ResNet architecture \citep{he2016deep}, we further add a local linear operator acting on the output of the previous layer before applying the nonlinearity. The final projection step simply gets us back to the space of our output function. We concatenate in \(\theta \in \R^p\) the parameters of \(\cP\), \(\cQ\), \(\{b_t\}\) which are usually themselves shallow neural networks, the parameters of the kernels representing \(\{\cK_t\}\) which are again usually shallow neural networks, and the matrices \(\{W_t\}\). We note, however, that our framework is general and other parameterizations such as polynomials may also be employed. 

\paragraph{Integral Kernel Operators} We define three version of the integral kernel operator \(\cK_t\) used in \eqref{eq:F}. For the first, let \(\kappa^{(t)} \in C(D_{t+1} \times D_t; \R^{d_{v_{t+1}} \times d_{v_t}})\) and let \(\nu_t\) be a Borel measure on \(D_t\). Then we define \(\cK_t\) by
\begin{equation}
    \label{eq:kernelop1}
    (\cK_t(v_t))(x) = \int_{D_t} \kappa^{(t)} (x,y) v_t(y) \: \text{d}\nu_t(y)
    \qquad \forall x \in D_{t+1}.
\end{equation}
Normally, we take \(\nu_t\) to simply be the Lebesgue measure on \(\R^{d_t}\) but, as discussed in Section~\ref{sec:four_schemes}, other choices can be used to speed up computation or aid the learning process by building in \textit{a priori} information. The choice of integral kernel operator in \eqref{eq:kernelop1} defines the basic form of the neural operator and is the one we analyze in Section~\ref{sec:approximation} and study most in the numerical experiments of Section~\ref{sec:numerics}.

For the second, let \(\kappa^{(t)} \in C(D_{t+1} \times D_t \times \R^{d_a} \times \R^{d_a}; \R^{d_{v_{t+1}} \times d_{v_t}})\). Then we define \(\cK_t\) by
\begin{equation}
    \label{eq:kernelop2}
    (\cK_t(v_t))(x) = \int_{D_t} \kappa^{(t)} (x,y,a(\Pi_{t+1}^D(x)),a(\Pi_t^D(y))) v_t(y) \: \text{d}\nu_t(y)
    \qquad \forall x \in D_{t+1}.
\end{equation}
where \(\Pi_t^{D} : D_t \to D\) are fixed mappings. We have found numerically that, for certain PDE problems, the form \eqref{eq:kernelop2} outperforms \eqref{eq:kernelop1} due to the strong dependence of the solution \(u\) on the parameters \(a\), \nk{for example, the Darcy flow problem considered in subsection \ref{ssec:DF}}. Indeed, if we think of \eqref{eq:F} as a discrete time dynamical system, then the input \(a \in \A\) only enters through the initial condition hence its influence diminishes with more layers. By directly building in \(a\)-dependence into the kernel, we ensure that it influences the entire architecture.

Lastly, let \(\kappa^{(t)} \in C(D_{t+1} \times D_t \times \R^{d_{v_t}} \times \R^{d_{v_t}}; \R^{d_{v_{t+1}} \times d_{v_t}})\). Then we define \(\cK_t\) by
\begin{equation}
    \label{eq:kernelop3}
    (\cK_t(v_t))(x) = \int_{D_t} \kappa^{(t)} (x,y,v_t(\Pi_t(x)),v_t(y)) v_t(y) \: \text{d}\nu_t(y)
    \qquad \forall x \in D_{t+1}.
\end{equation}
where \(\Pi_t : D_{t+1} \to D_t\) are fixed mappings. Note that, in contrast to \eqref{eq:kernelop1} and \eqref{eq:kernelop2}, the integral operator \eqref{eq:kernelop3} is nonlinear since the kernel can depend on the input function \(v_t\). With this definition and a particular choice of kernel \(\kappa_t\) and measure \(\nu_t\), we show in Section \ref{sec:transformers} that neural operators are a continuous input/output space generalization of the popular transformer architecture \citep{vaswani2017attention}. 

\paragraph{Single Hidden Layer Construction} Having defined possible choices for the integral kernel operator, we are now in a position to explicitly write down a full layer of the architecture defined by \eqref{eq:F}. For simplicity, we choose the integral kernel operator given by \eqref{eq:kernelop1}, but note that the other definitions 
\eqref{eq:kernelop2}, \eqref{eq:kernelop3} work analogously. We then have that a single hidden layer update is given by
\begin{equation}
    \label{eq:onelayer}
    v_{t+1}(x) = \sigma_{t+1} \left ( W_t v_t( \Pi_{t} (x)) + \int_{D_t} \kappa^{(t)} (x,y) v_t(y) \: \text{d}\nu_t(y) + b_t(x) \right ) \qquad \forall x \in D_{t+1}
\end{equation}
where \(\Pi_t : D_{t+1} \to D_t\) are fixed mappings. We remark that, since we often consider functions on the same domain, we usually take \(\Pi_t\) to be the identity. 

We will now give an example of a full single hidden layer architecture i.e. when \(T=2\). We choose \(D_1 = D\), take \(\sigma_2\) as the identity,
and denote \(\sigma_1\) by \(\sigma\), assuming it is any activation function. Furthermore, for simplicity, we set \(W_1 = 0\), \(b_1 = 0\), and assume that \(\nu_0 = \nu_1\) is the Lebesgue measure on \(\R^d\). Then  \eqref{eq:F} becomes
\begin{equation}
    \label{eq:singlehiddenlayer}
    (\G_\theta(a))(x) = \cQ \left (\int_D \kappa^{(1)}(x,y) \sigma \left ( W_0 \cP(a(y)) + \int_D \kappa^{(0)}(y,z) \cP (a(z)) \: \text{d}z + b_0(y) \right ) \: \text{d}y \right)
\end{equation}
for any \(x \in D'\). In this example, \(\cP \in C(\R^{d_a}; \R^{d_{v_0}})\),  \(\kappa^{(0)} \in C(D \times D; \R^{d_{v_1} \times d_{v_0}})\), \(b_0 \in C(D; \R^{d_{v_1}})\), \(W_0 \in \R^{d_{v_1} \times d_{v_0}}\), \(\kappa^{(1)} \in C(D' \times D; \R^{d_{v_2} \times d_{v_1}})\), and \(\cQ \in C(\R^{d_{v_2}}; \R^{d_u})\). One can then parametrize the continuous functions \(\cP,\cQ,\kappa^{(0)},\kappa^{(1)},b_0\) by standard feed-forward neural networks (or by any other means) and the matrix \(W_0\) simply by its entries. The parameter vector \(\theta \in \R^p\) then becomes the concatenation of the parameters of \(\cP,\cQ,\kappa^{(0)},\kappa^{(1)},b_0\) along with the entries of \(W_0\). One can then optimize these parameters by minimizing with respect to \(\theta\) using standard gradient based minimization techniques. To implement this minimization, the functions
entering the loss need to be discretized; but the learned parameters may then be used with other discretizations. In Section~\ref{sec:four_schemes}, we discuss various choices for parametrizing the kernels, picking the integration measure,  
and how those choices affect the computational complexity of the architecture.

\paragraph{Preprocessing} It is often beneficial to manually include features into the input functions \(a\) to help facilitate the learning process. For example, instead of considering the \(\R^{d_a}\)-valued vector field \(a\) as input, we use the \(\R^{d+ d_a}\)-valued vector field \((x,a(x))\). By including the identity element, information about the geometry of the spatial domain \(D\) is directly incorporated into the architecture. This allows the neural networks direct access to information that is already known in the problem and therefore eases learning. We use this idea in all of our numerical experiments in Section~\ref{sec:numerics}. Similarly, when learning a smoothing operator, it may be beneficial to include a smoothed version of the inputs \(a_\epsilon\) using, for example, Gaussian convolution. Derivative information may also be of interest and thus, as input, one may consider, for example, the \(\R^{d + 2d_a + dd_a}\)-valued vector field \((x,a(x),a_\epsilon(x),\nabla_x a_\epsilon (x))\). Many other possibilities may be considered on a problem-specific basis.

\nk{
\paragraph{Discretization Invariance and Approximation}

In light of discretization invariance Theorem~\ref{thm:discretizational_invariance} and universal approximation Theorems~\ref{thm:main_compact}~\ref{thm:cm_compact},~\ref{thm:measurable_approx},~\ref{thm:cm_measurable_approx} whose formal statements are given in Section~\ref{sec:approximation}, we may obtain a decomposition of the total error made by a neural operator as a sum of the discretization error and the approximation error. In particular, given a finite dimensional instantiation of a neural operator \(\hat{\G}_\theta : \R^{Ld} \times \R^{L d_a} \to \U\), for some \(L\)-point discretization of the input, we have 
\[\|\hat{\G}_\theta (D_L, a|_{D_L}) - \G^\dagger (a)\|_\U \leq \underbrace{\|\hat{\G}_\theta (D_L, a|_{D_L}) - \G_\theta (a) \|_\U}_{\text{discretization error}} + \underbrace{\|\G_\theta (a) - \G^\dagger (a) \|_\U}_{\text{approximation error}}.\]
Our approximation theoretic Theorems imply that we can find parameters \(\theta\) so that the approximation error is arbitrarily small while the discretization invariance Theorem states that we can find a fine enough discretization (large enough \(L\)) so that the discretization error is arbitrarily small. Therefore, with a fixed set of parameters independent of the input discretization, a neural operator that is able to be implemented on a computer can approximate operators to arbitrary accuracy.

}

\section{Parameterization and Computation}
\label{sec:four_schemes}
In this section, we discuss various ways of parameterizing the infinite dimensional architecture \eqref{eq:F}, Figure~\ref{fig:NO_architecture}. The goal is to find
an intrinsic infinite dimensional parameterization that achieves
small error (say $\eps$), and then rely on numerical approximation to ensure that this parameterization  delivers an error of the same
magnitude (say $2\eps$), for all
data discretizations fine enough. In this way the number of parameters used to achieve ${\mathcal O}(\eps)$ error is independent of the data discretization. In many applications we have in mind the data
discretization is something we can control, for example when
generating input/output pairs from solution of partial
differential equations via numerical simulation. The proposed
approach allows us to train a neural operator approximation
using data from different discretizations, and to predict with
discretizations different from those used in the data, all by
relating everything to the underlying infinite dimensional
problem.

We also discuss the computational complexity of the proposed parameterizations and suggest algorithms which yield efficient numerical methods for approximation. Subsections~\ref{sec:graphneuraloperator}-\ref{sec:fourier} delineate each of the proposed methods.

To simplify notation, we will only consider a single layer of \eqref{eq:F} i.e. \eqref{eq:onelayer} and choose the input and output domains to be the same. Furthermore, we will drop the layer index \(t\) and write the single layer update as
\begin{equation}
    \label{eq:onelayercompute}
    u(x) = \sigma \left ( W v(x) + \int_D \kappa (x,y) v(y) \: \text{d}\nu(y) + b(x) \right ) \qquad \forall x \in D
\end{equation}
where \(D \subset \R^d\) is a bounded domain, \(v: D \to \R^n\) is the input function and \(u: D \to \R^m\) is the output function. 
\done{Should we comment that little changes if domains of $v$ and $u$
are different?} 
When the domain domains $D$ of $v$ and $u$ are different, we will usually extend them to be on a larger domain.  
We will consider \(\sigma\) to be fixed, and, for the time being, take \(\text{d} \nu(y) = \text{d} y\) to be the Lebesgue measure on \(\R^d\). Equation \eqref{eq:onelayercompute} then leaves three objects which can be parameterized: \(W\), \(\kappa\), and \(b\). Since \(W\) is linear and acts only locally on \(v\), we will always parametrize it by the values of its associated \(m \times n\) matrix; hence \(W \in \R^{m \times n}\) yielding \(mn\) parameters. 
We have found empirically that letting \(b: D \to \R^m\) be a constant function over any domain \(D\) works at least as well as allowing it to be an arbitrary neural network. Perusal of the proof of Theorem~\ref{thm:main_compact} shows that we do not lose any approximation power by doing this, and we reduce the total number of parameters in the architecutre. Therefore we will always parametrize \(b\) by the entries of a fixed \(m\)-dimensional vector; in particular, \(b \in \R^m\) yielding \(m\) parameters. Notice that both parameterizations are independent of any discretization of \(v\). 


The rest of this section will be dedicated to choosing the kernel function \(\kappa : D \times D \to \R^{m \times n}\) and the computation of the associated integral kernel operator. For clarity of exposition, we consider only the simplest proposed version of this operator \eqref{eq:kernelop1} but note that similar ideas may also be applied to \eqref{eq:kernelop2} and \eqref{eq:kernelop3}. Furthermore, in order to focus on learning the kernel $\kappa$, here we drop \(\sigma\), \(W\), and \(b\) from \eqref{eq:onelayercompute} and simply consider the linear update
\begin{equation}
    \label{eq:onelayerlinear}
    u(x) = \int_D \kappa(x,y) v(y) \: \text{d} \nu(y) \qquad \forall x \in D.
\end{equation}
To demonstrate the computational challenges associated with \eqref{eq:onelayerlinear}, let \(\{x_1,\dots,x_J\} \subset D\) be a uniformly-sampled \(J\)-point discretization of \(D\). Recall that we assumed \(\text{d} \nu(y) = \text{d} y\) and, for simplicity, suppose that \(\nu(D)=1\), then the Monte Carlo approximation of \eqref{eq:onelayerlinear} is
\begin{equation}
    \label{eq:onelayerlinear_MC}
    u(x_j) = \frac{1}{J} \sum_{l=1}^J \kappa(x_j, x_l) v(x_l), \qquad j=1,\dots,J
\end{equation}
The integral in \eqref{eq:onelayerlinear} can be approximated using any other integral approximation methods, including the celebrated Riemann sum for which $u(x_j) = \sum_{l=1}^J \kappa(x_j, x_l) v(x_l)\Delta x_l$ and $\Delta x_l$ is the Riemann sum coefficient associated with $\nu$ at $x_l$. For the approximation methods, to compute \(u\) on the entire grid requires \(\mathcal{O}(J^2)\) matrix-vector multiplications. Each of these matrix-vector multiplications requires \(\mathcal{O}(mn)\) operations; for the rest of the discussion, we treat \(mn = \mathcal{O}(1)\) as constant and consider only the cost with respect to \(J\) the discretization parameter since $m$ and $n$ are fixed by
the architecture choice whereas $J$ varies depending on
required discretization accuracy and hence may be arbitrarily
large. This cost is not specific to the Monte Carlo approximation but is generic for quadrature rules which use the entirety of the data. Therefore, when \(J\) is large, computing \eqref{eq:onelayerlinear} becomes intractable and new ideas are needed in order to alleviate this. Subsections~\ref{sec:graphneuraloperator}-\ref{sec:fourier} propose different approaches to the solution to this problem,
inspired by classical methods in numerical analysis.  We finally remark that, in contrast, computations with \(W\), \(b\), and \(\sigma\) only require \(\mathcal{O}(J)\) operations which justifies our focus on computation with the kernel integral operator.

\paragraph{Kernel Matrix.} It will often times be useful to consider the kernel matrix associated to \(\kappa\) for the discrete points \(\{x_1,\dots,x_J\} \subset D\). We define the kernel matrix \(K \in \R^{mJ \times nJ}\) to be the \(J \times J\) block matrix with each block given by the value of the kernel i.e. 
\[K_{jl} = \kappa(x_j, x_l) \in \R^{m \times n}, \qquad j,l=1,\dots,J\]
where we use \((j,l)\) to index an individual block rather than a matrix element. Various numerical algorithms for the efficient computation of \eqref{eq:onelayerlinear} can be derived based on assumptions made about the structure of this matrix, for example, bounds on its rank or sparsity.

\subsection{Graph Neural Operator (GNO)}
\label{sec:graphneuraloperator}

We first outline the Graph Neural Operator (GNO) which approximates \eqref{eq:onelayerlinear} by combining a Nystr\"om approximation with domain truncation and is implemented with the standard framework of graph neural networks. This construction was originally proposed in \cite{li2020neural}.

\paragraph{Nystr\"om Approximation. }
A simple yet effective method to alleviate the cost of computing \eqref{eq:onelayerlinear} is employing a Nystr\"om approximation. This amounts to sampling uniformly at random the points over which we compute the output function \(u\). In particular, let \(x_{k_1},\dots,x_{k_{J'}} \subset \{x_1,\dots,x_J\}\) be \(J' \ll J\) randomly selected points and, assuming \(\nu(D) = 1\), approximate \eqref{eq:onelayerlinear} by
\[u(x_{k_j}) \approx \frac{1}{J'} \sum_{l=1}^{J'} \kappa(x_{k_j}, x_{k_l}) v(x_{k_l}), \qquad j=1,\dots,J'.\]
We can view this as a low-rank approximation to the kernel matrix \(K\), in particular,
\begin{equation}
    \label{eq:nystrom_matrix}
    K \approx K_{J J'} K_{J' J'} K_{J' J}
\end{equation}
where \(K_{J' J'}\) is a \(J' \times J'\) block matrix and \(K_{J J'}\), \(K_{J' J}\) are interpolation matrices, for example, linearly extending the function to the whole domain from the random nodal points. 
\postponed{This last sentence is unclear for me: aren't all the
items in above identity matrices? How can two of them
extend to the whole domain?} 
\postponed{Response: sorry which are identity matrices?}
The complexity of this computation is \(\mathcal{O}(J'^2)\) hence it remains quadratic but only in the number of subsampled points \(J'\) which we assume is much less than the number of points \(J\) in the original  discretization.

\paragraph{Truncation. }
Another simple method to alleviate the cost of computing \eqref{eq:onelayerlinear} is to truncate the integral to a sub-domain of \(D\) which depends on the point of evaluation \(x \in D\). Let \(s : D \to \mathcal{B}(D)\) be a mapping of the points of \(D\) to the Lebesgue measurable subsets of \(D\) denoted \(\mathcal{B}(D)\). Define \(\text{d} \nu (x,y) = \mathds{1}_{s(x)} \text{d}y\) then \eqref{eq:onelayerlinear} becomes
\begin{equation}
    \label{eq:onelayertruncation}
    u(x) = \int_{s(x)} \kappa(x,y) v(y) \: \text{d} y \qquad \forall x \in D.
\end{equation}
If the size of each set \(s(x)\) is smaller than \(D\) then the cost of computing \eqref{eq:onelayertruncation} is \(\mathcal{O}(c_sJ^2)\) where \(c_s  < 1\) is a constant depending on \(s\). While the cost remains quadratic in \(J\), the constant \(c_s\) can have a significant effect in practical computations, as we demonstrate in Section~\ref{sec:numerics}. For simplicity and ease of implementation, we only consider \(s(x) = B(x,r) \cap D\) where \(B(x,r) = \{y \in \R^d : \|y - x\|_{\R^d} < r\}\) for some fixed \(r > 0\). With this choice of \(s\) and assuming that \(D = [0,1]^d\), we can explicitly calculate that \(c_s \approx r^d\).

Furthermore notice that we do not lose any expressive power when we make this approximation so long as we combine it with composition. To see this, consider the example of the previous paragraph where, if we let \(r=\sqrt{2}\), \done{Isn't $r\geq\sqrt{2}$ critical in Euclidean norm when
$D=2$ etc.?} then \eqref{eq:onelayertruncation} reverts to \eqref{eq:onelayerlinear}. Pick 
\(r < 1\) and let \(L \in \N\) with \(L \geq 2\) be the smallest integer such that \(2^{L-1}r \geq 1\). Suppose that \(u(x)\) is computed by composing the right hand side of \eqref{eq:onelayertruncation} \(L\) times with a different kernel every time. The domain of influence of \(u(x)\) is then \(B(x,2^{L-1}r) \cap D = D\) hence it is easy to see that there exist \(L\) kernels such that computing this composition is equivalent to computing \eqref{eq:onelayerlinear} for any given kernel with appropriate regularity. Furthermore the cost of this computation is \(\mathcal{O}(Lr^d J^2)\) and therefore the truncation is beneficial if \(r^d (\log_2 1/r + 1) < 1\)
which holds for any \(r < 1/2\) when \(d = 1\) and any \(r < 1\) when \(d \geq 2\). Therefore we have shown that we can always reduce the cost of computing \eqref{eq:onelayerlinear} by truncation and composition. From the perspective of the kernel matrix, truncation enforces a sparse, block diagonally-dominant structure at each layer. We further explore the hierarchical nature of this computation using the multipole method in subsection~\ref{sec:multipole}.

Besides being a useful computational tool, truncation can also be interpreted as explicitly building local structure into the kernel \(\kappa\). For problems where such structure exists, explicitly enforcing it makes learning more efficient, usually requiring less data to achieve the same generalization error. Many physical systems such as interacting particles in an electric potential exhibit strong local behavior that quickly decays, making truncation a natural approximation technique. 

\paragraph{Graph Neural Networks. }
We utilize the standard architecture of message passing graph networks employing edge features as introduced in \cite{gilmer2017neural} to efficiently implement \eqref{eq:onelayerlinear} on arbitrary discretizations of the domain \(D\). To do so, we treat a discretization \(\{x_1,\dots,x_J\} \subset D\) as the nodes of a weighted, directed graph and assign edges to each node using the function \(s : D \to \mathcal{B}(D)\) which, recall from the section on truncation, assigns to each point a domain of integration. In particular, for \(j=1,\dots,J\), we assign the node \(x_j\) the value \(v(x_j)\) and emanate from it edges to the nodes \(s(x_j) \cap \{x_1,\dots,x_J\} = \mathcal{N}(x_j)\) which we call the neighborhood of \(x_j\). If \(s(x) = D\) then the graph is fully-connected. Generally, the sparsity structure of the graph determines the sparsity of the kernel matrix \(K\), indeed, the adjacency matrix of the graph and the block kernel matrix have the same zero entries. The weights of each edge are assigned as the arguments of the kernel. In particular, for the case of \eqref{eq:onelayerlinear}, the weight of the edge between nodes \(x_j\) and \(x_k\) is simply the concatenation \((x_j,x_k) \in \R^{2d}\).  More complicated weighting functions are considered for the implementation of the integral kernel operators \eqref{eq:kernelop2} or \eqref{eq:kernelop3}.

With the above definition the message passing algorithm of \cite{gilmer2017neural}, with averaging aggregation, updates the value \(v(x_j)\) of the node \(x_j\) to the value \(u(x_j)\) as 
\[u(x_j) = \frac{1}{|\mathcal{N}(x_j)|}\sum_{y \in \mathcal{N}(x_j)} \kappa(x_j, y) v(y), \qquad j=1,\dots,J\]
which corresponds to the Monte-Carlo approximation of the integral \eqref{eq:onelayertruncation}. More sophisticated quadrature rules and adaptive meshes can also be implemented using the general framework of message passing on graphs, see, for example, \cite{pfaff2020learning}. We further utilize this framework in subsection~\ref{sec:multipole}.

\paragraph{Convolutional Neural Networks. } Lastly, we compare and contrast the GNO framework to standard convolutional neural networks (CNNs). 
In computer vision, the success of CNNs has largely been attributed to their ability to capture local features such as edges that can be used to distinguish different objects in a natural image. This property is obtained by enforcing the convolution kernel to have local support, an idea similar to our truncation approximation. Furthermore by directly using a translation invariant kernel, a CNN architecture becomes translation equivariant; this is a desirable feature for many vision models e.g. ones that perform segmentation. We will show that similar ideas can be applied to the neural operator framework to obtain an architecture with built-in local properties and translational symmetries that, unlike CNNs, remain consistent in function space.

To that end, let \(\kappa(x,y) = \kappa(x - y)\) and suppose that \(\kappa : \R^d \to \R^{m \times n}\) is supported on \(B(0,r)\). Let \(r^* > 0\) be the smallest radius such that \(D \subseteq B(x^*, r^*)\) where \(x^* \in \R^d\) denotes the center of mass of \(D\) and suppose \(r \ll r^*\). Then \eqref{eq:onelayerlinear} becomes the convolution
\begin{equation}
    \label{eq:onelayercnn}
    u(x) = (\kappa * v)(x) = \int_{B(x,r) \cap D} \kappa(x-y) v(y) \: \text{d} y \qquad \forall x \in D.
\end{equation}
Notice that \eqref{eq:onelayercnn} is precisely \eqref{eq:onelayertruncation} when \(s(x) = B(x,r) \cap D\) and \(\kappa(x,y) = \kappa(x - y)\). When the kernel is parameterized by e.g. a standard neural network and the radius \(r\) is chosen independently of the data discretization, \eqref{eq:onelayercnn} becomes a layer of a convolution neural network that is consistent in function space. Indeed the parameters of \eqref{eq:onelayercnn} do not depend on any discretization of \(v\). The choice \(\kappa(x,y) = \kappa(x - y)\) enforces translational equivariance in the output while picking \(r\) small enforces locality in the kernel; 
hence we obtain the distinguishing features of a CNN model.

We will now show that, by picking a parameterization that is \textit{inconsistent} 
in function space and applying a Monte Carlo approximation to the integral, \eqref{eq:onelayercnn}  becomes a standard CNN. This is most easily demonstrated when \(D = [0,1]\) and the discretization \(\{x_1,\dots,x_J\}\) is equispaced i.e. \(|x_{j+1} - x_j| = h\) for any \(j=1,\dots,J-1\). Let \(k \in \N\) be an odd filter size 
and let \(z_1,\dots,z_k \in \R\) be the points \(z_j = (j-1-(k-1)/2)h\) for \(j=1,\dots,k\). It is easy to see that \(\{z_1,\dots,z_k\} \subset \bar{B}(0,(k-1)h/2)\) which we choose as the support of \(\kappa\). Furthermore, we parameterize \(\kappa\) directly by its pointwise values which are \(m \times n\) matrices at the locations \(z_1,\dots,z_k\) thus yielding \(kmn\) parameters. Then \eqref{eq:onelayercnn} becomes
\[u(x_j)_p \approx \frac{1}{k} \sum_{l=1}^k \sum_{q=1}^n \kappa(z_l)_{pq} v(x_j - z_l)_q, \qquad j=1,\dots,J, \:\: p=1,\dots,m\]
where we define \(v(x) = 0\) if \(x \not \in \{x_1,\dots,x_J\}\). Up to the constant factor \(1/k\) which can be re-absorbed into the parameterization of \(\kappa\), this is precisely the update of a stride 1 CNN with \(n\) input channels, \(m\) output channels, and zero-padding so that the input and output signals have the same length. This example can easily be generalized to higher dimensions and different CNN structures, we made the current choices for simplicity of exposition. Notice that if we double the amount of discretization points for \(v\) i.e. \(J \mapsto 2J\) and \(h \mapsto h/2\), the support of \(\kappa\) becomes \(\bar{B}(0,(k-1)h/4)\) hence the model changes due to the discretization of the data. Indeed, if we take the limit to the continuum \(J \to \infty\), we find \(\bar{B}(0,(k-1)h/2) \to \{0\}\) hence the model becomes completely local. To fix this, we may try to increase the filter size \(k\) (or equivalently add more layers) simultaneously with \(J\), but then the number of parameters in the model goes to infinity as \(J \to \infty\) since, as we previously noted, there are \(kmn\) parameters in this layer. Therefore standard CNNs are not consistent models in function space. We demonstrate their inability to generalize to different resolutions in Section~\ref{sec:numerics}.

\subsection{Low-rank Neural Operator (LNO)}
\label{sec:lowrank}
By directly imposing that the kernel \(\kappa\) is of a tensor product form, we obtain a layer with \(\mathcal{O}(J)\) computational complexity. We term this construction the Low-rank Neural Operator (LNO) due to its equivalence to directly parameterizing a finite-rank operator. We start by assuming \(\kappa : D \times D \to \R\) is scalar valued and later generalize to the vector valued setting. We express the kernel as
\[\kappa(x,y) = \sum_{j=1}^r \varphi^{(j)}(x) \psi^{(j)}(y) \qquad \forall x,y \in D\]
for some functions \(\varphi^{(1)},\psi^{(1)},\dots,\varphi^{(r)},\psi^{(r)} : D \to \R\) that are normally given as the components of two neural networks \(\varphi, \psi : D \to \R^r\) or a single neural network \(\Xi : D \to \R^{2r}\) which couples all functions through its parameters. With this definition, and supposing that \(n=m=1\), we have that \eqref{eq:onelayerlinear} becomes
\begin{align*}
    u(x) &= \int_D \sum_{j=1}^r \varphi^{(j)}(x) \psi^{(j)}(y) v(y) \: \text{d}y \\ 
    &= \sum_{j=1}^r \int_D \psi^{(j)} (y) v(y) \: \text{d}y \: \varphi^{(j)}(x) \\
    &= \sum_{j=1}^r \langle \psi^{(j)}, v \rangle \varphi^{(j)} (x)
\end{align*}
where \(\langle \cdot, \cdot \rangle\) denotes the \(L^2(D;\R)\) inner product. Notice that the inner products can be evaluated independently of the evaluation point \(x \in D\) hence the computational complexity of this method is \(\mathcal{O}(rJ)\) which is linear in the discretization. 

We may also interpret this choice of kernel as directly parameterizing a rank \(r \in \N\) operator on \(L^2(D;\R)\). Indeed, we have
\begin{equation}
    \label{eq:finiteranksvd}
    u = \sum_{j=1}^r (\varphi^{(j)} \otimes \psi^{(j)}) v 
\end{equation}
which corresponds preceisely to applying the SVD of a rank \(r\) operator to the function \(v\). Equation \eqref{eq:finiteranksvd} makes natural the vector valued generalization. Assume \(m, n \geq 1\) and \(\varphi^{(j)}: D \to \R^m\) and \(\psi^{(j)} : D \to \R^n\) for \(j=1,\dots,r\) then, \eqref{eq:finiteranksvd} defines an operator mapping 
\(L^2(D;\R^n) \to L^2(D;\R^m)\) that can be evaluated as
\[u(x) = \sum_{j=1}^r \langle \psi^{(j)}, v \rangle_{L^2(D;\R^n)} \varphi^{(j)}(x) \qquad \forall x  \in D.\]
We again note the linear computational complexity of this parameterization. Finally, we observe that this method can be interpreted as directly imposing a rank \(r\) structure on the kernel matrix. Indeed,
\[K = K_{Jr} K_{rJ}\]
where \(K_{Jr}, K_{rJ}\) are \(J \times r\) and \(r \times J\) block matricies respectively. 
This construction is similar to the DeepONet construction of \cite{lu2019deeponet} discussed in Section~\ref{sec:deeponets}, but parameterized to be consistent in function space.


\postponed{We have not really discussed our learning problem using the
solution manifold perspective of DeVore/Dahmen as far as I recall;
maybe rephrase in terms of Bochner norm that we have discussed,
or introduce that manifold perspective if it adds something. I am not convinced it does because it  actually is a graph (if output unique
for every input which we implicitly assume) and Bochner
measures the graph or errors between graphs (our approximation
is also a graph).}

\subsection{Multipole Graph Neural Operator (MGNO)}
\label{sec:multipole}
A natural extension to directly working with kernels in a tensor product form as in Section~\ref{sec:lowrank} is to instead consider kernels that can be well approximated by such a form. This assumption gives rise to the fast multipole method (FMM) which employs a multi-scale decomposition of the kernel in order to achieve linear complexity in computing \eqref{eq:onelayerlinear}; for a detailed discussion see e.g. \citep[Section 3.2]{e2011principles}. FMM can be viewed as a systematic approach to combine the sparse and low-rank approximations to the kernel matrix. Indeed, the kernel matrix is decomposed into different ranges and a hierarchy of low-rank structures is imposed on the long-range components.  We employ this idea to construct hierarchical, multi-scale graphs, without being constrained to particular forms of the kernel.  We will elucidate the workings of the FMM through matrix factorization.
This approach was first outlined in \cite{li2020multipole} and is referred as the Multipole Graph Neural Operator (MGNO).

The key to the fast multipole method's linear complexity lies in the subdivision of the kernel matrix according to the range of interaction, as shown in Figure \ref{fig:hmatrix}: 
\begin{equation}\label{eq:decomposition_matrix}
K = K_1 + K_2 + \ldots + K_L,
\end{equation}
where $K_\ell$ with $\ell=1$ corresponds to the shortest-range interaction, and $\ell=L$  corresponds to the longest-range interaction; more generally index $\ell$ is ordered by the
range of interaction. While the uniform grids depicted in Figure \ref{fig:hmatrix} produce an orthogonal decomposition of the kernel, the decomposition may be generalized to arbitrary discretizations  by allowing slight overlap of the ranges.


\begin{figure}[h]
    {\centering
    \includegraphics[width=\textwidth]{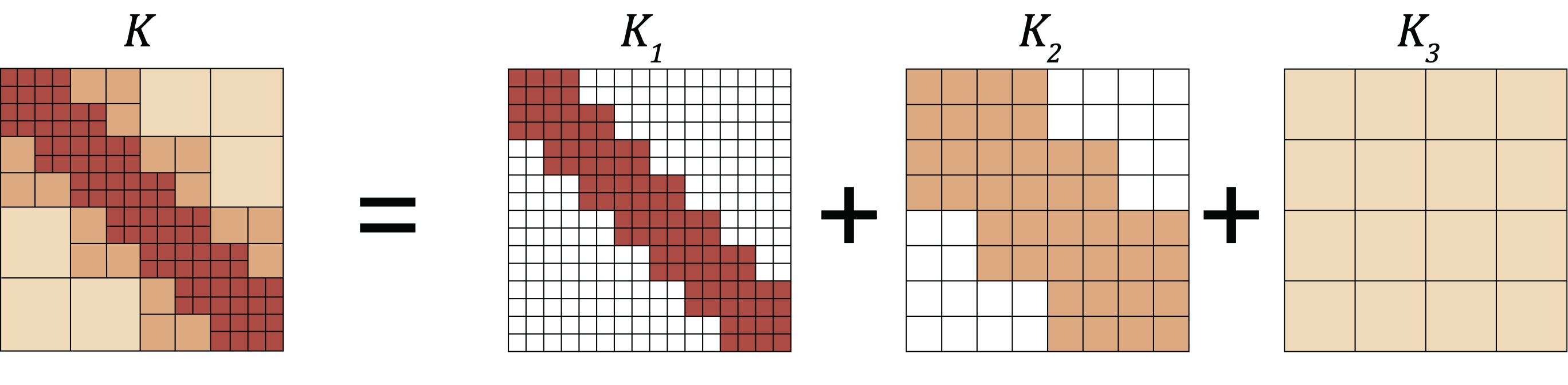}
    \caption{Hierarchical matrix decomposition}
    \label{fig:hmatrix}
    \small{
    The kernel matrix $K$ is decomposed with respect to its interaction ranges. $K_1$ corresponds to short-range interaction; it is sparse but full-rank. $K_3$ corresponds to long-range interaction; it is dense but low-rank.
    }
    }
\end{figure}

\paragraph{Multi-scale Discretization.}
We produce a hierarchy of $L$ discretizations with a decreasing number of nodes $J_1 \geq \ldots \geq J_L$ and increasing kernel integration radius $r_1  \leq \ldots \leq r_L$. Therefore, the shortest-range interaction $K_1$ has a fine resolution but is truncated locally, while the longest-range interaction $K_L$ has a coarse resolution, but covers the entire domain. This is shown pictorially in Figure \ref{fig:hmatrix}.
The number of nodes $J_1 \geq \ldots \geq J_L$, and the integration radii $r_1  \leq \ldots \leq r_L$ are hyperparameter choices and can be picked so that the total computational complexity is linear in \(J\). 

A special case of this construction is when the grid is uniform. Then our formulation reduces to the standard fast multipole algorithm and the kernels $K_l$ form an orthogonal decomposition of the full kernel matrix $K$.
Assuming the underlying discretization \(\{x_1,\dots,x_J\} \subset D\) is a uniform grid with resolution $s$ such that $s^d = J$, the $L$ multi-level discretizations will be grids with resolution $s_l = s/2^{l-1}$, and consequentially $J_l = s_l^d = (s/2^{l-1})^d$ . In this case $r_l$ can be chosen as $1/s$ for \(l=1,\dots,L\).
To ensure orthogonality of the discretizations, the fast multipole algorithm sets the integration domains to be $B(x, r_l) \setminus B(x, r_{l-1})$ for each level \(l=2,\dots,L\), so that the discretization on level $l$ does not overlap with the one on level $l-1$. Details of this algorithm can be found in e.g. \citet{greengard1997new}.

\paragraph{Recursive Low-rank Decomposition.}
The coarse discretization representation can be understood as recursively applying an inducing points approximation \citep{quinonero2005aunifying}: starting from a discretization with $J_1 = J$ nodes, we impose inducing points of size $J_2, J_3,\dots, J_L$  which all admit a low-rank kernel matrix decomposition of the form (\ref{eq:nystrom_matrix}). 
The original $J \times J$ kernel matrix $K_l$ is represented by a much smaller $J_l \times J_l$ kernel matrix, denoted by $K_{l,l}$.
As shown in Figure \ref{fig:hmatrix},  $K_1$ is full-rank but very sparse while $K_L$ is dense but low-rank. Such structure can be achieved by applying equation (\ref{eq:nystrom_matrix}) recursively to equation (\ref{eq:decomposition_matrix}), leading to the multi-resolution matrix factorization \citep{kondor2014multiresolution}:
\begin{equation}\label{eq:hierarchy_decomposition}
K \approx K_{1,1} + K_{1,2}K_{2,2}K_{2,1} + K_{1,2}K_{2,3}K_{3,3}K_{3,2}K_{2,1} + \cdots
\end{equation}
where $ K_{1,1} = K_1$ represents the shortest range, $K_{1,2}K_{2,2}K_{2,1} \approx K_2$, represents the  second shortest range, etc. The center matrix $K_{l,l}$ is a $J_l \times J_l$ kernel matrix corresponding to the $l$-level of the discretization described above. The matrices $K_{l+1,l}, K_{l,l+1}$ are  $J_{l+1} \times J_l$ and $J_{l} \times J_{l+1}$  wide and long respectively block transition matrices. 
Denote $v_l \in R^{J_l \times n}$ for the representation of the input \(v\) at each level of the discretization for \(l=1,\dots,L\), and $u_l \in R^{J_l \times n}$ for the output (assuming the inputs and outputs has the same dimension). We define the matrices $K_{l+1,l}, K_{l,l+1}$ as moving the representation \(v_l\) between different levels of the discretization via an integral kernel that we learn. 
Combining with the truncation idea introduced in subsection~\ref{sec:graphneuraloperator}, we define the transition matrices as discretizations of the following integral kernel operators:

\begin{align}
\label{eq:all1}
&K_{l,l}: v_l \mapsto u_{l} = \int_{B(x,r_{l,l})} \kappa_{l,l}(x,y)v_l(y) \: \text{d}y \\ \label{eq:all2}
&K_{l+1,l}: v_l \mapsto u_{l+1} = \int_{B(x,r_{l+1,l})} \kappa_{l+1, l}(x,y)v_l(y) \: \text{d}y \\
\label{eq:all3}
&K_{l, l+1}: v_{l+1} \mapsto u_{l} = \int_{B(x,r_{l,l+1})} \kappa_{l,l+1}(x,y)v_{l+1}(y) \: \text{d}y 
\end{align}
where each kernel \(\kappa_{l,l'} : D \times D \to \R^{n \times n}\) is parameterized as a neural network and learned.

\begin{figure}[t]
    {\centering
    \includegraphics[width=\textwidth]{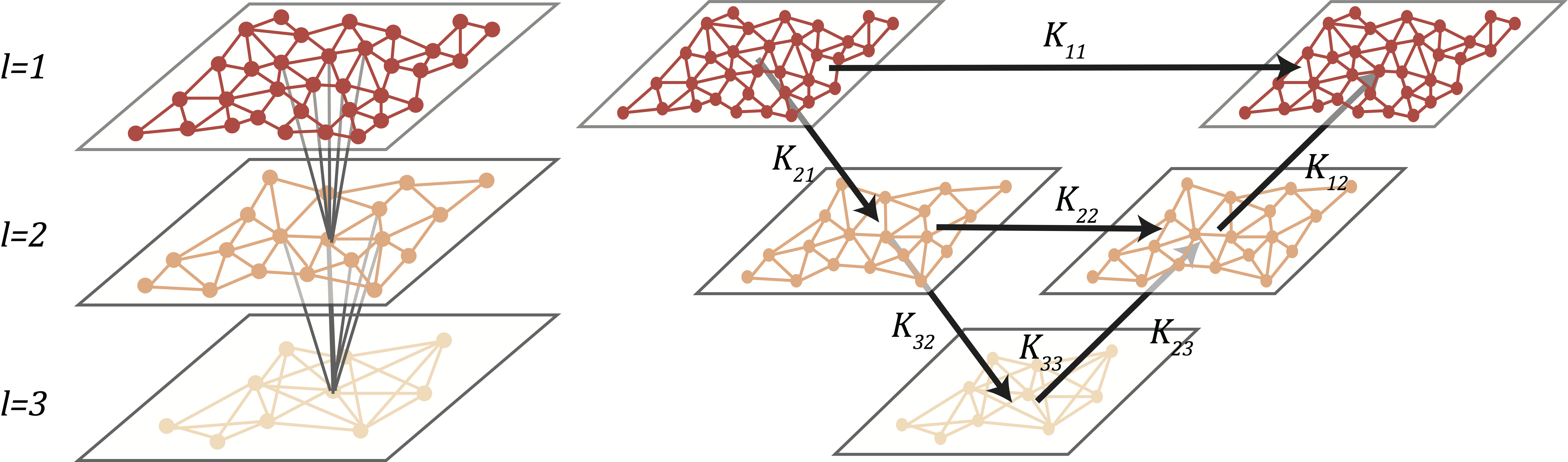}
    \caption{V-cycle}\label{fig:vcycle}
    \label{fig:multigraph}
    \small{
    {\bf Left}: the multi-level discretization.  {\bf Right}: one V-cycle iteration for the multipole neural operator.
    }
    }
\end{figure}

\paragraph{V-cycle Algorithm}
We present a V-cycle algorithm, see Figure \ref{fig:vcycle}, for efficiently computing \eqref{eq:hierarchy_decomposition}. It consists of two steps: the \textit{downward pass} and the \textit{upward pass}. Denote the representation in downward pass and upward pass by $\vdown$ and $\vup$ respectively. In the downward step, the algorithm starts from the fine discretization representation $\vdown_1$ and updates it by applying a downward transition 
$\vdown_{l+1} = K_{l+1,l} \vdown_{l}$.
In the upward step, the algorithm starts from the coarse presentation $\vup_{L}$ and updates it by applying an upward transition and the center kernel matrix
$\vup_{l} = K_{l,l-1} \vup_{l-1} + K_{l,l} \vdown_l$. Notice that 
applying one level downward and upward exactly computes $K_{1,1} + K_{1,2}K_{2,2}K_{2,1}$, and a full $L$-level V-cycle leads to the multi-resolution decomposition \eqref{eq:hierarchy_decomposition}.

Employing \eqref{eq:all1}-\eqref{eq:all3}, we use $L$ neural networks $\kappa_{1,1}, \ldots, \kappa_{L,L}$ to approximate the kernel operators associated to $K_{l,l}$, and $2(L-1)$  neural networks $\kappa_{1,2}, \kappa_{2,1}, \dots$ to approximate the transitions  $K_{l+1,l}, K_{l,l+1}$.
Following the iterative architecture \eqref{eq:F}, we introduce the linear operator \(W \in \R^{n \times n}\)
(denoting it by \(W_l\) for each corresponding resolution) to help regularize the iteration, as well as the nonlinear activation function $\sigma$ to increase the expensiveness. \done{Previous sentence ends without making sense.} Since $W$ acts pointwise (requiring $J$ remains the same for input and output), we employ it only along with the kernel \(K_{l,l}\) and not the transitions. At each layer \(t=0,\dots,T-1\), we perform a full V-cycle as:

\begin{itemize}
    \item  Downward Pass
    \begin{equation}\label{eq:up}
    \textnormal{For}\ l = 1, \ldots, L: 
    \hspace{0.7in}
         \vdown_{l+1}^{(t+1)} = \sigma(\vup_{l+1}^{(t)} +  K_{l+1,l} \vdown_{l}^{(t+1)} )
     \hspace{1.3in}
    \end{equation}
    \item  Upward Pass
    \begin{equation}\label{eq:down}
    \textnormal{For}\ l = L, \ldots, 1: 
    \hspace{0.7in}
       \vup_{l}^{(t+1)} = \sigma( (W_l+ K_{l,l}) \vdown_{l}^{(t+1)}  +  K_{l,l-1} \vup_{l-1}^{(t+1)}).
   \hspace{0.5in}
    \end{equation}
\end{itemize}
Notice that one full pass of the V-cycle algorithm defines a mapping \(v \mapsto u\).

\paragraph{Multi-level Graphs.}
We emphasize that we view the discretization \(\{x_1,\dots,x_J\} \subset D\) as a graph in order to facilitate an efficient implementation through the message passing graph neural network architecture. Since the V-cycle algorithm works at different levels of the discretization, we build multi-level graphs to represent the coarser and finer discretizations. 
We present and utilize two constructions of multi-level graphs, the orthogonal multipole graph and the generalized random graph.
The orthogonal multipole graph is the standard grid construction used in the fast multiple method which is adapted to a uniform grid, see e.g. \citep{greengard1997new}. In this construction, the decomposition in \eqref{eq:decomposition_matrix} is orthogonal in that the finest graph only captures the closest range interaction, the second finest graph captures the second closest interaction minus the part already captured in the previous graph and so on, recursively. In particular, the ranges of interaction for each kernel do not overlap. While this construction is usually efficient, it is limited to uniform grids which may be a bottleneck for certain applications.
Our second construction is the generalized random graph as shown in Figure \ref{fig:hmatrix} where the ranges of the kernels are allowed to overlap. The generalized random graph is very flexible as it can be applied on any domain geometry and discretization. Further it can also be combined with random sampling methods to work on problems where \(J\) is very large or combined with an active learning method to adaptively choose the regions where a finer discretization is needed. 

\paragraph{Linear Complexity.}
Each term in the decomposition \eqref{eq:decomposition_matrix} is represented by the kernel matrix   \(K_{l,l}\) for $l = 1,\ldots, L $, and \(K_{l+1,l}, K_{l,l+1}\) for $l = 1, \ldots, L-1 $ corresponding to the appropriate sub-discretization.
Therefore the complexity of the multipole method is 
 $\sum_{l=1}^L \mathcal{O}(J^2_l r_l^d) + \sum_{l=1}^{L-1}\mathcal{O}(J_l J_{l+1} r_l^d) = \sum_{l=1}^L \mathcal{O}(J^2_l r_l^d)$.
By designing the sub-discretization so that $\mathcal{O}(J^2_l r_l^d) \leq \mathcal{O}(J)$, we can obtain  complexity linear in $J$. 
For example, when $d=2$, pick \(r_l = 1/\sqrt{J_l}\) and \(J_l = \mathcal{O}(2^{-l} J)\) such that \(r_L\) is large enough so that there exists a ball of radius \(r_L\) containing \(D\). Then clearly $\sum_{l=1}^L \mathcal{O}(J^2_l r_l^d) = \mathcal{O}(J)$. 
By combining with a Nystr\"om approximation, we can obtain $\mathcal{O}(J')$ complexity for some \(J' \ll J\).

\subsection{Fourier Neural Operator (FNO)}
\label{sec:fourier}
Instead of working with a kernel directly on the domain \(D\), we may consider its representation in Fourier space and directly parameterize it there. This allows us to utilize Fast Fourier Transform (FFT) methods in order to compute the action of the kernel integral operator  \eqref{eq:onelayerlinear} with almost linear complexity. A similar idea was used in \citep{nelsen2020random} to construct random features in function space The method we outline was first described in \cite{li2020fourier} and is termed the Fourier Neural Operator (FNO). We note that the theory of Section 4 is designed for general kernels and does not apply to the FNO formulation; however, similar universal approximation results were developed for it in \citep{kovachki2021universal} when the input and output spaces are Hilbert space.
For simplicity, we will assume that \(D = \mathbb{T}^d\) is the unit torus and all functions are complex-valued.
Let \(\cG : L^2(D;\C^n) \to \ell^2(\Z^d;\C^n)\) denote the Fourier transform of a function $v: D \to \C^n$ and $\cG^{-1}$ its inverse. For \(v \in L^2(D;\C^n)\) and \(w \in \ell^2(\Z^d;\C^n)\), we have 
\begin{align*}
    &(\cG v)_j (k) = \langle v_j, \psi_k \rangle_{L^2(D;\C)}, \qquad \qquad j \in \{1,\dots,n\}, \quad k \in \Z^d, \\
    &(\cG^{-1} w)_j (x) = \sum_{k \in \Z^d} w_j (k) \psi_k (x), \qquad j \in \{1,\dots,n\}, \quad x \in D
\end{align*}
where, for each \(k \in \Z^d\), we define
\[\psi_k (x) = e^{2\pi i k_1 x_1} \cdots e^{2 \pi i k_d x_d}, \qquad x \in D\]
with \(i = \sqrt{-1}\) the imaginary unit. By letting $\kappa(x,y) = \kappa(x-y)$ for some \(\kappa : D \to \C^{m \times n}\) in \eqref{eq:onelayerlinear} and applying the convolution theorem, we find that
\[u(x) = \cG^{-1} \bigl( \cG(\kappa) \cdot \cG(v) \bigr )(x) \qquad \forall x \in D.\]
We therefore propose to directly parameterize $\kappa$ by its Fourier coefficients. We write
\begin{equation}
    \label{eq:fourierlayer}
    u(x) = \cG^{-1} \bigl( R_\phi \cdot \cG(v) \bigr )(x) \qquad \forall x \in D
\end{equation}
where $R_\phi$ is the Fourier transform of a periodic function $\kappa: D \to \C^{m \times n}$ parameterized by some \(\phi \in \R^p\).

\begin{figure}
    \centering
    \includegraphics[width=\textwidth]{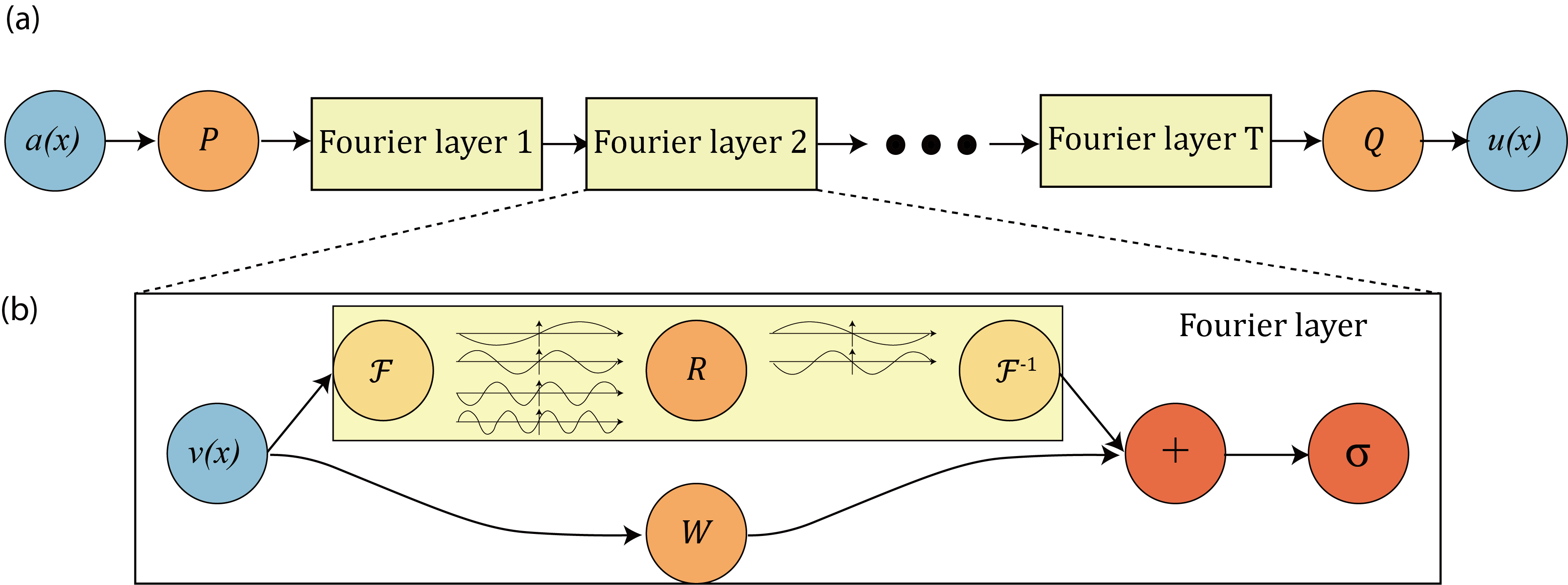}
        \caption{ {\bf top:} The architecture of the neural operators; \textbf{bottom:} Fourier layer.}
    \label{fig:arch}
    \small{
    {\bf (a) The full architecture of neural operator}: start from input $a$. 1. Lift to a higher dimension channel space by a neural network $\mathcal{P}$. 2. Apply $T$ (typically $T=4$) layers of integral operators and activation functions. 3. Project back to the target dimension by a neural network $Q$. Output $u$.
    {\bf (b) Fourier layers}: Start from input $v$. On top: apply the Fourier transform $\cG$; a linear transform $R$ on the lower Fourier modes which also filters out the higher modes; then apply the inverse Fourier transform $\cG^{-1}$. On the bottom: apply a local linear transform $W$.
    }
\end{figure}

For frequency mode \(k \in \Z^d\), we have $(\cG v)(k) \in \C^{n}$ and $R_\phi(k) \in \C^{m \times n}$. We pick a finite-dimensional parameterization by truncating the Fourier series at a maximal number of modes 
\(k_{\text{max}} = |Z_{k_{\text{max}}}| = |\{k \in \mathbb{Z}^d : |k_j| \leq k_{\text{max},j}, \text{ for } j=1,\dots,d\}|.\) This choice improves the empirical performance and sensitivity of the resulting model with respect to the choices of discretization.
We thus parameterize $R_\phi$ directly as complex-valued $(k_{\text{max}} \times m \times n)$-tensor comprising a collection of truncated Fourier modes and therefore drop $\phi$ from our notation. 
In the case where we have real-valued  \(v\) and we want \(u\) to also be real-valued, we impose that $\kappa$ is real-valued by enforcing conjugate symmetry in the parameterization i.e.
\[R(-k)_{j,l} = R^*(k)_{j,l} \qquad \forall k \in Z_{k_{\text{max}}}, \quad j=1,\dots,m, \:\: l=1,\dots,n.\]
We note that the set $Z_{k_{\text{max}}}$ is not the canonical choice for the low frequency modes of $v_t$. Indeed, the low frequency modes are usually defined by placing an upper-bound on the $\ell_1$-norm of $k \in \mathbb{Z}^d$. We choose $Z_{k_{\text{max}}}$ as above since it allows for an efficient implementation.  Figure~\ref{fig:arch} gives a pictorial representation of an entire Neural Operator architecture employing Fourier layers.


\paragraph{The Discrete Case and the FFT.}
Assuming the domain $D$ is discretized with $J \in \mathbb{N}$ points, we can treat $v \in \C^{J \times n}$ and $\cG (v) \in \C^{J \times n}$.
Since we convolve $v$ with a function which only has $k_{\text{max}}$ Fourier modes, we may simply truncate the higher modes to obtain $\cG (v) \in \C^{k_{\text{max}} \times n}$. Multiplication by the weight tensor $R \in \C^{k_{\text{max}} \times m \times n}$ is then
\begin{equation}
\label{eq:fft_mult}
\bigl( R \cdot (\cG v_t) \bigr)_{k,l} = \sum_{j=1}^{n} R_{k,l,j}  (\cG v)_{k,j}, \qquad k=1,\dots,k_{\text{max}}, \quad l=1,\dots,m.
\end{equation}
When the discretization is uniform with resolution \(s_1 \times  \cdots \times s_d = J\), $\cG$ can be replaced by the Fast Fourier Transform. For $v \in \C^{J \times n}$,   $k = (k_1, \ldots, k_{d}) \in \bbZ_{s_1} \times \cdots \times \bbZ_{s_d}$, and $x=(x_1, \ldots, x_{d}) \in D$, the FFT $\hat{\cG}$ and its inverse $\hat{\cG}^{-1}$ are defined as
\begin{align*}
    (\hat{\cG} v)_l(k) = \sum_{x_1=0}^{s_1-1} \cdots \sum_{x_{d}=0}^{s_d-1} v_l(x_1, \ldots, x_{d}) e^{- 2i \pi \sum_{j=1}^{d} \frac{x_j k_j}{s_j} }, \\
    (\hat{\cG}^{-1} v)_l(x) = \sum_{k_1=0}^{s_1-1} \cdots \sum_{k_{d}=0}^{s_d-1} v_l(k_1, \ldots, k_{d}) e^{2i \pi \sum_{j=1}^{d} \frac{x_j k_j}{s_j} }
\end{align*}
for $l=1,\dots,n$. 
In this case, the set of truncated modes becomes
\[Z_{k_{\text{max}}} = \{(k_1, \ldots, k_{d}) \in \bbZ_{s_1} \times \cdots \times \bbZ_{s_d} \mid k_j \leq k_{\text{max},j} \text{ or }\ s_j-k_j \leq k_{\text{max},j}, \text{ for } j=1,\dots,d\}.\]
When implemented, $R$ is treated as a $(s_1 \times \cdots \times s_d \times m \times n)$-tensor and the above definition of $Z_{k_{\text{max}}}$ corresponds to the ``corners'' of $R$, which allows for a straight-forward parallel implementation of (\ref{eq:fft_mult}) via matrix-vector multiplication. 
In practice, we have found the choice $k_{\text{max},j}$ roughly around $\frac{1}{3}$ to $\frac{2}{3}$ of the maximum number of Fourier modes in the Fast Fourier Transform of the grid valuation of the input function provides desirable performance. In our empirical studies, we set $k_{\text{max},j}=12$  which yields $k_{\text{max}} = 12^d$ parameters per channel, to be sufficient for all the tasks that we consider.

\paragraph{Choices for $R$.} In general, $R$ can be defined to depend on $(\cG a)$, the Fourier transform of the input \(a \in \A\) to parallel our construction \eqref{eq:kernelop2}.
Indeed, we can define $R_\phi: \mathbb{Z}^d \times \C^{d_a} \to \C^{m \times n}$
as a parametric function that maps \(\bigl(k,(\cG a)(k))\) to the values of the appropriate Fourier modes. We have experimented with the following parameterizations of $R_\phi$:
\begin{itemize}
    \item \textit{Direct. } Define the parameters $\phi_k \in \C^{m \times n }$ for each wave number $k$:
    \[ R_\phi \bigl(k,(\cG a)(k)\bigr) := \phi_k.\]
    \item \textit{Linear. } Define the parameters \(\phi_{k_1} \in \C^{m \times n \times d_a}\), \(\phi_{k_2} \in \C^{m \times n}\) for each wave number $k$:
    \[R_\phi \bigl(k,(\cG a)(k)\bigr) := \phi_{k_1} (\cG a)(k) + \phi_{k_2}.\]
    \item \textit{Feed-forward neural network.}
    Let $\Phi_\phi:  \mathbb{Z}^d \times \C^{d_a} \to \C^{m \times n}$ be a neural network with parameters $\phi$:
    \[R_\phi \bigl(k,(\cG a)(k)\bigr) := \Phi_\phi(k, (\cG a)(k)). \]
\end{itemize}
We find that the \textit{linear} parameterization has a similar performance to the \textit{direct} parameterization above, \done{use label to point to this direct parameterization}
however, it is not as efficient both in terms of computational complexity and the number of parameters required.
On the other hand, we find that the \textit{feed-forward neural network} parameterization has a worse performance.  This is likely due to the discrete structure of the space $\mathbb{Z}^d;$ numerical evidence suggests neural networks are not adept at handling this structure. Our experiments in this work focus on the direct parameterization presented above.

\paragraph{Invariance to Discretization.}
The Fourier layers are discretization-invariant because they can learn from and evaluate functions which are discretized in an arbitrary way. Since parameters are learned directly in Fourier space, resolving the functions in physical space simply amounts to projecting on the basis elements $e^{2\pi i \langle x, k \rangle}$; these are well-defined everywhere on $\C^d$. 


\paragraph{Quasi-linear Complexity.}
The weight tensor $R$ contains $k_{\text{max}} < J$ modes, so the inner multiplication has complexity $\mathcal{O}(k_{\text{max}})$. Therefore, the majority of the computational cost lies in computing the Fourier transform $\cG(v)$ and its inverse. General Fourier transforms have complexity $\mathcal{O}(J^2)$, however, since we truncate the series the complexity is in fact $\mathcal{O}(J k_{\text{max}})$, while the FFT has complexity $\mathcal{O}(J \log J)$. Generally, we have found using FFTs to be very efficient, however, a uniform discretization is required.

\paragraph{Non-uniform and Non-periodic Geometry.}
The Fourier neural operator model is defined based on Fourier transform operations accompanied by local residual operations and potentially additive bias function terms. These operations are mainly defined on general geometries, function spaces, and choices of discretization. They are not limited to rectangular domains, periodic functions, or uniform grids. In this paper, we instantiate these operations on uniform grids and periodic functions in order to develop fast implementations that enjoy spectral convergence and utilize methods such as fast Fourier transform. In order to maintain a fast and memory-efficient method, our implementation of the Fourier neural operator relies on the fast Fourier transform which is only defined on uniform mesh discretizations of \(D = \mathbb{T}^d\), or for functions on the square satisfying homogeneous Dirichlet (fast Fourier sine transform) or
homogeneous Neumann (fast Fourier cosine transform) boundary conditions.
However, the fast implementation of Fourier neural operator  can be applied in more general geometries via Fourier continuations. Given any compact manifold $D = \mathcal{M}$, we can always embed it into a periodic cube (torus), 
\[i: \mathcal{M} \to \mathbb{T}^d\]
where the regular FFT can be applied. Conventionally, in numerical analysis
applications, the embedding $i$ is defined through a continuous extension by fitting polynomials \citep{bruno2007accurate}. However, in the Fourier neural operator, the idea can be applied simply by padding the input with zeros. The loss is computed only on the original space during training. The Fourier neural operator will automatically generate a smooth extension to the padded domain in the output space.

\subsection{Summary}
We summarize the main computational approaches presented in this section and their complexity:
\begin{itemize}

\item \textbf{GNO}:    Subsample $J'$ points from the $J$-point discretization and compute the truncated integral
\begin{equation}
u(x)=   
\int_{B(x,r)} \kappa (x,y) v(y) \: \text{d}y 
\end{equation}
at a \(\mathcal{O}(J J')\) complexity.

\item \textbf{LNO}:   Decompose the kernel function tensor product form and compute
\begin{equation}
u(x) =   \sum_{j=1}^r \langle \psi^{(j)}, v \rangle \varphi^{(j)}(x)
\end{equation}
at a \(\mathcal{O}(J)\) complexity.

\item \textbf{MGNO}: Compute a multi-scale decomposition of the kernel 
\begin{align}
\begin{split}
K &= K_{1,1} + K_{1,2}K_{2,2}K_{2,1} + K_{1,2}K_{2,3}K_{3,3}K_{3,2}K_{2,1} + \cdots \\
u(x) &= (Kv)(x)
\end{split}
\end{align}
at a \(\mathcal{O}(J)\) complexity.

\item \textbf{FNO}: Parameterize the kernel in the Fourier domain and compute the using the FFT 
\begin{equation}
u(x) = \cG^{-1} \bigl( R_\phi \cdot \cG(v) \bigr )(x)
\end{equation}
at a \(\mathcal{O}(J \log J)\) complexity.
\end{itemize}

%
%


%
%

\section{Neural Operators and Other Deep Learning Models}
\label{sec:framework}

In this section, we provide a discussion on the recent related methods, in particular, DeepONets, and demonstrate that their architectures are subsumed by generic neural operators when neural operators are parametrized inconsistently ~\ref{sec:deeponets}. When only applied and queried on fixed grids, we show neural operator architectures subsume neural networks and, furthermore, we show how transformers are special cases of neural operators~\ref{sec:transformers}.

\subsection{DeepONets}
\label{sec:deeponets}

We will now draw a parallel between the recently proposed DeepONet architecture in \citet{lu2019deeponet}, a map from finite-dimensional spaces to function spaces, and the neural operator framework. \nk{We will show that if we use a particular, point-wise parameterization of the first kernel in a NO and discretize the integral operator, we obtain a DeepONet. However, such a parameterization breaks the notion of discretization invariance because the number of parameters depends on the discretization of the input function. Therefore such a model cannot be applied to arbitrarily discretized functions and its number of parameters goes to infinity as we take the limit to the continuum. This phenomenon is similar to our discussion in subsection~\ref{sec:graphneuraloperator} where a NO parametrization which is inconsistent in function space and breaks discretization invariance yields a CNN.
We propose a modification to the DeepONet architecture, based on the idea of the LNO, which addresses this issue and gives a discretization invariant neural operator.}

\begin{proposition}
\label{prop:deeponet}
A neural operator with a point-wise parameterized first kernel and discretized integral operators yields a DeepONet.
\end{proposition}







\begin{proof}
We work with \eqref{eq:singlehiddenlayer} where we choose \(W_0 = 0\) and denote \(b_0\) by \(b\). For simplicity, we will consider only real-valued functions i.e. \(d_a = d_u = 1\) and set \(d_{v_0} = d_{v_1} = n \) and \( d_{v_2} = p\)
for some \(n, p \in \N\).
Define \(\cP : \R \to \R^n\) by \(\cP(x) = (x,\dots,x)\) and 
\(\cQ: \R^p \to \R\) by \(\cQ(x) = x_1 + \dots + x_p\). Furthermore let \(\kappa^{(1)}: D' \times D \to \R^{p \times n}\) be defined by some \(\kappa^{(1)}_{jk} : D' \times D \to \R\) for \(j=1,\dots,p\) and \(k=1,\dots,n\). Similarly let 
\(\kappa^{(0)}: D \times D \to \R^{n \times n}\) be given as \(\kappa^{(0)}(x,y) = \text{diag}(\kappa^{(0)}_1(x,y), \dots, \kappa^{(0)}_n(x,y))\) for some \(\kappa^{(0)}_1, \dots \kappa^{(0)}_n : D \times D \to \R\). Then \eqref{eq:singlehiddenlayer} becomes 
\[(\G_\theta (a))(x) = \sum_{k=1}^p \sum_{j=1}^n \int_D \kappa^{(1)}_{jk}(x, y) \sigma \left ( \int_D \kappa^{(0)}_j (y,z) a(z) \: \text{d}z + b_j(y) \right ) \: \text{d}y\]
where \(b(y) = (b_1(y),\dots,b_n(y))\) for some \(b_1,\dots,b_n : D \to \R\). Let \(x_1,\dots,x_q \in D\) be the points at which the input function \(a\) is evaluated and denote by \(\tilde{a} = \big( a(x_1), \dots, a(x_q) \big ) \in \R^q\) the vector of evaluations.
Choose \(\kappa^{(0)}_{j} (y,z) = \mathds{1}(y) w_j(z)\) for some \(w_1,\dots,w_n : D \to \R\) where \(\mathds{1}\) denotes the constant function taking the value one. Let
\[w_j(x_l) = \frac{q}{|D|} \tilde{w}_{jl}\]
for \(j=1,\dots,n\) and \(l=1,\dots,q\) where \(\tilde{w}_{jl} \in \R\) are some constants. Furthermore let \(b_j (y) = \tilde{b}_j \mathds{1}(y)\) for some constants \(\tilde{b}_j \in \R\). Then the Monte Carlo approximation of the inner-integral yields
\[(\G_\theta (a))(x) = \sum_{k=1}^p \sum_{j=1}^n \int_D \kappa^{(1)}_{jk}(x, y) \sigma \left ( \langle \tilde{w}_j, \tilde{a} \rangle_{\R^q} + \tilde{b}_j \right ) \mathds{1}(y) \: \text{d}y\]
where \(\tilde{w}_j = \big ( \tilde{w}_{j1}, \dots, \tilde{w}_{jq} \big ) \). Choose \(\kappa^{(1)}_{jk}(x,y) = (\tilde{c}_{jk} / |D|) \varphi_k (x) \mathds{1}(y)\) for some constants \(\tilde{c}_{jk} \in \R\) and functions \(\varphi_1, \dots, \varphi_p : D' \to \R\). Then we obtain
\begin{equation}
\label{eq:likedeeponet}
(\G_\theta (a))(x) = \sum_{k=1}^p \left ( \sum_{j=1}^n \tilde{c}_{jk} \sigma \left ( \langle \tilde{w}_j, \tilde{a} \rangle_{\R^q} + \tilde{b}_j \right ) \right ) \varphi_k(x) = \sum_{k=1}^p G_k(\tilde{v}) \varphi_k (x)
\end{equation}
where \(G_k : \R^q \to \R\) can be viewed as the components of a single hidden layer neural network \(G : \R^q \to \R^p\) with parameters \(\tilde{w}_{jl}, \tilde{b}_j, \tilde{c}_{jk}\). The set of maps \(\varphi_1,\dots,\varphi_p\) form the \textit{trunk net} while \(G\) is the \textit{branch net} of a DeepONet. Our construction above can clearly be generalized to yield arbitrary depth branch nets by adding more kernel integral layers, and, similarly, the trunk net can be chosen arbitrarily deep by parameterizing each \(\varphi_k\) as a deep neural network. 
\end{proof}

\nk{
Since the mappings \(w_1,\dots,w_n\) are point-wise parametrized based on the input values \(\tilde{a}\), it is clear that the construction in the above proof is not discretization invariant. In order to make this model a discretization invariant neural operator, we propose DeepONet-Operator where, for each $j$, we replace the inner product in the finite dimensional space $\langle \tilde{w}_j, \tilde{a} \rangle_{\R^q}$ with an appropriate inner product in the function space $\langle w_j, a \rangle$. 

\begin{equation}
\label{eq:likedeeponet-operator}
(\G_\theta (a))(x) = \sum_{k=1}^p \left ( \sum_{j=1}^n \tilde{c}_{jk} \sigma \left ( \langle {w}_j, {a} \rangle + \tilde{b}_j \right ) \right ) \varphi_k(x)
\end{equation}
This operation is a projection of function $a$ onto $w_j$. Parametrizing $w_j$ by neural networks makes DeepONet-Operator a discretization invariant model.
}

\nk{
There are other ways in which the issue can be resolved for DeepONets. For example, by fixing the set of points on which the input function is evaluated independently of its discretization, by taking local spatial averages as in \citep{lanthaler2021error} or more generally by taking a set of linear functionals on $\A$ as input to
a finite-dimensional branch neural network (a generalization to DeeONet-Operator) as in
the specific PCA-based variant on DeepONet in \citep{de2022cost}.} We demonstrate numerically in Section~\ref{sec:numerics} that, when applied in the standard way, the error incurred by DeepONet(s) grows with the  discretization of \(a\) while it remains constant for neural operators.

\paragraph{Linear Approximation and Nonlinear Approximation.}
We point out that parametrizations of the form \eqref{eq:likedeeponet} fall within the class of \textit{linear} approximation methods since the nonlinear space \(\G^\dagger(\A)\) is approximated by the linear space \(\text{span}\{\varphi_1,\dots,\varphi_p\}\) \citep{devore1998nonlinear}. The quality of the best possible linear approximation to a nonlinear space is given by the Kolmogorov \(n\)-width where \(n\) is the dimension of the linear space used in the approximation \citep{pinkus1985nwidths}. The rate of decay of the \(n\)-width as a function of \(n\) quantifies how well the linear space approximates the nonlinear one. It is well know that for some problems such as the flow maps of advection-dominated PDEs, the \(n\)-widths decay very slowly; hence a very large \(n\) is needed for a good approximation for such problems \citep{cohendevore}. This can be limiting in practice as more parameters are needed in order to describe more basis functions \(\varphi_j\) and therefore more data is needed to fit these parameters. 

On the other hand, we point out that parametrizations of the form
\eqref{eq:F}, and the particular case \eqref{eq:singlehiddenlayer},  constitute (in general) a form of \textit{nonlinear} approximation. The benefits of nonlinear approximation are well understood in the setting of function approximation, see e.g. \citep{devore1998nonlinear}, however the theory for the operator setting is still in its infancy \citep{bonito2020nonlinear, cohen2020optimal}. We observe numerically in Section~\ref{sec:numerics} that nonlinear parametrizations such as \eqref{eq:singlehiddenlayer} outperform linear ones such as DeepONets or the low-rank method introduced in Section \ref{sec:lowrank} when implemented with similar numbers of parameters.
\hold{We acknowledge, however, that the theory presented in Section~\ref{sec:approximation} is based on the reduction to a linear approximation and therefore does not capture the benefits of the nonlinear approximation. Furthermore, in practice, we have found that deeper architectures than \eqref{eq:singlehiddenlayer} (usually four to five layers are used in the experiments of Section~\ref{sec:numerics}), perform better. The benefits of depth are again not captured in our analysis in Section~\ref{sec:approximation}
either. We leave further theoretical studies of approximation
properties as an interesting avenue of investigation for future work.}

\paragraph{Function Representation.}
An important difference between neural operators, introduced here, PCA-based
operator approximation, introduced in \cite{Kovachki} and DeepONets, introduced
in \cite{lu2019deeponet}, is the manner in which the output function space is finite-dimensionalized. Neural operators as implemented in this
paper typically use the same finite-dimensionalization in both the input and output function spaces; however different variants of the neural operator idea use different
 finite-dimensionalizations. As discussed in Section \ref{sec:four_schemes}, the GNO and MGNO are finite-dimensionalized using pointwise values as the nodes of graphs; the FNO is
 finite-dimensionalized in Fourier space, requiring finite-dimensionalization on a uniform grid in real space; the Low-rank neural operator is finite-dimensionalized on a product space formed from the
 Barron space of neural networks. The PCA approach finite-dimensionalizes in the span of PCA modes. DeepONet, on the other hand, uses different input and output space finite-dimensionalizations; in its basic form it uses pointwise (grid)
 values on the input (branch net) whilst its output (trunk net) is represented as a function in Barron space. There also exist POD-DeepONet variants that 
 finite-dimensionalize the output in the span of PCA modes \cite{lu2021comprehensive}, bringing them
 closer to the method introduced in \cite{Kovachki}, but with a different 
 finite-dimensionalization of the input space.

As is widely quoted, ``all models are wrong, but some are useful'' \cite{box1976science}.
For operator approximation, 
each finite-dimensionalization has its own induced biases and limitations,
and therefore works best on a subset of problems. Finite-dimensionalization introduces a trade-off between flexibility and representation power of the resulting approximate architecture.
The Barron space representation (Low-rank operator and DeepONet) is usually the most generic and flexible as it is widely applicable. However this can lead to induced biases and reduced representation power on specific problems; in practice, DeepONet sometimes needs problem-specific feature engineering and architecture choices as studied in \cite{lu2021comprehensive}. We conjecture that these problem-specific features compensate
for the induced bias and reduced representation power that the basic form of the
method \citep{lu2019deeponet} sometimes exhibits.
The PCA (PCA operator, POD-DeepONet) and graph-based (GNO, MGNO) discretizations are also generic, but more specific compared to the DeepONet representation; for this reason
POD-DeepONet can outperform DeepONet on some problems \citep{lu2021comprehensive}.
On the other hand, the uniform grid-based representation FNO is the most specific of
all those operator approximators considered in this paper: in its basic form it
applies by discretizing the input functions, assumed to be specified on a periodic
domain, on a uniform grid. As shown in Section \ref{sec:numerics} FNO usually works
out of the box on such problems. But, as a trade-off, it requires substantial 
additional treatments to work well on non-uniform geometries, such as extension, interpolation (explored in
\cite{lu2021comprehensive}), and Fourier continuation \citep{bruno2007accurate}.

\subsection{Transformers as a Special Case of Neural Operators}
\label{sec:transformers}
\nk{
We will now show that our neural operator framework can be viewed as a  continuum generalization to the popular transformer architecture \citep{vaswani2017attention} which has been extremely successful in natural language processing tasks \citep{devlin2018bert, brown2020language} and, more recently, is becoming a popular choice in computer vision tasks \citep{dosovitskiy2020image}. The parallel stems from the fact that we can view sequences of arbitrary length as arbitrary discretizations of functions. Indeed, in the context of natural language processing,  we may think of a sentence as a ``word''-valued function on, for example, the domain \([0,1]\). Assuming our function is linked to a sentence with a fixed semantic meaning, adding or removing words from the sentence simply corresponds to refining or coarsening the discretization of \([0,1]\). We will now make this intuition precise in the proof of the following statement. 
\begin{proposition}\label{prop:attention}
The attention mechanism in transformer models is a special case of a neural operator layer.
\end{proposition}
\begin{proof}
We will show that by making a particular choice of the nonlinear integral kernel operator \eqref{eq:kernelop3} and discretizing the integral by a Monte-Carlo approximation, a neural operator layer reduces to a pre-normalized, single-headed attention, transformer block as originally proposed in \citep{vaswani2017attention}.
For simplicity, we assume \(d_{v_t} = n \in \N\) and that \(D_t = D\) for any \(t=0,\dots,T\), the bias term is zero, and \(W = I\) is the identity. Furthermore, to simplify notation, we will drop the layer index \(t\) from \eqref{eq:onelayer} and, employing \eqref{eq:kernelop3}, obtain
\begin{equation}
    \label{eq:liketransformer}
    u(x) = \sigma \left( v(x) + \int_D \kappa_v(x,y,v(x),v(y)) v(y) \: \text{d}y \right ) \qquad \forall x \in D
\end{equation}
a single layer of the neural operator where \(v: D \to \R^n \) is the input function to the layer and we denote by \(u: D \to \R^n\) the output function. We use the notation \(\kappa_v\) to indicate that the kernel depends on the entirety of the function \(v\) as well as on its pointwise values $v(x)$ and $v(y).$ While this is not explicitly done in \eqref{eq:kernelop3}, it is a straightforward generalization. We now pick a specific form for kernel, in particular, we assume \(\kappa_v : \R^n \times \R^n \to \R^{n \times n}\) does not explicitly depend on the spatial variables \((x,y)\) but only on the input pair \((v(x),v(y))\). Furthermore, we let 
\[\kappa_v(v(x),v(y)) = g_v(v(x),v(y)) R\]
where \(R \in \R^{n \times n}\) is a matrix of free parameters i.e. its entries are concatenated in \(\theta\) so they are learned, and \(g_v: \R^n \times \R^n \to \R\) is defined as
\[g_v(v(x),v(y)) = \left ( \int_D \text{exp}\left( \frac{\langle A v(s), Bv(y) \rangle}{\sqrt{m}} \right ) \: \text{d}s \right )^{-1} \text{exp} \left ( \frac{\langle A v(x), Bv(y) \rangle}{\sqrt{m}} \right ).\]
Here \(A,B \in \R^{m \times n}\) are again matrices of free parameters, \(m \in \N\) is a hyperparameter, and \(\langle \cdot, \cdot \rangle\) is the Euclidean inner-product on \(\R^m\). Putting this together, we find that \eqref{eq:liketransformer} becomes
\begin{equation}
    \label{eq:continoustransformer}
    u(x) = \sigma \left ( v(x) + \int_D \frac{\text{exp} \left ( \frac{\langle A v(x), Bv(y) \rangle}{\sqrt{m}} \right )}{\int_D \text{exp}\left( \frac{\langle A v(s), Bv(y) \rangle}{\sqrt{m}} \right ) \: \text{d}s} R v(y) \: \text{d}y \right ) \qquad \forall x \in D.
\end{equation}
Equation \eqref{eq:continoustransformer} can be thought of as the continuum limit of a transformer block. To see this, we will discretize to obtain the usual transformer block.

To that end, let \(\{x_1,\dots,x_k\} \subset D\) be a uniformly-sampled, \(k\)-point discretization of \(D\) and denote \(v_j = v(x_j) \in \R^n\) and
\(u_j = u(x_j) \in \R^n\) for \(j=1,\dots,k\). Approximating the inner-integral in \eqref{eq:continoustransformer} by Monte-Carlo, we have
\[\int_D \text{exp}\left( \frac{\langle A v(s), Bv(y) \rangle}{\sqrt{m}} \right ) \: \text{d}s \approx \frac{|D|}{k} \sum_{l=1}^k \text{exp} \left ( \frac{\langle A v_l, B v(y) \rangle}{\sqrt{m}} \right ).\]
Plugging this into \eqref{eq:continoustransformer} and using the same approximation for the outer integral yields
\begin{equation}
    \label{eq:discretetransformer}
    u_j = \sigma \left ( v_j + \sum_{q=1}^k \frac{\text{exp} \left ( \frac{\langle A v_j, Bv_q \rangle}{\sqrt{m}} \right )}{\sum_{l=1}^k \text{exp} \left ( \frac{\langle A v_l, B v_q \rangle}{\sqrt{m}} \right )} R v_q \right ), \qquad j=1,\dots,k.
\end{equation}
Equation \eqref{eq:discretetransformer} can be viewed as a Nystr{\"o}m approximation of \eqref{eq:continoustransformer}. Define the vectors \(z_q \in \R^k\) by
\[z_q = \frac{1}{\sqrt{m}} (\langle Av_1, Bv_q \rangle, \dots, \langle Av_k, Bv_q \rangle), \qquad q=1,\dots,k.\]
Define \(S : \R^k \to \Delta_k\), where \(\Delta_k\) denotes the \(k\)-dimensional probability simplex, as the softmax function
\[S(w) = \left ( \frac{\text{exp}(w_1)}{\sum_{j=1}^k \text{exp}(w_j)}, \dots, \frac{\text{exp}(w_k)}{\sum_{j=1}^k \text{exp}(w_j)} \right ), \qquad \forall w \in \R^k.\]
Then we may re-write \eqref{eq:discretetransformer} as
\[u_j = \sigma \left ( v_j + \sum_{q=1}^k S_j (z_q) R v_q \right ), \qquad j=1,\dots,k.\]
Furthermore, if we re-parametrize \(R = R^{\text{out}} R^{\text{val}}\) where \(R^{\text{out}} \in \R^{n \times m}\) and \(R^{\text{val}} \in \R^{m \times n}\) are matrices of free parameters, we obtain
\[u_j = \sigma \left ( v_j + R^{\text{out}}\sum_{q=1}^k S_j (z_q) R^{\text{val}} v_q \right ), \qquad j=1,\dots,k\]
which is precisely the single-headed attention, transformer block with no layer normalization applied inside the activation function. In the language of transformers, the matrices \(A\), \(B\), and \(R^{\text{val}}\) correspond to the \textit{queries}, \textit{keys}, and \textit{values} functions respectively. We note that tricks such as layer normalization \citep{ba2016layer} can be  adapted in a
straightforward manner to the continuum setting and incorporated into \eqref{eq:continoustransformer}. Furthermore multi-headed self-attention can be realized by simply allowing \(\kappa_v\) to be a sum over multiple functions with form \(g_v R\) all of which have separate trainable parameters. Including such generalizations yields the continuum limit of the transformer as implemented in practice. We do not pursue this here as our goal is simply to draw a parallel between the two methods.
\end{proof}

Even though transformers are special cases of neural operators, the standard attention mechanism is memory and computation intensive, as seen in Section~\ref{sec:problems}, compared to neural operator architectures developed here \eqref{eq:kernelop1}-\eqref{eq:kernelop3}. The high computational complexity of transformers is evident is \eqref{eq:continoustransformer} since we must evaluate a \textit{nested} integral of \(v\) for each \(x \in D\). Recently, efficient attention mechanisms have been explored, e.g. long-short~\cite{zhu2021long} and adaptive FNO-based attention mechanisms~\citep{guibas2021adaptive}. However, many of the efficient vision transformer architectures~\citep{choromanski2020rethinking,dosovitskiy2020image} like ViTs are {\em not} special cases of neural operators since they use CNN layers to generate tokens, which are not discretization invariant. 


}

\section{Test Problems}
\label{sec:problems}


A central application of neural operators is learning solution operators defined by parametric partial differential equations. In this section, we define four test problems for which we numerically study the approximation properties of neural operators. To that end, let \((\A,\U,\mathcal{F})\) be a triplet of Banach spaces. The first two problem classes considered are derived from the following general class of PDEs: 
\begin{align}
\label{eq:generalpde}
\begin{split}
\LL_a u &= f
\end{split}
\end{align}
where, for every \(a \in \A\), \(\LL_a : \U \to \mathcal{F}\) is a, possibly nonlinear, partial differential operator, and \(u \in \U\) corresponds to the solution of the PDE \eqref{eq:generalpde} when \(f \in \mathcal{F}\) and appropriate boundary conditions are imposed.
The second class will be evolution equations with initial
condition \(a \in \A\) and solution \(u(t) \in \U\) at every time $t>0.$
We seek to learn the map from $a$ to $u:=u(\tau)$
for some fixed time $\tau>0;$ we will also study maps on 
paths (time-dependent solutions).

Our goal will be to learn the mappings
\[\Ftrue: a \mapsto u \qquad \text{or} \qquad \Ftrue : f \mapsto u;\]
we will study both cases, depending on the test problem considered. We will define a probability measure \(\mu\) on \(\A\) or \(\mathcal{F}\) which will serve to define a model for likely input data. Furthermore, measure \(\mu\) will define a topology on the space of mappings in which \(\Ftrue\) lives, using the Bochner norm \eqref{eq:bochner_error}. 
We will assume that each of the spaces \((\A,\U,\mathcal{F})\) are Banach spaces of functions defined on a bounded domain \(D \subset \R^d\). All reported errors will be Monte-Carlo estimates of the relative error
\[\mathbb{E}_{a \sim \mu} \frac{\|\Ftrue(a) - \G_\theta(a)\|_{L^2(D)}}{\|\Ftrue (a)\|_{L^2(D)}}\]
or equivalently replacing \(a\) with \(f\) in the above display and with the assumption that \(\U \subseteq L^2(D)\). The domain $D$ will be discretized,  usually uniformly, with $J \in \N$ points. 

\subsection{Poisson Equation}
\label{ssec:poisson}
First we consider the one-dimensional Poisson equation with a zero boundary condition.
In particular, \eqref{eq:generalpde} takes the form
\done{I prefer to make operator positive; please make any changes
needed from the sign-change}
\begin{align}
\label{eq:poisson}
\begin{split}
    -\frac{d^2}{dx^2} u(x) &= f(x),  \qquad x \in (0,1)\\
    u(0) = u(1) &= 0
\end{split}
\end{align}
for some source function \(f: (0,1) \to \R)\). In particular, for \(D(\LL):=H^1_0((0,1);\R) \cap H^2((0,1);\R)\), we have \(\LL: D(\LL) \to L^2((0,1);\R)\) defined as $-d^2/dx^2$, noting that 
that $\LL$ has no dependence on any parameter \(a \in \A\) in this case. We will consider the weak form of \eqref{eq:poisson} with source function \(f \in H^{-1}((0,1);\R)\) and therefore the solution operator  \(\Ftrue : H^{-1}((0,1);\R) \to H^1_0((0,1);\R)\) defined as
\[\Ftrue: f \mapsto u.\]
We define the probability measure \(\mu = N(0,C)\) where
\[C = \Bigl ( \LL + I \Bigl)^{-2},\]
defined through the spectral theory of self-adjoint operators.  Since \(\mu\) charges a subset of \(L^2((0,1);\R)\), we will learn \(\Ftrue : L^2((0,1);\R) \to H^1_0((0,1);\R)\) in the topology induced by \eqref{eq:bochner_error}.

In this setting, \(\Ftrue\) has a closed-form solution given as
\[\Ftrue(f) = \int_0^1 G(\cdot,y) f(y) \: \text{d}y\]
where
\[G(x,y) = \frac{1}{2} \left ( x + y - |y-x| \right ) - xy, \qquad \forall (x,y) \in [0,1]^2\]
is the Green's function. Note that while \(\Ftrue\) is a linear operator, the Green's function \(G\) is non-linear
as a function of its arguments. We will consider only a single layer of \eqref{eq:F} with \(\sigma_1 = \text{Id}\), \(\cP = \text{Id}\), \(\cQ = \text{Id}\), \(W_0 = 0\), \(b_0 = 0\), and 
\[\mathcal{K}_0(f) = \int_0^1 \kappa_{\theta} (\cdot,y) f(y) \: \text{d}y\]
where \(\kappa_\theta : \R^2 \to \R\) will be parameterized as a standard neural network with parameters \(\theta\).

The purpose of the current example is two-fold. First we will test the efficacy of the neural operator framework in a simple setting where an exact solution is analytically available. Second we will show that by building in the right inductive bias, in particular, paralleling the form of the Green's function solution, we obtain a model that generalizes outside the distribution \(\mu\). That is, once trained, the model will generalize to any \(f \in L^2((0,1);\R)\) that may be outside the support of \(\mu\). For example, as defined, the random variable \(f \sim \mu\) is a continuous function, however, if \(\kappa_\theta\) approximates the Green's function well then the model \(\Ftrue\) will approximate the solution to \eqref{eq:poisson} accurately even for discontinuous inputs.

To create the dataset used for training, solutions to \eqref{eq:poisson} are obtained by numerical integration using the Green's function on a uniform grid with \(85\) collocation points. We use \(N = 1000\) training examples. 

\subsection{Darcy Flow}
\label{ssec:darcy}
We consider the steady state of Darcy Flow in two dimensions which is the second order elliptic equation
\begin{align}
\label{eq:darcy}
\begin{split}
- \nabla \cdot (a(x) \nabla u(x)) &= f(x), \qquad x \in D \\
u(x) &= 0, \qquad \quad \:\: x \in \partial D
\end{split}
\end{align}
where \(D = (0,1)^2\) is the unit square. In this setting \(\A = L^\infty(D;\R_+)\), \(\U = H_0^1(D;\R)\), and \(\mathcal{F} = H^{-1}(D;\R)\).
We fix \(f \equiv 1\) and consider the weak form of \eqref{eq:darcy} and therefore the solution operator \(\Ftrue : L^\infty(D;\R^+) \to H^1_0(D;\R)\) defined as
\begin{equation}
    \label{eq:darcysolutionop}
    \Ftrue:  a \mapsto u.
\end{equation}
Note that while \eqref{eq:darcy} is a linear PDE, the solution operator \(\Ftrue\) is nonlinear. We define the probability measure \(\mu = T_\sharp N(0,C)\) \nk{as the pushforward of a Gaussian measure under the operator \(T\)} where \nk{the covariance of the Gaussian is}
\[C = (-\Delta + 9I)^{-2}\]
with $D(-\Delta)$ defined to impose zero Neumann boundary on the Laplacian. We 
\nk{define} \(T\) to be a Nemytskii operator acting on functions, defined through the map \(T : \R \to \R_+\) defined as
\[T(x) = \begin{cases}
12, & x \geq 0 \\
3, & x < 0
\end{cases}.\]
The random variable \(a \sim \mu\) is a piecewise-constant function with random interfaces given by the underlying Gaussian random field. Such constructions are prototypical models for many physical systems such as permeability in sub-surface flows and (in a vector
generalization) material microstructures in elasticity. 

To create the dataset used for training, solutions to \eqref{eq:darcy} are obtained using a second-order finite difference scheme on a uniform grid of size \(421 \times 421\). All other resolutions are downsampled from this data set. We use \(N = 1000\) training examples.

\subsection{Burgers' Equation}
\label{ssec:burgers}
We consider the one-dimensional viscous Burgers' equation
\begin{align}
    \label{eq:burgers}
    \begin{split}
    \frac{\partial}{\partial t} u(x,t) + \frac{1}{2} \frac{\partial}{\partial x} \bigl ( u(x,t) \bigl)^2 &= \nu \frac{\partial^2}{\partial x^2} u(x,t), \qquad x \in (0,2\pi), t \in (0,\infty) \\
    u(x,0) &= u_0(x), \qquad \qquad \:\:\: x \in (0,2\pi)
    \end{split}
\end{align}
with periodic boundary conditions and a fixed viscosity \(\nu = 10^{-1}\). Let \(\Psi : L^2_{\text{per}}((0,2\pi);\R) \times \R_+ \to H^s_{\text{per}}((0,2\pi);\R)\), for any \(s > 0\), be the flow map associated to \eqref{eq:burgers}, in particular,
\[\Psi(u_0, t) = u(\cdot,t), \qquad t > 0.\]
We consider the solution operator defined by evaluating \(\Psi\) at a fixed time. Fix any $s \ge 0.$ Then we may define \(\Ftrue : L^2_{\text{per}}((0,2\pi);\R) \to H^s_{\text{per}}((0,2\pi);\R)\) by
\begin{equation}
    \label{eq:burgerssolutionop}
    \Ftrue : u_0 \mapsto \Psi(u_0,1).
\end{equation}
We define the probability measure \(\mu = N(0,C)\) where
\[C = 625 \Bigl ( -\frac{d^2}{dx^2} + 25I \Bigl)^{-2}\]
with domain of the Laplacian defined to impose periodic boundary conditions. We chose the initial condition for \eqref{eq:burgers} by drawing \(u_0 \sim \mu\), noting that \(\mu\) charges a subset of \(L^2_{\text{per}}((0,2\pi);\R)\).

To create the dataset used for training, solutions to \eqref{eq:burgers} are obtained using a pseudo-spectral split step method where the heat equation part is solved exactly in Fourier space and then the non-linear part is advanced using a forward Euler method with a very small time step. We use a uniform spatial grid with \(2^{13} = 8192\) collocation points and subsample all other resolutions from this data set. We use \(N = 1000\) training examples.

\subsection{Navier-Stokes Equation}
\label{ssec:nse}

\done{I am worried about this formulation of the equation. I'd rather
you formulate the equation in velocity space, define the vorticity,
and simply mention that there is an equation for the vorticity
which can be solved directly. Look at how Chandler-Kerswell write it.
(But use $\Delta$ not $\nabla^2$!
The point of using streamfunction is that it automatically
imposes divergence free; and then vorticity is related to it
by Laplacian so same thing applies. Is the forcing as stated
the forcing in vorticity space? More natural to define forcing
in velocity space.}

We consider the two-dimensional Navier-Stokes equation for a viscous, incompressible fluid 
\begin{align}
\label{eq:navierstokes-velocity}
\begin{split}
\partial_t u(x,t) + u(x,t) \cdot \nabla u(x,t) + \nabla p(x,t) &= \nu \Delta u(x,t) + f(x), \qquad x \in \mathbb{T}^2, t \in (0,\infty)  \\
\nabla \cdot u(x,t) &= 0, \qquad \qquad \qquad \qquad \:\:\: x \in \mathbb{T}^2, t \in [0,\infty)  \\
u(x,0) &= u_0(x), \qquad \qquad \qquad \:\:\: x \in \mathbb{T}^2 
\end{split}
\end{align}
where \(\mathbb{T}^2\) is the unit torus i.e. \([0,1]^2\) equipped with periodic boundary conditions, and \(\nu \in \R_+\) is a fixed viscosity.
Here $u : \mathbb{T}^2 \times \R_+ \to \R^2$ is the velocity field,  
$p : \mathbb{T}^2 \times \R_+ \to \R^2$ is the pressure field, and
 \(f: \mathbb{T}^2 \to \R\) is a fixed forcing function.

Equivalently, we study the vorticity-streamfunction formulation of the equation
\begin{subequations}
\label{eq:navierstokes}
\begin{align}
\partial_t w(x,t) + \nabla^{\perp}\psi \cdot \nabla w(x,t) &= \nu \Delta w(x,t) + g(x), \qquad x \in \mathbb{T}^2, t \in (0,\infty),  \\
-\Delta \psi&=\omega, \qquad\qquad\qquad\qquad\:\:\: x \in \mathbb{T}^2, t \in (0,\infty),  \\
 w(x,0) &= w_0(x), \qquad \qquad \qquad \:\:\: x \in \mathbb{T}^2, 
\end{align}
\end{subequations}
where \(w\) is the out-of-plane component of the vorticity field \(\nabla \times u: \mathbb{T}^2 \times \R_+ \to \R^3\). Since, when viewed in three dimensions, 
$u=\bigl(u_1(x_1,x_2),u_2(x_1,x_2),0\bigr)$, it follows that
$\nabla \times u=(0,0,\omega).$ 
The stream function $\psi$ is related to the velocity by 
$u=\nabla^{\perp}\psi$, enforcing the divergence-free condition.
Similar considerations as for the curl of $u$ apply to the curl of $f$, 
showing that $\nabla \times f=(0,0,g).$
\nk{Note that in (\ref{eq:navierstokes}b) $\psi$ is undetermined up to an (irrelevant, since $u$ is computed from $\psi$ by taking a gradient) additive constant; uniqueness is restored
by seeking $\psi$ with spatial mean $0.$}
 We define the forcing term as
\[g(x_1,x_2) = 0.1(\sin(2\pi(x_1+x_2)) + \cos(2\pi(x_1 + x_2))), \qquad \forall (x_1,x_2) \in \mathbb{T}^2.\]
 The corresponding Reynolds number is estimated as 
 $Re = \frac{\sqrt{0.1}}{\nu (2 \pi)^{3/2}}$ \citep{chandler2013invariant}.
Let \(\Psi : L^2(\mathbb{T}^2;\R) \times \R_+ \to H^s(\mathbb{T}^2;\R)\), for any \(s > 0\), be the flow map associated to \eqref{eq:navierstokes}, in particular,
\[\Psi(w_0,t) = w(\cdot,t), \qquad t > 0.\]
Notice that this is well-defined for any \(w_0 \in L^2(\mathbb{T};\R)\). 

We will define two notions of the solution operator. In the first, we will proceed as in the previous examples, in particular, \(\Ftrue : L^2(\mathbb{T}^2;\R) \to H^s(\mathbb{T}^2;\R)\) is defined as
\begin{equation}
\label{eq:nssolutionop_add}
\Ftrue : w_0 \mapsto \Psi(w_0, T)
\end{equation}
for some fixed \(T > 0\). In the second, we will map an initial part of the trajectory to a later part of the trajectory. In particular, we define
\(\Ftrue : L^2(\mathbb{T}^2;\R) \times C \bigl ( (0,10]; H^s(\mathbb{T}^2;\R) \bigl) \to C \bigl( (10,T]; H^s(\mathbb{T}^2;\R) \bigl)\) by
\begin{equation}
    \label{eq:nssolutionop2}
    \Ftrue : \bigl ( w_0, \Psi(w_0,t)|_{t \in (0,10]} \bigl) \mapsto \Psi(w_0,t)|_{t \in (10,T]}
\end{equation}
for some fixed  \(T > 10\). We define the probability measure \(\mu = N(0,C)\)
where 
\[C = 7^{3/2}  (-\Delta + 49I)^{-2.5} \]
with periodic boundary conditions on the Laplacian. We model the initial vorticity \(w_0 \sim \mu\) to \eqref{eq:navierstokes} as \(\mu\) charges a subset of \(L^2(\mathbb{T}^2;\R)\).
Its pushforward onto $\Psi(w_0,t)|_{t \in (0,10]}$ is required to define the measure on
input space in the second case defined by \eqref{eq:nssolutionop2}.

To create the dataset used for training, solutions to \eqref{eq:navierstokes} are obtained using a pseudo-spectral split step method where the viscous terms are advanced using a Crank–Nicolson update and the nonlinear and forcing terms are advanced using Heun's method. Dealiasing is used with the \(2/3\) rule. For further details on this approach see \citep{chandler2013invariant}. Data is obtained on a uniform \(256 \times 256\) grid and all other resolutions are subsampled from this data set. We experiment with different viscosities \(\nu\), final times \(T\), and amounts of training data \(N\).

\subsubsection{Bayesian Inverse Problem}
\label{sec:problem:bayesian}

As an application of operator learning, we consider the inverse problem of recovering the initial vorticity in the Navier-Stokes equation \eqref{eq:navierstokes} from partial, noisy observations of the vorticity at a later time. Consider the first solution operator defined in subsection~\ref{ssec:nse}, in particular, \(\Ftrue : L^2(\mathbb{T}^2;\R) \to H^s(\mathbb{T}^2;\R)\) defined as
\[\Ftrue : w_0 \mapsto \Psi(w_0, 50)\]
where \(\Psi\) is the flow map associated to \eqref{eq:navierstokes}. We then consider the inverse problem 
\begin{equation}
    \label{eq:inverseproblem}
    y = O \bigl ( \Ftrue(w_0) \bigl ) + \eta 
\end{equation}
of recovering \(w_0 \in L^2(\mathbb{T}^2;\R)\) where \(O : H^s(\mathbb{T}^2;\R) \to \mathbb{R}^{49}\) is the evaluation operator on a uniform \(7 \times 7\) interior grid, and \(\eta \sim N(0,\Gamma)\) is observational noise with covariance \(\Gamma = (1/\gamma^2)I\) and \(\gamma = 0.1\). We view \eqref{eq:inverseproblem} as the Bayesian inverse problem mapping prior measure \(\mu\) on $w_0$ to posterior measure \(\pi^y\) on $w_0/y.$
In particular, \(\pi^y\) has density with respect to \(\mu\), given by the Randon-Nikodym derivative
\[\frac{\text{d} \pi^y}{\text{d} \mu} (w_0) \propto \text{exp} \Bigl( -\frac12\|y - O \bigl ( \Ftrue(w_0) \bigl ) \|_{\Gamma}^2 \Bigr)\]
where \(\|\cdot\|_{\Gamma} = \|\Gamma^{-1/2} \cdot \|\) and \(\|\cdot\|\) is the Euclidean norm in \(\R^{49}\). For further details on Bayesian inversion for functions see \citep{cotter2009bayesian,stuart_2010}, and see \citep{Cotter_2013} for MCMC methods adapted
to the function-space setting.

We solve \eqref{eq:inverseproblem} by computing the posterior mean \(\mathbb{E}_{w_0 \sim \pi^y} [w_0]\) using the pre-conditioned Crank–Nicolson (pCN) MCMC method described in \cite{Cotter_2013} for this task. We employ pCN in two cases: (i) using \(\G^\dagger\) evaluated with the pseudo-spectral method described in section~\ref{ssec:nse}; and (ii)
using \(\G_\theta\), the neural operator approximating \(\G^\dagger\). After a 5,000 sample burn-in period, we generate 25,000 samples from the posterior using both approaches and use them to compute the posterior mean.

\subsubsection{Spectra}
\label{app:sepctral}
Because of the constant-in-time forcing term the energy reaches
a non-zero equilibrium in time which is statistically
reproducible for different initial conditions.
To compare the complexity  of the solution to the Navier-Stokes problem outlined in subsection~\ref{ssec:nse} we show, 
in Figure~\ref{fig:spectral1}, the Fourier spectrum of the solution data at time \(t=50\) for three different choices of the viscosity \(\nu\). The figure demonstrates that, for a wide range of wavenumbers $k$, which grows as $\nu$ decreases, the rate of decay of the spectrum is \(-5/3\), matching what is expected in the turbulent regime \citep{kraichnan67inertial}. 
This is a statistically stationary property of the equation, sustained for all positive times.

\begin{figure}[h]
    \centering
    \includegraphics[width=4.5cm]{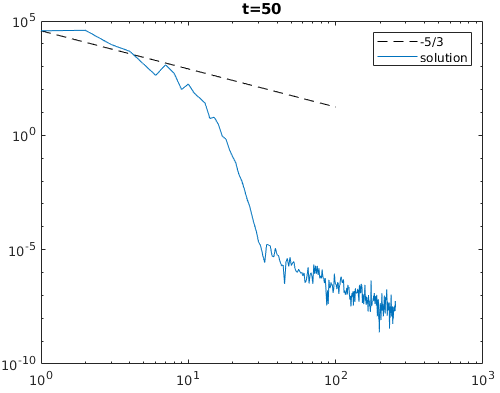}
    \includegraphics[width=4.5cm]{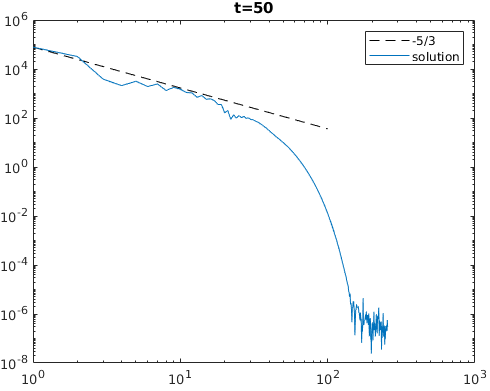}
    \includegraphics[width=4.5cm]{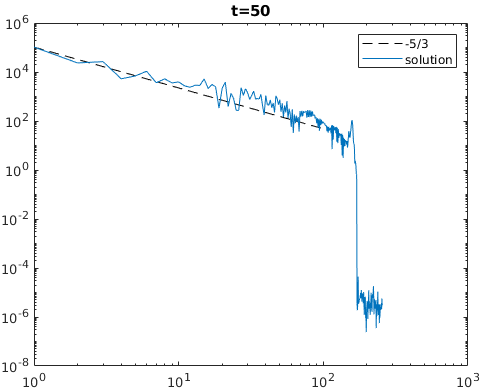}
    \caption{Spectral Decay of Navier-Stokes.}
    \label{fig:spectral1}
    \small{ The spectral decay of the Navier-stokes equation data. The y-axis is represents the value of each mode; the x-axis is the wavenumber $|k| = k_1 + k_2$. From left to right, the solutions have viscosity \(\nu = 10^{-3}, \: 10^{-4}, \: 10^{-5}\) respectively.}
\end{figure}

\subsection{Choice of Loss Criteria}
In general, the model has the best performance when trained and tested using the same loss criteria. If one trains the model using one norm and tests with another norm, the model may overfit in the training norm. Furthermore, the choice of loss function plays a key role.
In this work, we use the relative $L_2$ error to measure the performance in all our problems. Both the $L_2$ error and its square, the mean squared error (MSE), are common choices of the testing criteria in the numerical analysis and machine learning literature. We observed that using the relative error to train the model has a good normalization and regularization effect that prevents overfitting. In practice, training with the relative $L_2$ loss results in around half the testing error rate compared to training with the MSE loss.

%
%

\section{Numerical Results}
\label{sec:numerics}

In this section, we compare the proposed  neural operator with
other supervised learning approaches, using the four test problems outlined in Section~\ref{sec:problems}. 
In Subsection~\ref{ssec:result_green} we study the Poisson
equation, and learning a Greens function; Subsection~\ref{ssec:darcyburgers} considers the coefficient to
solution map for steady Darcy flow, and the initial condition
to solution at positive time map for Burgers equation.
In subsection~\ref{ssec:result_nse} we study the Navier-Stokes
equation.

We compare with a variety of
architectures found by discretizing the data and applying finite-dimensional approaches, as well as with other operator-based approximation methods; \nk{further detailed
comparison of other operator-based approximation methods may be found in \cite{de2022cost},
where the issue of error versus cost (with cost defined in various ways such as
evaluation time of the network, amount of data required) is studied.}
We do not compare against traditional solvers (FEM/FDM/Spectral),
although our methods, once trained, enable evaluation of the
input to output map orders of magnitude more quickly than by
use of such traditional solvers on complex problems.
We demonstrate the benefits of this speed-up in
a prototypical application, Bayesian inversion, in Subsubection \ref{sec:bayesian}.

All the computations are carried on a single Nvidia V100 GPU with 16GB memory.
The code is available at \url{https://github.com/zongyi-li/graph-pde} and \url{https://github.com/zongyi-li/fourier_neural_operator}.


\paragraph{Setup of the Four Methods:}
We construct the neural operator by stacking four integral operator layers as specified in \eqref{eq:F} with the ReLU activation. No batch normalization is needed. Unless otherwise specified, we use $N=1000$ training instances and $200$ testing instances. We use the
Adam optimizer to train for $500$ epochs with an initial learning rate of $0.001$ that is halved every $100$ epochs. We set the channel dimensions $d_{v_0} = \dots = d_{v_3} = 64$ for all one-dimensional problems and $d_{v_0} = \dots = d_{v_3} = 32$ for all two-dimensional problems. The kernel networks $\kappa^{(0)},\dots,\kappa^{(3)}$ are standard feed-forward neural networks with three layers and widths of $256$ units. We use the following abbreviations to denote the methods introduced in Section~\ref{sec:four_schemes}.
\begin{itemize}
    \item {\bf GNO:} The method introduced in subsection~\ref{sec:graphneuraloperator}, truncating the integral to a ball with radius \(r=0.25\) and using the Nystr\"om approximation with \(J' = 300\) sub-sampled nodes.
    \item {\bf LNO:} The low-rank method introduced in subsection~\ref{sec:lowrank} with rank $r=4$.
    \item {\bf MGNO:} The multipole method introduced in subsection~\ref{sec:multipole}. On  the Darcy flow problem, we use the random construction with three graph levels, each sampling $J_1 = 400, J_2 = 100, J_3 =25$ nodes nodes respectively. On the Burgers' equation problem, we use the orthogonal construction without sampling.
    \item {\bf FNO:} The Fourier method introduced in subsection~\ref{sec:fourier}. We set $k_{\text{max},j} = 16$ for all one-dimensional problems and $k_{\text{max},j} = 12$ for all two-dimensional problems.
\end{itemize}

\paragraph{Remark on the Resolution.}
Traditional PDE solvers such as FEM and FDM approximate a single function and therefore their error to the continuum decreases as the resolution is increased. The figures we show here exhibit
something different: the error is independent of resolution,
once enough resolution is used, but is not zero.
This reflects the fact that there
is a residual approximation error, in the infinite dimensional
limit, from the use of a finite-parametrized neural
operator, trained on a finite amount of data. Invariance of the error with respect to
(sufficiently fine) resolution is a desirable property that
demonstrates that an intrinsic approximation of the
operator has been learned, independent of any specific
discretization; see Figure \ref{fig:error}. Furthermore, resolution-invariant operators can do zero-shot super-resolution, as shown in Subsubection \ref{sec:superresolution}.

\subsection{Poisson Equation}
\label{ssec:result_green}

Recall the Poisson equation \eqref{eq:poisson} introduced in subsection~\ref{ssec:poisson}. We use a zero hidden layer neural operator construction without lifting the input dimension. In particular, we simply learn a kernel \(\kappa_\theta : \R^2 \to \R\) parameterized as a standard feed-forward neural network with parameters \(\theta\). Using only \(N = 1000\)
training examples, we obtain a relative test error of \(10^{-7}\). The neural operator gives an almost perfect approximation to the true solution operator in the topology of \eqref{eq:bochner_error}.

To examine the quality of the approximation in the much stronger uniform topology, we check whether the kernel \(\kappa_\theta\) approximates the Green's function for this problem. To see why this is enough, let \(K \subset L^2([0,1];\R)\) be a bounded set i.e.
\[\|f\|_{L^2([0,1];\R)} \leq M, \qquad \forall f \in K\]
and suppose that
\[\sup_{(x,y) \in [0,1]^2} |\kappa_\theta(x,y) - G(x,y)| < \frac{\epsilon}{M}.\]
for some \(\epsilon > 0\). Then it is easy to see that
\[\sup_{f \in K} \|\Ftrue (f) - \G_\theta (f)\|_{L^2([0,1];\R)} < \epsilon,\]
in particular, we obtain an approximation in the topology of uniform convergence over bounded sets, while having trained only in the topology of the Bochner norm \eqref{eq:bochner_error}. Figure~\ref{fig:kernel1d_nystrom} shows the results from which we can see that \(\kappa_\theta\) does indeed approximate the Green's function well. This result implies that by constructing a suitable architecture, we can generalize to the entire space and data that is well outside the support of the training set.



\begin{figure}[t]
    \centering
    \includegraphics[width=0.48\textwidth]{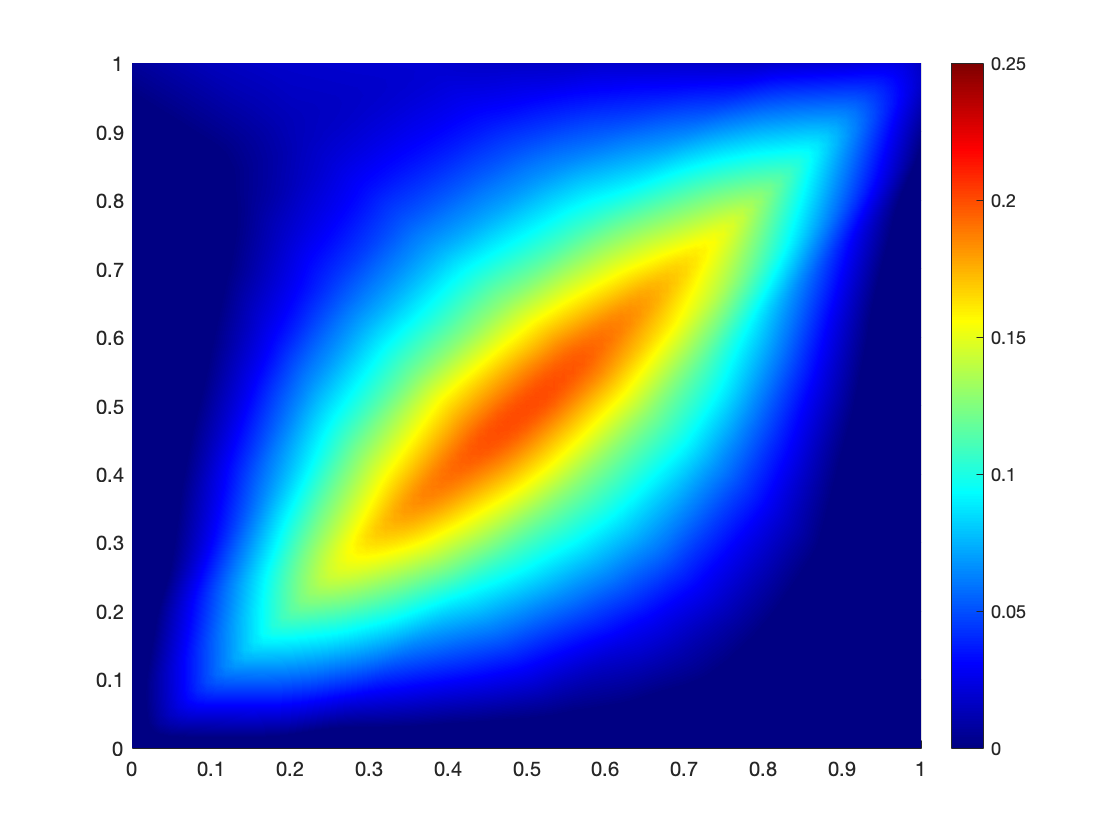}
    \includegraphics[width=0.48\textwidth]{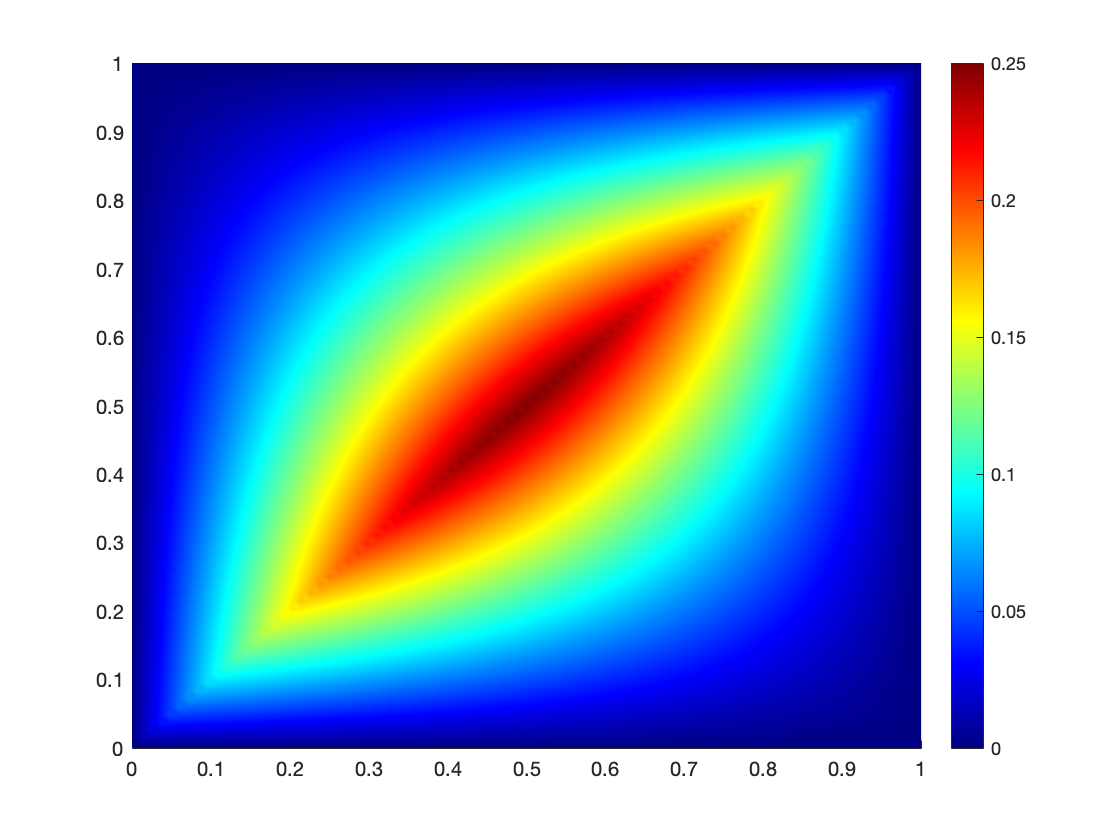}
        \caption{Kernel for one-dimensional Green's function, with the Nystrom approximation method}
    \label{fig:kernel1d_nystrom}
    \small{
    {\bf left:} learned kernel function; {\bf right:} the analytic Green's function.\\
    This is a proof of concept of the graph kernel network on $1$ dimensional Poisson equation and the comparison
    of learned and truth kernel.
    }
\end{figure}

\subsection{Darcy and Burgers Equations}
\label{ssec:darcyburgers}

\done{Is $s$ defined anywhere (see Tables); even if it is recall
its definition in the table captions?}

\postponed{These Darcy and Burgers results show FNO is the best but
in what sense are they "fair" comparisons? Are the other methods
using same number of parameters; same amount of training time;
same ammount of evaluation time? This discussion needs to be
sharper.}

In the following section, we compare four methods presented in this paper, with different operator approximation benchmarks; we study the Darcy flow problem introduced in Subsection~\ref{ssec:darcy} and the Burgers' equation problem introduced in Subsection~\ref{ssec:burgers}. The solution operators of interest are defined by \eqref{eq:darcysolutionop}
and \eqref{eq:burgerssolutionop}. We use the following abbreviations for the methods against which we benchmark.

\begin{itemize}
    \item {\bf NN} is a standard point-wise feedforward neural network. It is mesh-free, but performs badly due to lack of neighbor information. We use standard fully connected neural networks with 8 layers and width 1000.
    \item {\bf FCN} is the state of the art neural network method based on Fully Convolution Network \citep{Zabaras}. It has a dominating performance for small grids $s=61$. But fully convolution networks are mesh-dependent and therefore their error grows when moving to a larger grid.
    \item {\bf PCA+NN} is an instantiation of the methodology proposed in \cite{Kovachki}: using PCA as an autoencoder on both the input and output spaces and interpolating the latent spaces with a standard fully connected neural network with width 200. The method provably obtains mesh-independent error and can learn purely from data, however the solution can only be evaluated on the same mesh as the training data.  
    \item {\bf RBM} is the classical Reduced Basis Method (using a PCA basis), which is widely used in applications and provably obtains mesh-independent error \citep{DeVoreReducedBasis}. This method has good performance, but the solutions can only be evaluated on the same mesh as the training data and one needs knowledge of the PDE to employ it.
    \item {\bf DeepONet} is the Deep Operator network \citep{lu2019deeponet} that comes equipped with an approximation theory \citep{lanthaler2021error}. We use the unstacked version with width 200 which is precisely defined in the original work \citep{lu2019deeponet}. We use standard fully connected neural networks with 8 layers and width 200. 
\end{itemize}

\begin{figure}
    \centering
    \includegraphics[width=\textwidth]{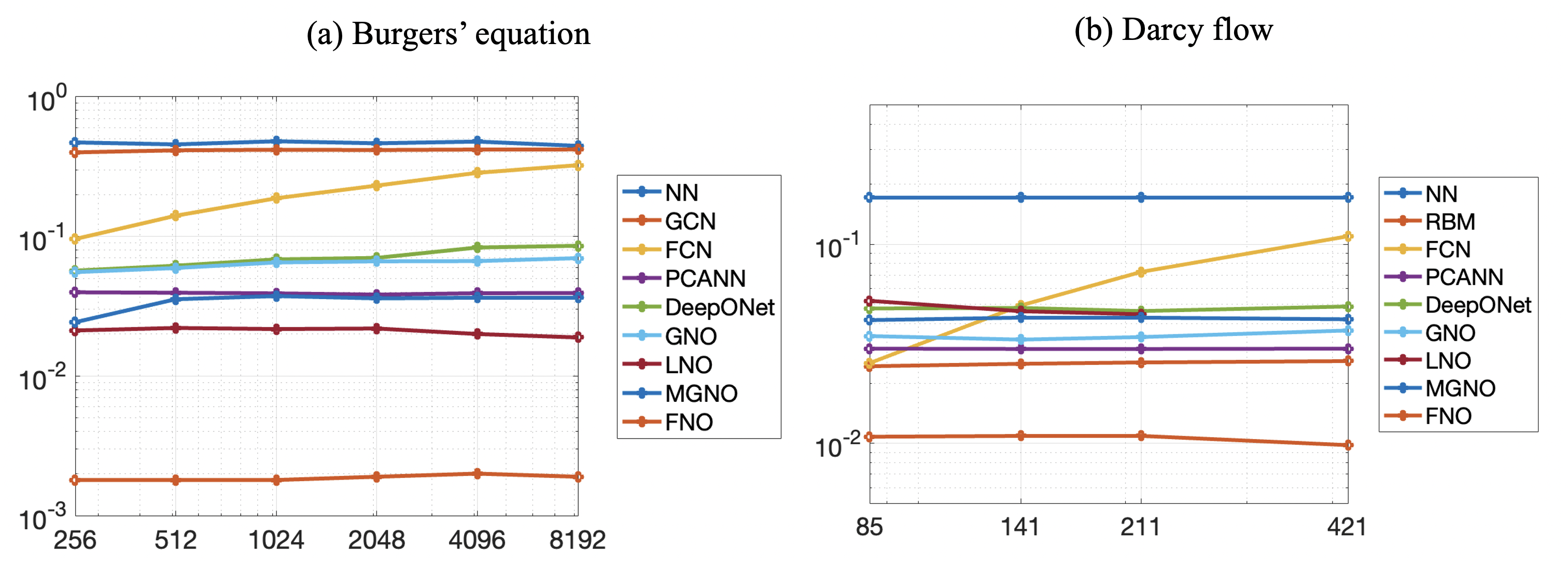}
    \small{ \textbf{(a)} benchmarks on Burgers equation;  \textbf{(b)} benchmarks on Darcy Flow for different resolutions; Train and test on the same resolution.
    For acronyms, see Section \ref{sec:numerics}; details in Tables \ref{table:burgers}, \ref{table:darcy}.} 
    \caption{Benchmark on Burger's equation and Darcy Flow \done{It is confusing to have the two left hand panels with the right-most panel; the left most panels who resolution
    invariance very clear;y, the NSE panel is somewhat different. I suggest separate figures.}}
    \label{fig:error}
\end{figure}
\subsubsection{Darcy Flow}
\label{ssec:DF}

The results of the experiments on Darcy flow are shown in Figure \ref{fig:error} and Table \ref{table:darcy}. All the methods, except for FCN, achieve invariance of
the error with respect to the resolution $s$. In the experiment, we tune each model across of range of different widths and depth to obtain
the choices used here; for DeepONet for example this leads to
8 layers and width 200 as reported above.

Within our hyperparameter search, the Fourier neural operator (FNO) obtains the lowest relative error. 
The Fourier based method likely sees this advantage because the output functions are smooth in these test problems.
We also note that is also possible to obtain better results on each model using modified architectures and problem specific feature engineering.
For example for DeepONet, using CNN on the branch net and PCA on the trunk net (the latter being similar to the method used in
\cite{Kovachki}) can achieve $0.0232$ relative $L_2$ error, as shown in \cite{lu2021comprehensive}, about half the size of the error we obtain here, but for a very coarse grid with $s=29.$
In the experiments the different approximation architectures are
such their training cost are similar across all the methods considered, for given $s.$ Noting this, and for example
comparing the graph-based neural operator methods such as GNO and MGNO that use Nystr\"om sampling in physical space with FNO, we
see that FNO is more accurate.
 
\begin{table}[ht]

\begin{center}
\begin{tabular}{l|llll}
\multicolumn{1}{c}{\bf Networks} 
&\multicolumn{1}{c}{\bf $s=85$}
&\multicolumn{1}{c}{\bf $s=141$} 
&\multicolumn{1}{c}{\bf $s=211$}
&\multicolumn{1}{c}{\bf $s=421$}\\
\hline
NN       &$0.1716$  &$0.1716$  &$0.1716$ &$0.1716$\\
FCN       &$0.0253$  &$0.0493$  &$0.0727$ & $0.1097$\\
PCANN      &$0.0299$  &$0.0298$  &$0.0298$ & $0.0299$\\
RBM    &$0.0244$ &$0.0251$ &$0.0255$ &$0.0259$ \\
DeepONet    &$0.0476$ &$0.0479$ &$0.0462$ &$0.0487$ \\
\hline 
GNO     &$0.0346$   &$0.0332$  &$0.0342$ &$0.0369$\\
LNO     &$0.0520$  &$0.0461$  &$0.0445$ &$-$\\
MGNO     &$0.0416$   &$0.0428$  &$0.0428$ &$0.0420$\\
FNO     &$\textbf{0.0108}$  &$\textbf{0.0109}$  &$\textbf{0.0109}$ &$\textbf{0.0098}$\\
\hline 
\end{tabular}
\end{center}
\caption{Relative error on 2-d Darcy Flow for different resolutions $s$.}
\label{table:darcy}
\end{table}

\subsubsection{Burgers' Equation}
\label{ssec:BE}


The results of the experiments on Burgers' equation are shown in Figure \ref{fig:error} and Table \ref{table:burgers}. As for the Darcy problem, our instantiation of the Fourier neural operator obtains nearly one order of magnitude lower relative error compared to any benchmarks. The Fourier neural operator has standard deviation $0.0010$ and mean training error $0.0012$. If one replaces the ReLU activation by GeLU, the test error of the FNO is further reduced from $0.0018$ to \textbf{$0.0007$}.
We again observe the invariance of the error with respect to the resolution. 
It is possible to improve the performance on each model using modified architectures and problem specific feature engineering.
Similarly, the PCA-enhanced DeepONet with a proper scaling can achieve $0.0194$ relative $L_2$ error, as shown in \cite{lu2021comprehensive}, on a grid of resolution $s=128$.

\begin{table}[ht]

\begin{center}
\begin{tabular}{l|llllllll}
\multicolumn{1}{c}{\bf Networks}
&\multicolumn{1}{c}{\bf $s=256$}
&\multicolumn{1}{c}{\bf $s=512$}
&\multicolumn{1}{c}{\bf $s=1024$}
&\multicolumn{1}{c}{\bf $s=2048$} 
&\multicolumn{1}{c}{\bf $s=4096$}
&\multicolumn{1}{c}{\bf $s=8192$}\\
\hline 
NN       &$0.4714$ &$0.4561$
&$0.4803$ &$0.4645$ &$0.4779$ &$0.4452$ \\
GCN           &$0.3999$ &$0.4138$
&$0.4176$   &$0.4157$  &$0.4191$ &$0.4198$\\
FCN         &$0.0958$ &$0.1407$
&$0.1877$   &$0.2313$  &$0.2855$ &$0.3238$\\
PCANN       &$0.0398$ &$0.0395$
&$0.0391$   &$0.0383$  &$0.0392$ &$0.0393$\\
DeepONet       &$0.0569$ &$0.0617$
&$0.0685$   &$0.0702$  &$0.0833$ &$0.0857$\\
\hline 
GNO     &$0.0555$ &$0.0594$ &$0.0651$   &$0.0663$  &$0.0666$ &$0.0699$\\
LNO      &$0.0212$ &$0.0221$
   &$0.0217$  &$0.0219$ &$0.0200$ &$0.0189$\\
MGNO      &$0.0243$ &$0.0355$
   &$0.0374$  &$0.0360$ &$0.0364$ &$0.0364$\\
FNO     &$\textbf{0.0018}$ &$\textbf{0.0018}$
   &$\textbf{0.0018}$  &$\textbf{0.0019}$ &$\textbf{0.0020}$ &$\textbf{0.0019}$\\
\hline 
\end{tabular}
\end{center}
\caption{ Relative errors on 1-d Burgers' equation for different resolutions $s$.} 
\label{table:burgers}
\end{table}

\begin{figure}[ht]
    \centering
    \includegraphics[width=\textwidth]{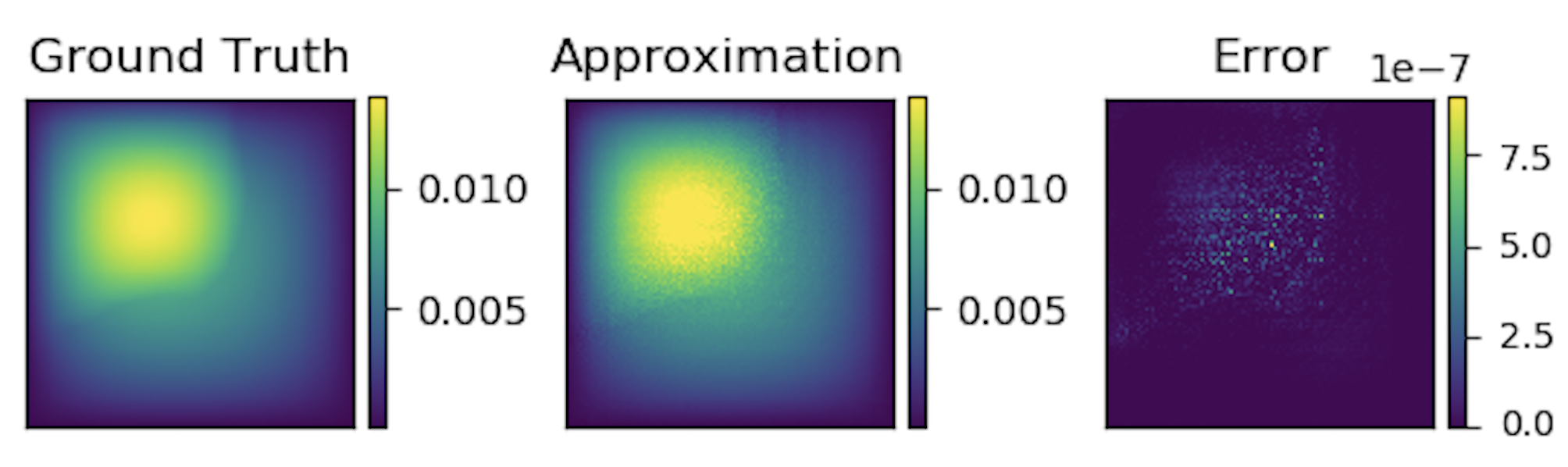}
        \caption{Darcy, trained on $16 \times 16$, tested on $241 \times 241$}\label{fig:super1}
    \small{
    Graph kernel network for the solution of \eqref{ssec:darcy}. It can be trained on a small resolution and will generalize to a large one. The Error is point-wise absolute squared error.}
\end{figure}

\subsubsection{Zero-shot super-resolution.}
\label{sec:superresolution1}
The neural operator is mesh-invariant, so it can be trained on a lower resolution and evaluated at a higher resolution, without seeing any higher resolution data (zero-shot super-resolution).
Figure  \ref{fig:super1} shows an example of the Darcy Equation where we train the GNO model on $16 \times 16$ resolution data in the setting above and transfer to $256 \times 256$ resolution, demonstrating super-resolution in space.

\subsection{Navier-Stokes Equation}
\label{ssec:result_nse}

\postponed{I think we should include brief results concerning \eqref{eq:nssolutionop_add} (time step to time step) as well and before the more
complicated \eqref{eq:nssolutionop2} (period to period).}

In the following section, we compare our four methods with different benchmarks on the Navier-Stokes equation introduced in subsection~\ref{ssec:nse}. The operator of interest is given by \eqref{eq:nssolutionop2}. We use the following abbreviations for the methods against which we benchmark.

\begin{itemize}
    \item {\bf ResNet:} $18$ layers of 2-d convolution with residual connections \cite{he2016deep}.
    \item {\bf U-Net:} A popular choice for image-to-image regression tasks consisting of four blocks with 2-d convolutions and deconvolutions \cite{ronneberger2015u}.
    \item {\bf TF-Net:} A network designed for learning turbulent flows based on a combination of spatial and temporal convolutions \cite{wang2020towards}.
    \done{I am not sure I see it this way; it is designed to approximate a solution operator and velocity versus
    vorticity is just about the space in which one studies the problem. I would move this comment somewhere else. I do understand why it is there and that it needs to be stated somewhere.}
    \item {\bf FNO-2d:} 2-d Fourier neural operator with an auto-regressive structure in time. We use the Fourier neural operator to model the local evolution from the previous $10$ time steps to the next one time step, and iteratively apply the model to get the long-term trajectory. We set and $k_{\text{max},j} = 12, d_v=32$.
    \item {\bf FNO-3d:} 3-d Fourier neural operator that directly convolves in space-time. We use the Fourier neural operator to model the global evolution from the initial $10$ time steps  directly to the long-term trajectory. We set  $k_{\text{max},j} = 12, d_v=20$.
\end{itemize}
   
\begin{figure}
    \centering
    \includegraphics[width=8cm]{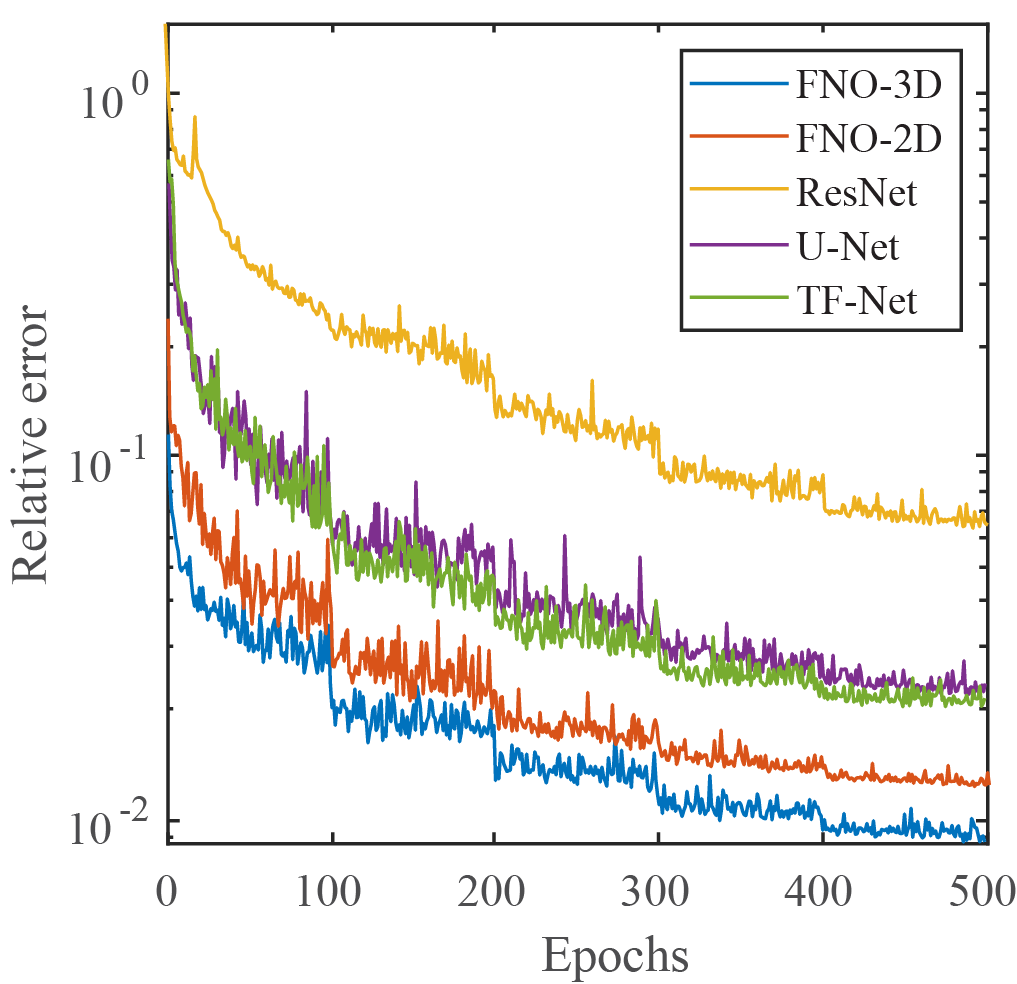}
        \caption{Benchmark on the Navier-Stokes}
    \label{fig:ns-error}
    \small{The learning curves on Navier-Stokes $\nu=1\mathrm{e}{-3}$ with different benchmarks. Train and test on the same resolution.
    For acronyms, see Section \ref{sec:numerics}; details in Tables \ref{table:ns}.} 
\end{figure}

\begin{table}[ht]
\begin{center}
\begin{tabular}{l|rc|cccc}
\multicolumn{1}{c}{} 
&\multicolumn{1}{c}{}
&\multicolumn{1}{c}{} 
&\multicolumn{1}{c}{} 
&\multicolumn{1}{c}{}\\
 & {\bf Parameters}& {\bf Time}& $\nu=10^{-3}$ &$\nu=10^{-4}$ &$\nu=10^{-4}$ & $\nu=10^{-5}$\\
 {\bf Config}&& {\bf per} &$T=50$ &$T=30$ &$T=30$ & $T=20$\\
 && {\bf epoch} &$ N=1000$ &$ N=1000$ &$N=10000$ & $ N=1000$\\
\hline 
FNO-3D    & $6,558,537$ & $38.99s$ &${\bf 0.0086}$ &$0.1918$ &${\bf 0.0820}$  &$0.1893$ \\
FNO-2D    & $414,517$ & $127.80s$ &$0.0128 $ &${\bf 0.1559}$ &$0.0834$  &${\bf 0.1556}$ \\
U-Net       & $24,950,491$ & $48.67s$ &$0.0245 $ &$0.2051$ &$0.1190$  &$0.1982$ \\
TF-Net       & $7,451,724$ & $47.21s$ &$0.0225 $ &$0.2253$ &$0.1168$  &$0.2268$ \\
ResNet     &$266,641$ & $78.47s$ &$0.0701 $ &$0.2871$ &$0.2311$  &$0.2753$ \\
\hline 
\end{tabular}
\end{center}
\caption{Benchmarks on Navier Stokes (fixing resolution $64 \times 64$ for both training and testing).}
\label{table:ns}
\end{table}

As shown in Table \ref{table:ns}, the FNO-3D has the best performance when there is sufficient data ($\nu=10^{-3}, N=1000$ and $\nu=10^{-4}, N=10000$). For the configurations where the amount of data is insufficient ($\nu=10^{-4}, N=1000$ and $\nu=10^{-5}, N=1000$), all methods have $>15\%$ error with FNO-2D achieving the lowest among our hyperparameter search. Note that we only present results for spatial resolution $64 \times 64$ since all benchmarks we compare against are designed for this resolution. Increasing the spatial resolution degrades their performance while FNO achieves the same errors.

\paragraph{Auto-regressive (2D) and Temporal Convolution (3D).}
We investigate two standard formulation to model the time evolution: the auto-regressive model (2D) and the temporal convolution model (3D).
\textbf{Auto-regressive models}: FNO-2D, U-Net, TF-Net, and ResNet all do 2D-convolution in the spatial domain and recurrently propagate in the time domain (2D+RNN). The operator maps the solution at previous time steps to the next time step (2D functions to 2D functions). 
\textbf{Temporal convolution models}: on the other hand, FNO-3D performs convolution in space-time -- it approximates the integral in time by a convolution. FNO-3D maps the initial time interval  directly to the full trajectory (3D functions to 3D functions). 
The 2D+RNN structure can propagate the solution to any arbitrary time $T$ in increments of a fixed interval length $\Delta t$, while the Conv3D structure is fixed to the interval $[0, T]$ but can transfer the solution to an arbitrary time-discretization. We find the 2D method work better for short time sequences while the 3D  method more expressive and easier to train on longer sequences.  


\begin{table}[ht]

\begin{center}
\begin{tabular}{l|lll}
\multicolumn{1}{c}{\bf Networks}
&\multicolumn{1}{c}{\bf $s=64$}
&\multicolumn{1}{c}{\bf $s=128$}
&\multicolumn{1}{c}{\bf $s=256$}\\
\hline 
FNO-3D    & $0.0098$ & $0.0101$ &$0.0106$  \\
FNO-2D    & $0.0129$ & $0.0128$ &$0.0126$ \\
U-Net       & $0.0253$ & $0.0289$ &$0.0344$ \\
TF-Net       & $0.0277$ & $0.0278$ &$0.0301$ \\
\hline 
\end{tabular}
\end{center}

\caption{ Resolution study on Navier-stokes equation ($\nu=10^{-3}$, $N=200$, $T=20$.)} 
\label{table:nse}
\end{table}

\subsubsection{Zero-shot super-resolution.}
\label{sec:superresolution}

The neural operator is mesh-invariant, so it can be trained on a lower resolution and evaluated at a higher resolution, without seeing any higher resolution data (zero-shot super-resolution).
Figure 
\ref{fig:super2} shows an example where we train the FNO-3D model on $64 \times 64 \times 20$ resolution data in the setting above with ($\nu=10^{-4}, N=10000$) and transfer to $256 \times 256 \times 80$ resolution, demonstrating super-resolution in space-time. The
Fourier neural operator is the only model among the benchmarks (FNO-2D, U-Net, TF-Net, and ResNet) that can do zero-shot super-resolution;
the method works well not only on the spatial but also on the temporal domain.

\begin{figure}[ht]
    \centering
    \includegraphics[width=\textwidth]{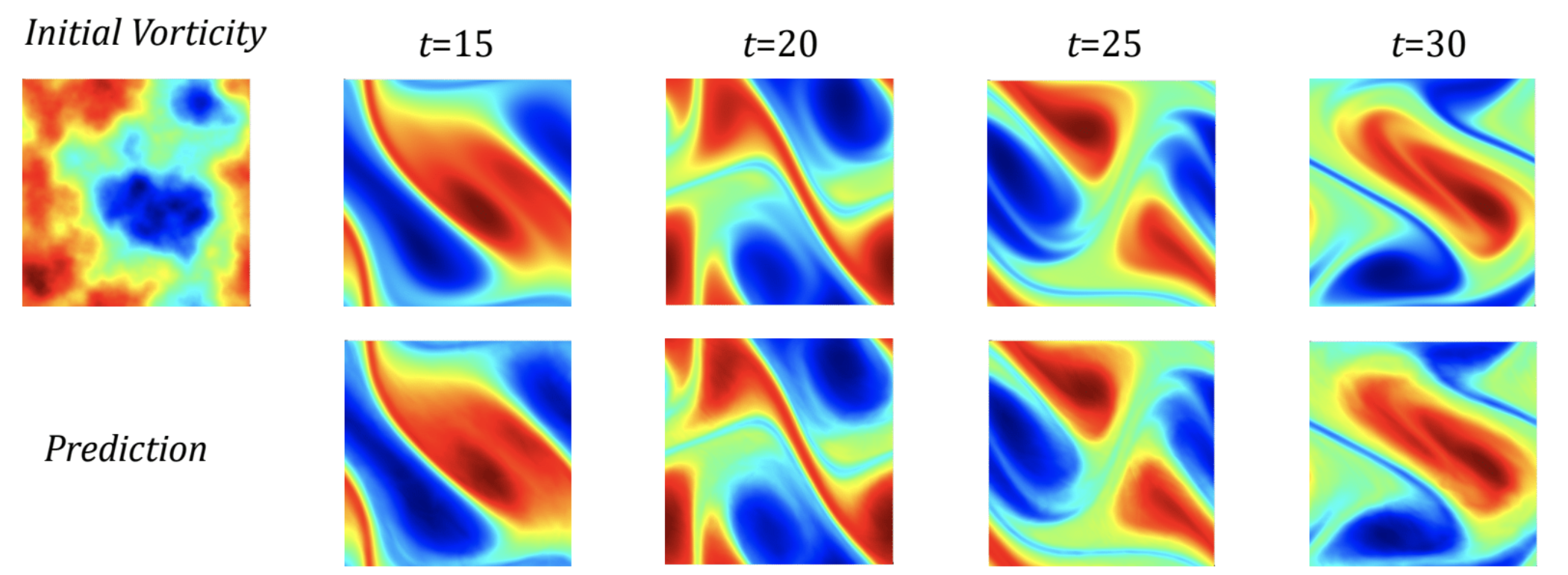}
    \caption{ Zero-shot super-resolution}
    \label{fig:super2} 
    \small{ Vorticity field of the solution to the two-dimensional Navier-Stokes equation with viscosity $\nu =10^{4}$
    (Re$\approx200$); Ground truth on top and prediction on bottom. The model is trained on data that is discretized on a uniform $64\times64$ spatial grid and on a $20$-point uniform temporal grid. The model is evaluated with a different initial condition that is discretized on a uniform $256\times256$ spatial grid and a $80$-point uniform temporal grid. 
    }
\end{figure}

\subsubsection{Spectral analysis}
\begin{figure}[t]
    \centering
    \includegraphics[width=0.48\textwidth]{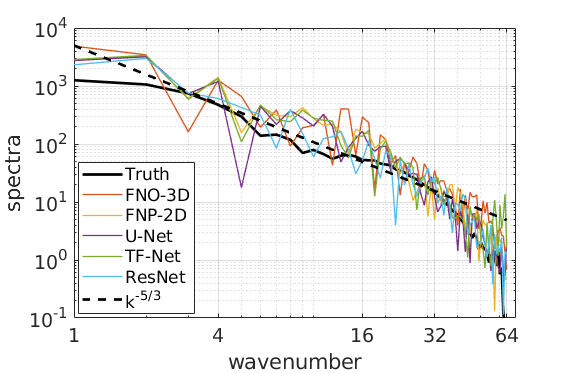}
    \includegraphics[width=0.48\textwidth]{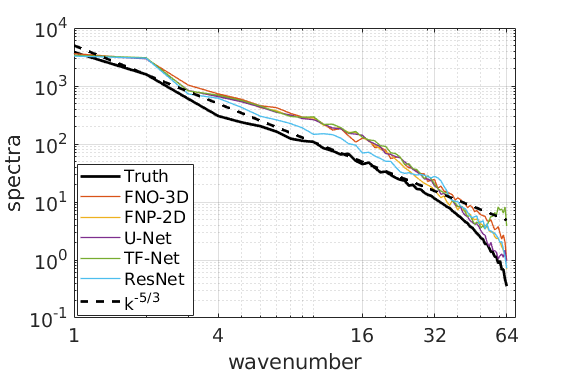}
        \caption{The spectral decay of the predictions of different methods}\label{fig:spectra}
    \small{
    The spectral decay of the predictions of different models on the Navier-Stokes equation. The y-axis is the spectrum; the x-axis is the wavenumber. Left is the spectrum of one trajectory; right is the average of $40$ trajectories. }
\end{figure}

\begin{figure}[t]
    \centering
    \includegraphics[width=7cm]{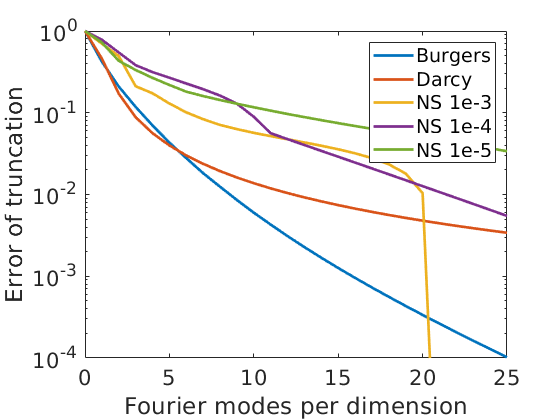}
        \caption{Spectral Decay in term of $k_{max}$ }
    \label{fig:spectral2}
    \small{ The error of truncation in one single Fourier layer without applying the linear transform $R$. The y-axis is the normalized truncation error; the x-axis is the truncation mode $k_{max}$.  }
\end{figure}

Figure \ref{fig:spectra} shows that all the methods are able to capture the spectral decay of the Navier-Stokes equation.
Notice that, while the Fourier method truncates the higher frequency modes during the convolution, FNO can still recover the higher frequency components in the final prediction. 
Due to the way we parameterize $R_\phi$, the function output by \eqref{eq:fourierlayer} has at most $k_{\text{max},j}$ Fourier modes per channel. This, however, does not mean that the Fourier neural operator can only approximate functions up to $k_{\text{max},j}$ modes. Indeed, the activation functions which occurs between integral operators and the final decoder network $Q$ recover the high frequency modes.
As an example, consider a solution to the Navier-Stokes equation with viscosity $\nu=10^{-3}$. Truncating this function at $20$ Fourier modes yields an error around $2\%$ as shown in Figure \ref{fig:spectral2}, while the Fourier neural operator learns the parametric dependence and produces approximations to an error of $\leq 1\%$ with only $k_{\text{max},j}=12$ parameterized modes.

\subsubsection{Non-periodic boundary condition.} Traditional Fourier methods work only with periodic boundary conditions. However, the Fourier neural operator does not have this limitation. This is due to the linear transform $W$ (the bias term) which keeps the track of non-periodic boundary. As an example, the Darcy Flow and the time domain of Navier-Stokes have non-periodic boundary conditions, and the Fourier neural operator still learns the solution operator with excellent accuracy.

\subsubsection{Bayesian Inverse Problem}
\label{sec:bayesian}
As discussed in Section \ref{sec:problem:bayesian},
we use the pCN method of \cite{Cotter_2013} to draw samples from the posterior distribution of initial vorticities in the Navier-Stokes equation given sparse, noisy observations at time $T=50$. We compare the Fourier neural operator acting as a surrogate model with the traditional solvers used to generate our train-test data (both run on GPU). We generate 25,000 samples from the posterior (with a 5,000 sample burn-in period), requiring 30,000 evaluations of the forward operator.

As shown in Figure \ref{fig:baysian}, FNO and the traditional solver recover almost the same posterior mean which, when pushed forward, recovers well the later-time solution of the Navier-Stokes equation.
In sharp contrast, FNO takes $0.005s$ to evaluate a single instance while the traditional solver, after being optimized to use the largest possible internal time-step which does not lead to blow-up, takes $2.2s$. This amounts to $2.5$ minutes for the MCMC using FNO and over $18$ hours for the traditional solver. Even if we account for data generation and training time (offline steps) which take $12$ hours, using FNO is still faster. Once trained, FNO can be used to quickly perform multiple MCMC runs for different initial conditions and observations, while the traditional solver will take $18$ hours for every instance. Furthermore, since FNO is differentiable, it can easily be applied to PDE-constrained optimization problems in which
adjoint calculations are used as part of the solution procedure.


\begin{figure}[ht]
    \centering
    \begin{subfigure}[b]{0.32\textwidth}
        \includegraphics[width=\textwidth]{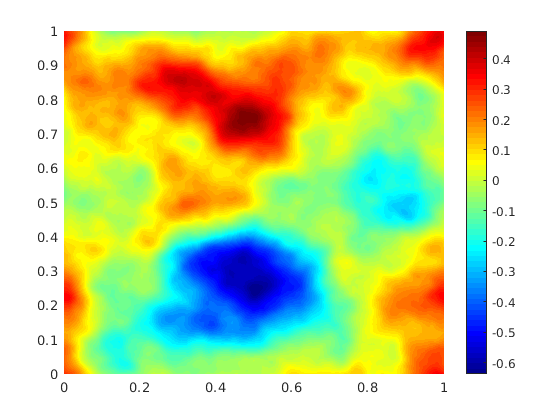}
    \end{subfigure}
    \begin{subfigure}[b]{0.32\textwidth}
        \includegraphics[width=\textwidth]{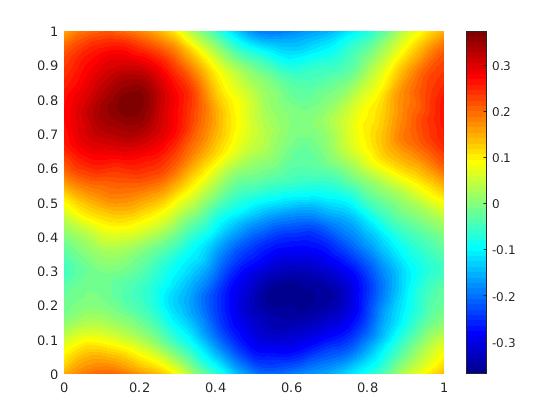}
    \end{subfigure}
    \begin{subfigure}[b]{0.32\textwidth}
        \includegraphics[width=\textwidth]{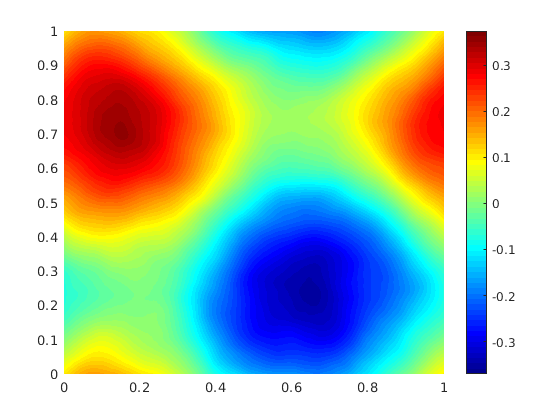}
    \end{subfigure}
    
    \begin{subfigure}[b]{0.32\textwidth}
        \includegraphics[width=\textwidth]{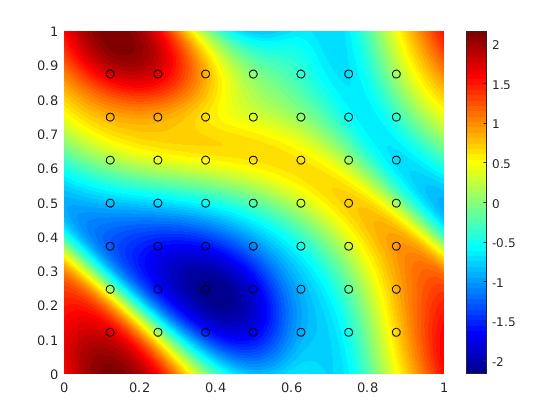}
    \end{subfigure}
    \begin{subfigure}[b]{0.32\textwidth}
        \includegraphics[width=\textwidth]{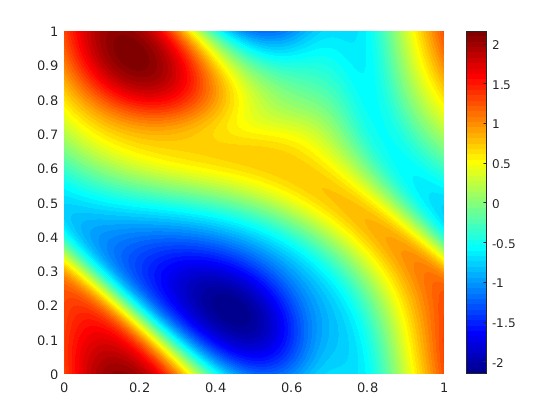}
    \end{subfigure}
    \begin{subfigure}[b]{0.32\textwidth}
        \includegraphics[width=\textwidth]{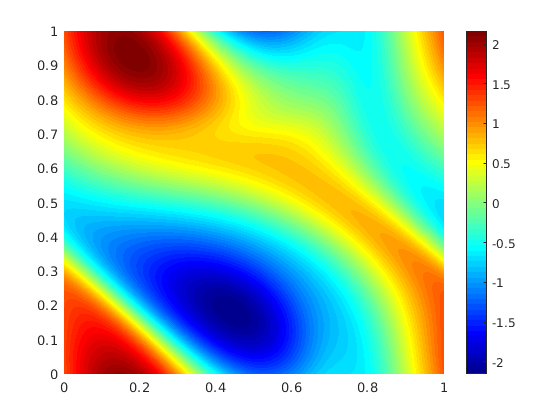}
    \end{subfigure}
        \caption{Results of the Bayesian inverse problem for the Navier-Stokes equation. } 
    \label{fig:baysian} 
\small{ The top left panel shows the true initial vorticity while bottom left panel shows the true observed vorticity at $T=50$ with black dots indicating the locations of the observation points placed on a $7 \times 7$ grid. The top middle panel shows the posterior mean of the initial vorticity given the noisy observations estimated with MCMC using the traditional solver, while the top right panel shows the same thing but using FNO as a surrogate model. The bottom middle and right panels show the vorticity at $T=50$ when the respective approximate posterior means are used as initial conditions. }
\end{figure}

\subsection{Discussion and Comparison of the Four methods}
\label{ssec:comparsion}

In this section we will compare the four methods in term of expressiveness, complexity, refinabilibity, and ingenuity.

\subsubsection{Ingenuity}
First we will discuss ingenuity, in other words, the design of the frameworks. The first method, GNO, relies on the Nystr\"om approximation of the kernel, or the Monte Carlo approximation of the integration. It is the most simple and straightforward method. The second method, LNO, relies on the low-rank decomposition of the kernel operator. It is efficient when the kernel has a near low-rank structure.  The third method, MGNO, is the combination of the first two. It has a hierarchical, multi-resolution decomposition of the kernel. The last one, FNO, is different from the first three;
it restricts the integral kernel to induce a convolution.

GNO and MGNO are implemented using graph neural networks, which helps to define sampling and integration. The graph network library also allows sparse and distributed message passing. The LNO and FNO don't have sampling. They are faster without using the graph library.


\begin{table}[h]

\begin{center}
\begin{tabular}{l|lll}
& scheme
& graph-based
& kernel network \\  
\hline 
GNO        &  Nystr\"om  approximation & Yes   & Yes  \\
LNO       &  Low-rank approximation & No  & Yes \\
MGNO       &  Multi-level graphs on GNO & Yes   & Yes \\
FNO       & Convolution theorem; Fourier features  & No  & No \\
\hline
\end{tabular}
\end{center}
\caption{Ingenuity.}
\label{table:ingenuity}
\end{table}

\subsubsection{Expressiveness}
We measure the expressiveness by the training and testing error of the method. The full $O(J^2)$ integration always has the best results, but it is usually too expensive. As shown in the experiments \ref{ssec:DF} and \ref{ssec:BE}, GNO usually has good accuracy, but its performance suffers from sampling.
LNO works the best on the 1d problem (Burgers equation). It has difficulty on the 2d problem because it doesn't employ sampling
to speed-up evaluation. MGNO has the multi-level structure, which gives it the benefit of the first two. Finally, FNO has overall the best performance. It is also the only method that can capture the challenging Navier-Stokes equation.


\subsubsection{Complexity}
The complexity of the four methods are listed in Table \ref{table:complexity}. GNO and MGNO have sampling. Their complexity depends on the number of nodes sampled $J'$. When using the full nodes. They are still quadratic. LNO has the lowest complexity $O(J)$. FNO, when using the fast Fourier transform, has complexity $O(J \log J)$.

In practice. FNO is faster then the other three methods because it doesn't have the kernel network $\kappa$. MGNO is relatively slower because of its multi-level graph structure. 

\begin{table}[h]

\begin{center}
\begin{tabular}{l|cc}
& Complexity
& Time per epochs in training\\
\hline 
GNO        &  $O(J'^2 r^2)$ &  $\ 4s$ \\
LNO       &  $O(J)$ & $\ 20s$ \\
MGNO       &  $\sum_l O(J_l^2 r_l^2)  \sim O(J)$ &  $\ 8s$\\
FNO       & $(J \log J)$  &  $\ 4s$\\
\hline
\end{tabular}
\end{center}
\caption{Complexity (roundup to second on a single Nvidia V100 GPU)}
\label{table:complexity}

\end{table}

\subsubsection{Refinability}
Refineability measures the number of parameters used in the framework. Table \ref{table:refinability} lists the accuracy of the relative error on Darcy Flow with respect to different number of parameters. Because GNO, LNO, and MGNO have the kernel networks, the slope of their error rates are flat: they can work with a very small number of parameters. On the other hand, FNO does not have the sub-network. It needs at a larger magnitude of parameters to obtain an acceptable error rate.

\begin{table}[h]

\begin{center}
\begin{tabular}{l|llll}
Number of parameters
& $10^3$
&$10^4$
&$10^5$ 
& $10^6$\\
\hline 
GNO        & $\ 0.075$  & $\ 0.065$  & $\ 0.060$ & $\ 0.035$ \\
LNO       & $\ 0.080$  & $\ 0.070$  & $\ 0.060$ & $\ 0.040$ \\
MGNO       & $\ 0.070$  & $\ 0.050$  & $\ 0.040$ & $\ 0.030$ \\
FNO      & $\ 0.200$  & $\ 0.035$  & $\ 0.020$ & $\ 0.015$ \\
\hline
\end{tabular}
\caption{Refinability.}
\label{table:refinability}
\small{The relative error on Darcy Flow with respect to different number of parameters. The errors above are approximated value roundup to $0.05$. They are the lowest test error achieved by the model, given the model's number of parameters $|\theta|$ is bounded by $10^3, 10^4, 10^5, 10^6$ respectively.}
\end{center}
\end{table}

\subsubsection{Robustness}
We conclude with experiments investigating the robustness of Fourier neural operator
to noise. We study: a) training on clean (noiseless) data and testing with clean and
noisy data; b) training on clean (noiseless) data and testing with clean and
noisy data. When creating noisy data we map $a$ to noisy $a'$ as follows: 
at every grid-point $x$ we set
\[a(x)' = a(x) + 0.1 \cdot \|a\|_{\infty}\xi,\]
where  $\xi \sim \mathcal{N}(0, 1)$ is drawn i.i.d. at every grid point;
this is similar to the setting adopted in \cite{lu2021comprehensive}. 
We also study the 1d advection 
equation as an additional test case, following the setting in \cite{lu2021comprehensive}
in which the input data is a random square wave, defined by an $\R^3$-valued random
variable.

\begin{table}[h]
\begin{center}
\begin{tabular}{l|lll}
Problems
& Training error
& Test (clean)
& Test (noisy)  \\
\hline 
Burgers        & $\ 0.002$  & $\ 0.002$  & $\ 0.018$   \\
Advection        & $\ 0.002$  & $\ 0.002$  & $\ 0.094$  \\
Darcy       & $\ 0.006$  & $\ 0.011$  & $\ 0.012$  \\
Navier-Stokes       & $\ 0.024$  & $\ 0.024$  & $\ 0.039$  \\
\hline
Burgers  (train with noise)      & $\ 0.011$  & $\ 0.004$  & $\ 0.011$  \\
Advection  (train with noise)      & $\ 0.020$  & $\ 0.010$  & $\ 0.019$  \\
Darcy    (train with noise)   & $\ 0.007$  & $\ 0.012$  & $\ 0.012$  \\
Navier-Stokes   (train with noise)    & $\ 0.026$  & $\ 0.026$  & $\ 0.025$  \\
\hline
\end{tabular}\\
\end{center}
\caption{Robustness.}
\label{table:robustness}
\end{table}

As shown in the top half of Table \ref{table:robustness} and Figure \ref{fig:robustness}, we observe the Fourier neural operator is robust with respect to the (test) noise level on all four problems. In particular, on the advection problem, it has about 10\% error with 10\% noise. The Darcy and Navier-Stokes operators are smoothing, and the Fourier neural operator obtains lower than 10\% error in all scenarios. However the FNO is less robust on the advection equation, which is not smoothing, and on Burgers equation which, whilst smoothing also forms steep fronts.

A straightforward approach to enhance the robustness is to train the model with noise. As shown in the bottom half of Table \ref{table:robustness}, the Fourier neural operator has no gap between the clean data and noisy data when training with noise. However, noise in training may degrade the performance on the clean data, as a trade-off. In general, augmenting the training data with noise leads to robustness. For example, in the auto-regressive modeling of dynamical systems, training the model with noise will reduce error accumulation in time, and thereby help the model to predict over longer time-horizons \citep{pfaff2020learning}.
We also observed that other regularization techniques such as early-stopping and weight decay improve robustness. Using a higher spatial resolution also helps.  

The advection problem is a hard problem for the FNO since it has discontinuities;
similar issues arise when using spectral methods for conservation laws. One can modify the architecture to address such discontinuities accordingly. For example,  \cite{wen2021u} enhance the FNO by composing a CNN or UNet branch with
the Fourier layer; the resulting composite model outperforms the basic FNO on multiphase flow with high contrast and sharp shocks. However the CNN and UNet take the method out of the realm
of discretization-invariant methods; further work is required to design
discretization-invariant image-processing tools, such as the identification of
discontinuities. 

\begin{figure}[ht]
    \centering
    \includegraphics[width=0.7\textwidth]{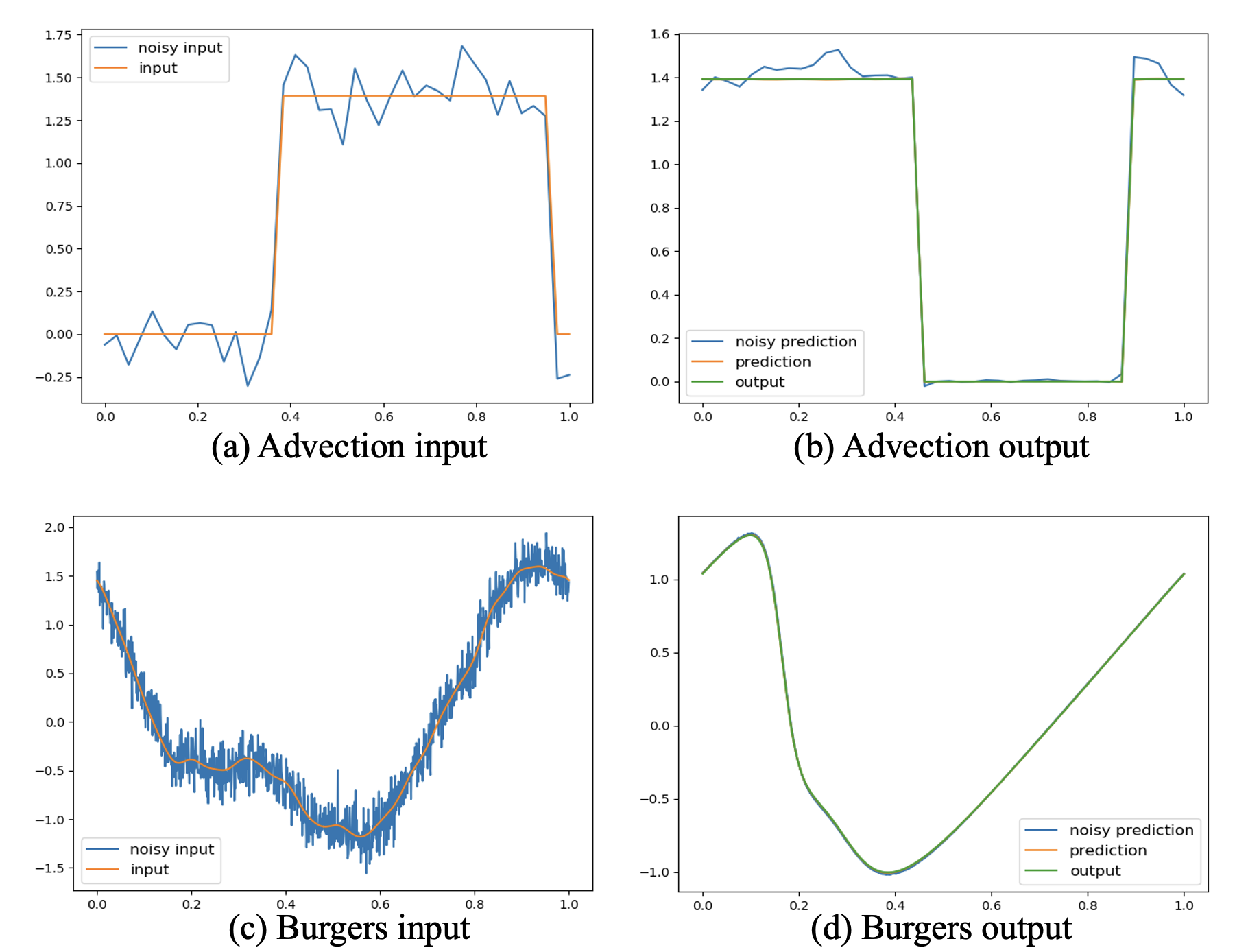}
        \caption{Robustness on Advection and Burgers equations}\label{fig:robustness}
    \small{
    (a) The input of Advection equation ($s=40$). The orange curve is the clean input; the blue curve is the noisy input. (b) The output of Advection equation.  The green curve is the ground truth output; the orange curve is the prediction of FNO with clean input (overlapping with the ground truth); the blue curve is the prediction on the noisy input. Figure (c) and (d) are for Burgers' equation ($s=1000$) correspondingly.}
\end{figure}

\section{Approximation Theory}
\label{sec:approximation}

The paper by \cite{chen1995universal} provides the first universal approximation theorem
for operator approximation via neural networks, and the paper by
\cite{Kovachki} provides an alternative architecture and approximation result.
The analysis of \cite{chen1995universal} was recently extended
in significant ways in the paper by \cite{lanthaler2021error} where, for the first
time, the curse of dimensionality is addressed, and resolved, for certain specific
operator learning problems, using the DeepOnet generalization
\cite{lu2019deeponet,lu2021learning} of \cite{chen1995universal}.
The paper \cite{lanthaler2021error} was generalized to study
operator approximation, and the curse of dimensionality, for the FNO,
in \cite{kovachki2021universal}.

Unlike the finite-dimensional setting, the choice of input 
and output spaces \(\A\) and \(\U\) for the mapping \(\G^\dagger\) play a crucial role in the approximation theory due to the distinctiveness of the induced norm topologies.
In this section, we prove universal approximation theorems for neural operators both 
with respect to the topology of uniform convergence over compact sets and 
with respect to the topology induced by
the Bochner norm \eqref{eq:bochner_error}.  We focus our attention on the Lebesgue, Sobolev, continuous, and continuously differentiable function classes as they have numerous applications in scientific computing and machine learning problems. Unlike the results of \cite{Kovachki, kovachki2021universal} which rely on the Hilbertian 
structure of the input and output spaces or the results of \cite{chen1995universal, lanthaler2021error} which rely on the continuous  functions, our results extend to more general Banach spaces as specified by Assumptions \ref{assump:input} and \ref{assump:output} \nk{(stated in Section \ref{sec:approximation_main})} and are, to the best of our knowledge, the first of their kind
to apply at this level of generality.

Our method of proof proceeds by making use of the following two observations.
First we establish the Banach space approximation property 
\cite{grothendieck1955produits} for the input and output spaces of interest, which allows for a finite dimensionalization of the problem. In particular, we prove that the
Banach space approximation property holds for various function spaces defined on Lipschitz domains; the precise result we need, while unsurprising, seems to be missing from the functional analysis literature and so we provide statement
and proof. Details are given in Appendix A. Second, we establish that integral kernel operators with smooth kernels can be used to approximate linear functionals of various input spaces. In doing so, we establish a Riesz-type representation theorem for the continuously differentiable functions. Such a result is not surprising and mimics the well-known result for Sobolev spaces; however in the form we need it we could not find the
result in the functional analysis literature and so we provide statement
and proof. Details are given in Appendix B. With these two facts, we construct a neural operator which linearly maps any input function to a finite vector then non-linearly maps this vector to a new finite vector which is then used to form the coefficients of a basis expansion for the output function. We reemphasize that our approximation theory uses the fact that neural operators can be reduced to a linear method of approximation (as pointed out in Section \ref{sec:deeponets}) and does not capture
any benefits of nonlinear approximation. However these benefits \emph{are} present in the architecture and are exploited by the trained networks we find in practice.
Exploiting their nonlinear nature to potentially obtain improved rates of approximation remains an interesting direction for future research.

The rest of this section is organized as follows. 
In Subsection \ref{sec:approximation_nos}, we define allowable activation functions and the set of neural operators used in our theory, noting that they constitute a subclass of the neural operators defined in Section \ref{sec:framework}.
In Subsection \ref{sec:approximation_main}, we state and prove our main universal approximation theorems.

\subsection{Neural Operators}
\label{sec:approximation_nos}

For any \(n \in \N\) and \(\sigma :\R \to \R\), we define the set of real-valued \(n\)-layer neural networks
on \(\R^d\) by
\begin{align*}
\NN_n (\sigma;\R^d) \coloneqq \{ f : \R^d \to \R : \: &f(x) = W_n \sigma ( \dots W_1 \sigma (W_0 x + b_0) + b_1 \dots ) + b_n, \\
&W_0 \in \R^{d_0 \times d}, W_1 \in \R^{d_1 \times d_0}, \dots, W_n \in \R^{1 \times d_{n-1}}, \\
&b_0 \in \R^{d_0}, b_1 \in \R^{d_1}, \dots, b_n \in \R, \: d_0,d_1,\dots,d_{n-1} \in \N\}.
\end{align*}
We define the set of \(\R^{d'}\)-valued neural networks simply by stacking real-valued networks
\[\NN_n(\sigma;\R^d, \R^{d'}) \coloneqq \{f : \R^d \to \R^{d'} : f(x) = \bigl ( f_1(x), \dots, f_{d'}(x) \bigl), \: f_1,\dots,f_{d'} \in \NN_n(\sigma;\R^d)\}.\]
We remark that we could have defined \(\NN_n(\sigma;\R^d, \R^{d'})\) by letting \(W_n \in \R^{d' \times d_n}\) and \(b_n \in \R^{d'}\) in 
the definition of \(\NN_n(\sigma;\R^d)\) because we allow arbitrary width, making the two definitions equivalent; however the definition as presented is  more  convenient  for  our  analysis. We also employ the preceding definition with $\R^d$ and $\R^{d'}$ 
replaced by spaces of matrices.
For any \(m \in \N_0\), we define the set of allowable activation functions as the continuous \(\R \to \R\) maps which make
neural networks dense in \(C^m(\R^d)\) on compacta at any fixed depth,
\[\mathsf{A}_m \coloneqq \{\sigma \in C(\R) : \exists n \in \N \text{ s.t. } \NN_n(\sigma;\R^d) \text{ is dense in } C^m(K) \:\: \forall K \subset \R^d \text{ compact}\}.\]
It is shown in \cite[Theorem 4.1]{pinkus1999approximation} that \(\{\sigma \in C^m(\R) : \sigma \text{ is not a polynomial}\} \subseteq \mathsf{A}_m\) with \(n = 1\).
Clearly \(\mathsf{A}_{m+1} \subseteq \mathsf{A}_{m}\).

We define the set of linearly bounded activations as
\[\mathsf{A}^{\text{L}}_m \coloneqq \left \{ \sigma \in \mathsf{A}_m : \sigma \text{ is Borel measurable }, \:\: \sup_{x \in \R} \frac{|\sigma(x)|}{1 + |x|} < \infty \right \},\]
noting that any globally Lipschitz, non-polynomial, \(C^m\)-function is contained in \(\mathsf{A}^{\text{L}}_m\). Most activation functions 
used in practice fall within this class, for example, \(\text{ReLU} \in \mathsf{A}^{\text{L}}_0\), \(\text{ELU} \in \mathsf{A}^{\text{L}}_1\)
while \(\tanh, \text{sigmoid} \in \mathsf{A}^{\text{L}}_m\) for any \(m \in \N_0\). 

For approximation in a Bochner norm, we will be interested in constructing 
globally bounded neural networks which can approximate the identity over compact sets as done in \citep{lanthaler2021error, Kovachki}. This allows us to control the potential unboundedness of the support of the input measure by exploiting the fact that the probability of an input must decay to zero in unbounded regions. Following \citep{lanthaler2021error}, we introduce the forthcoming definition which uses the notation of the diameter of a set. In particular, the diameter of any set \(S \subseteq \R^d\) is defined as, for
$|\cdot|_2$ the Euclidean norm on $\R^d$,
\[\text{diam}_2(S) \coloneqq \sup_{x,y \in S} |x - y|_2.\]

\begin{definition}
\label{def:bounded_activations}
We denote by \(\mathsf{BA}\) the set of maps \(\sigma \in \mathsf{A}_0\) such that,
for any compact set \(K \subset \R^d\), \(\epsilon > 0\), and \(C \geq \emph{\text{diam}}_2(K)\) 
, there exists 
a number \(n \in \N\) and a neural network \(f \in \NN_n(\sigma;\R^d,\R^{d'})\) such that
\begin{align*}
|f(x) - x|_2 &\leq \epsilon, \qquad \forall x \in K, \\
|f(x)|_2 &\leq C, \quad \:\:\: \forall x \in \R^d.
\end{align*} 
\end{definition}
It is shown in \cite[Lemma C.1]{lanthaler2021error} that \(\text{ReLU} \in \mathsf{A}^{\text{L}}_0 \cap \mathsf{BA}\) with \(n=3\).

We will now define the specific class of neural operators for which we prove
a universal approximation theorem. It is important to note that the class
with which we work is a simplification of the one given in \eqref{eq:F}. In particular, the lifting and projection operators \(\mathcal{Q}, \mathcal{P}\), together with the final 
activation function \(\sigma_n\), are set to the identity, and the local linear operators \(W_0,\dots,W_{n-1}\) are set to zero. In our numerical studies we have in any case
typically set \(\sigma_n\) to the identity. However we have found that learning the local
operators \(\mathcal{Q}, \mathcal{P}\) and \(W_0,\dots,W_{n-1}\) is beneficial in practice;
extending the universal approximation theorems given here to explain this benefit 
would be an important but non-trivial development of the analysis we present here.

Let \(D \subset \R^d\) be a domain.
For any \(\sigma \in \mathsf{A}_0\), we define the set of affine kernel integral operators by 
\begin{align*}
\mathsf{IO}(\sigma;D,\R^{d_1},\R^{d_2}) = \{f \mapsto \int_D \kappa(\cdot, y) f(y) \dy + b : \: &\kappa \in \NN_{n_1}(\sigma;\R^{d} \times \R^{d},\R^{d_2 \times d_1}), \\
&b \in \NN_{n_2}(\sigma;\R^{d},\R^{d_2}), \: n_1, n_2 \in \N\},
\end{align*}
for any \(d_1,d_2 \in \N\). Clearly, since \(\sigma \in \mathsf{A}_0\), any \(S \in \mathsf{IO}(\sigma;D,\R^{d_1},\R^{d_2})\) acts 
as \(S : L^p(D;\R^{d_1}) \to L^p(D;\R^{d_2})\) for any \(1 \leq p \leq \infty\) since \(\kappa \in C(\bar{D} \times \bar{D};\R^{d_2 \times d_1})\)
and \(b \in C(\bar{D};\R^{d_2})\).
For any \(n \in \N_{\geq 2}\), \(d_a,d_u \in \N\), \(D \subset \R^d\), \(D' \subset \R^{d'}\) domains, and \(\sigma_1 \in \mathsf{A}^{\text{L}}_0\), \(\sigma_2, \sigma_3 \in \mathsf{A}_0\), 
we define the set of \(n\)-layer neural operators by
\begin{align*}
\mathsf{NO}_n(\sigma_1,\sigma_2,\sigma_3;D,D',\R^{d_a},\R^{d_u}) = \{&f \mapsto \int_{D} \kappa_n (\cdot, y) \bigl ( S_{n-1} \sigma_1 ( \dots S_2 \sigma_1(S_1(S_0 f)) \dots) \bigl )(y) \dy : \\
&S_0 \in \mathsf{IO}(\sigma_2,D;\R^{d_a},\R^{d_1}), \dots S_{n-1} \in \mathsf{IO}(\sigma_2,D;\R^{d_{n-1}},\R^{d_n}), \\
&\kappa_n \in \NN_l (\sigma_3; \R^{d'} \times \R^{d}, \R^{d_u \times d_n}), \: d_1,\dots,d_n, l \in \N\}.
\end{align*}
When \(d_a = d_u = 1\), we will simply write \(\mathsf{NO}_n(\sigma_1,\sigma_2,\sigma_3;D,D')\).
Since \(\sigma_1\) is linearly bounded, we can use a result about compositions of maps in \(L^p\) spaces such as \cite[Theorem 7.13]{dudley2010concrete} to conclude that any \(G \in \mathsf{NO}_n(\sigma_1,\sigma_2,\sigma_3,D,D';\R^{d_a},\R^{d_u})\) acts as \(G : L^p(D;\R^{d_a}) \to L^p (D';\R^{d_u})\).
Note that it is only in the last layer that we transition from functions defined
over domain $D$ to functions defined over domain $D'$.

When the input space of an operator of interest is \(C^m (\bar{D})\), for \(m \in \N\),
we will need to take in derivatives explicitly as they cannot be learned using kernel integration  as employed in the current 
construction given in Lemma~\ref{lemma:c_kernelapprox}; note that this is \textit{not} the case for \(W^{m,p}(D)\) as shown in Lemma~\ref{lemma:wmp_kernelapprox}.
We will therefore define the set of \(m\)-th order 
neural operators by
\begin{align*}
\mathsf{NO}_n^m(\sigma_1,\sigma_2,\sigma_3;D,D',\R^{d_a},\R^{d_u}) = \{&(\partial^{\alpha_1}f, \dots,\partial^{\alpha_{J_m}} f) \mapsto G(\partial^{\alpha_1}f, \dots,\partial^{\alpha_{J_m}} f) : \\
& G \in \mathsf{NO}_n (\sigma_1,\sigma_2,\sigma_3;D,D',\R^{J_m d_a}, \R^{d_u})\}
\end{align*}
where \(\alpha_1,\dots,\alpha_{J_m} \in \N^d\) is an enumeration of the set
\(\{\alpha \in \N^d : 0 \leq |\alpha|_1 \leq m\}\). Since we only use the \(m\)-th order operators when dealing with spaces of continuous functions, each element of \(\mathsf{NO}_n^m\) can be thought of as a mapping from a product space of spaces of the form \(C^{m - |\alpha_j|}(\bar{D};\R^{d_a})\) for all \(j \in \{1,\dots,J_m\}\) to an appropriate Banach space of interest.

\nk{
\subsection{Discretization Invariance}
\label{sec:discritizational_invariance}
Given the construction above and the definition of discretization invariance in Definition~\ref{def:discretization_invariance}, in the following we prove that neural operators are discretization invariant deep learning models.

\begin{theorem}
\label{thm:discretizational_invariance}
Let \(D \subset \R^{d}\) and \(D' \subset \R^{d'}\) be two domains for some \(d, d' \in \N\). Let
\(\A\) and \(\U\) be real-valued Banach function spaces on \(D\) and \(D'\) respectively. Suppose that
\(\A\) and \(\U\) can be continuously embedded in \(C(\bar{D})\) and \(C(\bar{D'})\) respectively and that
\(\sigma_1, \sigma_2, \sigma_3 \in C(\R)\). Then,
for any \(n \in \N\), the set of neural operators \(\mathsf{NO}_n(\sigma_1,\sigma_2,\sigma_3;D,D')\) whose elements are viewed 
as maps \(\A \to \U\) is discretization-invariant. 
\end{theorem}

The proof, provided in appendix~\ref{sec_proof:discretizational_invariance}, constructs a sequence of finite dimensional maps which approximate the neural operator by Riemann sums and shows uniform converges of the error over compact sets of \(\A\).   

}

\subsection{Approximation Theorems}
\label{sec:approximation_main}

\begin{figure}[t]
\centering
\begin{tikzpicture}
\begin{tikzcd}
\A \arrow[rr,"F"] \arrow[d,swap,"\G^\dagger"] && \mathbb{R}^{J} \arrow[rr,""] \arrow[d,swap,"\psi"] && \A \arrow[d,swap,"\G^\dagger"] \\
\U \arrow[rr,""] && \mathbb{R}^{J'} \arrow[rr,"G"] && \U
\end{tikzcd}
\end{tikzpicture}
\hspace*{6.2cm}
\caption{A schematic overview of the maps used to approximate \(\G^\dagger\).} \label{fig:approach}
\end{figure}

Let \(\A\) and \(\U\) be Banach function spaces on the domains \(D \subset \R^d\) and \(D' \subset \R^{d'}\) respectively.
We will work in the setting where functions in \(\A\) or \(\U\) are real-valued, but note that all results generalize 
in a straightforward fashion to the vector-valued setting. We are interested in the approximation of nonlinear operators \(\G^\dagger : \A \to \U\)
by neural operators. We will make the following assumptions on the spaces \(\A\) and \(\U\).

\begin{assumption}
\label{assump:input}
Let \(D \subset \R^d\) be a Lipschitz domain for some \(d \in \N\).
One of the following holds
\begin{enumerate}
	\item \(\A = L^{p_1} (D)\) for some \(1 \leq p_1 < \infty\).
	\item \(\A = W^{m_1,p_1}(D)\) for some \(1 \leq p_1 < \infty\) and \(m_1 \in \N\),
	\item \(\A = C (\bar{D})\).
\end{enumerate}
\end{assumption}

\begin{assumption}
\label{assump:output}
Let \(D' \subset \R^{d'}\) be a Lipschitz domain for some \(d' \in \N\).
One of the following holds
\begin{enumerate}
	\item \(\U = L^{p_2} (D')\) for some \(1 \leq p_2 < \infty\),
	and \(m_2 = 0\),
	\item \(\U = W^{m_2,p_2}(D')\) for some \(1 \leq p_2 < \infty\) and \(m_2 \in \N\),
	\item \(\U = C^{m_2}(\bar{D}')\) and \(m_2 \in \N_0\).  
\end{enumerate}
\end{assumption}

We first show that neural operators are dense in the continuous operators \(\G^\dagger : \A \to \U\) in the topology of uniform convergence on compacta. The proof proceeds by making three main approximations which are schematically shown in Figure~\ref{fig:approach}. First, inputs are mapped to a finite-dimensional representation through a set of appropriate linear functionals on \(\A\) denoted by \(F : \A \to \R^{J}\). We show in Lemmas 21 and 23 that, when \(\A\) satisfies Assumption \ref{assump:input}, elements of \(\A^*\) can be approximated by integration against smooth functions. This generalizes the idea from \citep{chen1995universal} where functionals on \(C(\bar{D})\) are approximated by a weighted sum of Dirac measures. We then show in Lemma 25 that, by lifting the dimension, this representation can be approximated by a single element of \(\mathsf{IO}\). Second, the representation is non-linearly mapped to a new representation by a continuous function \(\psi : \R^{J} \to \R^{J'}\) which finite-dimensionalizes the action of \(\G^\dagger\). We show, in Lemma 28, that this map can be approximated by a neural operator by reducing the architecture to that of a standard neural network. Third, the new representation is used as the coefficients of an expansion onto representers of \(\U\), the map denoted \(G : \R^{J'} \to \U\), which we show can be approximated by a single \(\mathsf{IO}\) layer in Lemma 27 using density results for continuous functions. The structure of the overall approximation is similar to \citep{Kovachki} but generalizes the ideas from working on Hilbert spaces to the spaces in Assumptions \ref{assump:input} and \ref{assump:output}. Statements and proofs of the lemmas used in the theorems are given in the appendices.

\begin{theorem}
\label{thm:main_compact}
Let Assumptions~\ref{assump:input} and \ref{assump:output} hold and suppose \(\G^\dagger : \A \to \U\) is continuous.
Let \(\sigma_1 \in \mathsf{A}^{\emph{\text{L}}}_0\), \(\sigma_2 \in \mathsf{A}_0\), and \(\sigma_3 \in \mathsf{A}_{m_2}\).
Then for any compact set \(K \subset \A\) and \(0 < \epsilon \leq 1\), there exists a number \(N \in \N\) and a neural operator 
\(\G \in \mathsf{NO}_{N}(\sigma_1,\sigma_2,\sigma_3;D,D')\) such that
\[\sup_{a \in K} \|\G^\dagger (a) - \G (a) \|_{\U} \leq \epsilon.\]
Furthermore, if \(\U\) is a Hilbert space and \(\sigma_1 \in \mathsf{BA}\) and, for some \(M > 0\), we have that \(\|\G^\dagger (a)\|_{\U} \leq M\) for all 
\(a \in \A\) then \(\G\) can be chosen so that
\[\|\G(a)\|_{\U} \leq 4M, \qquad \forall a \in \A.\]
\end{theorem}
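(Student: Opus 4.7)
\medskip

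\noindent\textbf{Proof plan.} The plan is to factor the desired approximation as $\G^\dagger \approx G \circ \psi \circ F$, where $F:\A\to\R^J$ is linear and extracts a finite set of features, $\psi:\R^J\to\R^{J'}$ is a continuous finite-dimensional nonlinear map, and $G:\R^{J'}\to\U$ is a linear reconstruction; this matches the schematic in Figure~\ref{fig:approach}. Then each of these three pieces is realized, up to small error, by a distinct block of the neural operator \eqref{eq:F}: $F$ by the first integral layer $S_0$, $\psi$ by the middle layers $S_1,\ldots,S_{n-1}$ acting on constant-in-$x$ functions, and $G$ by the final kernel integration against $\kappa_n$.

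\medskip

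\noindent\textbf{Step 1 (finite-dimensionalization).} First I would apply the Banach approximation property for $\U$ (Appendix A) to the compact set $\G^\dagger(K)\subset\U$: this produces $\mu_1,\dots,\mu_{J'}\in\U^*$ and $\phi_1,\dots,\phi_{J'}\in\U$ such that $\sup_{a\in K}\bigl\|\G^\dagger(a)-\sum_{j}\mu_j(\G^\dagger(a))\phi_j\bigr\|_\U<\epsilon/3$. Second, apply the same property to $\A$ on $K$ to obtain $\ell_1,\dots,\ell_J\in\A^*$ and $e_1,\dots,e_J\in\A$ so that the projection $P_J a=\sum_i \ell_i(a)e_i$ satisfies $\sup_{a\in K}\|a-P_Ja\|_\A<\delta$ for any prescribed $\delta>0$. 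Continuity of $\tilde\psi:=(\mu_1\circ\G^\dagger,\dots,\mu_{J'}\circ\G^\dagger)$ on the compact set $K\cup P_J(K)$ (via uniform continuity on a compact metric subset) then lets me choose $\delta$ small enough that $\sup_{a\in K}|\tilde\psi(a)-\tilde\psi\circ P_J(a)|<\epsilon/(3 C)$, where $C=\max_j\|\phi_j\|_\U$. Then $\hat\psi:=\tilde\psi\circ R:\R^J\to\R^{J'}$ is continuous, where $R(c)=\sum_i c_ie_i$; by Tietze I may extend it continuously off the compact set $F(K)$ (with $F(a)=(\ell_1(a),\dots,\ell_J(a))$) to all of $\R^J$, and then approximate it uniformly on any larger compactum by a standard feed-forward neural network $\psi_\theta$ with activation $\sigma_1$ using the classical universal approximation theorem \citep{pinkus1999approximation}.

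\medskip

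\noindent\textbf{Step 2 (realization as a neural operator).} To implement $F$, I would use the representation lemmas for $\A^*$ (Lemmas~21 and~23 of the paper; their extension to $C^m$ inputs via the $\mathsf{NO}_n^m$ class handles Assumption~\ref{assump:input}.2--3): each $\ell_i$ is approximated uniformly on $K$ by a functional $a\mapsto\int_D\kappa_i(y)a(y)\,\dy$ with $\kappa_i\in C^\infty$, and these smooth kernels are in turn uniformly approximated by neural networks $\kappa_i^\theta\in\NN(\sigma_2;\R^d)$. Stacking them as rows of a kernel independent of $x$, I obtain $S_0\in\mathsf{IO}(\sigma_2;D,\R,\R^J)$ (with zero bias) whose output $S_0(a)$ is a near-constant $\R^J$-valued function on $D$ with values close to $F(a)$. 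For the middle layers, taking $\kappa_t(x,y)=M_t/|D|$ constant and $b_t$ constant reduces $S_t\sigma_1$ acting on constant functions to a standard affine--$\sigma_1$ layer, so $S_{n-1}\sigma_1\cdots\sigma_1 S_1$ reproduces $\psi_\theta$ pointwise, up to the small non-constancy inherited from $S_0$ (controllable via the linear boundedness of $\sigma_1$ and the continuity of the composition). Finally, density of $\NN(\sigma_3;\R^{d'})$ in the spaces allowed by Assumption~\ref{assump:output} (Lemma~27) lets me pick $\tilde\phi_j\in\NN_l(\sigma_3;\R^{d'})$ with $\|\phi_j-\tilde\phi_j\|_\U<\epsilon/(3J')$ and set $\kappa_n(x,y)=|D|^{-1}(\tilde\phi_1(x),\dots,\tilde\phi_{J'}(x))$, so that integrating against a near-constant input reproduces $\sum_j c_j\tilde\phi_j$. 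A triangle inequality over the three sources of error ($F$-approximation, $\psi$-approximation, and $G$-approximation, each driven below $\epsilon/3$) yields the desired uniform bound over $K$.

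\medskip

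\noindent\textbf{Step 3 (bounded variant).} For the Hilbertian, globally bounded case I would insert an identity-approximating layer built from $\mathsf{BA}$ activations right after the $F$ block: by Definition~\ref{def:bounded_activations} applied with $d=J'$ to the compact set $\psi_\theta(F(K))$ and radius $C=4M/\|\sum_j\tilde\phi_j\otimes\phi_j\|$-type constant, I obtain $f\in\NN(\sigma_1;\R^{J'},\R^{J'})$ that is uniformly close to the identity on that compactum yet globally bounded. Precomposing this with the reconstruction $G$ and using Parseval-type control afforded by the Hilbert structure of $\U$ (so that $\|\sum_j c_j\phi_j\|_\U$ is controlled by $|c|_2$ up to a constant converging to $1$ as the approximation is refined) yields the global bound $\|\G(a)\|_\U\leq 4M$, mirroring the argument of \cite{lanthaler2021error}. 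The main obstacles are (i) verifying that the factorization through $P_J$ does not degrade continuity (which requires compactness of $K$ and uniform continuity of $\G^\dagger$ on $K\cup P_J(K)$ uniformly in the approximation index $J$, handled by passing to a single enlarged compact set), and (ii) the Sobolev/continuous case $\A=C(\bar D)$ or $W^{m,p}$, where the functional-representation lemmas must deliver smooth-kernel approximations whose error is controlled uniformly on $K$ in the norm dual to $\A$ — the nontrivial content is bundled in the appendices and invoked here.
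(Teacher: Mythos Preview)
Your approach is essentially the paper's: factorize $\G^\dagger\approx G\circ\psi\circ F$ via the approximation property (Lemma~\ref{lemma:finitedim_approx}), realize $F$ by an $\mathsf{IO}$ layer whose kernel is constant in its first argument (Lemma~\ref{lemma:input_approx}), emulate the finite-dimensional $\psi$ by constant-kernel layers acting on constant functions (Lemma~\ref{lemma:nn_emulation}), and realize $G$ by integrating against neural-network basis functions (Lemma~\ref{lemma:output_approx}); the bounded variant is then obtained by inserting a $\mathsf{BA}$ clipping layer in the $\R^{J'}$ coordinates.

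Two small clarifications. First, Assumption~\ref{assump:input} does \emph{not} include $C^m(\bar D)$ for $m\geq 1$---that case is the separate Theorem~\ref{thm:cm_compact}---and both $W^{m_1,p_1}(D)$ and $C(\bar D)$ are handled by Lemmas~\ref{lemma:wmp_kernelapprox} and~\ref{lemma:c_kernelapprox} via plain kernel integration, so the $\mathsf{NO}_n^m$ class is not needed here; your parenthetical remark about feeding in derivatives should be dropped. Second, since your $S_0$ has an $x$-independent kernel, $S_0(a)$ is \emph{exactly} constant in $x$, so no ``non-constancy'' propagates through the middle layers. For the bounded part the paper sharpens your ``Parseval-type control up to a constant'' by first orthonormalizing the representers $\phi_j$ (the Gram--Schmidt matrix is absorbed into the last affine layer of $\psi_\theta$), which gives $\|\G(a)\|_\U=|\psi_\theta(F(a))|_2$ identically; the $\mathsf{BA}$ bound $|\beta(\cdot)|_2\leq 4M$ then transfers directly to $\|\tilde\G(a)\|_\U$ without any limiting argument.
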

The proof is provided in appendix~\ref{sec_proof:main_compact}
In the following theorem, we extend this result to the case \(\A = C^{m_1}(\bar{D})\), showing density of the \(m_1\)-th order neural operators.

\begin{theorem}
\label{thm:cm_compact}
Let \(D \subset \R^d\) be a Lipschitz domain, \(m_1 \in \N\), define \(\A := C^{m_1}(\bar{D})\), suppose Assumption~\ref{assump:output} holds and
assume that  \(\G^\dagger : \A \to \U\) is continuous.
Let \(\sigma_1 \in \mathsf{A}^{\emph{\text{L}}}_0\), \(\sigma_2 \in \mathsf{A}_0\), and \(\sigma_3 \in \mathsf{A}_{m_2}\).
Then for any compact set \(K \subset \A\) and \(0 < \epsilon \leq 1\), there exists a number \(N \in \N\) and a neural operator 
\(\G \in \mathsf{NO}_{N}^{m_1}(\sigma_1,\sigma_2,\sigma_3;D,D')\) such that
\[\sup_{a \in K} \|\G^\dagger (a) - \G (a) \|_{\U} \leq \epsilon.\]
Furthermore, if \(\U\) is a Hilbert space and \(\sigma_1 \in \mathsf{BA}\) and, for some \(M > 0\), we have that \(\|\G^\dagger (a)\|_{\U} \leq M\) for all 
\(a \in \A\) then \(\G\) can be chosen so that
\[\|\G(a)\|_{\U} \leq 4M, \qquad \forall a \in \A.\]
\end{theorem}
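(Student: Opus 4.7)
The plan is to reduce to Theorem~\ref{thm:main_compact} by treating the derivative channels as additional input components. Set $J := J_{m_1}$ and let $\alpha_1,\dots,\alpha_J$ enumerate the multi-indices of order at most $m_1$. Define the linear map $\Phi: C^{m_1}(\bar{D}) \to C(\bar{D};\R^J)$ by $\Phi(a) = (\partial^{\alpha_1} a, \dots, \partial^{\alpha_J} a)$. Then $\Phi$ is continuous and injective (the zero multi-index appears), and $\Phi(C^{m_1}(\bar{D}))$ is closed in $C(\bar{D};\R^J)$ by the standard fact that uniform convergence of a sequence together with uniform convergence of all its derivatives up to order $m_1$ implies $C^{m_1}$ regularity of the limit with matching derivatives. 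Hence $\Phi$ is a linear homeomorphism onto its image, and $\tilde{K} := \Phi(K)$ is a compact subset of $C(\bar{D};\R^J)$.

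Next I would build an auxiliary target operator on the larger space. The composition $\G^\dagger \circ \Phi^{-1}: \tilde{K} \to \U$ is continuous, and since $\tilde{K}$ is closed in the metric space $C(\bar{D};\R^J)$, Dugundji's extension theorem produces a continuous $\tilde{\G}: C(\bar{D};\R^J) \to \U$ agreeing with $\G^\dagger \circ \Phi^{-1}$ on $\tilde{K}$. In the Hilbert-space, uniformly bounded case, post-composing $\tilde{\G}$ with the nearest-point projection onto the closed ball of radius $M$ in $\U$ (which is $1$-Lipschitz and leaves values on $\tilde{K}$ unchanged, as those already have norm at most $M$) yields an extension globally bounded by $M$, matching the hypothesis of the second part of Theorem~\ref{thm:main_compact}.

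Applying the vector-valued analogue of Theorem~\ref{thm:main_compact} (whose availability is asserted at the start of Subsection~\ref{sec:approximation_main}) to $\tilde{\G}$, the compact set $\tilde{K}$, and tolerance $\epsilon$ yields $N \in \N$ and $\tilde{G} \in \mathsf{NO}_N(\sigma_1,\sigma_2,\sigma_3;D,D',\R^J,\R)$ satisfying $\sup_{v \in \tilde{K}} \|\tilde{\G}(v) - \tilde{G}(v)\|_\U \leq \epsilon$, together with the uniform bound $\|\tilde{G}(v)\|_\U \leq 4M$ for all $v$ in the bounded case. Setting $\G := \tilde{G} \circ \Phi$ exhibits $\G$ as an element of $\mathsf{NO}_N^{m_1}(\sigma_1,\sigma_2,\sigma_3;D,D')$ by the very definition of $\mathsf{NO}_n^m$ (since the input to $\tilde{G}$ is precisely the tuple of partial derivatives of $a$), and for any $a \in K$ one has $\|\G^\dagger(a) - \G(a)\|_\U = \|\tilde{\G}(\Phi(a)) - \tilde{G}(\Phi(a))\|_\U \leq \epsilon$.

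The main obstacle I expect is the extension step: one must confirm that Dugundji applies with a Banach-valued target and, in the bounded case, that boundedness can be enforced without destroying continuity so that the hypotheses of the vector-valued Theorem~\ref{thm:main_compact} remain valid. An alternative route that sidesteps the extension is to directly modify the proof of Theorem~\ref{thm:main_compact}, replacing the input-functional approximation step by integration of smooth kernels against the individual derivative channels $\partial^{\alpha_j} a$ (together with a product-space version of the analogue of Lemma~\ref{lemma:input_approx}); this is conceptually clean but would require replaying essentially the entire proof.
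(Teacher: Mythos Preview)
Your reduction via the derivative map \(\Phi\) and Dugundji extension is correct. The isometry \(\|\Phi(a)\|_{C(\bar{D};\R^J)} = \|a\|_{C^{m_1}}\) already gives you the homeomorphism onto the image without needing closedness (though closedness holds too), and Dugundji applies with Banach-valued target exactly as you say; the paper itself invokes that form of Dugundji in the proof of Theorem~\ref{thm:measurable_approx}. The projection-onto-the-ball trick in the Hilbert case is fine, and composing the resulting \(\tilde{G} \in \mathsf{NO}_N(\sigma_1,\sigma_2,\sigma_3;D,D',\R^J,\R)\) with \(\Phi\) lands precisely in \(\mathsf{NO}_N^{m_1}\) by definition.

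The paper, however, takes your ``alternative route'': its entire proof is the one-line remark that the argument of Theorem~\ref{thm:main_compact} goes through verbatim once Lemma~\ref{lemma:input_approx} is replaced by Lemma~\ref{lemma:cm_input_approx} (the latter approximates functionals on \(C^{m_1}(\bar{D})\) by kernel integrals against the derivative channels). So what you flagged as ``replaying essentially the entire proof'' is in fact a single lemma swap, and this is what the authors do. Your approach treats Theorem~\ref{thm:main_compact} as a black box at the cost of the extension step and an appeal to the vector-valued version; the paper's approach opens up Theorem~\ref{thm:main_compact} and changes one ingredient, avoiding Dugundji here entirely. Both are valid; the paper's is shorter because the machinery (Lemma~\ref{lemma:cm_input_approx}) was built for exactly this purpose.
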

\begin{proof}
The proof follows as in Theorem~\ref{thm:main_compact}, replacing the use of Lemma~\ref{lemma:input_approx} with Lemma~\ref{lemma:cm_input_approx}.
\end{proof}

With these results in hand, we show density of neural operators in the space \(L^2_\mu (\A;\U)\)
where \(\mu\) is a probability measure and \(\U\) is a separable Hilbert space. The Hilbertian structure of \(\U\) allows us to uniformly control the norm of the approximation due to the isomorphism with \(\ell_2\) as shown in Theorem \ref{thm:main_compact}. It remains an interesting future direction to obtain similar results for Banach spaces. The proof follows the ideas in \citep{lanthaler2021error} where similar results are obtained for DeepONet(s) on \(L^2(D)\) by using Lusin's theorem to restrict the approximation to a large enough compact set and exploit the decay of \(\mu\) outside it. \cite{Kovachki} also employ a similar approach but explicitly constructs the necessary compact set after finite-dimensionalizing. 
 
\begin{theorem}
\label{thm:measurable_approx}
Let \(D' \subset \R^{d'}\) be a Lipschitz domain, \(m_2 \in \N_0\), and suppose Assumption~\ref{assump:input} holds.
Let \(\mu\) be a probability measure on \(\A\) and suppose \(\G^\dagger : \A \to H^{m_2}(D)\) is \(\mu\)-measurable and \(\G^\dagger \in L^2_\mu (\A;H^{m_2}(D))\).
Let \(\sigma_1 \in \mathsf{A}^{\emph{\text{L}}}_0 \cap \mathsf{BA}\), \(\sigma_2 \in \mathsf{A}_0\), and \(\sigma_3 \in \mathsf{A}_{m_2}\).
Then for any \(0 < \epsilon \leq 1\), there exists a number \(N \in \N\) and a neural operator 
\(\G \in \mathsf{NO}_{N}(\sigma_1,\sigma_2,\sigma_3;D,D')\) such that
\[\|\G^\dagger  - \G  \|_{L^2_\mu(\A;H^{m_2}(D))} \leq \epsilon.\]
\end{theorem}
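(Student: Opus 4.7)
The plan is to reduce Theorem \ref{thm:measurable_approx} to the compact-set case of Theorem \ref{thm:main_compact}, combining Lusin's theorem with a tail-truncation argument, and using the uniform boundedness afforded by \(\sigma_1 \in \mathsf{BA}\) to handle the error outside the chosen compact set. Under Assumption \ref{assump:input} the space \(\A\) is a separable Banach space, so every Borel probability measure \(\mu\) is tight by Ulam's theorem, and since \(\U = H^{m_2}(D)\) is a separable Hilbert space, Lusin's theorem is available for \(\mu\)-measurable maps \(\G^\dagger : \A \to \U\).

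First I would use \(\G^\dagger \in L^2_\mu(\A; \U)\) together with absolute continuity of the integral to choose \(M \geq 1\) large enough that
\[
\int_{\{a\,:\,\|\G^\dagger(a)\|_{\U} > M\}} \|\G^\dagger(a)\|_{\U}^2 \, d\mu(a) \leq \frac{\epsilon^2}{32}.
\]
Then, for \(\delta > 0\) to be chosen below, Lusin's theorem yields a compact set \(K_\delta \subset \A\) with \(\mu(\A \setminus K_\delta) < \delta\) such that \(\G^\dagger|_{K_\delta}\) is continuous. I would set \(K := \{a \in K_\delta : \|\G^\dagger(a)\|_{\U} \leq M\}\), which is compact (a closed subset of \(K_\delta\), since \(\G^\dagger|_{K_\delta}\) is continuous) and, by Markov's inequality combined with the Lusin estimate, satisfies \(\mu(\A \setminus K) \leq \delta + \|\G^\dagger\|_{L^2_\mu}^2 / M^2\).

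Next I would apply Dugundji's extension theorem to the continuous bounded map \(\G^\dagger|_K : K \to \U\), producing a continuous \(\widetilde{\G}^\dagger : \A \to \U\) with \(\widetilde{\G}^\dagger|_K = \G^\dagger|_K\) whose image lies in the closed convex hull of \(\G^\dagger(K) \subset \{u \in \U : \|u\|_\U \leq M\}\); this gives the global bound \(\|\widetilde{\G}^\dagger(a)\|_{\U} \leq M\) for every \(a \in \A\). Applying the uniformly bounded conclusion of Theorem \ref{thm:main_compact} to \(\widetilde{\G}^\dagger\) on the compact set \(K\) with tolerance \(\epsilon/2\) yields \(N \in \N\) and \(\G \in \mathsf{NO}_{N}(\sigma_1,\sigma_2,\sigma_3;D,D')\) with
\[
\sup_{a \in K} \|\G^\dagger(a) - \G(a)\|_{\U} \leq \frac{\epsilon}{2}, \qquad \|\G(a)\|_{\U} \leq 4M \text{ for all } a \in \A.
\]

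Finally, I would split the Bochner norm and estimate
\[
\|\G^\dagger - \G\|_{L^2_\mu(\A;\U)}^2 \leq \frac{\epsilon^2}{4} + 2 \int_{\A \setminus K} \|\G^\dagger(a)\|_{\U}^2 \, d\mu(a) + 32 M^2 \, \mu(\A \setminus K),
\]
bounding the middle term by splitting \(\A \setminus K\) into its intersection with \(\{\|\G^\dagger\|_\U > M\}\) (controlled by the tail bound \(\epsilon^2/32\)) and its complement (controlled by \(M^2 \mu(\A \setminus K)\)). Choosing \(\delta\) sufficiently small depending on the fixed \(M\) and \(\epsilon\) drives every term below \(\epsilon^2/3\), completing the estimate. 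The principal obstacle is that Theorem \ref{thm:main_compact} as stated demands continuity of \(\G^\dagger\) on all of \(\A\) together with a global norm bound, neither of which is available here; bridging this gap through the combination of tail truncation, Lusin's theorem, level-set compactification, and Dugundji's extension is the technical crux of the argument.
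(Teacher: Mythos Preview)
Your proposal is correct and follows essentially the same route as the paper: control the tail, apply Lusin's theorem, extend via Dugundji, invoke Theorem~\ref{thm:main_compact} with its global bound (using that $\sigma_1 \in \mathsf{BA}$ and that $H^{m_2}$ is Hilbert), then split the $L^2_\mu$-error over $K$ and its complement. The only cosmetic differences are that the paper first replaces $\G^\dagger$ by the radially truncated bounded map $\G^\dagger_R$ \emph{before} Lusin (rather than intersecting the Lusin set with a sublevel set as you do), and that your explicit constants do not quite close as written---with the tail bound $\epsilon^2/32$, the term $32M^2\mu(\A\setminus K)$ already contributes up to $\epsilon^2$ via $32M^2\mu(\{\|\G^\dagger\|>M\})\le 32\cdot\epsilon^2/32$---so simply tighten the tail tolerance (e.g.\ to $\epsilon^2/200$) and the argument goes through.
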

The proof is provided in appendix~\ref{sec_proof:measurable_approx}. In the following we extend this result to the case \(\A = C^{m_1}(D)\) using the \(m_1\)-th order neural operators.

\begin{theorem}
\label{thm:cm_measurable_approx}
Let \(D \subset \R^d\) be a Lipschitz domain, \(m_1 \in \N\), define
\(\A := C^{m_1}(D)\) and suppose Assumption~\ref{assump:output} holds. Let \(\mu\)
be a probability measure on \(C^{m_1}(D)\) and let 
\(\G^\dagger : C^{m_1}(D) \to \U\) be \(\mu\)-measurable and suppose \(\G^\dagger \in L^2_\mu (C^{m_1}(D);\U)\).
Let \(\sigma_1 \in \mathsf{A}^{\emph{\text{L}}}_0 \cap \mathsf{BA}\), \(\sigma_2 \in \mathsf{A}_0\), and \(\sigma_3 \in \mathsf{A}_{m_2}\).
Then for any \(0 < \epsilon \leq 1\), there exists a number \(N \in \N\) and a neural operator 
\(\G \in \mathsf{NO}_{N}^{m_1}(\sigma_1,\sigma_2,\sigma_3;D,D')\) such that
\[\|\G^\dagger  - \G  \|_{L^2_\mu(C^{m_1}(D);\U)} \leq \epsilon.\]
\end{theorem}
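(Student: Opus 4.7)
The plan is to mirror the proof of Theorem \ref{thm:measurable_approx} almost line for line, simply replacing the appeal to Theorem \ref{thm:main_compact} by an appeal to Theorem \ref{thm:cm_compact} so that the resulting neural operator lies in $\mathsf{NO}_N^{m_1}$ rather than $\mathsf{NO}_N$. In particular, as in the proof of Theorem \ref{thm:measurable_approx}, I would take $\U$ to be Hilbert (so that the uniform boundedness half of Theorem \ref{thm:cm_compact} applies); the setting of interest is $\U = H^{m_2}(D')$, and one uses the separability of both $C^{m_1}(D)$ and $\U$ throughout.

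First, truncate the target operator to enforce uniform boundedness: define
\[
\G^\dagger_R (a) := \begin{cases} \G^\dagger(a), & \|\G^\dagger(a)\|_\U \leq R \\ \tfrac{R}{\|\G^\dagger(a)\|_\U} \G^\dagger(a), & \text{otherwise} \end{cases}
\]
for $a \in C^{m_1}(D)$. Since $\G^\dagger_R \to \G^\dagger$ $\mu$-a.e.\ as $R \to \infty$ and $\|\G^\dagger_R(a)\|_\U \leq \|\G^\dagger(a)\|_\U$ pointwise with $\G^\dagger \in L^2_\mu$, the dominated convergence theorem for Bochner integrals gives an $R > 0$ with $\|\G^\dagger - \G^\dagger_R\|_{L^2_\mu(C^{m_1}(D);\U)} \leq \epsilon/3$.

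Next, since $C^{m_1}(D)$ is a separable Banach (hence Polish) space, Lusin's theorem produces a compact set $K \subset C^{m_1}(D)$ on which $\G^\dagger_R$ is continuous and which satisfies $\mu(C^{m_1}(D) \setminus K) \leq \epsilon^2/(153 R^2)$. A Tietze-type extension (as in \cite[Theorem 4.1]{dugundji1961anextension}) then yields a continuous $\tilde{\G}^\dagger_R : C^{m_1}(D) \to \U$ agreeing with $\G^\dagger_R$ on $K$ and still satisfying $\sup_{a \in \A} \|\tilde{\G}^\dagger_R(a)\|_\U \leq R$. Now invoke Theorem \ref{thm:cm_compact} with tolerance $\sqrt{2}\epsilon/3$ on the compact set $K$ and with the global bound $M = R$: there is an $N \in \N$ and a neural operator $\G \in \mathsf{NO}_N^{m_1}(\sigma_1,\sigma_2,\sigma_3;D,D')$ with $\sup_{a \in K} \|\G(a) - \G^\dagger_R(a)\|_\U \leq \sqrt{2}\epsilon/3$ and $\sup_{a \in \A} \|\G(a)\|_\U \leq 4R$.

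Finally, splitting the Bochner integral over $K$ and $\A \setminus K$ and using the triangle inequality $\|\G^\dagger_R - \G\|_\U^2 \leq 2\|\G^\dagger_R\|_\U^2 + 2\|\G\|_\U^2 \leq 2R^2 + 2(4R)^2 = 34R^2$ on the complement reproduces the identical arithmetic of Theorem \ref{thm:measurable_approx}:
\[
\|\G^\dagger - \G\|_{L^2_\mu} \leq \tfrac{\epsilon}{3} + \Bigl(\tfrac{2\epsilon^2}{9} + 34 R^2 \mu(\A \setminus K)\Bigr)^{1/2} \leq \tfrac{\epsilon}{3} + \Bigl(\tfrac{4\epsilon^2}{9}\Bigr)^{1/2} = \epsilon.
\]
There is no substantive obstacle here: the only item to check is that Lusin's theorem and the Tietze extension still apply, which they do because $C^{m_1}(D)$ with $D$ a bounded Lipschitz domain is a separable Banach space, hence Polish and normal. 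The force of the argument lies entirely in Theorem \ref{thm:cm_compact}, which has already absorbed the need for the $m_1$-th order architecture and the uniform $4M$ bound.
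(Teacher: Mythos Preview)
Your proposal is correct and matches the paper's own proof essentially verbatim: the paper simply states that the proof follows as in Theorem~\ref{thm:measurable_approx} by replacing the use of Theorem~\ref{thm:main_compact} with Theorem~\ref{thm:cm_compact}, which is precisely the substitution you carry out. Your observation that the argument requires $\U$ to be Hilbert (so that the $4M$ boundedness clause of Theorem~\ref{thm:cm_compact} is available) is apt and consistent with how Theorem~\ref{thm:measurable_approx} is actually proved.
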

\begin{proof}
The proof follows as in Theorem~\ref{thm:measurable_approx} by replacing the use of Theorem~\ref{thm:main_compact}
with Theorem~\ref{thm:cm_compact}.
\end{proof}

\section{Literature Review}
\label{ssec:related-work}

We outline the major neural network-based approaches for the solution of PDEs. 
\paragraph{Finite-dimensional Operators.}
An immediate approach to approximate \(\G^\dagger\) is to parameterize it as a deep convolutional neural network (CNN) between the finite-dimensional Euclidean spaces on which the data is discretized i.e. \(\F : \R^K \times \Theta \to \R^K\) \citep{guo2016convolutional, Zabaras, Adler2017, bhatnagar2019prediction,kutyniok2022atheoretical}.  \citet{khoo2017solving}
concerns a similar setting, but with output space $\R$. Such approaches are, by definition, not mesh independent and need modifications to the architecture for different resolution and discretization of \(D\) in order to achieve consistent error (if at all possible). We demonstrate this issue numerically in Section \ref{sec:numerics}. Furthermore, these approaches are limited to the discretization size and geometry of the training data and hence it is not possible to query solutions at new points in the domain. In contrast for our method, we show in Section \ref{sec:numerics}, both invariance of the error to grid resolution, and the ability to transfer the solution between meshes.
The work \citet{ummenhofer2020lagrangian} proposed a continuous convolution network for fluid problems, where off-grid points are sampled and linearly interpolated. However the continuous convolution method is still constrained by the underlying grid which prevents generalization to higher resolutions. Similarly, to get finer resolution solution, \citet{jiang2020meshfreeflownet} proposed learning super-resolution with a U-Net structure for fluid mechanics problems. However fine-resolution data is needed for training, while neural operators are capable of zero-shot super-resolution with no new data.

\paragraph{DeepONet}
A novel operator regression architecture, named DeepONet, was recently proposed by \cite{lu2019deeponet, lu2021learning}; it builds an iterated or deep structure on top
of the shallow architecture proposed in \cite{chen1995universal}. The architecture consists of two neural networks: a branch net applied on the input functions and a trunk net applied on the querying locations in the output space. The original work of \cite{chen1995universal}
provides a universal approximation theorem, and more recently \cite{lanthaler2021error} developed an error estimate for DeepONet itself.
The standard DeepONet structure is a linear approximation of the target operator, where the trunk net and branch net learn the coefficients and basis. On the other hand, the neural operator setting is heavily inspired by the advances in deep learning and is a non-linear approximation, which makes it constructively more expressive. A detailed discussion of DeepONet is provided in Section \ref{sec:deeponets} and as well as a numerical comparison to DeepONet in Section \ref{ssec:darcyburgers}.

\paragraph{Physics Informed Neural Networks (PINNs), Deep Ritz Method (DRM), and Deep Galerkin Method (DGM).}
A different approach is to directly parameterize the solution \(u\) as a neural network \(u : \bar{D} \times \Theta \to \R\) 
\citep{Weinan, raissi2019physics,sirignano2018dgm,bar2019unsupervised,smith2020eikonet,pan2020physics,beck2021solving}. This approach is designed to model one specific instance of the PDE, not the solution operator. It is mesh-independent, but for any given  new parameter coefficient function \(a \in \A\), one would need to train a new neural network \(u_a\) which is computationally costly and time consuming. Such an approach closely resembles classical methods such as finite elements, replacing the linear span of a finite set of local basis functions with the space of neural networks. 

\paragraph{ML-based Hybrid Solvers}
Similarly, another line of work proposes to enhance existing numerical solvers with neural networks by building hybrid models \citep{pathak2020using, um2020solver, greenfeld2019learning}
These approaches suffer from the same computational issue as classical methods: one needs to solve an optimization problem for every new parameter similarly to the PINNs setting. Furthermore, the approaches are limited to a setting in which the underlying PDE is known. Purely data-driven learning of a map between spaces of functions is not possible.

\paragraph{Reduced Basis Methods.}
Our methodology most closely
resembles the classical reduced basis method (RBM) \citep{DeVoreReducedBasis} or the method of \citet{cohendevore}. 
The method introduced here, along with the contemporaneous work introduced
in the papers \citep{ Kovachki,nelsen2020random,opschoor2020deep, schwab2019deep, o2020derivative, lu2019deeponet,fresca2022poddlrom}, are, to the best of our knowledge, 
amongst the first practical supervised learning methods designed to learn maps between infinite-dimensional spaces. Our methodology 
addresses the mesh-dependent nature of the approach in the papers~\citep{guo2016convolutional, Zabaras, Adler2017, bhatnagar2019prediction} by producing a single set of network
parameters that can be used with different discretizations. 
Furthermore, it has the ability to transfer solutions between meshes and indeed
between different discretization methods. Moreover, it needs only to be trained once on the equation set  \(\{a_j, u_j\}_{j=1}^N\). Then, obtaining a solution for a new \(a \sim \mu\) only requires a forward pass of the network, alleviating the major computational issues incurred in \citep{Weinan, raissi2019physics, herrmann2020deep, bar2019unsupervised}
where a different network would need to be trained for each input
parameter. Lastly, 
our method requires no knowledge of the underlying PDE: it is purely
data-driven and therefore non-intrusive. Indeed the true map can be treated as a black-box, perhaps to be learned from experimental data or from the output of a costly computer simulation, not necessarily from a PDE. \done{I think we should be citing DeepOnet -- the original
"practice" paper and the "theory" paper of Sid/Samuel.}

\paragraph{Continuous Neural Networks.}
Using continuity as a tool to design and interpret neural networks is gaining currency in the machine learning community, and the
formulation of ResNet as a continuous time process over the depth parameter is a powerful example of this
\citep{haber2017stable,weinan2017proposal}. The concept of defining neural networks in infinite-dimensional spaces is a central problem that has long been studied \citep{Williams, Neal, BengioLeRoux, GlobersonLivni, Guss}. The general idea is to take the infinite-width limit which yields a non-parametric method and has connections to Gaussian Process Regression \citep{Neal, MathewsGP, Garriga-AlonsoGP}, leading to the introduction of
deep Gaussian processes \citep{damianou2013deep,aretha}. Thus far, such methods have not yielded efficient numerical algorithms that can parallel the success of convolutional or recurrent neural networks for the problem of approximating mappings between finite dimensional spaces. Despite the superficial similarity with our proposed
work, this body of work differs substantially from what we
are proposing: in our work we are motivated by the continuous dependence
of the data, in the input or output spaces, in spatial or spatio-temporal variables;
in contrast the work outlined in this paragraph uses continuity in an artificial
algorithmic depth or width parameter to study the network architecture when the depth or width approaches
infinity, but the input and output spaces remain of fixed finite dimension.

\paragraph{Nystr\"om Approximation, GNNs, and Graph Neural Operators (GNOs).} 
The graph neural operators (Section \ref{sec:graphneuraloperator}) has an underlying Nystr\"om approximation formulation \citep{nystrom1930praktische} which links different grids to a single set of network parameters. This perspective relates our continuum approach to Graph Neural Networks (GNNs). GNNs are a recently developed class of neural networks that apply to graph-structured data; they
have been used in a variety of applications. Graph networks incorporate an array of techniques from neural network design such as graph convolution, edge convolution, attention, and graph pooling  \citep{kipf2016semi,hamilton2017inductive,gilmer2017neural,velivckovic2017graph,murphy2018janossy}. 
GNNs have also been applied to the modeling of physical phenomena such as molecules \citep{chen2019graph} and rigid body systems \citep{battaglia2018relational} since these problems exhibit a natural graph interpretation: the particles are the nodes and the interactions are the edges. The work \citep{pmlr-v97-alet19a} performs an initial study that employs graph networks on the problem of learning solutions to Poisson's equation, among other physical applications. They propose an encoder-decoder setting, constructing graphs in the latent space, and utilizing message passing between the encoder and decoder. However, their model uses a nearest neighbor structure that is unable to capture non-local dependencies as the mesh size is increased.
In contrast, we directly construct a graph in which the nodes are located on the spatial domain of the output function. Through message passing, we are then able to directly learn the kernel of the network
which approximates the PDE solution. When querying a new location, we simply add a new node to our spatial graph and connect it to the existing nodes, avoiding interpolation error by leveraging the power of the Nystr\"om extension for integral operators.

\paragraph{Low-rank Kernel Decomposition and Low-rank Neural Operators (LNOs).}
Low-rank decomposition is a popular method used in kernel methods and Gaussian process
\citep{kulis2006learning, bach2013sharp, lan2017low, gardner2018product}.
We present the low-rank neural operator in Section \ref{sec:lowrank} where we structure the kernel network as a product of two factor networks inspired by Fredholm theory. The low-rank method, while simple, is very efficient and easy to train especially when the target operator is close to linear.
\citet{khoo2019switchnet} proposed a related neural network with low-rank structure to approximate the inverse of differential operators.
The framework of two factor networks is also similar to the trunk and branch network used in DeepONet \citep{lu2019deeponet}. But in our work, the factor networks are defined on the physical domain and non-local information is accumulated through integration with respect to the Lebesgue measure. 
In contrast, DeepONet(s) integrate against delta measures at a set of pre-defined nodal points that are usually taken to be the grid on which the data is given. See section \ref{sec:deeponets} for further discussion.

\paragraph{Multipole, Multi-resolution Methods, and Multipole Graph Neural Operators (MGNOs).}
To efficiently capture long-range interaction,   multi-scale methods such as the classical fast multipole methods (FMM) have been developed \citep{greengard1997new}. Based on the assumption that long-range interactions decay quickly, FMM decomposes the kernel matrix into different ranges and hierarchically imposes low-rank structures on the long-range components (hierarchical matrices) \citep{borm2003hierarchical}. This decomposition can be viewed as a specific form of the multi-resolution matrix factorization of the kernel \citep{kondor2014multiresolution, borm2003hierarchical}.
For example, the works of \citet{fan2019multiscale,fan2019multiscale2, he2019mgnet} propose a similar multipole expansion for solving parametric PDEs on structured grids.  
However, the classical FMM requires nested grids as well as the explicit form of the PDEs. In Section \ref{sec:multipole}, 
we propose the multipole graph neural operator (MGNO) by generalizing this idea to arbitrary graphs in the data-driven setting, so that the corresponding graph neural networks can learn discretization-invariant solution operators which are fast and can work on complex geometries.

\paragraph{Fourier Transform, Spectral Methods, and Fourier Neural Operators (FNOs).}
The Fourier transform is frequently used in spectral methods for solving differential equations since differentiation is equivalent to multiplication in the Fourier domain.
Fourier transforms have also played an important role in the development of deep learning. 
They are used in theoretical work, such as the proof of the neural network
universal approximation theorem \citep{hornik1989multilayer} and related
results for random feature methods \citep{rahimi2008uniform}; empirically, they have been used to speed up convolutional neural networks \citep{mathieu2013fast}.
Neural network architectures involving the Fourier transform or the use of sinusoidal activation functions have also been proposed and studied \citep{bengio2007scaling,mingo2004Fourier, sitzmann2020implicit}.
Recently, some spectral methods for PDEs have been extended to neural networks \citep{fan2019bcr, fan2019multiscale, kashinath2020enforcing}. In Section \ref{sec:fourier}, we build on these works by proposing the Fourier neural operator architecture defined directly in Fourier space with quasi-linear time complexity and state-of-the-art approximation capabilities. 

\paragraph{Sources of Error}

In this paper we will study the error resulting from approximating
an operator (mapping between Banach spaces) from within a class of
finitely-parameterized operators. We show that the resulting error,
expressed in terms of universal approximation of operators over
a compact set or in terms of a resulting risk,
can be driven to zero by increasing the number of parameters, and 
refining the approximations inherent in the neural operator architecture.
In practice there will be two other sources of approximation error: firstly from the
discretization of the data; and secondly from the use of empirical
risk minimization over a finite data set to determine the parameters. 
Balancing all three sources of error is key to making algorithms efficient.
However we do not study these other two sources of error in this work. Furthermore we do not study how the number of parameters in our approximation grows as the error tolerance is refined. Generally, this growth may be super-exponential as shown in \citep{kovachki2021universal}. However, for certain classes of operators and related approximation methods, it is possible to beat the curse of dimensionality; 
we refer the reader to the works 
\citep{lanthaler2021error,kovachki2021universal} for detailed analyses
demonstrating this. Finally we also emphasize that there
is a potential source of error from the optimization procedure which attempts to minimize the empirical risk: it may not achieve the global minumum. Analysis of this error
in the context of operator approximation has not been undertaken.

%
%

\section{Conclusions}
\label{sec:conclusion}

We have introduced the concept of Neural Operator, the goal being
to construct a neural network architecture adapted to the problem
of mapping elements of one function space into elements of
another function space. The network is comprised of four steps
which, in turn,  (i) extract  features from the input
functions, (ii) iterate a recurrent neural network on feature space, defined through composition of a sigmoid function
and a nonlocal operator, and (iii) a final mapping from feature space into the output function.

We have studied four nonlocal operators in step (iii), one based
on graph kernel networks, one based on the low-rank decomposition, one based on the multi-level graph structure, and the last one based on convolution in Fourier space.  The designed network architectures are constructed to be mesh-free and our numerical experiments demonstrate that they have the desired property of being able to train and generalize on different meshes. This is because the networks learn the mapping between infinite-dimensional function spaces, which can then be shared with approximations at different levels of discretization. A further advantage  of the integral operator approach is that data may be incorporated
on unstructured grids, using the Nystr\"om approximation; these
methods, however, are quadratic in the number of discretization points; we describe variants on this methodology, using low rank
and multiscale ideas, to reduce this complexity. On the other hand the Fourier approach leads directly  to fast methods, linear-log linear in the number of discretization points, provided structured grids are used.
We demonstrate that our methods can achieve competitive performance with other mesh-free approaches developed in the numerical analysis community. 
Specifically, the Fourier neural operator achieves the best numerical performance among our experiments, potentially due to the smoothness of the solution function and the underlying uniform grids.
The methods developed in the numerical analysis community are less flexible than the approach we introduce here, relying heavily on the structure of an underlying PDE mapping input to output; our method is entirely data-driven. 

\subsection{Future Directions}   
We foresee three main directions in which this work will develop: firstly as a method to speed-up scientific computing tasks which involve repeated evaluation of a mapping between spaces of functions, following the example of the Bayesian inverse problem \ref{sec:bayesian}, or when the underlying model is unknown as in computer vision or robotics; 
and secondly the development of more advanced methodologies beyond the four approximation schemes presented in Section \ref{sec:four_schemes} that are more efficient or better in specific situations; thirdly, the development of 
an underpinning theory which captures the expressive power,  and approximation error properties, of the proposed neural network, following Section \ref{sec:approximation}, and
quantifies the computational complexity required to achieve given error.

\done{The preceding three points need to be linked to subsubsections
which follow. As it is things appear disjoint. Also an important
potential application is problems for which there is no model:
purely data driven. I think Robotics will provide such examples.}

\subsubsection{New Applications}
The proposed neural operator is a blackbox surrogate model for function-to-function mappings. It naturally fits into solving PDEs for physics and engineering problems. In the paper we mainly studied three partial differential equations: Darcy Flow, Burgers' equation, and Navier-Stokes equation, which cover a broad range of scenarios. Due to its blackbox structure, the neural operator is easily applied on other problems. We foresee applications on more challenging turbulent flows, such as those arising in subgrid models with in climate GCMs, 
high contrast media in geological models generalizing the Darcy model, and general physics simulation for games and visual effects. The operator setting leads to an efficient and accurate representation, and the resolution-invariant properties make it possible to training and a smaller resolution dataset, and be evaluated on arbitrarily large resolution.

The operator learning setting is not restricted to scientific computing. For example, in computer vision, images can naturally be viewed as real-valued functions on 2D domains and videos simply add a temporal structure. Our approach is therefore a natural choice for problems in computer vision where invariance to discretization is crucial. We leave this as an interesting future direction. 
\done{I think computer vision IS part of science or engineering so I suggest to reword the preceding.}

\subsubsection{New Methodologies}
Despite their excellent performance, there is still room for improvement upon the current methodologies. For example, the full $O(J^2)$ integration method still outperforms the FNO by about $40\%$,
albeit at greater cost. It is of potential interest to develop more advanced integration techniques or approximation schemes that follows the neural operator framework. For example, one can use adaptive graph or probability estimation in the Nystr\"om approximation. It is also possible to use other basis than the Fourier basis such as the PCA basis and the Chebyshev basis. 

Another direction for new methodologies is to combine the neural operator in other settings. The current problem is set as a supervised learning problem. Instead, one can combine the neural operator with solvers \citep{pathak2020using, um2020solverintheloop}, augmenting and correcting the solvers to get faster and more accurate approximation. Similarly, one can combine operator learning with physics constraints \citep{wang2021learning,li2021physics}.

\subsubsection{Theory}
In this work, we develop a universal approximation theory (Section \ref{sec:approximation})
for neural operators. As in the work of \cite{lu2019deeponet} studying universal
approximation for DeepONet, we use linear approximation techniques. The power of 
non-linear approximation \citep{devore1998nonlinear}, which is likely intrinsic 
to the success of neural operators in some settings, is still less studied, as discussed in Section \ref{sec:deeponets}; we note that DeepOnet is intrinsically limited by linear
approximation properties. 
For functions between Euclidean spaces, we clearly know, by combining two layers of linear functions with one layer of non-linear activation function, the neural network can approximate arbitrary continuous functions, and that deep neural networks can be exponentially more expressive compared to shallow networks \citep{poole2016exponential}. However issues are less clear when it comes to the choice of architecture and the scaling of the number of parameters within neural operators between Banach spaces.
The approximation theory of operators is much more complex and challenging compared to that of functions over Euclidean spaces. It is important to study the class of neural operators with respect to their architecture -- what spaces the true solution operators lie in, and which classes of PDEs the neural operator approximate efficiently. We leave these as exciting, but
open, research directions.

\newpage

\subsection*{Acknowledgements}
Z. Li gratefully acknowledges the financial support from the Kortschak Scholars, PIMCO Fellows, and Amazon AI4Science Fellows programs.
A. Anandkumar is supported in part by Bren endowed chair. 
K. Bhattacharya, N. B. Kovachki, B. Liu and A. M. Stuart gratefully acknowledge the financial support of the Army Research Laboratory through the Cooperative Agreement Number W911NF-12-0022. Research was sponsored by the Army Research Laboratory and was accomplished under Cooperative Agreement Number W911NF-12-2-0022. 
AMS is also supported by NSF (award DMS-1818977). 
Part of this research is developed when K. Azizzadenesheli was with the Purdue University. The authors are grateful to Siddhartha Mishra for his valuable feedback on this work.

The views and conclusions contained in this document are those of the authors and should not be interpreted as representing the official policies, either expressed or implied, of the Army Research Laboratory or the U.S. Government. The U.S. Government is authorized to reproduce and distribute reprints for Government purposes notwithstanding any copyright notation herein. 

The computations presented here were conducted on the Resnick High Performance Cluster at the California Institute of Technology.

\bibliography{ref}
\newpage

\newpage
\onecolumn
\appendix

\section{}
\label{sec:appendix_notation}

\begin{table}[ht]
\begin{center}
\begin{tabular}{|l|l|}
\multicolumn{1}{c}{\bf Notation} 
&\multicolumn{1}{c}{\bf Meaning}\\

\hline 
{\bf Operator Learning} &\\
$D \subset \R^d$  & The spatial domain for the PDE. \\
$x \in D$  & Points in the the spatial domain. \\
$a \in \A = (D;\R^{d_a})$  & The input functions (coefficients, boundaries, and/or initial conditions). \\
$u \in \U = (D;\R^{d_u})$  & The target solution functions. \\
$D_j$ & The discretization of $(a_j, u_j)$.\\
$\Ftrue: \A \to \U$  & The operator mapping the coefficients to the solutions.\\
$\mu$ & A probability measure where $a_j$ sampled from.\\
\hline
{\bf Neural Operator} &\\
$v(x) \in \R^{d_v}$  & The neural network representation of $u(x)$ \\
$d_a$ & Dimension of the input $a(x)$.\\
$d_u$ & Dimension of the output $u(x)$.\\
$d_v$ & The dimension of the representation $v(x)$.\\
$t = 0,\ldots,T$  & The layer (iteration) in the neural operator .\\
$\cP, \cQ$ & The pointwise linear transformation $\cP: a(x) \mapsto v_0(x)$ and $\cQ: v_T(x) \mapsto u(x)$. \\
$\cK$ & The integral operator in the iterative update $v_t \mapsto v_{t+1}$, \\
$\kappa : \R^{2(d+1)} \to \R^{d_v \times d_v}$  & The kernel maps $(x,y,a(x),a(y))$ to a $d_v \times d_v$ matrix\\
$K \in \R^{n \times n \times d_v \times d_v}$ & The kernel matrix with $K_{xy} = \kappa(x,y)$. \\
$W \in \R^{d_v \times d_v}$ & The pointwise linear transformation used as the bias term in the iterative update.\\
$\sigma $  & The activation function.  \\
\hline
\end{tabular}
\caption{Table of notations: operator learning and neural operators}
\label{table:notations1}
\small{In the paper, we will use lowercase letters such as $v, u$ to represent vectors and functions; uppercase letters such as $W, K$ to represent matrices or discretized transformations; and calligraphic letters such as $\cF, \cG$ to represent operators.
}
\end{center}
\end{table}

We write \(\N= \{1,2,3,\dots\}\) and \(\N_0 = \N \cup \{0\}\). Furthermore, we denote by \(|\cdot|_p\) the \(p\)-norm on any Euclidean space. We say \(\X\) is a Banach space if it is a Banach space over the real field \(\R\). 
We denote by \(\|\cdot\|_\X\) its norm and by \(\X^*\) its topological (continuous) dual.
In particular, \(\X^*\) is the Banach space consisting  of all continuous linear functionals 
\(f : \X \to \R\) with the operator norm
\[\|f\|_{\X^*} = \sup_{\substack{x \in \X \\ \|x\|_\X = 1}} |f(x)| < \infty.\]
For any Banach space \(\Y\), we denote by \(\mathcal{L}(\X;\Y)\) the Banach space of continuous 
linear maps \(T : \X \to \Y\) with the operator norm
\[\|T\|_{\X \to \Y} = \sup_{\substack{x \in \X \\ \|x\|_\X = 1}} \|Tx\|_\Y < \infty.\]
We will abuse notation and write \(\|\cdot\|\) for any operator norm when there is no ambiguity about the
spaces in question.

Let \(d \in \N\). We say that \(D \subset \R^d\) is a \textit{domain} if it is a bounded and connected open set that is topologically regular i.e. \(\text{int}(\bar{D}) = D\).
Note that, in the case \(d = 1\), a domain is any bounded, open interval. For \(d \geq 2\),
we say \(D\) is a \textit{Lipschitz domain} if \(\partial D\) can be locally represented as the graph of 
a Lipschitz continuous function defined on an open ball of \(\R^{d-1}\). If \(d=1\), we will call 
any domain a Lipschitz domain. For any multi-index \(\alpha \in \N_0^d\), we write \(\partial^\alpha f \) for the \(\alpha\)-th weak partial derivative of \(f\) when it exists. 

Let \(D \subset \R^d\) be a domain. For any \(m \in \N_0\), we define the following spaces
\begin{align*}
C (D) &=  \{f : D \to \R : f \text{ is continuous}\}, \\
C^m (D) & = \{f : D \to \R : \partial^\alpha f \in C^{m - |\alpha|_1}(D) \:\: \forall \: 0 \leq |\alpha|_1 \leq m\}, \\
C^m_{\text{b}}(D) &= \bigg \{f \in C^m (D) : \max_{0 \leq |\alpha|_1 \leq m} \sup_{x \in D} |\partial^\alpha f (x)| < \infty \bigg\}, \\
C^m (\bar{D}) &= \{f \in C^m_{\text{b}}(D) : \partial^\alpha f \text{ is uniformly continuous } \forall \: 0 \leq |\alpha|_1 \leq m \}
\end{align*}
and make the equivalent definitions when \(D\) is replaced with \(\R^d\). Note that any function in \(C^m (\bar{D})\)
has a unique, bounded, continuous extension from \(D\) to \(\bar{D}\) and is hence uniquely defined on \(\partial D\).
We will work with this extension without further notice. We remark that 
when \(D\) is a Lipschitz domain, the following definition for \(C^m (\bar{D})\) is equivalent
\[C^m (\bar{D}) = \{f : \bar{D} \to \R : \exists F \in C^m(\R^d) \text{ such that } f \equiv F|_{\bar{D}} \},\]
see \cite{whitney1934functions,brudnyi2012methods}. We define \(C^\infty (D) = \bigcap_{m=0}^\infty C^m (D)\) and, similarly, \(C^\infty_{\text{b}}(D)\) and \(C^\infty (\bar{D})\).
We further define
\[C^\infty_c (D) = \{f \in C^\infty (D) : \text{supp}(f) \subset D \text{ is compact}\}\]
and, again, note that all definitions hold analogously for \(\R^d\).
We denote by \(\|\cdot\|_{C^m} : C^m_{\text{b}}(D) \to \R_{\geq 0}\) the norm
\[\|f\|_{C^m} = \max_{0 \leq |\alpha|_1 \leq m} \sup_{x \in D} |\partial^\alpha f (x)|\]
which makes \(C^m_{\text{b}}(D)\) (also with \(D = \R^d\)) and \(C^m (\bar{D})\) Banach spaces.
For any \(n \in \N\), we write \(C(D;\R^n)\) for the \(n\)-fold Cartesian 
product of \(C(D)\) and similarly for all other spaces we have defined or will
define subsequently. We will
continue to write \(\|\cdot\|_{C^m}\) for the norm on \(C^m_{\text{b}}(D;\R^n)\)
and \(C^m(\bar{D};\R^n)\) defined as
\[\|f\|_{C^m} = \max_{j \in \{1,\dots,n\}} \|f_j\|_{C^m}.\]

For any \(m \in \N\) and \(1 \leq p \leq \infty\), we use the notation \(W^{m,p}(D)\) for the
standard \(L^p\)-type Sobolev space with \(m\) derivatives; we refer the reader to \cite{adams2003sobolev} for a formal definition. Furthermore, we, at times, use the notation \(W^{0,p}(D) = L^p(D)\) and \(W^{m,2}(D) = H^m (D)\). Since we use the standard definitions of Sobolev spaces that can be found in any reference on the subject, we do not give the specifics here.


\section{}
\label{sec:appendix_approximationproperty}

In this section we gather various results on the approximation property of Banach spaces.
The main results are Lemma~\ref{lemma:finitedim_approx} which  states that if two Banach spaces have
the approximation property then continuous maps between them can be approximated in a 
finite-dimensional manner, and Lemma~\ref{lemma:ap} which states the spaces in Assumptions~\ref{assump:input}
and \ref{assump:output} have the approximation property.

\begin{definition}
\label{def:schauder_bases}
A Banach space \(\X\) has a \textit{Schauder basis} if there exist some \(\{\varphi_j\}_{j=1}^\infty \subset \X\) and \(\{c_j\}_{j=1}^\infty \subset \X^*\)
such that 
\begin{enumerate}
	\item \(c_j(\varphi_k) = \delta_{jk}\) for any \(j,k \in \N\),
	\item \(\lim\limits_{n \to \infty} \|x - \sum_{j=1}^n c_j(x) \varphi_j \|_\X = 0\) for all \(x \in \X\).
\end{enumerate}
\end{definition}
We remark that definition \ref{def:schauder_bases} is equivalent to the following. The elements \(\{\varphi_j\}_{j=1}^\infty \subset \X\)
are called a \textit{Schauder basis} for \(\X\) if, for each \(x \in \X\), there exists a unique sequence \(\{\alpha_j\}_{j=1}^\infty \subset \R\)
such that
\[\lim_{n \to \infty} \|x - \sum_{j=1}^n \alpha_j \varphi_j\|_\X = 0.\]
For the equivalence, see, for example \cite[Theorem 1.1.3]{albiac2006topics}. Throughout
this paper we will simply write the
term \textit{basis} to mean Schauder basis. Furthermore, we note that if \(\{\varphi\}_{j=1}^\infty\) is a basis 
then so is \(\{\varphi_j / \|\varphi\|_{\X}\}_{j=1}^\infty\), so we will assume that any basis we use is normalized.

\begin{definition}
\label{def:finiterank}
Let \(\X\) be a Banach space and \(U \in \mathcal{L}(\X;\X)\). \(U\) is called a \textit{finite rank operator}
if \(U(\X) \subseteq \X\) is finite dimensional. 
\end{definition}
By noting that any finite dimensional subspace has a basis,
we may equivalently define a finite rank operator \(U \in \mathcal{L}(\X;\X)\) to be one such that
there exists a number \(n \in \N\) and some \(\{\varphi_j\}_{j=1}^n \subset \X\) and \(\{c_j\}_{j=1}^n \subset \X^*\)
such that
\[Ux = \sum_{j=1}^n c_j(x) \varphi_j, \qquad  \forall x \in \X.\]

\begin{definition}
\label{def:ap}
A Banach space \(\X\) is said to have the \textit{approximation property} (\emph{AP}) if, for any compact set \(K \subset \X\)
and \(\epsilon > 0\), there exists a finite rank operator \(U : \X \to \X\) such that
\[\|x - Ux\|_{\X} \leq \epsilon, \qquad  \forall x \in K.\]
\end{definition}

We now state and prove some well-known results about the relationship between basis and the AP. We were unable to find the statements of the following lemmas in the form given here in the literature and therefore we provide full proofs.

\begin{lemma}
\label{lemma:schauder_ap}
Let \(\X\) be a Banach space with a basis. Then \(\X\) has the \emph{AP}.
\end{lemma}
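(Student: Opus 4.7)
The natural finite-rank candidates are the partial-sum projections associated with the basis. Writing the basis as $\{\varphi_j\}_{j=1}^\infty \subset \X$ with dual coefficients $\{c_j\}_{j=1}^\infty \subset \X^*$ (as in Definition~\ref{def:schauder_bases}), define
\[
U_n : \X \to \X, \qquad U_n(x) = \sum_{j=1}^n c_j(x) \varphi_j.
\]
Each $U_n$ is linear, continuous (since $c_j \in \X^*$), and has $n$-dimensional range, so it is a finite-rank operator in the sense of Definition~\ref{def:finiterank}. By property (2) of a Schauder basis, $U_n(x) \to x$ as $n \to \infty$ for every $x \in \X$, i.e.\ we already have \emph{pointwise} convergence of $U_n$ to the identity. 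The goal is to upgrade this to \emph{uniform} convergence on any compact set $K \subset \X$, which will yield the approximation property.

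The key intermediate step is to establish a uniform operator bound
\[
M := \sup_{n \in \N} \|U_n\|_{\X \to \X} < \infty.
\]
This is the content of the classical ``basis constant'' theorem. The cleanest route is to introduce
\[
|||x||| := \sup_{n \in \N} \|U_n(x)\|_\X,
\]
which is finite at every $x$ precisely because $U_n(x)$ converges and is therefore bounded in $n$. One verifies that $|||\cdot|||$ is a norm on $\X$ with $|||x||| \geq \|x\|_\X$, and that $(\X, |||\cdot|||)$ is complete (completeness uses continuity of each $c_j$ together with the Cauchy property of $(U_n x_k)_k$ for a $|||\cdot|||$-Cauchy sequence $(x_k)$). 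Then the open mapping theorem applied to the identity $(\X, |||\cdot|||) \to (\X, \|\cdot\|_\X)$ gives a constant $C$ with $|||x||| \leq C \|x\|_\X$, so $\|U_n\|_{\X \to \X} \leq C$ for all $n$. (Equivalently, one may apply the Banach--Steinhaus principle directly to the pointwise-convergent family $\{U_n\}$.) I expect this to be the only nontrivial step; the rest is routine.

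Given the uniform bound $M$, the passage from pointwise to uniform convergence on a compact set $K \subset \X$ is a standard $\epsilon/3$ argument. Fix $\epsilon > 0$ and choose a finite $\delta$-net $\{x_1,\dots,x_N\} \subset K$ with $\delta = \epsilon/(2(M+1))$. By pointwise convergence, pick $n$ so large that $\|U_n x_k - x_k\|_\X \leq \epsilon/2$ for every $k \in \{1,\dots,N\}$. Then for an arbitrary $x \in K$, choose $k$ with $\|x - x_k\|_\X \leq \delta$ and estimate
\[
\|x - U_n x\|_\X \leq \|x - x_k\|_\X + \|x_k - U_n x_k\|_\X + \|U_n(x_k - x)\|_\X \leq \delta + \tfrac{\epsilon}{2} + M \delta \leq \epsilon.
\]
Taking $U = U_n$ then furnishes the required finite-rank operator and establishes that $\X$ has the approximation property.
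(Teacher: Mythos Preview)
Your proposal is correct and follows essentially the same route as the paper: define the partial-sum projections $U_n$, use the uniform operator bound $\sup_n \|U_n\| < \infty$ (the basis constant), and run an $\epsilon/3$ argument over a finite net in $K$. The only difference is cosmetic: the paper simply cites the uniform bound from \cite[Remark 1.1.6]{albiac2006topics}, whereas you outline its proof via the equivalent norm and the open mapping theorem (and correctly note that Banach--Steinhaus gives it directly, since Definition~\ref{def:schauder_bases} already places $c_j \in \X^*$).
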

\begin{proof}
Let \(\{c_j\}_{j=1}^\infty \subset \X^*\) and \(\{\varphi_j\}_{j=1}^\infty \subset \X\) be a basis for \(\X\).
Note that there exists a constant \(C > 0\) such that, for any \(x \in \X\) and \(n \in \N\),
\[\|\sum_{j=1}^n c_j(x) \varphi_j\|_\X \leq \sup_{J \in \N} \|\sum_{j=1}^J c_j(x) \varphi_j\|_\X \leq C \|x\|_\X,\]
see, for example \cite[Remark 1.1.6]{albiac2006topics}. Assume, without loss of generality, that \(C \geq 1\).
Let \(K \subset \X\) be compact and \(\epsilon > 0\). Since \(K\) is compact, we can find 
a number \(n =  n(\epsilon, C) \in \N\) and elements \(y_1,\dots,y_n \in K\) such that
for any \(x \in K\) there exists a number \(l \in \{1,\dots,n\}\) with the property that
\[\|x - y_l\|_\X \leq \frac{\epsilon}{3C}.\] 
We can then find a number \(J = J(\epsilon,n) \in \N\) such that
\[\max_{j \in \{1,\dots,n\}}\|y_j - \sum_{k=1}^J c_k(y_j) \varphi_k \|_\X \leq \frac{\epsilon}{3}.\] 
Define the finite rank operator \(U : \X \to \X\) by
\[Ux = \sum_{j=1}^J c_j(x) \varphi_j, \qquad \forall x \in \X.\]
Triangle inequality implies that, for any \(x \in K\),
\begin{align*}
\|x - U(x)\|_\X &\leq \|x - y_l\|_\X + \|y_l - U(y_l)\|_\X + \|U(y_l) - U(x)\|_\X \\
&\leq \frac{2 \epsilon}{3} + \|\sum_{j=1}^J \bigl (c_j(y_l) - c_j(x) \bigl ) \varphi_j \|_\X \\
&\leq \frac{2 \epsilon}{3} + C\|y_l - x\|_\X \\
&\leq \epsilon
\end{align*}
as desired.
\end{proof}

\begin{lemma}
\label{lemma:schauder_schauder}
Let \(\X\) be a Banach space with a basis and \(\Y\) be any Banach space. Suppose there exists a continuous linear bijection \(T : \X \to \Y\).
Then \(\Y\) has a basis. 
\end{lemma}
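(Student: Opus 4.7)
The plan is to transport the basis of $\X$ to $\Y$ through $T$. The natural candidates are $\psi_j := T\varphi_j \in \Y$ with associated coefficient functionals $d_j := c_j \circ T^{-1} \in \Y^*$, where $\{\varphi_j\}_{j=1}^\infty \subset \X$ and $\{c_j\}_{j=1}^\infty \subset \X^*$ form the given basis of $\X$ in the sense of Definition~\ref{def:schauder_bases}.

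First I would invoke the bounded inverse theorem (a consequence of the open mapping theorem): since $T : \X \to \Y$ is a continuous linear bijection between Banach spaces, $T^{-1} : \Y \to \X$ is also continuous, hence bounded. Consequently $d_j = c_j \circ T^{-1}$ is the composition of a continuous linear functional with a continuous linear map and therefore lies in $\Y^*$. This is the only nontrivial tool needed; the rest of the argument is a direct verification.

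Next I would verify the two defining properties of a Schauder basis. For biorthogonality, compute
\[
d_j(\psi_k) = c_j\bigl(T^{-1}(T\varphi_k)\bigr) = c_j(\varphi_k) = \delta_{jk}, \qquad \forall j,k \in \N,
\]
using property (1) of the basis $\{\varphi_j\}$ of $\X$. For the approximation property, fix any $y \in \Y$ and set $x := T^{-1}y \in \X$. By linearity of $T$ and the identity $d_j(y) = c_j(x)$,
\[
y - \sum_{j=1}^n d_j(y) \psi_j = T x - \sum_{j=1}^n c_j(x) T\varphi_j = T\Bigl( x - \sum_{j=1}^n c_j(x) \varphi_j \Bigr).
\]
Taking norms and using boundedness of $T$,
\[
\Bigl\| y - \sum_{j=1}^n d_j(y)\psi_j \Bigr\|_{\Y} \leq \|T\| \, \Bigl\| x - \sum_{j=1}^n c_j(x) \varphi_j \Bigr\|_{\X} \xrightarrow{n \to \infty} 0
\]
by property (2) of the basis of $\X$. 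This verifies that $\{\psi_j\}_{j=1}^\infty$ together with $\{d_j\}_{j=1}^\infty$ satisfy Definition~\ref{def:schauder_bases}, so $\Y$ has a basis.

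The only potential obstacle is the application of the bounded inverse theorem, which requires both $\X$ and $\Y$ to be Banach spaces — this is given in the hypotheses. Everything else is a straightforward algebraic and norm-estimate calculation, so no further obstruction is anticipated.
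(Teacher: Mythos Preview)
Your proof is correct and follows essentially the same approach as the paper: transport the basis via $T$, setting $\psi_j = T\varphi_j$ and $d_j = c_j \circ T^{-1}$, then use boundedness of $T$ to push the approximation error from $\X$ to $\Y$. You are slightly more explicit than the paper in invoking the bounded inverse theorem to justify continuity of $T^{-1}$ and in checking biorthogonality, but the argument is the same.
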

\begin{proof}
Let \(y \in \Y\) and \(\epsilon > 0\). Since \(T\) is a bijection, there exists an element \(x \in \X\) so that \(Tx = y\) and \(T^{-1}y = x\).
Since \(\X\) has a basis, we can find \(\{\varphi_j\}_{j=1}^\infty \subset \X\) and \(\{c_j\}_{j=1}^\infty \subset \X^*\) 
and a number \(n = n(\epsilon, \|T\|) \in \N\) such that
\[\|x - \sum_{j=1}^n c_j(x) \varphi_j \|_\X \leq \frac{\epsilon}{\|T\|}.\]
Note that
\[
\|y - \sum_{j=1}^n c_j(T^{-1}y) T\varphi_j \|_\Y = \|Tx - T \sum_{j=1}^n c_j(x)\varphi_j\| \leq \|T\| \|x - \sum_{j=1}^n c_j(x) \varphi_j \|_\X \leq \epsilon
\]
hence \(\{T \varphi_j\}_{j=1}^\infty \subset \Y\) and \(\{c_j(T^{-1} \cdot)\}_{j=1}^\infty \subset \Y^*\) form a basis for \(\Y\)
by linearity and continuity of \(T\) and \(T^{-1}\).
\end{proof}

\begin{lemma}
\label{lemma:ap_ap}
Let \(\X\) be a Banach space with the \emph{AP} and \(\Y\) be any Banach space. Suppose there exists a continuous linear bijection \(T : \X \to \Y\).
Then \(\Y\) has the \emph{AP}. 
\end{lemma}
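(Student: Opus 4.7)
The plan is a direct transfer of the approximation property from $\X$ to $\Y$ through the bijection $T$. First I would note that since $T : \X \to \Y$ is a continuous linear bijection between Banach spaces, the Open Mapping Theorem guarantees that $T^{-1} : \Y \to \X$ is also continuous and hence belongs to $\mathcal{L}(\Y;\X)$. This is the only non-algebraic ingredient needed, and it is standard.

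Next, given any compact set $K \subset \Y$ and any $\epsilon > 0$, I would push $K$ back to $\X$ by setting $\tilde{K} \coloneqq T^{-1}(K) \subset \X$; continuity of $T^{-1}$ ensures $\tilde{K}$ is compact. Since $\X$ has the approximation property, applied to $\tilde{K}$ with tolerance $\epsilon / \|T\|$ (we may assume $T \neq 0$, else $\Y = \{0\}$ and the claim is trivial), there exists a finite rank operator $U \in \mathcal{L}(\X;\X)$ such that
\[\|x - Ux\|_\X \leq \frac{\epsilon}{\|T\|}, \qquad \forall x \in \tilde{K}.\]

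Now I would transfer $U$ back through $T$ by defining $V \coloneqq T \circ U \circ T^{-1} \in \mathcal{L}(\Y;\Y)$. Since $U(\X) \subseteq \X$ is finite dimensional and $T$ is linear, $V(\Y) = T(U(\X))$ is also finite dimensional, so $V$ is a finite rank operator on $\Y$. For any $y \in K$, writing $x = T^{-1}y \in \tilde{K}$,
\[\|y - Vy\|_\Y = \|T x - T U x\|_\Y \leq \|T\| \, \|x - Ux\|_\X \leq \|T\| \cdot \frac{\epsilon}{\|T\|} = \epsilon,\]
which shows that $\Y$ has the approximation property.

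I do not expect any substantive obstacle here; the proof is essentially a conjugation argument. The only subtlety worth flagging is the appeal to the Open Mapping Theorem to guarantee boundedness of $T^{-1}$, without which the compactness of $\tilde{K}$ and the norm estimate at the end would both fail. This result in fact parallels Lemma~\ref{lemma:schauder_schauder} but at the level of the AP rather than of Schauder bases, and together with Lemma~\ref{lemma:schauder_ap} it gives a convenient toolkit for verifying the AP for the function spaces in Assumptions~\ref{assump:input} and \ref{assump:output}.
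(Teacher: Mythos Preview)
Your proof is correct and essentially identical to the paper's: both define the pre-image compact set $T^{-1}(K)$, invoke the AP of $\X$ with tolerance $\epsilon/\|T\|$, and conjugate the resulting finite rank operator to $W = T U T^{-1}$ on $\Y$. Your explicit mention of the Open Mapping Theorem to justify continuity of $T^{-1}$ is a nice touch that the paper leaves implicit.
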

\begin{proof}
Let \(K \subset \Y\) be a compact set and \(\epsilon > 0\). The set \(R = T^{-1}(K) \subset \X\) is compact since \(T^{-1}\)
is continuous. Since \(\X\) has the AP, there exists a finite rank operator \(U : \X \to \X\) such that
\[\|x - Ux\|_\X \leq \frac{\epsilon}{\|T\|}, \qquad \forall x \in R.\]
Define the operator \(W : \Y \to \Y\) by \(W = T U T^{-1}\). Clearly \(W\) is a finite rank operator since \(U\) is a finite rank operator. Let \(y \in K\) then,
since \(K = T(R)\), there exists \(x \in R\) such that \(Tx = y\) and \(x = T^{-1}y\).
Then
\[
\|y - Wy\|_\Y = \|Tx - TUx\|_\Y \leq \|T\| \|x - Ux\|_\X \leq \epsilon.
\]
hence \(\Y\) has the AP.
\end{proof}

The following lemma shows than the infinite union of compact sets is compact if each set is the image of a fixed compact set under a convergent sequence of continuous maps. The result is instrumental in proving Lemma~\ref{lemma:finitedim_approx}.

\begin{lemma}
\label{lemma:compact_union}
Let \(\X,\Y\) be Banach spaces and \(F : \X \to \Y\) be a continuous map. Let \(K \subset \X\)
be a compact set in $\X$ and \(\{F_n : \X \to \Y\}_{n=1}^\infty\) be a sequence of continuous maps such that
\[\lim_{n \to \infty} \sup_{x \in K} \|F(x) - F_n(x)\|_\Y = 0.\]
Then the set \[W \coloneqq \bigcup_{n=1}^\infty F_n (K) \cup F(K)\] 
is compact in $\Y$.
\end{lemma}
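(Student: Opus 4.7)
\textbf{Proof proposal for Lemma~\ref{lemma:compact_union}.}

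Since $\Y$ is a Banach space, hence a metric space, compactness is equivalent to sequential compactness. The plan is therefore to take an arbitrary sequence $\{y_k\}_{k=1}^\infty \subset W$ and extract a subsequence converging to an element of $W$. For each $k$, either $y_k \in F(K)$ or $y_k \in F_{n_k}(K)$ for some index $n_k \in \N$; in either case we may write $y_k = G_k(x_k)$ for some $x_k \in K$, where $G_k$ is either $F$ or some $F_{n_k}$. The argument then splits on whether the indices used are essentially bounded or unbounded.

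In the bounded case, there exists some fixed map $G \in \{F, F_1, F_2, \dots\}$ such that $y_k \in G(K)$ for infinitely many $k$. Because $G$ is continuous and $K$ is compact, the image $G(K)$ is compact in $\Y$, and so this infinite subsequence admits a further convergent subsequence with limit in $G(K) \subset W$. This is the easy case.

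The substantive case is when the indices $n_k$ are unbounded, so we may pass to a subsequence (not relabeled) with $n_k \to \infty$ and $y_k = F_{n_k}(x_k)$ for some $x_k \in K$. By compactness of $K$ we extract a further subsequence (again not relabeled) with $x_k \to x^\star \in K$. I then claim $y_k \to F(x^\star)$, which lies in $F(K) \subset W$. Indeed, by the triangle inequality,
\[
\|y_k - F(x^\star)\|_\Y \;\leq\; \|F_{n_k}(x_k) - F(x_k)\|_\Y + \|F(x_k) - F(x^\star)\|_\Y,
\]
where the first term is bounded by $\sup_{x \in K}\|F_{n_k}(x) - F(x)\|_\Y$, which tends to zero by the hypothesized uniform convergence on $K$, and the second term tends to zero by continuity of $F$ at $x^\star$.

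The only mild subtlety, and what I would consider the main obstacle, is the case split: one must correctly handle the possibility that infinitely many $y_k$ come from a single map $G_k$ (so the uniform convergence hypothesis is not needed) versus the case where the indices actually escape to infinity (where both uniform convergence and compactness of $K$ are used). Alternatively, one can bypass this case analysis by proving $W$ is totally bounded and closed in the Banach space $\Y$: given $\epsilon > 0$, pick $N$ with $\sup_{x \in K}\|F_n(x) - F(x)\|_\Y < \epsilon/2$ for all $n \geq N$; then $\bigcup_{n \geq N} F_n(K)$ lies in the $\epsilon/2$-neighborhood of $F(K)$, while $F(K) \cup F_1(K) \cup \cdots \cup F_{N-1}(K)$ is a finite union of compacta, and the combined set is totally bounded, with closedness following from the convergence argument above applied to Cauchy sequences. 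Either route yields the conclusion, but the sequential compactness route seems cleaner and I would present it as the proof.
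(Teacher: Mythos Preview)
Your proof is correct. Your primary route via sequential compactness differs from the paper's argument, which proceeds exactly along the lines of your ``alternative'': the paper shows $W$ is totally bounded by fixing $N$ so that $\sup_{x\in K}\|F(x)-F_n(x)\|_\Y \le \epsilon/2$ for $n\ge N$, taking a finite $\epsilon/2$-net for the compact set $W_N = F(K)\cup F_1(K)\cup\cdots\cup F_N(K)$, and observing that this net is an $\epsilon$-net for all of $W$; it then separately argues that $W$ is closed. Your sequential-compactness argument is more streamlined: the case split (some map repeated infinitely often versus indices escaping to infinity) together with the single estimate $\|F_{n_k}(x_k)-F(x^\star)\|_\Y \le \sup_{x\in K}\|F_{n_k}(x)-F(x)\|_\Y + \|F(x_k)-F(x^\star)\|_\Y$ handles existence of a limit and membership in $W$ simultaneously, whereas the paper's route requires two passes and its closedness step is somewhat terse. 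Either approach is fine here; yours is the cleaner presentation.
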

\begin{proof}
Let \(\epsilon > 0\) then there exists a number \(N = N(\epsilon) \in \N\) such that
\[\sup_{x \in K} \|F(x) - F_n(x)\|_\Y \leq \frac{\epsilon}{2}, \qquad \qquad \forall n \geq N.\]
Define the set
\[W_N = \bigcup_{n=1}^N F_n(K) \cup F(K)\]
which is compact since \(F\) and each \(F_n\) are continuous. We can therefore find a number \(J = J(\epsilon, N) \in \N\) 
and elements \(y_1,\dots,y_J \in W_N\) such that, for any \(z \in W_N\), there exists a number \(l = l(z) \in \{1,\dots,J\}\)
such that 
\[\|z - y_{l}\|_\Y \leq \frac{\epsilon}{2}.\]
Let \(y \in W \setminus W_N\) then there exists a number \(m > N\) and an element \(x \in K\) such that \(y = F_m (x)\).
Since \(F (x) \in W_N\), we can find a number \(l \in \{1,\dots,J\}\) such that
\[\|F (x) - y_l\|_\Y \leq \frac{\epsilon}{2}.\]
Therefore,
\[\|y - y_l\|_\Y \leq \|F_m (x) - F (x) \|_\Y + \|F (x) - y_l\|_\Y \leq \epsilon\]
hence \(\{y_j\}_{j=1}^J\) forms a finite \(\epsilon\)-net for \(W\), showing that \(W\) is totally bounded.

We will now show that \(W\) is closed. To that end, let \(\{p_n\}_{n=1}^\infty\) be a convergent sequence in \(W\), in particular,
\(p_n \in W\) for every \(n \in \N\) and \(p_n \to p \in \Y\) as \(n \to \infty\). We can thus find convergent sequences \(\{x_n\}_{n=1}^\infty\)
and \(\{\alpha_n\}_{n=1}^\infty\) such that \(x_n \in K\), \(\alpha_n \in \N_0\), and 
\(p_n = F_{\alpha_n}(x_n)\) where we define \(F_0 \coloneqq F\). Since \(K\) is closed, \(\lim\limits_{n \to \infty} x_n = x \in K\) thus, for each fixed \(n \in \N\),
\[\lim_{j \to \infty} F_{\alpha_n}(x_j) = F_{\alpha_n}(x) \in W\]
by continuity of \(F_{\alpha_n}\). Since uniform convergence implies point-wise convergence 
\[p = \lim\limits_{n \to \infty} F_{\alpha_n}(x) = F_{\alpha}(x) \in W\]
for some \(\alpha \in \N_0\) thus \(p \in W\), showing that \(W\) is closed.
\end{proof}

The following lemma shows that any continuous operator acting between two Banach spaces with the AP can be approximated in a finite-dimensional manner. The approximation proceeds in three steps which are shown schematically in Figure~\ref{fig:approach}. First an input is mapped to a finite-dimensional representation via the action of a set of functionals on \(\X\). This representation is then mapped by a continuous function to a new finite-dimensional representation which serves as the set of coefficients onto representers of \(\Y\). The resulting expansion is an element of \(\Y\) that is \(\epsilon\)-close to the action of \(\G\) on the input element. A similar finite-dimensionalization was used in \citep{Kovachki} by using PCA on \(\X\) to define the functionals acting on the input and PCA on \(\Y\) to define the output representers. However the result in that work is restricted to separable Hilbert spaces; here, we generalize it to Banach spaces with the AP.

\begin{lemma}
\label{lemma:finitedim_approx}
Let \(\X, \Y\) be two Banach spaces with the AP and let \(\G : \mathcal{X} \to \mathcal{Y}\) be a
continuous map. For every compact set \(K \subset \mathcal{X}\) and \(\epsilon > 0\), there exist numbers \(J,J' \in \N\)
and continuous linear maps \(F_J : \X \to \R^{J}\), \(G_{J'} : \R^{J'} \to \Y\) as well as  \(\varphi \in C(\R^{J};\R^{J'})\)
such that  
\[\sup_{x \in K} \|\G(x) - (G_{J'} \circ \varphi \circ F_J) (x) \|_{\Y} \leq \epsilon.\]
Furthermore there exist \(w_1,\dots,w_J \in \X^*\) such that \(F_J\) has the form
\[F_J(x) = \bigl ( w_1(x), \dots, w_{J}(x) \bigl ), \qquad \forall x \in \X\]
and there exist \(\beta_1,\dots,\beta_{J'} \in \Y\) such that \(G_{J'}\) has the form
\[G_{J'}(v) = \sum_{j=1}^{J'} v_j \beta_j, \qquad \forall v \in \R^{J'}.\]
If \(\Y\) admits a basis then \(\{\beta_j\}_{j=1}^{J'}\)
can be picked so that there is an extension \(\{\beta_j\}_{j=1}^\infty \subset \Y\) which is a basis for \(\Y\).
\end{lemma}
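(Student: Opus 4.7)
\medskip

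\noindent\textbf{Proof proposal.} The plan is to apply the approximation property twice (once on the input side and once on the output side) and use uniform continuity of $\G$ on an enlarged compact set to glue the two finite-dimensionalizations together through a continuous map $\varphi$ between Euclidean spaces. The schematic is exactly the one in Figure~\ref{fig:approach}: first reduce $\X$ to $\R^J$ via a finite rank operator, pass through a continuous nonlinear map on Euclidean space, then embed $\R^{J'}$ into $\Y$ via a finite rank operator.

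First, since $\X$ has the AP and $K \subset \X$ is compact, I would pick a finite rank operator $U : \X \to \X$ of the form $Ux = \sum_{j=1}^{J} w_j(x) \varphi_j$ with $w_1,\dots,w_J \in \X^*$ and $\varphi_1,\dots,\varphi_J \in \X$, satisfying $\|x - Ux\|_\X \leq \delta$ for all $x \in K$; the value of $\delta>0$ will be fixed at the end. Define the continuous linear map
\[
F_J : \X \to \R^J, \qquad F_J(x) = \bigl(w_1(x),\dots,w_J(x)\bigr),
\]
and the continuous linear decoder $\psi : \R^J \to \X$ by $\psi(v) = \sum_{j=1}^J v_j \varphi_j$, so that $U = \psi \circ F_J$. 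The set $K \cup U(K) \subset \X$ is compact (as the union of two compacts), hence $\G$ is uniformly continuous on it. Choosing $\delta$ sufficiently small we obtain
\[
\|\G(x) - \G(Ux)\|_\Y \leq \tfrac{\epsilon}{2}, \qquad \forall x \in K.
\]

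Next, the set $\G(U(K)) \subset \Y$ is compact because $\G \circ U$ is continuous and $K$ is compact. Applying the AP on $\Y$ to this compact set, I obtain a finite rank operator $V : \Y \to \Y$ of the form $V y = \sum_{j=1}^{J'} c_j(y) \beta_j$ with $c_j \in \Y^*$ and $\beta_j \in \Y$, satisfying $\|y - Vy\|_\Y \leq \tfrac{\epsilon}{2}$ for all $y \in \G(U(K))$. I then define
\[
G_{J'} : \R^{J'} \to \Y, \quad G_{J'}(v) = \sum_{j=1}^{J'} v_j \beta_j, \qquad \varphi : \R^J \to \R^{J'}, \quad \varphi(z) = \bigl(c_1(\G(\psi(z))),\dots,c_{J'}(\G(\psi(z)))\bigr).
\]
Both are continuous ($G_{J'}$ linear; $\varphi$ by composition of continuous maps), and a direct computation gives $G_{J'}(\varphi(F_J(x))) = V(\G(Ux))$. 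The triangle inequality then yields, for every $x \in K$,
\[
\|\G(x) - G_{J'}(\varphi(F_J(x)))\|_\Y \leq \|\G(x) - \G(Ux)\|_\Y + \|\G(Ux) - V(\G(Ux))\|_\Y \leq \tfrac{\epsilon}{2} + \tfrac{\epsilon}{2} = \epsilon,
\]
which is the required bound.

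The main conceptual obstacle is that $F_J$ need not be injective on $K$, so one cannot naively define $\varphi$ by pulling back via $F_J$; this is precisely why the construction above factors through the canonical decoder $\psi$ associated with $U$, using $\G \circ \psi$ to define $\varphi$ globally on $\R^J$ in a continuous manner. For the final assertion, when $\Y$ admits a basis $\{\beta_j\}_{j=1}^\infty$ with coordinate functionals $\{c_j\}_{j=1}^\infty$, Lemma~\ref{lemma:schauder_ap} shows that the partial sum operators $V_n y = \sum_{j=1}^n c_j(y) \beta_j$ already witness the AP: choosing $V = V_{J'}$ for $J'$ large enough so that $\|y - V_{J'}y\|_\Y \leq \epsilon/2$ uniformly on the compact set $\G(U(K))$ (using the uniform boundedness of the partial sum projections, cf.\ Lemma~\ref{lemma:schauder_ap}), we obtain the first $J'$ basis vectors as our chosen $\beta_1,\dots,\beta_{J'}$, which then extend to the full basis of $\Y$ as required.
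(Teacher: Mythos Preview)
Your overall strategy is the same as the paper's: use the AP on $\X$ to factor through $\R^J$, use the AP on $\Y$ to factor through $\R^{J'}$, and take $\varphi$ to be the composition $(c_1,\dots,c_{J'}) \circ \G \circ \psi$ (in the paper's notation, $F^{\Y}_{J'} \circ \G \circ G^{\X}_J$). The definitions of $F_J$, $G_{J'}$, $\varphi$, the identity $G_{J'} \circ \varphi \circ F_J = V \circ \G \circ U$, the final triangle inequality, and the basis remark are all correct and essentially identical to the paper's argument.

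There is, however, a circularity in your uniform-continuity step. You select $U$ depending on $\delta$, observe that $\G$ is uniformly continuous on the compact set $K \cup U(K)$, and then propose to ``choose $\delta$ sufficiently small.'' But shrinking $\delta$ changes $U$, hence changes the compact set $K \cup U(K)$ and its modulus of continuity; nothing you have written rules out these moduli deteriorating as $\delta \to 0$. The paper handles this by fixing an entire \emph{sequence} $\{U^{\X}_n\}$ of finite-rank operators with $\sup_{x\in K} \|x - U^{\X}_n x\|_\X \to 0$ and invoking Lemma~\ref{lemma:compact_union} to conclude that $Z = K \cup \bigcup_n U^{\X}_n(K)$ is compact; a single modulus $\omega$ on $Z$ then serves for every $n$, and one picks $N$ with $\omega(\sup_K\|x-U^{\X}_Nx\|)\le\epsilon/2$. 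A shorter fix your write-up could adopt instead: since $\G$ is continuous on all of $\X$ and $K$ is compact, a standard finite-subcover argument on $\{B(x,\eta_x/2)\}_{x\in K}$ yields $\eta>0$, depending only on $\G,K,\epsilon$, such that $\|\G(x)-\G(y)\|_\Y\le\epsilon/2$ whenever $x\in K$ and $\|x-y\|_\X\le\eta$; then take $\delta=\eta$ and pick $U$ accordingly.
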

\begin{proof}
Since \(\X\) has the AP, there exists a sequence of finite rank operators \(\{U^{\X}_n : \X \to \X\}_{n=1}^\infty\) such that
\[\lim_{n \to \infty} \sup_{x \in K} \|x - U^{\X}_n x\|_\X = 0.\]
Define the set 
\[Z = \bigcup_{n=1}^\infty U^{\X}_n(K) \cup K\]
which is compact by Lemma~\ref{lemma:compact_union}. Therefore, \(\mathcal{G}\) is uniformly continuous on \(Z\) hence 
there exists a modulus of continuity \(\omega : \R_{\geq 0} \to \R_{\geq 0}\) which is non-decreasing 
and satisfies \(\omega(t) \to \omega(0) = 0\) as \(t \to 0\) as well as 
\[\|\mathcal{G}(z_1) - \mathcal{G}(z_2)\|_{\mathcal{Y}} \leq \omega \bigl( \|z_1 - z_2\|_\mathcal{X} \bigl ) \qquad \forall z_1, z_2 \in Z.\]
We can thus find, a number \(N = N(\epsilon) \in \N\) such that
\[\sup_{x \in K} \omega \bigl ( \|x - U^{\X}_N x \|_{\X} \bigl ) \leq \frac{\epsilon}{2}. \]
Let \(J = \text{dim } U^{\X}_N (\X) < \infty\). There exist elements \(\{\alpha_j\}_{j=1}^J \subset \X\) and \(\{w_j\}_{j=1}^J \subset \X^*\)
such that
\[U^{\X}_N x = \sum_{j=1}^J w_j(x) \alpha_j, \qquad \forall x \in X.\]
Define the maps \(F^{\X}_J : \X \to \R^{J}\) and \(G^{\X}_J : \R^{J} \to \X\) by
\begin{align*}
F^{\X}_J(x) &= (w_1(x), \hdots, w_J(x)), \qquad \forall  x \in \X, \\
G^{\X}_J (v) &= \sum_{j=1}^J v_j \alpha_j, \qquad \qquad \qquad \:\:\:  \forall v \in \R^J,
\end{align*}
noting that \(U^{\X}_N = G^{\X}_J \circ F^{\X}_J\). Define the set \(W = (\G \circ U^{\X}_N)(K) \subseteq \Y\) which is clearly compact.
Since \(\Y\) has the AP, we can similarly find a finite rank operator \(U^{\Y}_{J'} : \Y \to \Y\) with \(J' = \text{dim } U^{\Y}_{J'} (\Y) < \infty\)
such that
\[\sup_{y \in W} \|y - U^{\Y}_{J'} y \|_{\Y} \leq \frac{\epsilon}{2}.\]
Analogously, define the maps \(F^{\Y}_{J'} : \Y \to \R^{J'}\) and \(G^{\Y}_{J'} : \R^{J'} \to \Y\) by 
\begin{align*}
F^{\Y}_{J'}(y) &= (q_1(y), \hdots, q_{J'}(y)), \qquad \forall y \in \Y, \\
G^{\Y}_{J'} (v) &= \sum_{j=1}^{J'} v_j \beta_j, \qquad \qquad \qquad \:\:  \forall v \in \R^{J'}
\end{align*}
for some \(\{\beta_j\}_{j=1}^{J'} \subset \Y\) and \(\{q_j\}_{j=1}^{J'} \subset \Y^*\) such that \(U^{\Y}_{J'} = G^{\Y}_{J'} \circ F^{\Y}_{J'}\).
Clearly if \(\Y\) admits a basis then we could have defined \(F^{\Y}_{J'}\) and \(G^{\Y}_{J'}\) through it instead 
of through \(U^{\Y}_{J'}\).
Define \(\varphi : \R^{J} \to \R^{J'}\) by
\[\varphi(v) = (F^{\Y}_{J'} \circ \G \circ G^{\X}_{J} )(v), \qquad \forall v \in \R^{J}\]
which is clearly continuous and note that \(G^{\Y}_{J'} \circ \varphi \circ F^{\X}_J = U^{\Y}_{J'} \circ \G \circ U^{\X}_N\).
Set \(F_J = F^{\X}_J\) and \(G_{J'} = G^{\Y}_{J'}\) then, for any \(x \in K\),
\begin{align*}
\|\G(x) - (G_{J'} \circ \varphi \circ F_J) (x) \|_{\Y} &\leq \|\G(x) - \G(U^{\X}_N x)\|_\Y + \|\G(U^{\X}_N x) - (U^{\Y}_{J'} \circ \G \circ U^{\X}_N) (x) \|_\Y \\
&\leq \omega \bigl (\|x - U^{\X}_N x\|_\X \bigl ) + \sup_{y \in W} \|y - U^{\Y}_{J'} y \|_{\Y} \\
&\leq \epsilon
\end{align*}
as desired.
\end{proof}

We now state and prove some results about isomorphisms of function spaces defined on different domains. These results are instrumental in proving Lemma~\ref{lemma:ap}.

\begin{lemma}
\label{lemma:cm_isomorphism}
Let \(D, D' \subset \R^d\) be domains. Suppose that, for 
some \(m \in \N_0\), there exists a \(C^m\)-diffeomorphism \(\tau : \bar{D}' \to \bar{D}\).
Then the mapping \(T: C^m (\bar{D}) \to C^m (\bar{D}')\) defined as
\[T(f)(x) = f(\tau(x)), \qquad \forall f \in C^m (\bar{D}), \:\: x \in \bar{D}'\]
is a continuous linear bijection.
\end{lemma}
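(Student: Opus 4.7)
The plan is to verify in turn that $T$ is (i) well-defined as a map $C^m(\bar{D}) \to C^m(\bar{D}')$, (ii) linear, (iii) bounded (hence continuous), and (iv) bijective with continuous inverse given by pullback along $\tau^{-1}$. Linearity is immediate from the definition of pointwise composition, so the real content lies in the other three items, all of which reduce to a careful application of the chain rule.

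For well-definedness and boundedness, the plan is to use Faà di Bruno's formula (or, equivalently, a straightforward induction on $|\alpha|_1$). For any multi-index $\alpha \in \N_0^d$ with $|\alpha|_1 \leq m$ and any $f \in C^m(\bar{D})$, the partial derivative $\partial^\alpha (f \circ \tau)$ is a finite sum of terms of the form $(\partial^\beta f)(\tau(x)) \cdot P_{\alpha,\beta}(x)$, where $|\beta|_1 \leq |\alpha|_1 \leq m$ and $P_{\alpha,\beta}$ is a polynomial in the partial derivatives of the components of $\tau$ of order at most $|\alpha|_1$. Since $\tau \in C^m(\bar{D}';\bar{D})$ and $\bar{D}'$ is compact, each such $P_{\alpha,\beta}$ is bounded and continuous on $\bar{D}'$. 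Continuity of each composed term $(\partial^\beta f) \circ \tau$ follows since $\partial^\beta f \in C^0(\bar{D})$ and $\tau \in C^0(\bar{D}';\bar{D})$. Hence $T(f) \in C^m(\bar{D}')$, and collecting the suprema yields a bound
\[
\|T(f)\|_{C^m} \leq C(\tau,m,d)\, \|f\|_{C^m},
\]
with $C(\tau,m,d)$ depending only on the uniform bounds of the derivatives of $\tau$ up to order $m$. This establishes continuity of $T$.

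For bijectivity, the plan is to exhibit the inverse explicitly. Since $\tau$ is a $C^m$-diffeomorphism, $\tau^{-1} : \bar{D} \to \bar{D}'$ is also a $C^m$-diffeomorphism, and so the same argument applied to $\tau^{-1}$ yields a continuous linear map $S : C^m(\bar{D}') \to C^m(\bar{D})$ defined by $S(g)(y) = g(\tau^{-1}(y))$. Direct computation gives $(T \circ S)(g)(y) = g(\tau^{-1}(\tau(y))) \cdot$, wait let me redo this: for $g \in C^m(\bar{D}')$ and $y \in \bar{D}$, $(T \circ S)(g)(y)$ is not the right composition since $S(g)$ is defined on $\bar{D}$. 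The correct statement: $S : C^m(\bar{D}') \to C^m(\bar{D})$ with $(S \circ T)(f)(x) = T(f)(\tau^{-1}(x)) = f(\tau(\tau^{-1}(x))) = f(x)$ for $f \in C^m(\bar{D})$, $x \in \bar{D}$, and $(T \circ S)(g)(y) = S(g)(\tau(y)) = g(\tau^{-1}(\tau(y))) = g(y)$ for $g \in C^m(\bar{D}')$, $y \in \bar{D}'$. Thus $S = T^{-1}$, proving $T$ is a bijection whose inverse is also continuous.

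The only subtlety I expect is making sure the Faà di Bruno bookkeeping is handled cleanly; an equivalent route is a short induction on $m$ which avoids the multi-index combinatorics. Either way, the core observation is that compactness of $\bar{D}'$ together with $\tau \in C^m(\bar{D}';\bar{D})$ furnishes uniform bounds on all partial derivatives of $\tau$ up to order $m$, which is what drives the continuity estimate.
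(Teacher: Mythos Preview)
Your proposal is correct and follows essentially the same approach as the paper: linearity is immediate, continuity comes from the chain rule applied to $f\circ\tau$, and bijectivity uses the pullback along $\tau^{-1}$. The paper is more terse---it states the continuity bound without invoking Fa\`a di Bruno explicitly and argues injectivity and surjectivity separately rather than exhibiting a two-sided inverse---but the underlying ideas are identical. (As an aside, your constant $C(\tau,m,d)$ depending polynomially on the derivatives of $\tau$ is actually more accurate than the paper's linear bound $Q\|\tau\|_{C^m}\|f\|_{C^m}$, which is only correct for $m\le 1$.)
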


\begin{proof}
Clearly \(T\) is linear since the evaluation functional is linear. To see that it is continuous, 
note that by the chain rule we can find a constant \(Q = Q(m) > 0\) such that
\[\|T(f)\|_{C^m} \leq Q \|\tau\|_{C^m} \|f\|_{C^m}, \qquad \forall f \in C^m(\bar{D}).\]
We will now show that it is bijective. Let \(f,g \in C^m (\bar{D})\) so that \(f \neq g\).
Then there exists a point \(x \in \bar{D}\) such that \(f(x) \neq g(x)\). Then 
\(T(f)(\tau^{-1}(x)) = f(x)\) and \(T(g)(\tau^{-1}(x)) = g(x)\) hence \(T(f) \neq T(g)\) thus
\(T\) is injective.
Now let \(g \in C^m (\bar{D}')\) and define \(f : \bar{D} \to \R\) by \(f = g \circ \tau^{-1}\). 
Since \(\tau^{-1} \in C^m (\bar{D};\bar{D}')\), we have that \(f \in C^m (\bar{D})\). Clearly, 
\(T(f) = g\) hence \(T\) is surjective.
\end{proof}

\begin{corollary}
\label{corr:cm_isomorphism}
Let \(M > 0\) and \(m \in \N_0\). There exists a continuous linear bijection \(T : C^m([0,1]^d) \to C^m([-M,M]^d)\).
\end{corollary}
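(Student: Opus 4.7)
The plan is to reduce the corollary directly to Lemma~\ref{lemma:cm_isomorphism} by exhibiting an explicit $C^m$-diffeomorphism between the two closed cubes. Set $D = (0,1)^d$ and $D' = (-M,M)^d$, both of which are Lipschitz (in fact $C^\infty$) domains with $\bar D = [0,1]^d$ and $\bar D' = [-M,M]^d$.

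First I would define the affine map $\tau: \bar D' \to \bar D$ by
\[
\tau(x) = \frac{x + M \mathbf{1}}{2M}, \qquad x \in [-M,M]^d,
\]
where $\mathbf{1} = (1,\dots,1) \in \R^d$. This is a bijection from $[-M,M]^d$ onto $[0,1]^d$ with inverse $\tau^{-1}(y) = 2My - M \mathbf{1}$. Since both $\tau$ and $\tau^{-1}$ are affine, they belong to $C^\infty$ and hence to $C^m$ in the sense of the function spaces defined in the paper. In particular $\tau$ is a $C^m$-diffeomorphism between $\bar D'$ and $\bar D$.

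Having established the diffeomorphism, I would then directly invoke Lemma~\ref{lemma:cm_isomorphism} with this choice of $\tau$. The lemma yields a continuous linear bijection
\[
T : C^m(\bar D) \to C^m(\bar D'), \qquad T(f)(x) = f(\tau(x)),
\]
which is precisely the statement of the corollary after reading off $\bar D = [0,1]^d$ and $\bar D' = [-M,M]^d$.

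There is no real obstacle here since the argument is essentially bookkeeping: the only thing to check is that the natural affine rescaling between the two cubes has the smoothness regularity required by Lemma~\ref{lemma:cm_isomorphism}, and this is immediate from its linearity. The proof is therefore essentially a one-line reduction once the diffeomorphism has been written down.
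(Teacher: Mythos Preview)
Your proposal is correct and matches the paper's proof essentially line for line: the paper defines the same affine map $\tau(x) = \frac{1}{2M}x + \frac{1}{2}\one$, notes it is a $C^\infty$-diffeomorphism between $[-M,M]^d$ and $[0,1]^d$, and invokes Lemma~\ref{lemma:cm_isomorphism}.
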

\begin{proof}
Let $\one \in \R^d$ denote the vector in which all entries are $1$. 
Define the map \(\tau : \R^d \to \R^d\) by
\begin{equation}
\label{eq:tau}
\tau(x) = \frac{1}{2M} x + \frac{1}{2}\one, \qquad \forall x \in \R^d.
\end{equation}
Clearly \(\tau\) is a \(C^\infty\)-diffeomorphism between \([-M,M]^d\) and \([0,1]^d\)
hence Lemma~\ref{lemma:cm_isomorphism} implies the result.
\end{proof}

\begin{lemma}
\label{lemma:w1_isomorphism}
Let \(M > 0\) and \(m \in \N\). There exists a continuous linear bijection \(T : W^{m,1}((0,1)^d) \to W^{m,1}((-M,M)^d)\).
\end{lemma}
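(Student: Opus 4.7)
The plan is to mimic the strategy of Corollary~\ref{corr:cm_isomorphism}: exhibit a smooth affine change of variables between the two cubes and show that pullback along it defines a bounded linear isomorphism on the relevant Sobolev spaces. Concretely, let $\tau : \R^d \to \R^d$ be the affine map defined in \eqref{eq:tau}, namely $\tau(x) = \tfrac{1}{2M} x + \tfrac{1}{2} \one$, which restricts to a $C^\infty$-diffeomorphism $\tau : \overline{(-M,M)^d} \to [0,1]^d$ with inverse $\tau^{-1}(y) = 2M y - M\one$. For $f \in W^{m,1}((0,1)^d)$ I would define $T(f) \coloneqq f \circ \tau$, viewed as an equivalence class of functions on $(-M,M)^d$, and let $S(g) \coloneqq g \circ \tau^{-1}$ as the candidate two-sided inverse.

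The first step is to check that $T$ genuinely maps $W^{m,1}((0,1)^d)$ into $W^{m,1}((-M,M)^d)$ and that the classical chain rule
\[
\partial^\alpha (f\circ\tau)(x) = (2M)^{-|\alpha|_1}\,(\partial^\alpha f)\circ\tau(x), \qquad 0 \le |\alpha|_1 \le m,
\]
holds in the weak sense. For this I would invoke the Meyers--Serrin density of $C^\infty((0,1)^d)\cap W^{m,1}((0,1)^d)$ in $W^{m,1}((0,1)^d)$: for smooth $f$ the chain rule is classical (since $\tau$ is affine with constant Jacobian $(2M)^{-1}I$), and both sides of the identity depend continuously on $f$ with respect to the $W^{m,1}$ topology after the change of variables $y=\tau(x)$, so the identity extends to all of $W^{m,1}((0,1)^d)$.

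The second step is the norm bound. By the chain rule above and the change of variables formula with constant Jacobian determinant $(2M)^{-d}$ (for $\tau$), one obtains
\[
\|T(f)\|_{W^{m,1}((-M,M)^d)} \;=\; \sum_{|\alpha|_1 \le m} (2M)^{\,d - |\alpha|_1} \,\|\partial^\alpha f\|_{L^1((0,1)^d)} \;\le\; C(M,d,m)\,\|f\|_{W^{m,1}((0,1)^d)},
\]
so $T$ is linear and continuous. A completely symmetric calculation, using the affine diffeomorphism $\tau^{-1}$ and its Jacobian determinant $(2M)^{d}$, shows that $S : W^{m,1}((-M,M)^d) \to W^{m,1}((0,1)^d)$ is also linear and continuous.

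Finally, $T\circ S$ and $S\circ T$ reduce to composition with $\tau\circ\tau^{-1} = \mathrm{Id}$ and $\tau^{-1}\circ\tau = \mathrm{Id}$ respectively, hence equal the identity on their domains as equivalence classes of functions. Therefore $T$ is a continuous linear bijection, as claimed. The only nontrivial point in this argument is the justification of the chain rule for $W^{m,1}$ functions under a smooth change of variables; the affine nature of $\tau$ makes this essentially automatic via Meyers--Serrin, but on a non-Lipschitz domain or with a merely Lipschitz $\tau$ more care would be needed.
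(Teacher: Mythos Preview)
Your proposal is correct and matches the paper's proof essentially step for step: both use the affine map $\tau$ from \eqref{eq:tau}, define $T$ as pullback by $\tau$, compute the same weak chain rule and change-of-variables identity $\|Tf\|_{W^{m,1}((-M,M)^d)} = \sum_{|\alpha|_1\le m} (2M)^{d-|\alpha|_1}\|\partial^\alpha f\|_{L^1((0,1)^d)}$, and exhibit the inverse via $\tau^{-1}$. The only difference is stylistic---you justify the chain rule via Meyers--Serrin density and show bijectivity by constructing both $S\circ T$ and $T\circ S$, whereas the paper simply states the chain rule and reads off injectivity from the lower norm bound before checking surjectivity; your extra care does no harm and arguably makes the argument slightly more complete.
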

\begin{proof}
Define the map \(\tau : \R^d \to \R^d\) by \eqref{eq:tau}.
We have that \(\tau((-M,M)^d) = (0,1)^d\). Define the operator \(T\) by
\[Tf = f \circ \tau, \qquad \forall f \in W^{m,1}((0,1)^d).\]
which is clearly linear since composition is linear. We compute that, for any \(0 \leq |\alpha|_1 \leq m\),
\[\partial^\alpha (f \circ \tau) = (2M)^{-|\alpha|_1} (\partial^\alpha f) \circ \tau\]
hence, by the change of variables formula, 
\[\|Tf\|_{W^{m,1}((-M,M)^d)} = \sum_{0 \leq |\alpha|_1 \leq m} (2M)^{d - |\alpha|_1} \|\partial^\alpha f\|_{L^1((0,1)^d)}. \]
We can therefore find numbers \(C_1,C_2 > 0\), depending on \(M\) and \(m\), such that
\[C_1 \|f\|_{W^{m,1}((0,1)^d)} \leq \|Tf\|_{W^{m,1}((-M,M)^d)} \leq C_2 \|f\|_{W^{m,1}((0,1)^d)}.\]
This shows that \(T : W^{m,1}((0,1)^d) \to W^{m,1}((-M,M)^d)\) is continuous and injective.
Now let \(g \in W^{m,1}((-M,M)^d)\) and define \(f = g \circ \tau^{-1}\). A similar argument shows that
\(f \in W^{m,1}((0,1)^d)\) and, clearly, \(Tf = g\) hence \(T\) is surjective.
\end{proof}

We now show that the spaces in Assumptions \ref{assump:input} and \ref{assump:output} have the AP. While the result is well-known when the domain is \((0,1)^d\) or \(\R^d\), we were unable to find any results in the literature for Lipschitz domains and we therefore give a full proof here. The essence of the proof is to either exhibit an isomorphism to a space that is already known to have AP or to directly show AP by embedding the Lipschitz domain into an hypercube for which there are known basis constructions. Our proof shows the stronger result that \(W^{m,p}(D)\) for \(m \in \N_0\) and \(1 \leq p < \infty\) has a basis, but, for \(C^m (\bar{D})\), we only establish the AP and not necessarily a basis. The discrepancy comes from the fact that there is an isomorphism between \(W^{m,p}(D)\) and \(W^{m,p}(\R^d)\)\ while there is not one between \(C^m(\bar{D})\) and \(C^m(\R^d)\).

\begin{lemma}
\label{lemma:ap}
Let Assumptions \ref{assump:input} and \ref{assump:output} hold. Then \(\A\) and \(\U\) have the \emph{AP}.
\end{lemma}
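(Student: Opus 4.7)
The plan is to treat each possibility in Assumptions \ref{assump:input} and \ref{assump:output} case by case, in each case reducing to AP on a simpler reference space on $\R^d$ or the unit cube, for which a Schauder basis is available. The common reduction is a \emph{retraction argument}: if $\X$ is the target space on the Lipschitz domain and one can produce continuous linear $E:\X\to\X_{\mathrm{ref}}$ (extension) and $R:\X_{\mathrm{ref}}\to\X$ (restriction) with $R\circ E=\mathrm{Id}_\X$, then AP transfers from $\X_{\mathrm{ref}}$ to $\X$. Indeed, for compact $K\subset\X$ and $\epsilon>0$, the image $E(K)$ is compact in $\X_{\mathrm{ref}}$; AP of $\X_{\mathrm{ref}}$ produces a finite-rank $V:\X_{\mathrm{ref}}\to\X_{\mathrm{ref}}$ with $\sup_{g\in E(K)}\|g-Vg\|_{\X_{\mathrm{ref}}}\leq\epsilon/\|R\|$, and then $U\coloneqq R\circ V\circ E$ is finite-rank on $\X$ with $\sup_{f\in K}\|f-Uf\|_\X\leq\epsilon$.

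For the Lebesgue and Sobolev cases ($L^{p_1}(D)$, $W^{m_1,p_1}(D)$, $L^{p_2}(D')$, $W^{m_2,p_2}(D')$), I would use extension by zero for $L^p$ and Stein's extension theorem for Lipschitz domains for $W^{m,p}$, both mapping into the corresponding reference space on $\R^d$. AP for $L^p(\R^d)$ follows from Lemma \ref{lemma:schauder_ap} applied to the Haar basis; AP for $W^{m,p}(\R^d)$ follows similarly from a Daubechies wavelet Schauder basis of sufficient smoothness, or, for $1<p<\infty$, from the Bessel potential isomorphism $W^{m,p}(\R^d)\cong L^p(\R^d)$. For the continuous-function cases ($C(\bar{D})$ and $C^{m_2}(\bar{D}')$), I would either invoke the Whitney--Stein continuous linear extension $E:C^m(\bar{D})\to C^m([-M,M]^d)$ on Lipschitz domains to reduce via Corollary \ref{corr:cm_isomorphism} to $C^m([0,1]^d)$ (which has a Schauder basis, e.g.\ Ciesielski's tensor spline basis), or give a direct construction: cover $\bar{D}$ by balls $B(x_j,\delta)$ with $x_j\in\bar{D}$, take a smooth partition of unity $\{\phi_j\}_{j=1}^N$ subordinate to the cover, and set
\[
(U_Nf)(x)=\sum_{j=1}^{N}\phi_j(x)\sum_{|\alpha|_1\leq m}\frac{\partial^\alpha f(x_j)}{\alpha!}(x-x_j)^\alpha.
\]
This $U_N$ is linear in $f$ because its dependence on $f$ is only through the evaluation functionals $f\mapsto\partial^\alpha f(x_j)$, and is finite-rank. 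Standard Taylor-remainder estimates together with Arzel\`a--Ascoli equicontinuity of $\{\partial^\alpha f:f\in K,\,|\alpha|_1\leq m\}$ then give $\sup_{f\in K}\|f-U_Nf\|_{C^m}\leq\epsilon$ once $\delta$ is chosen small.

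The main obstacle is supplying the external inputs cleanly: Stein's extension theorem for Lipschitz domains and the existence of a Schauder basis on $W^{m,p}(\R^d)$ at every $1\leq p<\infty$ and $m\in\N_0$ are classical but scattered across the literature, and must be cited precisely. A secondary technicality in the direct $C^m$ argument is that the partition of unity $\{\phi_j\}$ must be obtained by rescaling a single fixed bump, so that $\|\phi_j\|_{C^m}$ remains controlled independently of $j$ and $\delta$; otherwise the product rule would spoil the $C^m$ remainder estimate as $\delta\to 0$. Once these ingredients are fixed the case analysis is routine, and the five cases of Assumptions \ref{assump:input} and \ref{assump:output} are dispatched in parallel by the retraction scheme above.
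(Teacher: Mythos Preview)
Your retraction scheme is exactly what the paper does, and is correct. The paper's specific implementation differs only in the choice of reference space: for $L^p(D)$ it invokes the measure-isomorphism $L^p(D)\cong L^p((0,1))$ rather than zero-extension to $\R^d$; for $W^{m,p}(D)$ with $p>1$ it cites Pe{\l}czy\'nski's isomorphism $W^{m,p}(D)\cong L^p((0,1))$ (relying on Stein extension as an ingredient) rather than a wavelet basis on $W^{m,p}(\R^d)$; for $C^m(\bar D)$ it uses Fefferman's bounded linear extension $C^m(\bar D)\to C^m_{\text b}(\R^d)$ together with the Ciesielski basis on the cube, exactly as in your first option; and for $W^{m,1}$ it repeats the retraction argument with the Stein extension and the Ciesielski basis. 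So your primary route and the paper's coincide up to these cosmetic choices of reference space and basis.

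Your alternative direct construction for $C^m(\bar D)$ via Taylor polynomials glued by a partition of unity is not in the paper and is more elementary, but two points need fixing before it can stand on its own. First, the assertion that $\|\phi_j\|_{C^m}$ can be kept bounded independently of $\delta$ is false: rescaling a fixed bump to scale $\delta$ forces $\|\partial^\gamma\phi_j\|_\infty\sim\delta^{-|\gamma|}$. The argument still closes, but only because the Taylor remainder of $\partial^{\beta-\gamma}f$ about $x_j$ scales like $\omega(\delta)\,\delta^{\,m-|\beta|+|\gamma|}$ (with $\omega$ the common modulus of continuity of the $m$-th derivatives over $K$, available since $f\mapsto\partial^\alpha f$ maps the compact $K$ to a compact, hence equicontinuous, subset of $C(\bar D)$); the product in the Leibniz expansion then contributes $\omega(\delta)\,\delta^{\,m-|\beta|}$, which vanishes as $\delta\to 0$. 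You should state this cancellation rather than the incorrect uniform bound. Second, the integral form of the Taylor remainder uses the segment $[x_j,x]\subset\bar D$, which can fail on a general Lipschitz domain; you must either invoke the uniform cone condition to replace the segment by an interior path of comparable length, or---cleaner---perform the partition-of-unity/Taylor step on the ambient cube after extending, which brings you back to the extension route anyway.
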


\begin{proof}
It is enough to show that the spaces \(W^{m,p}(D)\), and \(C^m (\bar{D})\) for 
any \(1 \leq p < \infty\) and \(m \in \N_0\) with \(D \subset \R^d\) a Lipschitz domain have the AP.
Consider first the spaces \(W^{0,p} (D) = L^p(D)\). Since the Lebesgue measure on \(D\) is 
\(\sigma\)-finite and has no atoms, \(L^p(D)\) is isometrically isomorphic 
to \(L^p ((0,1))\) (see, for example, \cite[Chapter 6]{albiac2006topics}). Hence by
Lemma \ref{lemma:ap_ap}, it is enough to show that \(L^p((0,1))\) has the AP. Similarly, consider the spaces \(W^{m,p}(D)\) for \(m > 0\) and \(p > 1\). 
Since \(D\) is Lipschitz, 
there exists a continuous linear operator \(W^{m,p} (D) \to W^{m,p} (\R^d)\) \cite[Chapter 6, Theorem 5]{stein1970singular}
(this also holds for \(p=1\)). We can therefore apply \cite[Corollary 4]{pelczynski2001contribution} (when \(p > 1\))
to conclude that \(W^{m,p}(D)\) is isomorphic to \(L^p((0,1))\). By \cite[Proposition 6.1.3]{albiac2006topics},
\(L^p((0,1))\) has a basis hence Lemma~ \ref{lemma:schauder_ap} implies the result.

Now, consider the spaces \(C^m(\bar{D})\). Since \(D\) is bounded, there exists a number \(M > 0\)
such that \(\bar{D} \subseteq [-M,M]^d\).
Hence, by Corollary~\ref{corr:cm_isomorphism}, \(C^m ([0,1]^d)\) is isomorphic to \(C^m ([-M,M]^d)\).
Since \(C^m ([0,1]^d)\) has a basis \cite[Theorem 5]{ciesielski1972construction}, Lemma~\ref{lemma:schauder_schauder} then implies that
\(C^m ([-M,M]^d)\) has a basis. By \cite[Theorem 1]{fefferman2007cmextension}, 
there exists a continuous linear operator \(E : C^m (\bar{D}) \to C^m_{\text{b}}(\R^d)\)
such that \(E(f)|_{\bar{D}} = f\) for all \(f \in C(\bar{D})\).
Define the restriction operators \(R_M : C^m_{\text{b}}(\R^d) \to C^m([-M,M]^d)\) and \(R_D : C^m([-M,M]^d) \to C^m(\bar{D})\)
which are both clearly linear and continuous and \(\|R_M\| = \|R_D\| = 1\). Let \(\{c_j\}_{j=1}^\infty \subset \bigl ( C^m([-M,M]^d) \bigl )^*\)
and \(\{\varphi_j\}_{j=1}^\infty  \subset C^m([-M,M]^d)\) be a basis for \(C^m([-M,M]^d)\). 
As in the proof of Lemma~\ref{lemma:schauder_ap}, there exists a constant \(C_1 > 0\)
such that, for any \(n \in \N\) and \(f \in C^m([-M,M]^d) \), 
\[\|\sum_{j=1}^n c_j(f) \varphi_j\|_{C^m([-M,M]^d)} \leq C_1 \|f\|_{C^m([-M,M]^d)}.\]
Suppose, without loss of generality, that \(C_1 \|E\| \geq 1\). Let \(K \subset C^m(\bar{D})\)
be a compact set and \(\epsilon > 0\). Since \(K\) is compact, we can find a number \(n = n(\epsilon) \in \N\)
and elements \(y_1,\dots,y_n \in K\) such that, for any \(f \in K\) there exists a number \(l \in \{1,\dots,n\}\)
such that
\[\|f - y_l\|_{C^m(\bar{D})} \leq \frac{\epsilon}{3 C_1 \|E\|}.\]
For every \(l \in \{1,\dots,n\}\), define \(g_l = R_M (E(y_l))\) and note that \(g_l \in C^m([-M,M]^d)\)
hence there exists a number \(J = J(\epsilon,n) \in \N\) such that
\[\max_{l \in \{1,\dots,n\}} \|g_l - \sum_{j=1}^J c_j(g_l) \varphi_j \|_{C^m([-M,M]^d)} \leq \frac{\epsilon}{3}.\]
Notice that, since \(y_l = R_D(g_l)\), we have
\[ \max_{l \in \{1,\dots,n\}} \|y_l - \sum_{j=1}^J c_j \bigl ( R_M (E(y_l)) \bigl ) R_D(\varphi_j)\|_{C^m(\bar{D})} \leq \|R_D\| \max_{l \in \{1,\dots,n\}}  \|g_l - \sum_{j=1}^J c_j(g_l) \varphi_j \|_{C^m([-M,M]^d)} \leq \frac{\epsilon}{3}.\]
Define the finite rank operator \(U : C^m(\bar{D}) \to C^m(\bar{D})\) by
\[Uf = \sum_{j=1}^J c_j \bigl ( R_M (E(f)) \bigl ) R_D(\varphi_j), \qquad \forall f \in C^m(\bar{D}).\]
We then have that, for any \(f \in K\),
\begin{align*}
\|f - Uf\|_{C^m(\bar{D})} &\leq \|f - y_l\|_{C^m(\bar{D})} + \|y_l - Uy_l\|_{C^m(\bar{D})} + \|U y_l - Uf\|_{C^m(\bar{D})} \\
&\leq \frac{2\epsilon}{3} + \|\sum_{j=1}^J c_j \bigl( R_M(E(y_l - f)) \bigl ) \varphi_j \|_{C^m([-M,M]^d)} \\
&\leq \frac{2\epsilon}{3} + C_1 \|R_M(E(y_l - f))\|_{C^m([-M,M]^d)} \\
&\leq \frac{2\epsilon}{3} + C_1 \|E\| \|y_l - f\|_{C^m(\bar{D})} \\
&\leq \epsilon
\end{align*}
hence \(C^m(\bar{D})\) has the AP.

We are left with the case \(W^{m,1}(D)\). A similar argument as for the \(C^m(\bar{D})\) case holds. In
particular the basis from \cite[Theorem 5]{ciesielski1972construction} is also a basis for \(W^{m,1}((0,1)^d)\).
Lemma~\ref{lemma:w1_isomorphism} gives an isomorphism between \(W^{m,1}((0,1)^d)\) and \(W^{m,1}((-M,M)^d)\) hence 
we may use the extension operator \(W^{m,1} (D) \to W^{m,1} (\R^d)\) from \cite[Chapter 6, Theorem 5]{stein1970singular}
to complete the argument. In fact, the same construction yields a basis for \(W^{m,1} (D)\) due 
to the isomorphism with \(W^{m,1} (\R^d)\), see, for example \cite[Theorem 1]{pelczynski2001contribution}.
\end{proof}

\section{}
\label{sec:appendix_functionalapprox}

In this section, we prove various results about the approximation of linear functionals by kernel integral operators. Lemma~\ref{lemma:reisz} establishes a Riesz-representation theorem for \(C^m\). The proof proceeds exactly as in the well-known result for \(W^{m,p}\) but, since we did not find it in the literature, we give full details here. Lemma~\ref{lemma:wmp_kernelapprox} shows that linear functionals on \(W^{m,p}\) can be approximated uniformly over compact set by integral kernel operators with a \(C^\infty\) kernel. Lemmas~\ref{lemma:c_kernelapprox} and \ref{lemma:cm_kernelapprox} establish similar results for \(C\) and \(C^m\) respectively by employing Lemma~\ref{lemma:reisz}. These lemmas are crucial in showing that NO(s) are universal since they imply that the functionals from Lemma~\ref{lemma:finitedim_approx} can be approximated by elements of \(\mathsf{IO}\).

\begin{lemma}
\label{lemma:reisz}
Let \(D \subset \R^d\) be a domain and \(m \in \N_0\). For every \(L \in \bigl (C^m (\bar{D}) \bigl )^*\) 
there exist finite, signed, Radon measures \(\{\lambda_\alpha\}_{0 \leq |\alpha|_1 \leq m}\) such that
\[L(f) = \sum_{0 \leq |\alpha|_1 \leq m} \int_{\bar{D}} \partial^\alpha f \: \mathsf{d}\lambda_\alpha, \qquad \forall f \in C^m(\bar{D}).\] 
\end{lemma}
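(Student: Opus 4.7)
The proof follows the standard Hahn-Banach plus Riesz-Markov-Kakutani strategy, viewing $C^m(\bar{D})$ as a closed subspace of a product of $C(\bar{D})$ spaces via the total derivative map. Since $D$ is a (bounded) domain, $\bar{D}$ is a compact Hausdorff space, so the classical Riesz representation theorem applies to $C(\bar{D})^*$.

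\textbf{Step 1: Embedding into a product.} Let $J_m = \#\{\alpha \in \N_0^d : |\alpha|_1 \leq m\}$ and consider the Banach space $Y = \prod_{0 \leq |\alpha|_1 \leq m} C(\bar{D})$ equipped with the norm $\|(g_\alpha)_\alpha\|_Y = \max_{|\alpha|_1 \leq m} \sup_{x \in \bar{D}} |g_\alpha(x)|$. Define the linear map $T : C^m(\bar{D}) \to Y$ by $T(f) = (\partial^\alpha f)_{0 \leq |\alpha|_1 \leq m}$. By the very definition of the $C^m$-norm, $T$ is an isometric embedding, and its image $Z := T(C^m(\bar{D})) \subset Y$ is therefore a closed linear subspace (closedness is immediate from the fact that uniform convergence of a sequence together with uniform convergence of all its derivatives up to order $m$ yields an element of $C^m(\bar{D})$ with the prescribed derivatives).

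\textbf{Step 2: Hahn-Banach extension.} Given $L \in (C^m(\bar{D}))^*$, define $\tilde{L} : Z \to \R$ by $\tilde{L}(T(f)) = L(f)$; this is well-defined since $T$ is injective, and $\|\tilde{L}\|_{Z^*} = \|L\|_{(C^m(\bar{D}))^*}$ because $T$ is an isometry. By the Hahn-Banach theorem, extend $\tilde{L}$ to a bounded linear functional $\hat{L} \in Y^*$.

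\textbf{Step 3: Riesz-Markov-Kakutani on each factor.} A continuous linear functional on the finite product $Y$ decomposes as a sum $\hat{L}\bigl((g_\alpha)_\alpha\bigr) = \sum_{|\alpha|_1 \leq m} L_\alpha(g_\alpha)$ where each $L_\alpha \in C(\bar{D})^*$ (obtained by restricting $\hat{L}$ to the $\alpha$-th coordinate subspace with all other entries zero). Since $\bar{D}$ is compact Hausdorff, the Riesz-Markov-Kakutani theorem produces, for each $\alpha$, a unique finite signed Radon measure $\lambda_\alpha$ on $\bar{D}$ with
\[L_\alpha(g) = \int_{\bar{D}} g \: \mathsf{d}\lambda_\alpha, \qquad \forall g \in C(\bar{D}).\]
Specializing to $(g_\alpha)_\alpha = T(f)$ and using $\hat{L} \circ T = L$ gives the desired representation
\[L(f) = \sum_{0 \leq |\alpha|_1 \leq m} \int_{\bar{D}} \partial^\alpha f \: \mathsf{d}\lambda_\alpha.\]

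\textbf{Main obstacle.} The arguments are essentially standard; the only point that requires a moment of care is verifying that $Z = T(C^m(\bar{D}))$ is closed in $Y$, which is needed to know that $\tilde{L}$ is a well-defined bounded functional on a closed subspace before invoking Hahn-Banach. This closedness is what makes the isometry argument go through and is the ingredient that distinguishes the $C^m$ case (where derivatives exist classically and their limits in the sup norm are genuine classical derivatives) from subtler situations. Once this is in hand, the proof is routine.
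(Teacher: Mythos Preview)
Your proof is correct and follows essentially the same route as the paper: isometrically embed $C^m(\bar D)$ into a finite product of copies of $C(\bar D)$ via $f\mapsto(\partial^\alpha f)_\alpha$, push $L$ forward, extend by Hahn--Banach, and apply Riesz--Markov--Kakutani componentwise. One small remark: the Hahn--Banach theorem does not require the subspace $Z$ to be closed, so closedness is not actually the ``main obstacle''---the boundedness of $\tilde L$ on $Z$ already follows from $T$ being an isometry, and Hahn--Banach then applies directly (the paper invokes closedness only to remark that $Z$ is itself a Banach space, which is likewise not essential to the argument).
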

\begin{proof}
The case \(m = 0\) follow directly from \cite[Theorem B.111]{leoni2009first}, so we assume that \(m > 0\).
Let \(\alpha_1,\dots,\alpha_J\) be an enumeration of the set \(\{\alpha \in \N^d : |\alpha|_1 \leq m\}\).
Define the mapping \(T : C^m(\bar{D}) \to C(\bar{D};\R^J)\) by
\[Tf = \bigl ( \partial^{\alpha_0} f, \dots, \partial^{\alpha_J}f ), \qquad \forall f \in C^m(\bar{D}).\]
Clearly \(\|Tf\|_{C(\bar{D};\R^J)} = \|f\|_{C^m(\bar{D})}\) hence \(T\) is an injective, continuous linear operator.
Define \(W \coloneqq T(C^m(\bar{D})) \subset C(\bar{D};\R^J)\) then \(T^{-1} : W \to C^m (\bar{D})\) is a continuous linear 
operator since \(T\) preserves norm. Thus \(W = \bigl (T^{-1} \bigl)^{-1}(C^m(\bar{D}))\) is closed as the pre-image of a 
closed set under a continuous map. In particular, \(W\) is a Banach space since \(C(\bar{D};\R^J)\) is a Banach space
and \(T\) is an isometric isomorphism between \(C^m(\bar{D})\) and \(W\). Therefore, there exists a 
continuous linear functional \(\tilde{L} \in W^*\) such that 
\[L(f) = \tilde{L}(Tf), \qquad \forall f \in C^m (\bar{D}).\]
By the Hahn-Banach theorem, \(\tilde{L}\) can be extended to a continuous linear functional \(\bar{L} \in \bigl ( C(\bar{D};\R^J) \bigl)^*\)
such that \(\|L\|_{(C^m(\bar{D}))^*} = \|\tilde{L}\|_{W^*} = \|\bar{L}\|_{(C(\bar{D};\R^J))^*}\). We have that
\[L(f) = \tilde{L}(Tf) = \bar{L}(Tf), \qquad \forall f \in C^m (\bar{D}).\]
Since 
\[\bigl ( C(\bar{D};\R^J) \bigl )^* \cong \bigtimes_{j=1}^J \bigl ( C(\bar{D}) \bigl )^* \cong \bigoplus_{j=1}^J \bigl ( C(\bar{D}) \bigl )^*,\]
we have, by applying \cite[Theorem B.111]{leoni2009first} \(J\) times, that there exist finite, signed, Radon measures \(\{\lambda_\alpha\}_{0 \leq |\alpha|_1 \leq m}\) 
such that
\[\bar{L}(Tf) = \sum_{0 \leq |\alpha|_1 \leq m} \int_{\bar{D}} \partial^\alpha f \: \mathsf{d}\lambda_\alpha, \qquad \forall f \in C^m(\bar{D})\]
as desired.
\end{proof}

\begin{lemma}
\label{lemma:wmp_kernelapprox}
Let \(D \subset \R^d\) be a bounded, open set and \(L \in (W^{m,p}(D))^*\) for some \(m \geq 0\)
and \(1 \leq p < \infty\). For any closed and bounded set \(K \subset W^{m,p}(D)\) (compact if \(p=1\)) and \(\epsilon > 0\),
there exists a function \(\kappa \in C^\infty_c (D)\) such that
\[\sup_{u \in K} |L(u) - \int_D \kappa  u \dx| < \epsilon.\]
\end{lemma}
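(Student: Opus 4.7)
The plan is to represent $L$ through duality, approximate the representing functions by compactly supported smooth ones, and move all derivatives off of $u$ by integration by parts, yielding a kernel that is itself smooth and compactly supported.

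First, I would invoke the standard Riesz representation for Sobolev duals (obtained, as in the proof of Lemma \ref{lemma:reisz}, by embedding $W^{m,p}(D)$ into a direct sum of $L^p$-spaces via $u \mapsto (\partial^\alpha u)_{|\alpha|_1 \leq m}$ and applying the Hahn--Banach theorem together with the Riesz representation for $(L^p)^*$). This gives functions $g_\alpha \in L^q(D)$, for $0 \leq |\alpha|_1 \leq m$, with $1/p + 1/q = 1$ (interpreting $q = \infty$ when $p = 1$), such that
\[
L(u) = \sum_{0 \leq |\alpha|_1 \leq m} \int_D g_\alpha \, \partial^\alpha u \: \dx, \qquad \forall u \in W^{m,p}(D).
\]

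Next, I would approximate each $g_\alpha$ by some $\psi_\alpha \in C^\infty_c(D)$. The choice of topology in which this approximation is measured depends on $p$: for $1 < p < \infty$, I use that $C^\infty_c(D)$ is dense in $L^q(D)$ and pick $\psi_\alpha$ so that $\|g_\alpha - \psi_\alpha\|_{L^q(D)}$ is arbitrarily small. For these $\psi_\alpha$, since they are compactly supported inside $D$, integration by parts has no boundary contribution, so
\[
\int_D \psi_\alpha \, \partial^\alpha u \: \dx = (-1)^{|\alpha|_1} \int_D (\partial^\alpha \psi_\alpha) \, u \: \dx
\]
for every $u \in W^{m,p}(D)$. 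Setting
\[
\kappa := \sum_{0 \leq |\alpha|_1 \leq m} (-1)^{|\alpha|_1} \, \partial^\alpha \psi_\alpha \in C^\infty_c(D),
\]
the remaining error $L(u) - \int_D \kappa u \: \dx$ equals $\sum_\alpha \int_D (g_\alpha - \psi_\alpha) \, \partial^\alpha u \: \dx$, which, by H\"older's inequality, is bounded above by $\sum_\alpha \|g_\alpha - \psi_\alpha\|_{L^q(D)} \|\partial^\alpha u\|_{L^p(D)}$. Since $K$ is bounded in $W^{m,p}(D)$, a uniform bound follows by choosing the $\psi_\alpha$ sufficiently close to $g_\alpha$ in $L^q$.

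The main obstacle is the endpoint $p = 1$: here $q = \infty$, and $C^\infty_c(D)$ is \emph{not} dense in $L^\infty(D)$, so we cannot make $\|g_\alpha - \psi_\alpha\|_{L^\infty(D)}$ small. This is precisely why the hypothesis on $K$ is strengthened to compactness in this case. The idea is to use a finite $\eta$-net $\{u_1, \ldots, u_N\} \subset K$ (in the $W^{m,1}(D)$ norm) and choose $\psi_\alpha$ so that (i) $\|\psi_\alpha\|_{L^\infty(D)} \leq 2 \|g_\alpha\|_{L^\infty(D)}$ and (ii) $|\int_D (g_\alpha - \psi_\alpha) \, \partial^\alpha u_j \: \dx|$ is arbitrarily small for every $j \in \{1, \ldots, N\}$; the latter is achievable by a standard mollification-and-truncation argument combined with the dominated convergence theorem, since $\partial^\alpha u_j \in L^1(D)$ and we may arrange $\psi_\alpha \to g_\alpha$ pointwise a.e.\ while remaining uniformly bounded. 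Given any $u \in K$, selecting the closest net element $u_j$ and splitting
\[
\int_D (g_\alpha - \psi_\alpha) \, \partial^\alpha u \: \dx = \int_D (g_\alpha - \psi_\alpha)\, \partial^\alpha u_j \: \dx + \int_D (g_\alpha - \psi_\alpha)\, \partial^\alpha (u - u_j) \: \dx,
\]
the first term is small by construction and the second is bounded by $(\|g_\alpha\|_{L^\infty} + \|\psi_\alpha\|_{L^\infty}) \|u - u_j\|_{W^{m,1}(D)} \leq 3 \|g_\alpha\|_{L^\infty} \eta$. Choosing $\eta$ small first, then selecting $\psi_\alpha$ accordingly, and finally summing over $\alpha$ delivers the desired uniform bound, completing the proof.
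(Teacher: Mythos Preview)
Your proposal is correct and follows essentially the same architecture as the paper's proof: Riesz representation $L(u)=\sum_\alpha \int_D g_\alpha\,\partial^\alpha u$, approximation of each $g_\alpha$ by $\psi_\alpha\in C^\infty_c(D)$, integration by parts to form $\kappa=\sum_\alpha(-1)^{|\alpha|_1}\partial^\alpha\psi_\alpha$, and H\"older for $p>1$ versus a finite-net argument for $p=1$. The only notable difference is in the $p=1$ mechanics: the paper first mollifies $g_\alpha$ (using Fubini to transfer the mollifier onto the net functions), then invokes Lusin's theorem and uniform approximation on a compact subset of $D$; your mollify-and-cutoff construction with dominated convergence is a cleaner and equally valid route to the same estimate.
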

\begin{proof}
First consider the case \(m = 0\) and \(1 \leq p < \infty\). By the Riesz Representation Theorem \cite[Appendix B]{conway2007acourse},
there exists a function \(v \in L^q(D)\) such that
\[L(u) = \int_D v u \dx.\]
Since \(K\) is bounded, there is a constant \(M > 0\) such that
\[\sup_{u \in K} \|u\|_{L^p} \leq M.\]

Suppose \(p > 1\), so that \(1 < q < \infty\). Density of \(C^\infty_c(D)\) in \(L^q (D)\) \cite[Corollary 2.30]{adams2003sobolev}  implies there exists 
a function \(\kappa \in C^\infty_c(D)\) such that
\[\|v - \kappa\|_{L^q} < \frac{\epsilon}{M}.\]
By the H{\"o}lder inequality,
\[|L(u) - \int_D \kappa u \dx| \leq \|u\|_{L^p} \|v - \kappa\|_{L^q} < \epsilon.\]

Suppose that \(p=1\) then \(q=\infty\). Since \(K\) is totally bounded, there exists a number
\(n \in \N\) and functions \(g_1,\dots,g_n \in K\) such that, for any \(u \in K\),
\[\|u - g_l\|_{L^1} < \frac{\epsilon}{3 \|v\|_{L^\infty}}\]
for some \(l \in \{1,\dots,n\}\). Let \(\psi_\eta \in C^\infty_c (D)\) denote a standard mollifier
for any \(\eta > 0\). We can find \(\eta > 0\) small enough such that
\[\max_{l \in \{1,\dots,n\}} \|\psi_\eta * g_l - g_l\|_{L^1} < \frac{\epsilon}{9 \|v\|_{L^\infty}}\]
Define \(f = \psi_\eta * v \in C(D)\) and note that \(\|f\|_{L^\infty} \leq \|v\|_{L^\infty}\). By Fubini's theorem, we find
\[|\int_D (f - v)g_l \dx| = \int_D v ( \psi_\eta * g_l - g_l) \dx \leq \|v\|_{L^\infty} \|\psi_\eta * g_l - g_l\|_{L^1} < \frac{\epsilon}{9}.\]
Since \(g_l \in L^1 (D)\), by Lusin's theorem, we can find a compact set \(A \subset D\) such that
\[\max_{l \in \{1,\dots,n\}}  \int_{D \setminus A} |g_l| \dx < \frac{\epsilon}{18 \|v\|_{L^\infty}}\]
Since \(C^\infty_c (D)\) is dense in \(C(D)\) over compact sets \cite[Theorem C.16]{leoni2009first}, we can find a function \(\kappa \in C^\infty_c (D)\) such that
\[\sup_{x \in A} |\kappa(x) - f(x)| \leq \frac{\epsilon}{9M}\]
and \(\|\kappa\|_{L^\infty} \leq \|f\|_{L^\infty} \leq \|v\|_{L^\infty}\).
We have,
\begin{align*}
|\int_D (\kappa - v) g_l \dx| &\leq \int_A |(\kappa - v)g_l| \dx + \int_{D \setminus A} |(\kappa - v)  g_l | \dx\\
&\leq \int_A |(\kappa - f)g_l| \dx + \int_D |(f-v)g_l| \dx + 2 \|v\|_{L^\infty} \int_{D \setminus A} |g_l| \dx \\
&\leq \sup_{x \in A} |\kappa(x) - f(x)| \|g_l\|_{L^1} + \frac{2\epsilon}{9}   \\ 
&< \frac{\epsilon}{3}.
\end{align*}
Finally,
\begin{align*}
|L(u) - \int_{D} \kappa u \dx| &\leq |\int_D vu \dx - \int_D vg_l \dx| + |\int_D v g_l \dx - \int_D \kappa u \dx| \\
&\leq \|v\|_{L^\infty} \|u- g_l\|_{L^1} + |\int_D \kappa u \dx - \int_D \kappa g_l \dx | + | \int_D \kappa g_l \dx - \int_D v g_l \dx| \\
&\leq \frac{\epsilon}{3} + \|\kappa\|_{L^\infty} \|u - g_l\|_{L^1} + |\int_D (\kappa - v) g_l  \dx| \\
&\leq \frac{2\epsilon}{3} + \|v\|_{L^\infty} \|u-g_l\|_{L^1} \\
&< \epsilon.
\end{align*}

Suppose \(m \geq 1\). By the Riesz Representation Theorem \cite[Theorem 3.9]{adams2003sobolev}, there exist elements \((v_\alpha)_{0\leq |\alpha|_1 \leq m}\) of 
\(L^q (D)\) where \(\alpha \in \N^d\) is a multi-index such that
\[L(u) = \sum_{0 \leq |\alpha|_1 \leq m} \int_D v_\alpha \partial_\alpha u  \dx.\]
Since \(K\) is bounded, there is a constant \(M > 0\) such that
\[\sup_{u \in K} \|u\|_{W^{m,p}} \leq M.\]

Suppose \(p > 1\), so that \(1 < q < \infty\). Density of \(C^\infty_0(D)\) in \(L^q (D)\) implies there exist
functions \((f_\alpha)_{0 \leq |\alpha|_1 \leq m}\) in \(C^\infty_c(D)\) such that 
\[\|f_\alpha - v_\alpha\|_{L^q} < \frac{\epsilon}{MJ}\]
where \(J = |\{\alpha \in \N^d : |\alpha|_1 \leq m \}|\). Let
\[\kappa = \sum_{0 \leq |\alpha|_1 \leq m} (-1)^{|\alpha|_1} \partial_\alpha f_\alpha\]
then, by definition of a weak derivative, 
\[\int_D \kappa u \dx = \sum_{0 \leq |\alpha|_1 \leq m} (-1)^{|\alpha|_1} \int_D \partial_\alpha f_\alpha  u \dx = \sum_{0 \leq |\alpha|_1 \leq m} \int_D f_\alpha \partial_\alpha u \dx.  \]
By the H{\"o}lder inequality,
\[|L(u) - \int_D \kappa u \dx | \leq \sum_{0 \leq |\alpha|_1 \leq m} \|\partial_\alpha u\|_{L^p} \|f_\alpha - v_\alpha\|_{L^q} < M \sum_{0 \leq |\alpha|_1 \leq m} \frac{\epsilon}{MJ} = \epsilon.\]

Suppose that \(p=1\) then \(q=\infty\). Define the constant \(C_v > 0\) by
\[C_v = \sum_{0 \leq |\alpha|_1 \leq m} \|v_\alpha\|_{L^\infty}.\]
Since \(K\) is totally bounded, there exists a number
\(n \in \N\) and functions \(g_1,\dots,g_n \in K\) such that, for any \(u \in K\),
\[\|u - g_l\|_{W^{m,1}} < \frac{\epsilon}{3C_v}\]
for some \(l \in \{1,\dots,n\}\). 
Let \(\psi_\eta \in C^\infty_c (D)\) denote a standard mollifier
for any \(\eta > 0\). We can find \(\eta > 0\) small enough such that
\[\max_{\alpha} \max_{l \in \{1,\dots,n\}} \|\psi_\eta * \partial_\alpha g_l - \partial_\alpha g_l\|_{L^1} < \frac{\epsilon}{9 C_v}.\]
Define \(f_\alpha = \psi_\eta * v_\alpha \in C(D)\) and note that \(\|f_\alpha\|_{L^\infty} \leq \|v_\alpha\|_{L^\infty}\). By Fubini's theorem, we find
\begin{align*}
\sum_{0 \leq |\alpha|_1 \leq m} |\int_D (f_\alpha - v_\alpha) \partial_\alpha g_l \dx| &= \sum_{0 \leq |\alpha|_1 \leq m} |\int_D v_\alpha ( \psi_\eta * \partial_\alpha g_l - \partial_\alpha g_l) \dx| \\
&\leq \sum_{0 \leq |\alpha|_1 \leq m} \|v_\alpha\|_{L^\infty} \|\psi_\eta * \partial_\alpha g_l - \partial_\alpha g_l\|_{L^1} \\
&< \frac{\epsilon}{9}.
\end{align*}
Since \(\partial_\alpha g_l \in L^1 (D)\), by Lusin's theorem, we can find a compact set \(A \subset D\) such that
\[\max_\alpha \max_{l \in \{1,\dots,n\}}  \int_{D \setminus A} |\partial_\alpha g_l| \dx < \frac{\epsilon}{18 C_v}.\]
Since \(C^\infty_c (D)\) is dense in \(C(D)\) over compact sets, we can find functions \(w_\alpha \in C^\infty_c (D)\) such that
\[\sup_{x \in A} |w_\alpha (x) - f_\alpha (x)| \leq \frac{\epsilon}{9M J}\]
where \(J = |\{\alpha \in \N^d : |\alpha|_1 \leq m \}|\) and \(\|w_\alpha\|_{L^\infty} \leq \|f_\alpha\|_{L^\infty} \leq \|v_\alpha\|_{L^\infty}\).
We have,
\begin{align*}
\sum_{0 \leq |\alpha|_1 \leq m} \int_D |(w_\alpha - v_\alpha) \partial_\alpha g_l| &= \sum_{0 \leq |\alpha|_1 \leq m} \left (  \int_A |(w_\alpha - v_\alpha) \partial_\alpha g_l| dx + 
\int_{D \setminus A} |(w_\alpha - v_\alpha) \partial_\alpha g_l| dx \right )\\
&\leq \sum_{0 \leq |\alpha|_1 \leq m} \bigg ( \int_A |(w_\alpha - f_\alpha) \partial_\alpha g_l| \dx + \int_D |(f_\alpha -v_\alpha) \partial_\alpha g_l| \dx \\
&\quad \: + 2 \|v_\alpha\|_{L^\infty} \int_{D \setminus A} |\partial_\alpha g_l| \dx \bigg ) \\
&\leq \sum_{0 \leq |\alpha|_1 \leq m} \sup_{x \in A} |w_\alpha (x) - f_\alpha (x)| \|\partial_\alpha g_l\|_{L^1} + \frac{2\epsilon}{9}   \\ 
&< \frac{\epsilon}{3}.
\end{align*}
Let
\[\kappa = \sum_{0 \leq |\alpha|_1 \leq m} (-1)^{|\alpha|_1} \partial_\alpha w_\alpha.\]
then, by definition of a weak derivative, 
\[\int_D \kappa u \dx = \sum_{0 \leq |\alpha|_1 \leq m} (-1)^{|\alpha|_1} \int_D \partial_\alpha w_\alpha  u \dx = \sum_{0 \leq |\alpha|_1 \leq m} \int_D w_\alpha \partial_\alpha u \dx.  \]
Finally,
\begin{align*}
|L(u) - \int_{D} \kappa u \dx| &\leq \sum_{0 \leq |\alpha|_1 \leq m} \int_D | v_\alpha \partial_\alpha u - w_\alpha \partial_\alpha u| \dx \\
&\leq \sum_{0 \leq |\alpha|_1 \leq m} \left ( \int_D |v_\alpha (\partial_\alpha u - \partial_\alpha g_l)| \dx + \int_D |v_\alpha \partial_\alpha g_l - w_\alpha \partial_\alpha u| \dx \right ) \\
&\leq \sum_{0 \leq |\alpha|_1 \leq m} \left ( \|v_\alpha\|_{L^\infty} \|u - g_l\|_{W^{m,1}} +  \int_D |(v_\alpha - w_\alpha) \partial_\alpha g_l| \dx + \int_D |(\partial_\alpha g_l - \partial_\alpha u) w_\alpha| \dx \right ) \\
&< \frac{2\epsilon}{3} + \sum_{0 \leq |\alpha|_1 \leq m} \|w_\alpha\|_{L^\infty} \|u - g_l\|_{W^{m,1}} \\
&< \epsilon.
\end{align*}

\end{proof}

\begin{lemma}
\label{lemma:cm_delta_approx}
Let \(D \subset \R^d\) be a domain and \(L \in \bigl ( C^m(\bar{D}) \bigl)^*\) for some \(m \in \N_0\). 
For any compact set \(K \subset C^m(\bar{D})\) and \(\epsilon > 0\),
there exists distinct points \(y_{11},\dots,y_{1 n_1},\dots,y_{J n_J} \in D\) and numbers \(c_{11},\dots,c_{1 n_1},\dots,c_{J n_J} \in \R\)
such that
\[\sup_{u \in K} |L(u) - \sum_{j=1}^J \sum_{k=1}^{n_j} c_{j k} \partial^{\alpha_j} u(y_{jk})| \leq \epsilon\]
where \(\alpha_1,\dots,\alpha_J\) is an enumeration of the set \(\{\alpha \in \N^d_0 : 0 \leq |\alpha|_1 \leq m\}\).
\end{lemma}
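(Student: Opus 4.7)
The plan is to combine the Reisz-type representation theorem (Lemma~\ref{lemma:reisz}) with a quadrature-style approximation for each of the finitely many signed Radon measures it produces, exploiting the equicontinuity of the images of $K$ under the derivative operators.

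First I would invoke Lemma~\ref{lemma:reisz} to write
\[
L(u) \;=\; \sum_{j=1}^{J} \int_{\bar{D}} \partial^{\alpha_j} u \;\mathsf{d}\lambda_{\alpha_j}, \qquad \forall u \in C^m(\bar{D}),
\]
where $\lambda_{\alpha_1},\dots,\lambda_{\alpha_J}$ are finite signed Radon measures on $\bar{D}$ and $\alpha_1,\dots,\alpha_J$ enumerates $\{\alpha\in\N_0^d:|\alpha|_1\leq m\}$. Set $M_j := |\lambda_{\alpha_j}|(\bar{D}) < \infty$ and reduce to approximating each summand uniformly over $K$ to within $\epsilon / J$.

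Next, for each $j$ I would consider the image $K_j := \{\partial^{\alpha_j} u : u \in K\} \subset C(\bar{D})$. Since $\partial^{\alpha_j}:C^m(\bar{D})\to C(\bar{D})$ is continuous and $K$ is compact, $K_j$ is compact in $C(\bar{D})$ and hence, by Arzel\`a--Ascoli, equicontinuous: there exists a continuous nondecreasing modulus $\omega_j:\R_{\geq 0}\to\R_{\geq 0}$ with $\omega_j(0)=0$ such that $|f(x)-f(y)|\leq \omega_j(|x-y|_2)$ for every $f \in K_j$ and every $x,y\in\bar{D}$. Choose $\delta_j>0$ with $\omega_j(\delta_j)\,M_j \leq \epsilon/(2J)$.

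The core step is a partition-based quadrature for the integral. Using that $\bar{D}$ is compact and $D$ is dense in $\bar{D}$, cover $\bar{D}$ by finitely many open Euclidean balls $B_1^{(j)},\dots,B_{n_j}^{(j)}$ of diameter less than $\delta_j$ whose centers $z_1^{(j)},\dots,z_{n_j}^{(j)}$ lie in $D$. Disjointify to $E_k^{(j)} := B_k^{(j)} \setminus \bigcup_{i<k} B_i^{(j)}$ and define weights $c_{jk}^{(0)} := \lambda_{\alpha_j}(E_k^{(j)})$. For any $u \in K$, since $\partial^{\alpha_j} u \in K_j$,
\[
\Bigl| \int_{\bar{D}} \partial^{\alpha_j} u \,\mathsf{d}\lambda_{\alpha_j} - \sum_{k=1}^{n_j} c_{jk}^{(0)}\, \partial^{\alpha_j} u(z_k^{(j)}) \Bigr| \;\leq\; \sum_{k=1}^{n_j} \int_{E_k^{(j)}} \omega_j(\delta_j) \,\mathsf{d}|\lambda_{\alpha_j}| \;\leq\; \omega_j(\delta_j)\,M_j \;\leq\; \frac{\epsilon}{2J}.
\]

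Finally, to enforce the distinctness of the full collection $\{z_k^{(j)}: j,k\}$, I would perturb each $z_k^{(j)}$ to a nearby $y_{jk} \in D$. Uniform equicontinuity of $K_j$ gives continuity of $u \mapsto \partial^{\alpha_j} u(y)$ uniformly in $u\in K$, so a sufficiently small perturbation changes the quadrature sum by at most $\epsilon/(2J)$ uniformly over $u\in K$; set $c_{jk} := c_{jk}^{(0)}$. Summing the two error contributions over $j=1,\dots,J$ yields total error at most $\epsilon$, as desired.

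\paragraph{Main obstacle.}
The only subtlety is the requirement that the $y_{jk}$ lie in the open set $D$ (not merely in $\bar{D}$) and be mutually distinct across all indices, while $\lambda_{\alpha_j}$ may place mass on $\partial D$ or concentrate on lower-dimensional sets. Both issues are resolved by using the density of $D$ in $\bar{D}$ to choose ball centers in $D$ from the outset, and by applying a joint equicontinuity perturbation argument to make all selected points distinct without appreciably affecting the quadrature error; this is the step that requires care but presents no deep difficulty.
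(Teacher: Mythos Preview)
Your proposal is correct and shares the same overall skeleton as the paper's proof: both start from the Riesz-type representation of Lemma~\ref{lemma:reisz}, reduce to approximating each integral $\int_{\bar D}\partial^{\alpha_j}u\,\mathsf{d}\lambda_{\alpha_j}$ by a finite sum of point evaluations uniformly over $K$, and then clean up at the end to force the evaluation points into $D$ and to make them mutually distinct.

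The genuine difference is in the middle step. The paper obtains the discrete approximation by invoking weak-$*$ density of finite linear combinations of Dirac measures in the signed Radon measures (citing Bogachev), and then passes from a finite $\epsilon$-net $g_1,\dots,g_N$ of $K$ to all of $K$ by a $3\epsilon$-type argument. You instead use Arzel\`a--Ascoli to extract a uniform modulus of equicontinuity for each $K_j=\{\partial^{\alpha_j}u:u\in K\}$ and build an explicit partition-based quadrature whose error is controlled by $\omega_j(\delta_j)\,|\lambda_{\alpha_j}|(\bar D)$, directly uniform over $K$. Your route is more constructive and self-contained, and it lets you place the sample points in $D$ from the outset; the paper's route is shorter once one accepts the weak-density black box, but then needs a separate compactness-of-$K$ step to upgrade pointwise weak-$*$ convergence to uniformity over $K$, and a further step to pull boundary points into $D$. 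The final distinctness perturbation is handled the same way in both arguments, via equicontinuity (you state it directly; the paper routes it through the finite net $g_1,\dots,g_N$).
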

\begin{proof}
By Lemma~\ref{lemma:reisz}, there exist finite, signed, Radon
measures \(\{\lambda_{\alpha}\}_{0 \leq |\alpha|_1 \leq m}\) such that
\[L(u) = \sum_{0 \leq |\alpha|_1 \leq m} \int_{\bar{D}} \partial^\alpha u \: \mathsf{d} \lambda_\alpha, \qquad \forall u \in C^m(\bar{D}).\]
Let \(\alpha_1,\dots,\alpha_J\) be an enumeration of the set \(\{\alpha \in \N^d_0 : 0 \leq |\alpha|_1 \leq m\}\).
By weak density of the Dirac measures \cite[Example 8.1.6]{bogachev2007measure}, we can find points 
\(y_{1 1},\dots,y_{1 n_{1}},\dots,y_{J 1},\dots,y_{J n_{J}} \in \bar{D}\) as well
as numbers \(c_{1 1},\dots,c_{J n_{J}} \in \R\) such that
\[|\int_{\bar{D}} \partial^{\alpha_j} u \: \mathsf{d}\lambda_{\alpha_j} -\sum_{k=1}^{n_j} c_{j k} \partial^{\alpha_j} u(y_{jk})| \leq \frac{\epsilon}{4J}, \qquad \forall u \in C^m(\bar{D})\]
for any \(j \in \{1,\dots,J\}\). Therefore,
\[|\sum_{j=1}^J \int_{\bar{D}} \partial^{\alpha_j} u \: \mathsf{d}\lambda_{\alpha_j} - \sum_{j=1}^J \sum_{k=1}^{n_j} c_{j k} \partial^{\alpha_j} u(y_{jk})| \leq \frac{\epsilon}{4}, \qquad \forall u \in C^m(\bar{D}).\]
Define the constant 
\[Q \coloneqq \sum_{j=1}^J \sum_{k=1}^{n_j} |c_{j k}|.\]
Since \(K\) is compact, we can find functions \(g_1,\dots,g_N \in K\) such that, for any \(u \in K\), 
there exists \(l \in \{1,\dots,N\}\) such that
\[\|u - g_l\|_{C^k} \leq \frac{\epsilon}{4Q}.\]
Suppose that some \(y_{jk} \in \partial D\). By uniform continuity, we can find a point \(\tilde{y}_{jk} \in D\) such that
\[\max_{l \in \{1,\dots,N\}} |\partial^{\alpha_j} g_l (y_{jk}) - \partial^{\alpha_j} g_l (\tilde{y}_{jk})| \leq \frac{\epsilon}{4Q}.\]
Denote 
\[S(u) = \sum_{j=1}^J \sum_{k=1}^{n_j} c_{j k} \partial^{\alpha_j} u(y_{jk})\]
and by \(\tilde{S}(u)\) the sum \(S(u)\) with \(y_{jk}\) replaced by \(\tilde{y}_{jk}\). Then, for any \(u \in K\), we have
\begin{align*}
|L(u) - \tilde{S}(u)| &\leq |L(u) - S(u)| + |S(u) - \tilde{S}(u)| \\
&\leq \frac{\epsilon}{4} + |c_{jk} \partial^{\alpha_j} u(\tilde{y}_{jk}) - c_{jk} \partial^{\alpha_j} u(y_{jk})| \\
&\leq \frac{\epsilon}{4} + |c_{jk} \partial^{\alpha_j} u(\tilde{y}_{jk}) - c_{jk} \partial^{\alpha_j} g_l(\tilde{y}_{jk})| + |c_{jk} \partial^{\alpha_j} g_l(\tilde{y}_{jk}) - c_{jk} \partial^{\alpha_j} u(y_{jk})| \\
&\leq \frac{\epsilon}{4} + |c_{jk}|\|u - g_l\|_{C^m} + |c_{jk} \partial^{\alpha_j} g_l(\tilde{y}_{jk}) - c_{jk} \partial^{\alpha_j} g_l(y_{jk})| + |c_{jk} \partial^{\alpha_j} g_l(y_{jk}) - c_{jk} \partial^{\alpha_j} u(y_{jk})| \\
&\leq \frac{\epsilon}{4}  + 2|c_{jk}|\|u - g_l\|_{C^m} + |c_{jk}| |\partial^{\alpha_j} g_l(\tilde{y}_{jk}) - \partial^{\alpha_j} g_l(y_{jk})| \\
&\leq \epsilon.
\end{align*} 
Since there are a finite number of points, this implies that all points \(y_{jk}\) can be chosen in \(D\). Suppose now 
that \(y_{jk} = y_{qp}\) for some \((j,k) \neq (q,p)\). As before, we can always find a point \(\tilde{y}_{jk}\)
distinct from all others such that
\[\max_{l \in \{1,\dots,N\}} |\partial^{\alpha_j} g_l (y_{jk}) - \partial^{\alpha_j} g_l (\tilde{y}_{jk})| \leq \frac{\epsilon}{4Q}.\]
Repeating the previous argument then shows that all points \(y_{jk}\) can be chosen distinctly as desired.
\end{proof}

\begin{lemma}
\label{lemma:c_kernelapprox}
Let \(D \subset \R^d\) be a domain and \(L \in \bigl ( C(\bar{D}) \bigl)^*\). 
For any compact set \(K \subset C(\bar{D})\) and \(\epsilon > 0\),
there exists a function \(\kappa \in C^\infty_c (D)\) such that
\[\sup_{u \in K} |L(u) - \int_D \kappa  u \dx| < \epsilon.\]
\end{lemma}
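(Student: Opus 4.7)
The plan is to combine the Dirac-measure approximation of Lemma~\ref{lemma:cm_delta_approx} (in the case $m=0$) with a mollification argument that replaces each point evaluation by an integral against a smooth compactly supported bump. First, apply Lemma~\ref{lemma:cm_delta_approx} with $m=0$ to obtain distinct points $y_1,\dots,y_N \in D$ and weights $c_1,\dots,c_N \in \R$ such that
\[\sup_{u \in K} \Bigl| L(u) - \sum_{k=1}^N c_k u(y_k) \Bigr| \leq \frac{\epsilon}{2}.\]
Set $Q := 1 + \sum_{k=1}^N |c_k|$.

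Next, since each $y_k$ lies in the open set $D$ and the points are distinct, there exists $\eta_0 > 0$ such that the closed balls $\overline{B(y_k,\eta_0)}$ are pairwise disjoint and all contained in $D$. Since $K \subset C(\bar{D})$ is compact, by the Arzel\`a--Ascoli theorem it is uniformly equicontinuous, so there exists $0 < \eta \leq \eta_0$ such that
\[|u(x) - u(y_k)| \leq \frac{\epsilon}{2 Q}, \qquad \forall x \in B(y_k,\eta), \:\: \forall u \in K, \:\: \forall k \in \{1,\dots,N\}.\]
Now fix a standard mollifier $\psi \in C^\infty_c(\R^d)$ with $\psi \geq 0$, $\operatorname{supp}(\psi) \subset B(0,1)$, and $\int_{\R^d} \psi\,dx = 1$, and set $\psi_\eta(x) = \eta^{-d} \psi(x/\eta)$. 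Define
\[\kappa(x) := \sum_{k=1}^N c_k \, \psi_\eta(x - y_k), \qquad x \in \R^d.\]
Because $\operatorname{supp}(\psi_\eta(\cdot - y_k)) \subset \overline{B(y_k,\eta)} \subset D$ for each $k$, we have $\kappa \in C^\infty_c(D)$.

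Finally, for any $u \in K$, since $\int_D \psi_\eta(x - y_k)\,dx = 1$ for each $k$, we have
\[\Bigl| \int_D \kappa(x) u(x)\,dx - \sum_{k=1}^N c_k u(y_k) \Bigr| \leq \sum_{k=1}^N |c_k| \int_{B(y_k,\eta)} \psi_\eta(x - y_k) \, |u(x) - u(y_k)|\,dx \leq \frac{\epsilon}{2 Q} \sum_{k=1}^N |c_k| \leq \frac{\epsilon}{2}.\]
Combining with the Dirac approximation via the triangle inequality yields $\sup_{u \in K} |L(u) - \int_D \kappa u\,dx| \leq \epsilon$, completing the proof. The only potentially delicate point is ensuring the mollification radius $\eta$ can be chosen small enough to both keep $\kappa$ compactly supported inside $D$ and to uniformly dominate the continuity error over $K$; equicontinuity of the compact set $K$ handles the latter, and the openness of $D$ around each $y_k$ handles the former.
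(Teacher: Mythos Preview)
Your proof is correct and follows essentially the same approach as the paper: apply Lemma~\ref{lemma:cm_delta_approx} with $m=0$ to reduce $L$ to a weighted sum of point evaluations at distinct interior points, then replace each evaluation by integration against a translated mollifier with small enough support. The only difference is in how uniformity over $K$ is obtained: you invoke Arzel\`a--Ascoli to get uniform equicontinuity of $K$ directly and choose a single mollification radius $\eta$ that works for all $u\in K$ simultaneously, whereas the paper instead extracts a finite $\epsilon$-net $g_1,\dots,g_J$ of $K$, picks the mollifier radius to work for those finitely many functions, and then passes back to arbitrary $u\in K$ via the net. Your route is slightly more streamlined; the paper's net argument avoids citing Arzel\`a--Ascoli explicitly but is otherwise equivalent in spirit.
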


\begin{proof}
By Lemma~\ref{lemma:cm_delta_approx}, we can find points distinct points 
\(y_1, \dots, y_n \in D\) as well
as numbers \(c_1,\dots,c_n \in \R\) such that
\[\sup_{u \in K} |L(u) - \sum_{j=1}^{n} c_{j} u(y_{j})| \leq \frac{\epsilon}{3}.\]
Define the constants
\[Q \coloneqq \sum_{j=1}^n |c_{j}|.\]
Since \(K\) is compact, there exist functions \(g_1,\dots,g_J \in K\) such that, for any \(u \in K\), there exists 
some \(l \in \{1,\dots,J\}\) such that
\[\|u - g_l\|_{C} \leq \frac{\epsilon}{6nQ}.\]
Let \(r > 0\) be such that the open balls \(B_r (y_{j}) \subset D\) and are pairwise disjoint.
Let \(\psi_\eta \in C^\infty_c (\R^d)\) denote the standard mollifier with parameter \(\eta > 0\), noting that \(\text{supp } \psi_r = B_r(0)\). We can find a number \(0 < \gamma \leq r\) such that
\[\max_{\substack{l \in \{1,\dots,J\} \\ j \in \{1,\dots,n\}}} |\int_D \psi_{\gamma}(x - y_{j}) g_l(x) \dx - g_l(y_{j})| \leq \frac{\epsilon}{3nQ}.\]
Define \(\kappa : \R^d \to \R\) by
\[\kappa(x) = \sum_{j=1}^n c_{j} \psi_{\gamma}(x - y_{j}), \qquad \forall x \in \R^d.\]
Since \(\text{supp } \psi_\gamma (\cdot - y_j) \subseteq B_r(y_j)\), we have that \(\kappa \in C^\infty_c(D)\).
Then, for any \(u \in K\),
\begin{align*}
|L(u) - \int_D \kappa u \dx| &\leq |L({}u) - \sum_{j=1}^n c_j u(y_j) | + |\sum_{j=1}^n c_j u(y_j) - \int_D \kappa u \dx| \\
&\leq \frac{\epsilon}{3} + \sum_{j=1}^n |c_j| |u(y_j) - \int_D \psi_\eta(x - y_j)u(x) \dx| \\
&\leq \frac{\epsilon}{3} + Q \sum_{j=1}^n |u(y_j) - g_l(y_j)| + |g_l(y_j) - \int_D \psi_\eta(x - y_j)u(x) \dx| \\
&\leq \frac{\epsilon}{3} + nQ \|u - g_l\|_{C} + Q \sum_{j=1}^n |g_l(y_j) - \int_D \psi_\eta (x - y_j)g_l(x) \dx|\\
&\quad\quad\quad\quad\quad\quad+ | \int_D \psi_\eta (x - y_j) \bigl (g_l(x) - u(x) \bigl) \dx| \\
&\leq \frac{\epsilon}{3} + nQ \|u - g_l\|_{C} + n Q \frac{\epsilon}{3n Q} + Q \|g_l - u\|_C \sum_{j=1}^n \int_D \psi_{\gamma}(x - y_{j}) \dx \\
&= \frac{2\epsilon}{3} + 2nQ \|u - g_l\|_{C} \\
&=\epsilon
\end{align*}
where we use the fact that mollifiers are non-negative and integrate to one.
\end{proof}

\begin{lemma}
\label{lemma:cm_kernelapprox}
Let \(D \subset \R^d\) be a domain and \(L \in \bigl ( C^m (\bar{D}) \bigl)^*\). 
For any compact set \(K \subset C^m (\bar{D})\) and \(\epsilon > 0\),
there exist functions \(\kappa_1,\dots,\kappa_J \in C^\infty_c (D)\) such that
\[\sup_{u \in K} |L(u) - \sum_{j=1}^J \int_D \kappa_j  \partial^{\alpha_j} u \dx| < \epsilon\]
where \(\alpha_1,\dots,\alpha_J\) is an enumeration of the set \(\{\alpha \in \N^d_0 : 0 \leq |\alpha|_1 \leq m\}\).
\end{lemma}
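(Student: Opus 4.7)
The proof should proceed in the same spirit as Lemma \ref{lemma:c_kernelapprox}, but with point evaluations replaced by point evaluations of derivatives, one group per multi-index $\alpha_j$. The plan is to first reduce $L$ to a finite weighted sum of derivative evaluations via Lemma \ref{lemma:cm_delta_approx}, and then replace each delta evaluation by a suitable mollifier against the corresponding derivative.

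First, applying Lemma \ref{lemma:cm_delta_approx} to $L$, I obtain distinct points $\{y_{jk} : j=1,\dots,J,\ k=1,\dots,n_j\} \subset D$ and numbers $c_{jk} \in \R$ such that
\[
\sup_{u \in K} \Bigl| L(u) - \sum_{j=1}^J \sum_{k=1}^{n_j} c_{jk} \partial^{\alpha_j} u(y_{jk}) \Bigr| \leq \epsilon/3.
\]
Set $Q := \sum_{j,k} |c_{jk}|$. Next, because $K \subset C^m(\bar D)$ is compact, I extract a finite $\epsilon/(6Q(1+\sum_j n_j))$-net $g_1,\dots,g_N \in K$ in the $C^m$-norm. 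Since each $\partial^{\alpha_j} g_l$ is uniformly continuous on $\bar D$, I can choose $r>0$ small enough that the balls $B_r(y_{jk})$ are pairwise disjoint and contained in $D$, and such that the standard mollifier $\psi_\gamma$ with $0<\gamma\le r$ satisfies
\[
\max_{l,j,k}\Bigl| \int_D \psi_\gamma(x - y_{jk}) \partial^{\alpha_j} g_l(x)\dx - \partial^{\alpha_j} g_l(y_{jk}) \Bigr| \leq \frac{\epsilon}{3 Q (1+\sum_j n_j)}.
\]
Then I define, for each $j=1,\dots,J$,
\[
\kappa_j(x) := \sum_{k=1}^{n_j} c_{jk}\, \psi_\gamma(x - y_{jk}), \qquad x \in \R^d,
\]
so that $\kappa_j \in C^\infty_c(D)$ by the disjointness and containment conditions on the balls $B_r(y_{jk})$.

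The key identity to verify is that $\int_D \kappa_j \,\partial^{\alpha_j}u\dx$ is close to $\sum_k c_{jk}\partial^{\alpha_j}u(y_{jk})$ uniformly over $u \in K$. Writing $u = g_l + (u-g_l)$ with $\|u - g_l\|_{C^m} \leq \epsilon/(6Q(1+\sum_j n_j))$ and using that mollifiers are non-negative with unit mass, a triangle inequality argument (mirroring the one at the end of the proof of Lemma \ref{lemma:c_kernelapprox}, but applied to $\partial^{\alpha_j} u$ in place of $u$) gives
\[
\sup_{u \in K}\Bigl| \sum_{j=1}^J \sum_{k=1}^{n_j} c_{jk} \partial^{\alpha_j} u(y_{jk}) - \sum_{j=1}^J \int_D \kappa_j\, \partial^{\alpha_j} u\dx \Bigr| \leq 2\epsilon/3.
\]
Combining with the first display and a final triangle inequality yields the claim.

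The main obstacle is purely bookkeeping: making sure the mollification parameter $\gamma$ can be chosen uniformly across both the $N$ net elements and all $\sum_j n_j$ evaluation points, and ensuring the supports of the shifted mollifiers remain inside $D$ (handled by shrinking $\gamma$ below the injectivity radius $r$ of the point set, which is possible because the $y_{jk}$ are distinct and strictly interior to $D$ by Lemma \ref{lemma:cm_delta_approx}). No genuinely new idea beyond the $m=0$ case is needed; the sole novelty is keeping track of one mollifier group per multi-index $\alpha_j$ so that the resulting representation fits the neural operator architecture which takes $(\partial^{\alpha_1} u, \dots, \partial^{\alpha_J} u)$ as input.
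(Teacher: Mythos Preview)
Your proposal is correct and follows essentially the same route as the paper: first invoke Lemma~\ref{lemma:cm_delta_approx} to reduce $L$ to a finite sum of derivative point evaluations, then replace each delta by a mollifier as in the $m=0$ case. The only cosmetic difference is that the paper applies the mollification argument of Lemma~\ref{lemma:c_kernelapprox} separately $J$ times (once per multi-index, on the compact set $\{\partial^{\alpha_j}u : u\in K\}\subset C(\bar D)$), whereas you choose a single $\gamma$ simultaneously for all $j$; both are fine since the $y_{jk}$ are distinct and interior.
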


\begin{proof}
By Lemma~\ref{lemma:cm_delta_approx}, we find distinct points \(y_{11},\dots,y_{1 n_1},\dots,y_{J n_J} \in D\) and numbers \(c_{11},\dots,c_{J n_J} \in \R\) such that
\[\sup_{u \in K} |L(u) - \sum_{j=1}^J \sum_{k=1}^{n_j} c_{j k} \partial^{\alpha_j} u(y_{jk})| \leq \frac{\epsilon}{2}.\]
Applying the proof of Lemma~\ref{lemma:cm_kernelapprox} \(J\) times to each of the inner sums, we find functions 
\(\kappa_1,\dots,\kappa_J \in C^\infty_c (D)\) such that
\[\max_{j \in \{1,\dots,J\}}|\int_D \kappa_j \partial^{\alpha_j} u \dx - \sum_{k=1}^{n_j} c_{jk} \partial^{\alpha_j} u(y_{jk}) | \leq \frac{\epsilon}{2J}.\]
Then, for any \(u \in K\),
\begin{align*}
|L(u) - \sum_{j=1}^J &\int_D \kappa_j  \partial^{\alpha_j} u \dx|\\& \leq |L(u) - \sum_{j=1}^J \sum_{k=1}^{n_j} c_{j k} \partial^{\alpha_j} u(y_{jk})| + \sum_{j=1}^J  |\int_D \kappa_j \partial^{\alpha_j} u \dx - \sum_{k=1}^{n_j} c_{jk} \partial^{\alpha_j} u(y_{jk})| \leq \epsilon
\end{align*}
as desired.
\end{proof}

\section{}
\label{sec:appendix_nos}

The following lemmas show that the three pieces used in constructing the approximation from Lemma~\ref{lemma:finitedim_approx}, which are schematically depicted in Figure~\ref{fig:approach}, can all be approximated by NO(s). Lemma~\ref{lemma:input_approx} shows that \(F_J : \A \to \R^{J}\) can be approximated by an element of \(\mathsf{IO}\) by mapping to a vector-valued constant function. Similarly, Lemma~\ref{lemma:output_approx} shows that \(G_{J'} : \R^{J'} \to \U\) can be approximated by an element of \(\mathsf{IO}\) by mapping a vector-valued constant function to the coefficients of a basis expansion. Finally, Lemma~\ref{lemma:nn_emulation} shows that NO(s) can exactly represent any standard neural network by viewing the inputs and outputs as vector-valued constant functions.

\begin{lemma}
\label{lemma:input_approx}
Let Assumption \ref{assump:input} hold. Let \(\{c_j\}_{j=1}^n \subset \A^*\) for some \(n \in \N\).
Define the map \(F : \A \to \R^n\) by 
\[F(a) = \bigl (c_1(a), \dots, c_n(a) \bigl ), \qquad \forall a \in \A.\]
Then, for any compact set \(K \subset \A\), \(\sigma \in \mathsf{A}_{0}\), and \(\epsilon > 0\), there exists 
a number \(L \in \N\) and neural network \(\kappa \in \NN_L(\sigma;\R^{d} \times \R^{d},\R^{n \times 1})\) such that
\[\sup_{a \in K} \sup_{y \in \bar{D}} |F(a) - \int_{D} \kappa(y, x) a(x) \dx |_1 \leq \epsilon. \]
\end{lemma}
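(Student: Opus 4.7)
The idea is to exhibit the neural-network kernel in two stages: first construct an \emph{idealized} kernel that is independent of $y$ and whose integral against $a$ reproduces $F(a)$ to accuracy $\epsilon/2$, and then approximate that idealized kernel by a standard neural network on the compact set $\bar D\times \bar D$, taking advantage of the fact that $\sigma\in\mathsf{A}_0$.

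First I would treat the three cases of Assumption~\ref{assump:input} uniformly by observing that any compact $K\subset\A$ is bounded, and that the continuous embeddings $L^{p_1}(D)\hookrightarrow L^1(D)$, $W^{m_1,p_1}(D)\hookrightarrow L^1(D)$, $C(\bar D)\hookrightarrow L^1(D)$ (which hold because $D$ is bounded) yield a constant $M>0$ with $\sup_{a\in K}\|a\|_{L^1(D)}\le M$. Next, for each $j=1,\dots,n$, I would apply Lemma~\ref{lemma:wmp_kernelapprox} (when $\A=L^{p_1}$ or $W^{m_1,p_1}$) or Lemma~\ref{lemma:c_kernelapprox} (when $\A=C(\bar D)$) to the functional $c_j$ and the compact set $K$ with tolerance $\epsilon/(2n)$, producing $\kappa_j\in C^\infty_c(D)$ with
\begin{equation*}
\sup_{a\in K}\Bigl|c_j(a)-\int_D \kappa_j(x)\,a(x)\dx\Bigr|\le \frac{\epsilon}{2n}.
\end{equation*}
I extend each $\kappa_j$ by zero to all of $\R^d$ so that it lies in $C_c(\R^d)$.

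Next, define the idealized vector-valued kernel $\tilde\kappa:\R^d\times\R^d\to\R^{n\times 1}$ by $\tilde\kappa(y,x)=(\kappa_1(x),\dots,\kappa_n(x))^{\mathsf{T}}$; it is continuous on $\R^d\times\R^d$ and constant in $y$. Since $\sigma\in\mathsf{A}_0$, there is some depth $L\in\N$ such that $\NN_L(\sigma;\R^d\times\R^d,\R^{n\times 1})$ is dense in $C(\bar D\times \bar D;\R^{n\times 1})$ with respect to the uniform norm; hence I can choose a neural network $\kappa\in\NN_L(\sigma;\R^d\times\R^d,\R^{n\times 1})$ satisfying
\begin{equation*}
\sup_{(y,x)\in\bar D\times \bar D}\bigl|\kappa(y,x)-\tilde\kappa(y,x)\bigr|_1\le \frac{\epsilon}{2(M+1)}.
\end{equation*}

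Finally, a triangle-inequality estimate closes the argument. For any $a\in K$ and $y\in\bar D$, write
\begin{align*}
\Bigl|F(a)-\int_D\kappa(y,x)a(x)\dx\Bigr|_1
&\le \sum_{j=1}^n\Bigl|c_j(a)-\int_D\kappa_j(x)a(x)\dx\Bigr|\\
&\quad+\int_D\bigl|\kappa(y,x)-\tilde\kappa(y,x)\bigr|_1\,|a(x)|\dx\\
&\le \frac{\epsilon}{2}+\frac{\epsilon}{2(M+1)}\|a\|_{L^1(D)}\le\epsilon,
\end{align*}
uniformly in $a\in K$ and $y\in\bar D$, which is the desired bound.

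The only mildly delicate point is the uniform $L^1$ bound on $K$ in the Sobolev case (it uses boundedness of $D$ and Hölder's inequality to control $\|a\|_{L^1}$ by $\|a\|_{W^{m_1,p_1}}$); everything else is routine application of the previously established kernel-approximation lemmas together with the universal approximation property of $\mathsf{A}_0$-networks on the compact set $\bar D\times\bar D$.
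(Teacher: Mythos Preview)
Your proposal is correct and follows essentially the same two-stage approach as the paper: first invoke Lemmas~\ref{lemma:wmp_kernelapprox} and~\ref{lemma:c_kernelapprox} to replace each functional $c_j$ by integration against a smooth $\kappa_j\in C^\infty_c(D)$, then approximate the resulting continuous kernel by a neural network and close with the triangle inequality. The only cosmetic differences are that the paper works in the $|\cdot|_p$ norm and passes to $|\cdot|_1$ by finite-dimensional equivalence at the end, whereas you work directly in $|\cdot|_1$ throughout; and the paper builds the $y$-independence into the network explicitly (approximating each $\kappa_j$ on $\bar D$ and then zeroing the weights attached to the $y$-variable), whereas you approximate the $y$-constant map $\tilde\kappa$ on the full product $\bar D\times\bar D$. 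Both routes are valid.
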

\begin{proof}
Since \(K\) is bounded, there exists a number \(M > 0\) such that
\[\sup_{a \in K} \|a\|_\A \leq M.\]
Define the constant
\[Q \coloneqq \begin{cases}
M, & \A = W^{m,p}(D) \\
M |D|, & A = C(\bar{D})
\end{cases}\]
and let \(p = 1\) if \(\A = C(\bar{D})\).
By Lemma~\ref{lemma:wmp_kernelapprox} and Lemma~\ref{lemma:c_kernelapprox}, there exist functions
\(f_1,\dots,f_n \in C^\infty_c (D)\) such that
\[\max_{j\in \{1,\dots,n\}} \sup_{a \in K} |c_j(a) - \int_D f_j a \dx | \leq \frac{\epsilon}{2 n^{\frac{1}{p}}}.\]
Since \(\sigma \in \mathsf{A}_0\), there exits some \(L \in \N\) and neural networks \(\psi_1,\dots,\psi_n \in \NN_L(\sigma;\R^d)\)
such that
\[\max_{j \in \{1,\dots,n\}} \|\psi_j - f_j\|_C \leq \frac{\epsilon}{2Q n^{\frac{1}{p}}}.\]
By setting all weights associated to the first argument to zero, 
we can modify each neural network \(\psi_j\)
to a neural network \(\psi_j \in \NN_L(\sigma;\R^{d} \times \R^{d})\) so that
\[\psi_j(y,x) = \psi_j(x) \mathds{1}(y), \qquad \forall y,x \in \R^{d}.\]
Define \(\kappa \in \NN_L (\sigma; \R^{d} \times \R^{d}, \R^{n \times 1})\) by
\[\kappa(y,x) = [\psi_1(y,x), \dots, \psi_n (y,x) ]^T.\]
Then for any \(a \in K\) and \(y \in \bar{D}\), we have
\begin{align*}
|F(a) - \int_D \kappa(y,x) a \dx|_p^p &= \sum_{j=1}^n |c_j(a) - \int_D \mathds{1}(y) \psi_j(x) a(x) \dx|^p \\
&\leq 2^{p-1} \sum_{j=1}^n |c_j(a) - \int_D f_j a \dx|^p + |\int_D (f_j - \psi_j) a \dx |^p \\
&\leq \frac{\epsilon^p}{2} + 2^{p-1} n Q^p \|f_j - \psi_j\|_C^p \\
&\leq \epsilon^p
\end{align*}
and the result follows by finite dimensional norm equivalence.
\end{proof}

\begin{lemma}
\label{lemma:cm_input_approx}
Suppose \(D \subset \R^d\) is a domain and let \(\{c_j\}_{j=1}^n \subset \bigl ( C^{m}(\bar{D}) \bigl )^*\) for some \(m, n \in \N\).
Define the map \(F : \A \to \R^n\) by 
\[F(a) = \bigl (c_1(a), \dots, c_n(a) \bigl ), \qquad \forall a \in C^m(\bar{D}).\]
Then, for any compact set \(K \subset C^m(\bar{D})\), \(\sigma \in \mathsf{A}_{0}\), and \(\epsilon > 0\), there exists 
a number \(L \in \N\) and neural network \(\kappa \in \NN_L(\sigma;\R^{d} \times \R^{d},\R^{n \times J})\) such that
\[\sup_{a \in K} \sup_{y \in \bar{D}} |F(a) - \int_{D} \kappa(y, x) \bigl (\partial^{\alpha_1} a(x), \dots, \partial^{\alpha_J} a(x) \bigl ) \dx |_1 \leq \epsilon \]
where \(\alpha_1,\dots,\alpha_J\) is an enumeration of the set \(\{\alpha \in \N^d : 0 \leq |\alpha|_1 \leq m\}\).
\end{lemma}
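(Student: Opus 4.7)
The strategy is a direct generalization of the proof of Lemma~\ref{lemma:input_approx}, with the crucial substitution that the scalar integral representation of each functional is replaced by a sum of integrals against derivatives, as provided by Lemma~\ref{lemma:cm_kernelapprox}. I would proceed as follows.

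First, since $K \subset C^m(\bar{D})$ is compact, there exists $M>0$ with $\sup_{a\in K}\|a\|_{C^m}\le M$, which in particular bounds $\sup_{a\in K}\|\partial^{\alpha_k}a\|_{C}\le M$ for every $k\in\{1,\dots,J\}$. Next, I would apply Lemma~\ref{lemma:cm_kernelapprox} once for each $c_j$, obtaining for every $j\in\{1,\dots,n\}$ functions $f_{j,1},\dots,f_{j,J}\in C^\infty_c(D)$ such that
\[
\sup_{a\in K}\Bigl|c_j(a)-\sum_{k=1}^{J}\int_D f_{j,k}\,\partial^{\alpha_k}a\,\dx\Bigr|\le \frac{\epsilon}{2n}.
\]
This is the replacement of the single-kernel approximation used in the previous lemma with the derivative-aware Riesz-type representation, and is the only structural change needed.

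Next, since $\sigma\in\mathsf{A}_0$, there exists $L\in\N$ and neural networks $\psi_{j,k}\in\NN_L(\sigma;\R^d)$ with
\[
\max_{j,k}\|\psi_{j,k}-f_{j,k}\|_{C(\bar{D})}\le \frac{\epsilon}{2nJM|D|}.
\]
As in Lemma~\ref{lemma:input_approx}, I would then embed each $\psi_{j,k}$ into $\NN_L(\sigma;\R^d\times\R^d)$ by zeroing the weights associated with the first argument, so that $\psi_{j,k}(y,x)=\psi_{j,k}(x)\mathds{1}(y)$. Assembling these into a single network $\kappa\in\NN_L(\sigma;\R^d\times\R^d,\R^{n\times J})$ with entries $\kappa(y,x)_{jk}=\psi_{j,k}(y,x)$ gives
\[
\Bigl(\int_D \kappa(y,x)\bigl(\partial^{\alpha_1}a(x),\dots,\partial^{\alpha_J}a(x)\bigr)\dx\Bigr)_j=\sum_{k=1}^{J}\int_D \psi_{j,k}(x)\,\partial^{\alpha_k}a(x)\dx,
\]
independent of $y\in\bar{D}$. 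A standard triangle inequality split into the ``Riesz approximation'' error and the ``neural network approximation of the smooth kernels'' error, together with $|\int_D (\psi_{j,k}-f_{j,k})\partial^{\alpha_k}a\,\dx|\le |D|\,\|\psi_{j,k}-f_{j,k}\|_{C}\,\|\partial^{\alpha_k}a\|_{C}$, yields
\[
\sup_{a\in K}\sup_{y\in\bar D}|F(a)-\textstyle\int_D \kappa(y,x)(\partial^{\alpha_1}a,\dots,\partial^{\alpha_J}a)\dx|_1\le\epsilon.
\]

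No step here is a serious obstacle: the main ingredient, Lemma~\ref{lemma:cm_kernelapprox}, already does the heavy lifting of converting a bounded linear functional on $C^m(\bar{D})$ into integrals against smooth functions paired with weak derivatives. The only mild subtlety is bookkeeping: the kernel is matrix-valued (in $\R^{n\times J}$) rather than vector-valued as in Lemma~\ref{lemma:input_approx}, and the input to the integral operator is no longer $a$ itself but the tuple of its derivatives up to order $m$; this is precisely the reason for introducing the $m$-th order neural operator class $\mathsf{NO}^m$ in Section~\ref{sec:approximation_nos}.
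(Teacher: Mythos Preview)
Your proposal is correct and follows exactly the approach the paper indicates: the paper's proof of this lemma is a one-line remark that it ``follows as in Lemma~\ref{lemma:input_approx} by replacing the use of Lemmas~\ref{lemma:wmp_kernelapprox} and \ref{lemma:c_kernelapprox} by Lemma~\ref{lemma:cm_kernelapprox},'' and you have faithfully carried out that substitution with the correct bookkeeping for the matrix-valued kernel and the derivative tuple.
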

\begin{proof}
The proof follows as in Lemma~\ref{lemma:input_approx} by replacing the use of Lemmas~\ref{lemma:wmp_kernelapprox} and \ref{lemma:c_kernelapprox}
by Lemma~\ref{lemma:cm_kernelapprox}.
\end{proof}

\begin{lemma}
\label{lemma:output_approx}
Let Assumption \ref{assump:output} hold. Let \(\{\varphi_j\}_{j=1}^n \subset \U\) for some \(n \in \N\).
Define the map \(G : \R^n \to \U\) by 
\[G(w) = \sum_{j=1}^n w_j \varphi_j, \qquad \forall w \in \R^n.\]
Then, for any compact set \(K \subset \R^n\), \(\sigma \in \mathsf{A}_{m_2}\), and \(\epsilon > 0\), there exists 
a number \(L \in \N\) and a neural network \(\kappa \in \NN_L(\sigma;\R^{d'} \times \R^{d'},\R^{1 \times n})\) such that
\[\sup_{w \in K} \|G(w) - \int_{D'} \kappa(\cdot, x) w \mathds{1}(x) \dx \|_\U \leq \epsilon. \]
\end{lemma}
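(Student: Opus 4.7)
The plan is to reduce the problem to approximating each representer $\varphi_j$ individually, in the $\U$ norm, by a scalar-valued neural network. Since $K$ is compact, set $M := \sup_{w \in K} |w|_\infty$. By the triangle inequality, for any candidate $\psi_1,\dots,\psi_n \in \NN_L(\sigma;\R^{d'})$ we have
\[\sup_{w \in K} \Bigl\| \sum_{j=1}^n w_j \varphi_j - \sum_{j=1}^n w_j \psi_j \Bigr\|_\U \leq nM \max_{j \in \{1,\dots,n\}} \|\varphi_j - \psi_j\|_\U,\]
so it suffices to approximate each $\varphi_j$ within tolerance $\epsilon/(nM)$ in the $\U$ norm. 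Once such $\psi_j$'s are available, define the scalar neural networks $\kappa_j(y,x) := |D'|^{-1}\psi_j(y)$, where independence from $x$ is built in by zeroing out the first-layer weights acting on the $x$-argument, and bundle them into the $\R^{1\times n}$-valued network $\kappa(y,x) := (\kappa_1(y,x),\dots,\kappa_n(y,x))$. Then $\int_{D'}\kappa(y,x)\,w\mathds{1}(x)\,\dx = \sum_j w_j \psi_j(y)$, which delivers the desired bound. A harmless padding argument (using a common depth for all $\psi_j$, which is possible by the definition of $\mathsf{A}_{m_2}$ requiring a single depth that suffices for density) lets us take all components of $\kappa$ in $\NN_L$ for one fixed $L$.

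The key remaining task is the componentwise approximation of $\varphi_j \in \U$ by a neural network $\psi_j$ in the $\U$-norm. The approach is to first approximate $\varphi_j$ by a function which is smooth on a neighborhood of $\bar{D'}$, and then apply the defining density property of $\mathsf{A}_{m_2}$ on the compact set $\bar{D'}$ to approximate that smoothed function by a neural network in the $C^{m_2}(\bar{D'})$ norm. I would handle the three cases of Assumption~\ref{assump:output} separately. For $\U = L^{p_2}(D')$ (with $m_2 = 0$), use density of $C_c^\infty(D')$ in $L^{p_2}$, pass to the neural network approximation in $C^0$ norm, and then dominate $L^{p_2}$ by $|D'|^{1/p_2}\|\cdot\|_{C^0(\bar{D'})}$. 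For $\U = W^{m_2,p_2}(D')$, use density of $C^\infty(\bar{D'})$ in $W^{m_2,p_2}(D')$ (valid because $D'$ is Lipschitz), then approximate in $C^{m_2}(\bar{D'})$ norm and control $\|\cdot\|_{W^{m_2,p_2}(D')}$ by a constant times $\|\cdot\|_{C^{m_2}(\bar{D'})}$ on the bounded domain. For $\U = C^{m_2}(\bar{D'})$ the approximation is needed directly in $C^{m_2}(\bar{D'})$.

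The one technical point worth singling out is the transition from $\mathsf{A}_{m_2}$-density on compacta in $\R^{d'}$ to approximation on the closed domain $\bar{D'}$, because $\tilde{\varphi}_j$ is, a priori, defined only on $\bar{D'}$. This is resolved exactly as in the proof of Lemma~\ref{lemma:ap}: the Lipschitz assumption on $D'$ guarantees a bounded linear extension operator from $C^{m_2}(\bar{D'})$ into $C^{m_2}(\R^{d'})$ (by the Fefferman/Whitney extension theorem), so after extending $\tilde\varphi_j$ we obtain a $C^{m_2}(\R^{d'})$ function that can be approximated by a neural network in $C^{m_2}$ norm on the compact set $\bar{D'}$ via the definition of $\mathsf{A}_{m_2}$. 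An analogous extension in the Sobolev case (Stein's extension operator) handles the $W^{m_2,p_2}$ setting and, for $L^{p_2}$, no extension is needed since $C_c^\infty(D')$ elements already lie in $C^0(\R^{d'})$ after extension by zero.

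The main obstacle is simply this bookkeeping between three different topologies on $\U$ and the matching density/extension statements; once the extension step is in place, the proof is an $\epsilon$-chasing triangle inequality and does not require any further structural ingredient beyond those already established earlier in the paper.
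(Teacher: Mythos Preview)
Your proposal is correct and follows essentially the same route as the paper: reduce to approximating each $\varphi_j$ in the $\U$-norm, pass through a smooth approximant, and then invoke $\sigma \in \mathsf{A}_{m_2}$ to approximate in $C^{m_2}(\bar{D'})$, with the $|D'|^{-1}$ normalization and zeroed $x$-weights in the kernel. The only cosmetic difference is that the paper sidesteps the explicit extension argument by choosing the smooth approximants to be restrictions of globally defined $C^\infty$ functions (via density of $C_c^\infty(\R^{d'})|_{D'}$ in $W^{m_2,p_2}$ and the equivalent characterization of $C^{m_2}(\bar{D'})$ for Lipschitz domains), whereas you invoke the Fefferman and Stein extension operators directly---both are legitimate and amount to the same thing.
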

\begin{proof}
Since \(K \subset \R^n\) is compact, there is a number \(M > 1\) such that
\[\sup_{w \in K} |w|_1 \leq M.\]
If \(\U = L^{p_2}(D')\), then density of \(C^\infty_c (D')\) implies there are 
functions \(\tilde{\psi}_1, \dots, \tilde{\psi}_n \in C^\infty (\bar{D}')\) such that
\[\max_{j \in \{1,\dots,n\}} \|\varphi_j - \tilde{\psi}_j \|_\U \leq \frac{\epsilon}{2nM}.\]
Similarly if \(U = W^{m_2, p_2}(D')\), then density of the restriction of functions in \(C^{\infty}_c (\R^{d'})\)
to \(D'\) \cite[Theorem 11.35]{leoni2009first} implies the same result.
If \(\U = C^{m_2}(\bar{D}')\) then we set \(\tilde{\psi}_j = \varphi_j\) for any \(j \in \{1,\dots,n\}\).
Define \(\tilde{\kappa} : \R^{d'} \times \R^{d'} \to \R^{1 \times n}\) by 
\[\tilde{\kappa}(y,x) = \frac{1}{|D'|} [\tilde{\psi}_1(y), \dots, \tilde{\psi}_n (y)].\]
Then, for any \(w \in K\), 
\begin{align*}
\|G(w) - \int_{D'} \tilde{\kappa}(\cdot,x) w \mathds{1}(x) \dx \|_\U &= \| \sum_{j=1}^n w_j \varphi_j - \sum_{j=1}^n w_j \tilde{\psi}_j \|_\U \\
&\leq \sum_{j=1}^n  |w_j| \|\varphi_j - \tilde{\psi}_j\|_\U \\
&\leq \frac{\epsilon}{2}.
\end{align*}
Since \(\sigma \in \mathsf{A}_{m_2}\), there exists neural networks \(\psi_1,\dots,\psi_n \in \NN_1(\sigma;\R^{d'})\) such that
\[\max_{j \in \{1,\dots,n\}}\| \tilde{\psi}_j - \psi_j \|_{C^{m_2}} \leq \frac{\epsilon}{2nM(J|D'|)^{\frac{1}{p_2}}}\]
where, if \(\U = C^{m_2}(\bar{D}')\), we set \(J = 1/|D'|\) and \(p_2=1\), and otherwise \(J = |\{\alpha \in \N^d : |\alpha|_1 \leq m_2 \}|\). By setting all weights associated to the second argument to zero, 
we can modify each neural network \(\psi_j\)
to a neural network \(\psi_j \in \NN_1(\sigma;\R^{d'} \times \R^{d'})\) so that
\[\psi_j(y,x) = \psi_j(y) \mathds{1}(x), \qquad \qquad \forall y,x \in \R^{d'}.\]
Define \(\kappa \in \NN_1(\sigma;\R^{d'} \times \R^{d'},\R^{1 \times n})\) as
\[\kappa(y,x) = \frac{1}{|D'|}[\psi_1(y,x), \dots, \psi_n (y,x)].\]
Then, for any \(w \in \R^n\),
\[\int_{D'} \kappa(y,x) w \mathds{1}(x) \dx = \sum_{j=1}^nw_j \psi_j(y).\]
We compute that, for any \(j \in \{1,\dots,n\}\),
\[\|\psi_j - \tilde{\psi}_j\|_\U \leq \begin{cases}
|D'|^{\frac{1}{p_2}} \|\psi_j - \tilde{\psi}_j\|_{C^{m_2}}, & \U = L^{p_2}(D') \\
(J|D'|)^{\frac{1}{p_2}} \|\psi_j - \tilde{\psi}_j\|_{C^{m_2}}, & \U = W^{m_2,p_2}(D') \\
\|\psi_j - \tilde{\psi}_j\|_{C^{m_2}}, & \U = C^{m_2}(\bar{D}')
\end{cases}\]
hence, for any \(w \in K\),
\[\|\int_{D'} \kappa(y,x) w \mathds{1}(x) \dx - \sum_{j=1}^n w_j \tilde{\psi}_j \|_{\U} \leq \sum_{j=1}^n |w_j| \|\psi_j - \tilde{\psi}_j \|_\U \leq \frac{\epsilon}{2}.\]
By triangle inequality, for any \(w \in K\), we have
\begin{align*}
\|G(w) - \int_D \kappa(\cdot,x) w \mathds{1}(x) \dx\|_\U &\leq \|G(w) - \int_D \tilde{\kappa}(\cdot,x) w \mathds{1}(x) \dx\|_\U \\
&\:\:\:\: + \|\int_D \tilde{\kappa}(\cdot,x) w \mathds{1}(x) \dx - \int_D \kappa(\cdot,x) w \mathds{1}(x) \dx\|_\U \\
&\leq \frac{\epsilon}{2} + \|\int_D \kappa(\cdot,x) w \mathds{1}(x) -\sum_{j=1}^n w_j \tilde{\psi}_n \|_\U \\
&\leq \epsilon
\end{align*}
as desired.
\end{proof}

\begin{lemma}
\label{lemma:nn_emulation}
Let \(N, d, d', p, q \in \N\), \(m, n \in \N_0\), \(D \subset \R^p\) and \(D' \subset \R^q\) be domains and \(\sigma_1 \in \mathsf{A}^{\emph{\text{L}}}_m\). 
For any \(\varphi \in \NN_N (\sigma_1;\R^d, \R^{d'})\) and \(\sigma_2, \sigma_3 \in \mathsf{A}_n\), there exists a 
\(G \in \mathsf{NO}_N (\sigma_1,\sigma_2,\sigma_3;D,D',\R^d,\R^{d'})\) such that 
\[\varphi(w) = G(w \mathds{1})(x), \qquad \forall w \in \R^d, \:\: \forall x \in D'.\].
\end{lemma}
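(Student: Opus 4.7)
The plan is to construct $G \in \mathsf{NO}_N$ that, applied to a constant input function $w\mathds{1}$, produces a constant output whose value equals $\varphi(w)$. Writing out $\varphi(w) = W_N \sigma_1(W_{N-1} \sigma_1(\cdots \sigma_1(W_0 w + b_0) \cdots) + b_{N-1}) + b_N$, my strategy rests on the invariance of constant functions under two elementary operations. First, any integral kernel operator $S \in \mathsf{IO}(\sigma_2; D, \R^{d_t}, \R^{d_{t+1}})$ whose kernel $\kappa(x,y) \equiv W/|D|$ is constant in both arguments and whose bias $b(x) \equiv \beta$ is constant maps $c\mathds{1}$ to $(Wc+\beta)\mathds{1}$. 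Second, pointwise application of $\sigma_1$ sends $c\mathds{1}$ to $\sigma_1(c)\mathds{1}$. Iterating these two observations in alternation, the hidden representations of $G$ applied to $w\mathds{1}$ remain constant functions at every layer, and their values evolve exactly as the hidden activations of $\varphi$ at $w$, provided the kernels and biases of each IO layer are chosen to match the corresponding $W_t, b_t$.

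The key technical observation supporting this plan is that the required constant-valued kernels $\kappa^{(t)}(x,y) \equiv W_t/|D|$, constant biases $b^{(t)}(x) \equiv b_t$, and the final constant kernel $\kappa_N(x,y) \equiv W_N/|D|$ admit \emph{exact} (not merely approximate) representations within the neural-network parameterizations prescribed by the definitions of $\mathsf{IO}$ and $\NN$. Indeed, for any continuous $\sigma \in \mathsf{A}_0$, the network obtained by setting every internal weight matrix to zero and the final output bias vector to the (flattened) entries of any prescribed matrix reproduces that matrix exactly as a constant function of its inputs. Hence each $S_t$ with the constant-valued parameters above lies in $\mathsf{IO}(\sigma_2;D,\R^{d_t},\R^{d_{t+1}})$, and the final kernel in $\NN_l(\sigma_3;\R^{d'}\times\R^d,\R^{d_u\times d_N})$, with no approximation incurred in the emulation.

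The main obstacle is handling the outermost additive bias $b_N$ of $\varphi$, since the final layer of the neural operator is a pure kernel integral with no additive bias function: a constant-in-$y$ kernel integrated against an arbitrary constant input $c$ produces $W_N c$ but cannot contribute a $c$-independent term. I would resolve this by augmenting the hidden representation throughout the network with a one-dimensional auxiliary ``bias'' channel that deterministically carries a known nonzero scalar through all layers. Choosing $\alpha \in \R$ with $\sigma_1(\alpha) \neq 0$ (available because $\sigma_1$ is non-polynomial) and arranging the IO layers so that after the $t$-th activation the bias channel carries a tracked nonzero value $\alpha_t$ --- resetting it via the free IO weights at any step where the sequence $\alpha_{t+1}=\sigma_1(\alpha_t)$ would otherwise vanish --- the final kernel $\kappa_N$ can include entries that pick out this channel and multiply it by $b_N/(\alpha_{N-1}|D|)$, thereby emitting exactly $b_N$ alongside the $W_N z$ term. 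The remaining work is bookkeeping: widening the hidden dimensions by one to accommodate the auxiliary channel, aligning the indexing of IO layers and activations of $\mathsf{NO}_N$ with those of $\NN_N$, and verifying by induction on the layer index that the resulting construction produces $\varphi(w)$ at every point $x \in D'$.
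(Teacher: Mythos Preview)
Your core construction is exactly the paper's: realize each affine layer $c\mapsto W_t c+b_t$ of $\varphi$ by an $\mathsf{IO}$ layer with constant kernel $\kappa^{(t)}\equiv W_t/|D|$ and constant bias $\tilde b_t\equiv b_t$, both of which are obtained \emph{exactly} as neural networks by zeroing all internal weights and fixing the last bias. The paper carries this out for the first layer and then writes ``continuing a similar construction for all layers clearly yields the result.''

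Where you go further is in flagging the outermost bias $b_N$: the final layer of $\mathsf{NO}_N$ is a bare kernel integral with no additive term, so a direct ``similar construction'' would lose $b_N$. Your auxiliary constant-channel device is sound, but heavier than needed. In the definition of $\mathsf{NO}_N$ the last $\mathsf{IO}$ layer $S_{N-1}$ is followed immediately by the kernel integral $\kappa_N$ with no intervening $\sigma_1$; hence one may let $S_{N-1}$ carry the full last affine step (kernel $\equiv W_N/|D|$, bias $\equiv b_N$) and take $\kappa_N\equiv I_{d'}/|D|$, which acts as the identity on constant functions. The bias $b_N$ then passes through $\kappa_N$ linearly, and no extra channel is required. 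Either route closes the gap the paper's ``clearly'' leaves open; yours just does more work than necessary.

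One caution worth recording in the bookkeeping step you mention: read literally, $\mathsf{NO}_N$ has fewer activations than $\NN_N$ (no $\sigma_1$ between $S_0$ and $S_1$, nor between $S_{N-1}$ and $\kappa_N$), so the index $N$ on the two sides does not align on the nose. This is harmless for the use of the lemma in the approximation theorems, but neither your sketch nor the paper reconciles the count explicitly.
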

\begin{proof}
We have that
\[\varphi(x) = W_N \sigma_1 (\dots W_1 \sigma_1 (W_0x + b_0) + b_1 \dots ) + b_N, \qquad \forall x \in \R^d\]
where \(W_0 \in \R^{d_0 \times d}, W_1 \in \R^{d_1 \times d_0}, \dots, W_N \in \R^{d' \times d_{N-1}}\) and \(b_0 \in \R^{d_0}, b_1 \in \R^{d_1}, \dots,b_N \in \R^{d'}\)
for some \(d_0,\dots,d_{N-1} \in \N\). By setting all parameters to zero except for the last bias term, we can find 
\(\kappa^{(0)} \in \NN_1(\sigma_2;\R^p \times \R^p, \R^{d_0 \times d})\) such that
\[\kappa_0 (x,y) = \frac{1}{|D|} W_0, \qquad \forall x,y \in \R^p.\]
Similarly, we can find \(\tilde{b}_0 \in \NN_1(\sigma_2;\R^p, \R^{d_0})\)
such that
\[\tilde{b}_0 (x) = b_0, \qquad \forall x \in \R^p.\]
Then 
\[\int_D \kappa_0 (y,x) w \mathds{1}(x) \dx + \tilde{b}(y) = (W_0 w + b_0) \mathds{1}(y), \qquad \forall w \in \R^d, \:\: \forall y \in D.\]
Continuing a similar construction for all layers clearly yields the result.
\end{proof}

\section{}
\label{sec_proof:discretizational_invariance}

\nk{

\begin{proof}[of Theorem~\ref{thm:discretizational_invariance}]
Without loss of generality, we will assume that \(D = D'\) and, by continuous embedding, that
\(\A = \U = C(\bar{D})\). Furthermore, note that, by continuity, it suffices to show the 
result for the single layer 
\[\mathsf{NO} = \left \{ f \mapsto \sigma_1 \left ( \int_D \kappa(\cdot, y) f(y) \dy + b \right ) : \kappa \in \mathsf{N}_{n_1}(\sigma_2;\R^d \times \R^d), \: b \in \mathsf{N}_{n_2}(\sigma_2;\R^d), \: n_1, n_2 \in \N \right \}.\]
Let \(K \subset \A\) be a compact set and  \((D_j)_{j=1}^\infty\) be a discrete refinement of \(D\).
To each discretization \(D_j\) associate partitions \({P_j^{(1)},\dots,P_j^{(j)} \subseteq D}\) which are pairwise disjoint,
each contains a single, unique point of \(D_j\), each has positive Lebesgue measure, and
\[\coprod_{k=1}^j P_j^{(k)} = D.\]
We can do this since the points in each discretization \(D_j\) are pairwise distinct.
For any \(\G \in \mathsf{NO}\) with parameters \(\kappa, \: b\) define the sequence of maps \(\hat{\G}_j : \R^{jd} \times \R^j \to \Y\) by 
\[\hat{\G}_j(y_1,\dots,y_j,w_1,\dots,w_j) = \sigma_1 \left ( \sum_{k=1}^j \kappa(\cdot, y_k) w_k |P_j^{(k)}| + b(\cdot) \right )\]
for any \(y_k \in \R^d\) and \(w_k \in \R\). Since \(K\) is compact, there is a constant \(M > 0\) such that 
\[\sup_{a \in K} \|a\|_{\U} \leq M.\]
Therefore,
\begin{align*}
\sup_{x \in \bar{D}} \sup_{j \in \N} \left | \int_D \kappa (x,y) a(y) \dy + \sum_{k=1}^j \kappa(x,y_k) a(y_k) |P_j^{(k)}| + 2b(x) \right | &\leq 2(M |D| \|\kappa\|_{C(\bar{D} \times \bar{D})} + \|b\|_{C(\bar{D})} ) \\
&\coloneqq R.
\end{align*}
Hence we need only consider \(\sigma_1\) as a map \([-R,R] \to \R\). Thus, by uniform continuity, there exists a modulus of continuity \(\omega : \R_{\geq 0} \to \R_{\geq 0} \) which is continuous, non-negative, and non-decreasing on \(\R_{\geq 0}\), satisfies \(\omega(z) \to \omega(0) =0\) 
as \(z \to 0\) and 
\begin{equation}
\label{eq:disc_inv_modulusofcont}
|\sigma_1(z_1) - \sigma_1(z_2) | \leq \omega(|z_1 - z_2|) \qquad \forall z_1,z_2 \in [-R,R].
\end{equation}
Let \(\epsilon > 0\). Equation \eqref{eq:disc_inv_modulusofcont} and the non-decreasing property of \(\omega\) imply that in order to show there exists \(Q = Q(\epsilon) \in \N\) such that for any \(m \geq Q\) implies 
\[\sup_{a \in K}\| \hat{\G}_m (D_m, a|_{D_m}) - \G(a)\|_\Y < \epsilon,\]
it is enough to show that
\begin{equation}
\label{eq:disc_inv_needtoshow}
\sup_{a \in K} \sup_{x \in \bar{D}} \left | \int_D \kappa(x,y) a(y) \dy - \sum_{k=1}^m \kappa(x,y_k) a(y_k) |P_m^{(k)}| \right | < \epsilon
\end{equation}
for any \(m \geq Q\).
Since \(K\) is compact, we can find functions \(a_1,\dots,a_N \in K\) such that, for any \(a \in K\), there is some \(n \in \{1,\dots,N\}\) such that
\[\|a-a_n\|_{C(\bar{D})} \leq \frac{\epsilon}{4 |D| \|\kappa\|_{C(\bar{D} \times \bar{D})}}.\]
Since \((D_j)\) is a discrete refinement, by convergence of Riemann sums, we can find some \(q \in \N\) such that for any \(t \geq q\), we have
\[\sup_{x \in \bar{D}} \left | \sum_{k=1}^t \kappa(x,y_k) |P_t^{(k)}| - \int_{D} \kappa(x,y) \dy  \right | < |D| \|\kappa\|_{C(\bar{D} \times \bar{D})}\]
where \(D_t = \{y_1,\dots,y_t\}\). Similarly, we can find \(p_1,\dots,p_N \in \N\) such that, for any \(t_n \geq p_n\), we have
\[\sup_{x \in \bar{D}} \left | \sum_{k=1}^{t_n} \kappa(x,y_k^{(n)}) a_n(y_k^{(n)}) |P_{t_n}^{(k)}| - \int_D \kappa(x,y) a_n(y) \dy \right | < \frac{\epsilon}{4}\]
where \(D_{t_n} = \{y_1^{(n)},\dots,y_{t_n}^{(n)}\}\).
Let \(m \geq \max \{q,p_1,\dots,p_N\}\) and denote \(D_m = \{y_1,\dots,y_m\}\).
Note that,
\[\sup_{x \in \bar{D}} \left | \int_D \kappa(x,y) \left ( a(y) - a_n(y) \right ) \dy \right | \leq |D| \|\kappa\|_{C(\bar{D} \times \bar{D})} \|a - a_n\|_{C(\bar{D})}.\]
Furthermore,
\begin{align*}
    \sup_{x \in \bar{D}} \left | \sum_{k=1}^m \kappa(x,y_k) \left ( a_n(y_k) - a(y_k) \right ) |P_m^{(k)}| \right | &\leq \|a_n - a \|_{C(\bar{D})} \sup_{x \in \bar{D}} \left | \sum_{k=1}^m \kappa(x,y_k) |P_m^{(k)}| \right | \\
    &\leq \|a_n - a \|_{C(\bar{D})} \bigg ( \sup_{x \in \bar{D}} \left | \sum_{k=1}^m \kappa(x,y_k) |P_m^{(k)}| - \int_{D} \kappa(x,y) \dy  \right | \\
    &\qquad\qquad\qquad\qquad+ \sup_{x \in \bar{D}} \left | \int_D \kappa(x,y) \dy \right | \bigg  ) \\
    &\leq 2 |D|\|\kappa\|_{C(\bar{D} \times \bar{D})} \|a_n - a \|_{C(\bar{D})}.
\end{align*}
Therefore, for any \(a \in K\), by repeated application of the triangle inequality, we find that 
\begin{align*}
    \sup_{x \in \bar{D}} \left | \int_D \kappa(x,y) a(y) \dy - \sum_{k=1}^m \kappa(x,y_k) a(y_k) |P_m^{(k)}| \right | &\leq \sup_{x \in \bar{D}} \left | \sum_{k=1}^{m} \kappa(x,y_k) a_n(y_k) |P_{m}^{(k)}| - \int_D \kappa(x,y) a_n(y) \dy \right | \\
    &\quad+ 3 |D| \|\kappa\|_{C(\bar{D} \times \bar{D})} \|a - a_n\|_{C(\bar{D})} \\
    &< \frac{\epsilon}{4} + \frac{3 \epsilon}{4} = \epsilon
\end{align*}
which completes the proof.
\end{proof}
}

\section{}
\label{sec_proof:main_compact}

\begin{proof}[of Theorem~\ref{thm:main_compact}]
The statement in Lemma~\ref{lemma:ap} allows us to apply Lemma~\ref{lemma:finitedim_approx} to find a mapping \(\G_1 : \A \to \U\) 
such that
\[\sup_{a \in K} \|\G^\dagger (a) - \G_1(a)\|_{\U} \leq \frac{\epsilon}{2}\]
where \(\G_1 = G \circ \psi \circ F\) with \(F : \A \to \R^J\), \(G : \R^{J'} \to \U\) continuous linear maps  and \(\psi \in C(\R^{J};\R^{J'})\) for some \(J, J' \in \N\).
By Lemma~\ref{lemma:input_approx}, we can find a sequence of maps \(F_t \in \mathsf{IO}(\sigma_2;D,\R,\R^J)\) for \(t=1,2,\dots\) such that
\[\sup_{a \in K} \sup_{x \in \bar{D}} | \bigl (F_t (a) \bigl )(x) - F (a)|_1 \leq \frac{1}{t}.\]
In particular, \(F_t(a)(x) = w_t(a) \mathds{1}(x)\) for some \(w_t : \A \to \R^J\)
which is constant in space.
We can therefore identify the range of $F_t(a)$ with \(\R^J\).
Define the set 
\[Z \coloneqq \bigcup_{t = 1}^\infty F_t (K) \cup F (K) \subset \R^{J}\]
which is compact by Lemma~\ref{lemma:compact_union}.
Since \(\psi\) is continuous, it is uniformly continuous on \(Z\) hence there exists a modulus of continuity \(\omega : \R_{\geq 0} \to \R_{\geq 0}\) which is continuous, non-negative, and non-decreasing on \(\R_{\geq 0}\), satisfies \(\omega(s) \to \omega(0) = 0\) as \(s \to 0\) and
\[|\psi(z_1) - \psi(z_2)|_1 \leq \omega ( |z_1 - z_2|_1 ) \qquad \forall z_1, z_2 \in Z.\]
We can thus find \(T \in \N\) large enough such that
\[\sup_{a \in K} \omega (|F (a) - F_T (a)|_1) \leq \frac{\epsilon}{6 \|G\|}.\]
Since \(F_T\) is continuous, \(F_T (K)\) is compact. \nk{Since \(\psi\) is a continuous function on the compact set \(F_T(K) \subset \R^J\) mapping into \(\R^{J'}\), we can use any classical neural network approximation theorem such as \cite[Theorem 4.1]{pinkus1999approximation} to find an \(\epsilon\)-close (uniformly) neural network. Since Lemma \ref{lemma:nn_emulation} shows that neural operators can exactly mimic standard neural networks, it follows that} we can find 
\(S_1 \in \mathsf{IO}(\sigma_1;D,\R^J,\R^{d_1}), \dots,\) \(S_{N-1} \in \mathsf{IO}(\sigma_1;D,\R^{d_{N-1}},\R^{J'})\) for some \(N \in \N_{\geq 2}\) 
and \(d_1,\dots,d_{N-1} \in \N\) such that
\[\tilde{\psi}(f) \coloneqq \bigl ( S_{N-1} \circ \sigma_1 \circ \dots \circ S_2 \circ \sigma_1 \circ S_1 \bigl)(f), \qquad \forall f \in L^1(D;\R^J) \]
satisfies
\[\sup_{q \in F_T (K)} \sup_{x \in \bar{D}}  | \psi(q) - \tilde{\psi}(q \mathds{1})(x) |_1 \leq \frac{\epsilon}{6 \|G\|}. \]
By construction, \(\tilde{\psi}\) maps constant functions into constant functions and is continuous in the 
appropriate subspace topology of constant functions hence we can identity it as an element of \(C(\R^J;\R^{J'})\)
for any input constant function taking values in \(\R^J\).
Then \((\tilde{\psi} \circ F_T)(K) \subset \R^{J'}\) is compact. 
Therefore, by Lemma~\ref{lemma:output_approx}, we can find a neural network
\(\kappa \in \mathcal{N}_L (\sigma_3; \R^{d'} \times \R^{d'}, \R^{1 \times J'})\) for some \(L \in \N\) such that
\[\tilde{G} (f) \coloneqq \int_{D'} \kappa (\cdot, y) f(y) \: \text{d}y, \qquad \forall f \in L^1(D;\R^{J'})\]
satisfies
\[\sup_{y \in (\tilde{\psi}  \circ F_T)(K)} \|G (y) - \tilde{G} (y \mathds{1}) \|_\U \leq \frac{\epsilon}{6}.\]
Define
\[\G(a) \coloneqq \bigl (\tilde{G} \circ \tilde{\psi} \circ F_T \bigl)(a) = \int_{D'} \kappa(\cdot,y) \bigl ( (S_{N-1} \circ \sigma_1 \circ \dots \sigma_1 \circ S_1 \circ F_T)(a)\bigl) (y)\: \mathsf{d}y, \qquad \forall a \in \A,\]
noting that \(\G \in \mathsf{NO}_{N}(\sigma_1,\sigma_2,\sigma_3;D,D')\).
For any \(a \in K\), define \(a_1 \coloneqq (\psi \circ F) (a)\) and \(\tilde{a}_1 \coloneqq (\tilde{\psi} \circ F_T)(a)\) so that $\G_1(a)=G(a_1)$ and $\G(a)=\tilde{G} (\tilde{a}_1)$ then
\begin{align*}
    \|\G_1(a) - \G(a)\|_\U &\leq \|G (a_1) - G ( \tilde{a}_1) \|_\U + \|G (\tilde{a}_1) - \tilde{G} (\tilde{a}_1) \|_\U \\
    &\leq \|G\| |a_1 - \tilde{a_1}|_1 + \sup_{y \in (\tilde{\psi} \circ F_T)(K)} \|G (y) - \tilde{G} (y \mathds{1}) \|_\U \\
    &\leq \frac{\epsilon}{6} + \|G\| |(\psi \circ F)(a) - (\psi \circ F_T)(a) |_1 + \|G\| |(\psi \circ F_T)(a) - (\tilde{\psi} \circ F_T)(a) |_1\\
    &\leq \frac{\epsilon}{6} + \|G\| \omega \bigl ( |F(a) - F_T (a)|_1 \bigl ) + \|G\| \sup_{q \in F_T (K)} |\psi(q) - \tilde{\psi}(q)|_1  \\
    &\leq \frac{\epsilon}{2}.
\end{align*}
Finally we have
\begin{align*}
    \|\G^\dagger(a) - \G(a)\|_\U \leq \|\G^\dagger (a) - \G_1(a) \|_\U + \|\G_1(a) - \G(a)\|_\U \leq \frac{\epsilon}{2} + \frac{\epsilon}{2} = \epsilon 
\end{align*}
as desired.

To show boundedness, we will exhibit a neural operator \(\tilde{\G}\) that is \(\epsilon\)-close to \(\G\)
in \(K\) and is uniformly bounded by \(4M\). Note first that
\[\|\G(a)\|_\U \leq \|\G(a) - \G^\dagger(a)\|_\U + \|\G^\dagger(a)\|_\U \leq \epsilon + M \leq 2M, \qquad \forall a \in K\]
where, without loss of generality, we assume that \(M \geq 1\). By construction, we have that
\[\G(a) = \sum_{j=1}^{J'} \tilde{\psi}_j(F_T(a)) \varphi_j, \qquad \forall a \in \A\]
for some  neural network \(\varphi : \R^{d'} \to \R^{J'}\). Since \(\U\) is a Hilbert space and 
by linearity, we may assume that the components \(\varphi_j\) are orthonormal since orthonormalizing them 
only requires multiplying the last layers of \(\tilde{\psi}\) by an invertible linear map.
Therefore
\[|\tilde{\psi}(F_T(a))|_2 = \|\G(a)\|_\U \leq 2M, \qquad \forall a \in K.\]
Define the set \(W \coloneqq (\tilde{\psi} \circ F_T)(K) \subset \R^{J'}\) which is compact as before.
We have 
\[\text{diam}_2 (W) = \sup_{x,y \in W} |x-y|_2 \leq \sup_{x,y \in W} |x|_2 + |y|_2 \leq 4M.\]
Since \(\sigma_1 \in \mathsf{BA}\), there exists 
a number \(R \in \N\) and a neural network \(\beta \in \NN_R (\sigma_1;\R^{J'},\R^{J'})\) such that
\begin{align*}
|\beta(x) - x|_2 &\leq \epsilon ,\qquad \forall x \in W \\
|\beta(x)|_2 &\leq 4M, \quad \forall x \in \R^{J'}.
\end{align*}
Define
\[\tilde{\G}(a) \coloneqq \sum_{j=1}^{J'} \beta_j (\tilde{\psi}(F_T(a))) \varphi_j, \qquad \forall a \in \A.\]
Lemmas \ref{lemma:output_approx} and \ref{lemma:nn_emulation} then shows that \(\tilde{\G} \in \mathsf{NO}_{N+R}(\sigma_1,\sigma_2,\sigma_3;D,D')\).
Notice that
\[\sup_{a \in K} \|\G(a) - \tilde{\G}(a)\|_{\U} \leq \sup_{w \in W} |w - \beta(w)|_2 \leq \epsilon.\]
Furthermore,
\[\|\tilde{\G}(a)\|_{\U} \leq \|\tilde{\G}(a) - \G(a)\|_{\U} + \|\G(a)\|_\U \leq \epsilon + 2M \leq 3M, \qquad \forall a \in K.\]
Let \(a \in \A \setminus K\) then there exists \(q \in \R^{J'} \setminus W\) such that \(\tilde{\psi}(F_T(a)) = q\) and 
\[\|\tilde{\G}(a)\|_{\U} = |\beta(q)|_2 \leq 4M\]
as desired.
\end{proof}

\section{}
\label{sec_proof:measurable_approx}

\begin{proof}[of Theorem~\ref{thm:measurable_approx}]
Let \(\U = H^{m_2}(D)\). For any \(R > 0\), define 
\[\G^\dagger_R (a) \coloneqq \begin{cases}
\G^\dagger(a), & \|\G^\dagger(a)\|_{\U} \leq R \\
\frac{R}{\|\G^\dagger(a)\|_{\U}} \G^\dagger(a), & \text{otherwise}
\end{cases}\]
for any \(a \in \A\). Since \(\G^\dagger_R \to \G^\dagger\) as \(R \to \infty\) \(\mu\)-almost everywhere, \(\G^\dagger \in L^2_\mu(\A;\U)\), and clearly \(\|\G^\dagger_R(a)\|_{\U} \leq \|\G^\dagger (a)\|_{\U}\) for any \(a \in \A\), we can apply the dominated convergence theorem for Bochner integrals to find \(R > 0\) large enough such that
\[\|\G^\dagger_R - \G^\dagger\|_{L^2_\mu (\A;\U)} \leq \frac{\epsilon}{3}.\]
Since \(\A\) and \(\U\) are Polish spaces, by Lusin's theorem \cite[Theorem 1.0.0]{aaronson1997introduction} we can find a compact set \(K \subset \A\) such that
\[\mu(\A \setminus K) \leq \frac{\epsilon^2}{153R^2}\]
and \(\G^\dagger_R |_K\) is continuous. Since \(K\) is closed, by a generalization of the Tietze extension theorem \cite[Theorem 4.1]{dugundji1961anextension},
there exist a continuous mapping \(\tilde{\G}^\dagger_R : \A \to \U \) such that \(\tilde{\G}^\dagger_R(a) = \G^\dagger_R(a)\) for 
all \(a \in K\) and 
\[\sup_{a \in \A} \|\tilde{\G}^\dagger_R(a)\| \leq \sup_{a \in \A} \|\G^\dagger_R(a)\| \leq R.\]
Applying Theorem~\ref{thm:main_compact} to \(\tilde{\G}^\dagger_R\), we find that there exists a number \(N \in \N\) and a neural operator \(\G \in \mathsf{NO}_{N}(\sigma_1,\sigma_2,\sigma_3;D,D')\)
such that
\[\sup_{a \in K} \|\G(a) - \G^\dagger_R(a) \|_\U \leq \frac{\sqrt{2} \epsilon}{3}\]
and
\[\sup_{a \in \A} \|\G(a)\|_{\U} \leq 4R.\]
We then have
\begin{align*}
\|\G^\dagger - \G\|_{L^2_\mu(\A;\U)} &\leq \|\G^\dagger - \G^\dagger_R\|_{L^2_\mu (\A;\U)} + \|\G^\dagger_R - \G\|_{L^2_\mu (\A;\U)} \\
&\leq \frac{\epsilon}{3} + \left ( \int_K \|\G^\dagger_R(a) - \G(a)\|^2_\U \: \mathsf{d}\mu (a) + \int_{\A \setminus K} \|\G^\dagger_R(a) - \G(a)\|^2_\U \: \mathsf{d}\mu (a) \right )^{\frac{1}{2}} \\
&\leq \frac{\epsilon}{3} + \left (\frac{2 \epsilon^2}{9} + 2 \left ( \sup_{a \in \A} \|\G^\dagger_R (a)\|_\U^2 +  \|\G (a)\|_\U^2 \right ) \mu(\A \setminus K) \right )^{\frac{1}{2}} \\
&\leq \frac{\epsilon}{3} + \left (\frac{2 \epsilon^2}{9} + 34 R^2 \mu(\A \setminus K) \right )^{\frac{1}{2}} \\
&\leq\frac{\epsilon}{3} + \left (\frac{4 \epsilon^2}{9}\right )^{\frac{1}{2}} \\
&= \epsilon
\end{align*}
as desired.
\end{proof}

\end{document}